\appto\appendix{\addtocontents{toc}{\protect\setcounter{tocdepth}{1}}}
\appto\listoffigures{\addtocontents{lof}{\protect\setcounter{tocdepth}{1}}}
\appto\listoftables{\addtocontents{lot}{\protect\setcounter{tocdepth}{1}}}
\newcommand{\N}{\mathcal{N}}
\newcommand{\E}{\mathbb{E}}
\newcommand{\V}{\Phi_R}
\newcommand{\VL}{\Phi}
\newcommand{\calD}{\mathcal{D}}
\newcommand{\PP}{\mathcal{P}}
\newcommand{\PPG}{\mathcal{P}^G}
\newcommand{\EE}{\mathcal{E}}
\newcommand{\Cov}{\mathrm{Cov}}
\newcommand{\M}{M}
\newcommand{\I}{I}
\newcommand{\R}{\mathbb{R}}
\newcommand{\T}{\mathcal{T}}
\newcommand{\rhoa}{a}
\newcommand{\rrho}{\mathfrak{r}}
\newcommand{\fg}{\mathfrak{g}}
\newcommand{\fM}{\mathfrak{M}}
\newcommand{\la}{\lambda_{\star,\max}}
\newcommand{\Prec}{P}
\newcommand{\bR}{\mathbb R}
\newcommand{\bigO}{\mathcal{O}}
\newcommand{\Hess}{\nabla_{\theta}\nabla_{\theta}}
\newcommand{\ranglen}{\rangle_{\R^{N_a}}}
\DeclareMathOperator*{\argmin}{arg\,min}
\newcommand{\mA}{\mathsf{A}}
\newcommand{\mb}{\mathsf{b}}
\newcommand{\dd}{\mathrm{d}}
\newenvironment{newremark}[1]{%
    \begin{remark}#1}{%
    \Endofdef\end{remark}%
}
\newcommand{\xqed}[1]{%
    \leavevmode\unskip\penalty9999 \hbox{}\nobreak\hfill
    \quad\hbox{\ensuremath{#1}}}
\newcommand{\Endofdef}{\xqed{\lozenge}}
\newtheorem{theorem}{Theorem}[section]
\newtheorem{lemma}[theorem]{Lemma}
\newtheorem{proposition}[theorem]{Proposition}
\newtheorem{definition}[theorem]{Definition}
\theoremstyle{remark}
\newtheorem{remark}[theorem]{Remark}
\numberwithin{equation}{section}
\definecolor{darkred}{rgb}{.6,0,0}
\definecolor{darkblue}{rgb}{0,0,.7}
\definecolor{darkgreen}{rgb}{0,.7,0}
\definecolor{darkbrown}{rgb}{0.8,0.4,0.4}
\newcommand{\as}[1]{{\color{black}{#1}}}
\newcommand{\sr}[1]{{\color{black}{#1}}}
\begin{document}
\title[Gradient Flows for Sampling]{Gradient Flows for Sampling: Mean-Field Models, Gaussian Approximations
and Affine Invariance}

% \author{
% Y.~Chen\thanks{California Institute of Technology, Pasadena, CA
%   (\email{yifanc@caltech.edu},  \email{dzhuang@caltech.edu}, \email{astuart@caltech.edu})}
% \and 
% D.Z.~Huang\footnotemark[2]
% % \thanks{California Institute of Technology, Pasadena, CA
% %   (\email{dzhuang@caltech.edu}).}
% \and 
% J.~Huang\thanks{University of Pennsylvania, Philadelphia, PA
%   (\email{huangjy@wharton.upenn.edu})}
% % \\ 
% \and 
% S. Reich\thanks{Universit\"{a}t Potsdam, Potsdam, Germany
%   (\email{sebastian.reich@uni-potsdam.de})}
% \and   
% A.M. Stuart\footnotemark[2]
% % \thanks{California Institute of Technology, Pasadena, CA
% %   (\email{astuart@caltech.edu}).}
% }

\author{Yifan~Chen\textsuperscript{2,1}}
\address{\textsuperscript{1}California Institute of Technology, Pasadena, CA}
\address{\textsuperscript{2}Courant Institute, New York University, NY}
\email{yifan.chen@nyu.edu}

\author{Daniel~Zhengyu~Huang\textsuperscript{3,1}}
\email{huangdz@bicmr.pku.edu.cn}
\address{\textsuperscript{3}Beijing International Center for Mathematical Research, Peking University, Beijing, China}
\author{Jiaoyang Huang\textsuperscript{4}}
\address{\textsuperscript{4}University of Pennsylvania, Philadelphia, PA}
\email{huangjy@wharton.upenn.edu}
\vspace{0.1in}
\author{Sebastian Reich\textsuperscript{5}}
\address{\textsuperscript{5}Universit\"{a}t Potsdam, Potsdam, Germany}
\email{sebastian.reich@uni-potsdam.de}
\author{Andrew M. Stuart\textsuperscript{1}}
\email{astuart@caltech.edu}
    \keywords{Bayesian inference, sampling, gradient flow,  mean-field dynamics, Gaussian approximation, variational inference, affine invariance.}    
    \subjclass[2010]{68Q25, 68R10, 68U05}
\maketitle

% REQUIRED
\vspace{-3em}
\begin{abstract}
Sampling a probability distribution with an unknown normalization constant is a fundamental problem in computational science and engineering. This task may be cast as an optimization problem over all probability measures, and an initial distribution can be evolved to the desired minimizer
(the target distribution) dynamically via gradient flows. Mean-field models,
whose law is governed by the gradient flow in the space of probability measures, 
may also be identified; particle approximations of these mean-field models form the
basis of algorithms. The gradient flow approach is also the basis of algorithms for variational inference, in which the optimization is performed over a parameterized family of probability distributions such as Gaussians, and the underlying gradient flow is restricted to the parameterized family. 

By choosing different energy functionals and metrics for the gradient flow, different algorithms with different convergence properties arise. In this paper, we concentrate on the Kullback–Leibler divergence as the energy functional after showing that, up to scaling, it has the \textit{unique} property (among all $f$-divergences) that the gradient flows resulting from this choice of energy do not depend on the normalization constant of the target distribution. For the metrics, we focus on the Fisher-Rao, Wasserstein, Stein metrics and their variants. The Fisher-Rao metric is known to be the \textit{unique} one (up to scaling) that is diffeomorphism invariant, leading to a uniform exponential rate of convergence of the gradient flow to the target distribution. We introduce a relaxed, affine invariance property for the metrics, gradient flows, and their corresponding mean-field models, determine whether a given metric leads to affine invariance, and modify it to make it affine invariant if it does not. 

We study the resulting gradient flows in both the space of all probability density functions and in the subset of all Gaussian densities. The flow in the Gaussian space may be understood as a Gaussian approximation of the flow in the density space. We demonstrate that, under mild assumptions, the Gaussian approximation based on the metric and through moment closure coincide; the moment closure approach is more convenient for calculations. We establish connections between these approximate gradient flows, discuss their relation to natural gradient methods in parametric variational inference, and study their long-time convergence properties showing, for some classes of problems and metrics, the advantages of affine invariance. Furthermore, numerical experiments are included which demonstrate that affine invariant gradient flows have desirable convergence properties for a wide range of highly anisotropic target distributions.

\end{abstract}

\tableofcontents

\section{Introduction}

\subsection{Context} This paper is concerned with the problem of sampling a 
probability distribution (the target) known up to normalization. This problem
is fundamental in many applications arising in computational science and engineering and is
widely studied in applied mathematics, machine learning 
and statistics communities. 
A particular application is Bayesian inference for large-scale inverse problems; such
problems are ubiquitous, arising in applications from climate science~\cite{isaac2015scalable,schneider2017earth,huang2022iterated,lopez2022training}, through numerous problems in engineering~\cite{yuen2010bayesian,cui2016dimension,cao2022bayesian} to machine learning~\cite{rasmussen2003gaussian,murphy2012machine, chen2021solving, chen2021consistency}. These applications have fueled the need for efficient and scalable algorithms which employ noisy data to learn about unknown parameters $\theta$ appearing in models and 
perform uncertainty quantification for predictions then made by those models. 

Mathematically, the objective is to sample the target probability
distribution with density $\rho_{\rm post}(\cdot)$, for the parameter $\theta \in \R^{N_{\theta}}$, given by
\begin{align}
\label{eq:posterior}
    \rho_{\rm post}(\theta) \propto \exp(-\V(\theta)),
\end{align}
where $\V: \R^{N_{\theta}} \to \R_+$ is a known function.
We use the notation $\rho_{\rm post}$ because of the potential application
to Bayesian inference; however, we do not explicitly use the Bayesian
structure in this paper, and our analysis applies to arbitrary target distributions.

We study the use of gradient flows in the space of probability distributions 
in order to create algorithms to sample the target distribution. 
By studying gradient flows
with respect to different metrics, and by studying mean-field based particle models
and Gaussian approximations, both related to these underlying gradient flows,
we provide a unifying approach to the construction of a wide family of algorithms. 
The choice of metric plays a key role in the behavior
of the resulting methods and we highlight the importance of affine invariance
in this regard. In \Cref{ssec:L} we provide a literature review pertinent to our contributions; the contributions we make are described in \Cref{ssec:OC} and in \Cref{ssec:ORG} we describe the organization of the remainder of the paper.

\subsection{Literature Review} 
\label{ssec:L}

In this subsection, we describe the research landscape in which our work sits. We start
by discussing the general background, and describing our work in this context, 
and then we give more detailed literature reviews relating to the topics of gradient flows,
mean-field models, Gaussian approximations, and affine invariance.

\subsubsection{Background}
\label{ssec:bg}

Numerous approaches to the sampling problem
have been proposed in the literature.
One way of classifying them is into: a) methods which deform a {\em given} source measure
(for example the prior in Bayesian inference) into the target measure, in a fixed time
of fixed finite number of steps or in a finite continuous time interval; 
and b) methods which transform {\em any} initial measure into
the target measure after an infinite number of steps, or at time infinity
in continuous time.
Continuous time formulations are used for insight into the algorithms; discrete time
must be used in practice. Typical methods in category a) are sequential Monte Carlo (SMC)
approaches \cite{doucet2009tutorial}, with (typically not optimal) transport 
being the underpinning continuous time concept \cite{villani2009optimal}; 
typical methods in category b) are Markov chain Monte Carlo (MCMC) approaches
\cite{brooks2011handbook}, with stochastic differential equations (SDEs)
which are ergodic with respect to the target,  such as Langevin equations \cite{pavliotis2014stochastic}, 
being the underpinning continuous time concept. 
Making practical algorithms out of these ideas, for large scale problems in
science and engineering, often requires invocation of further reduction of the
space in which solutions are sought, for example by variational inference \cite{blei2017variational}
or by ensemble Kalman approximation \cite{calvello22}.

In this paper we focus primarily on methods in category b) and describe a general
methodology for the derivation of a wide class of sampling algorithms;
however the methods can be interpreted as being partially motivated by
dynamics of transports arising in the methods of type a).
Specifically, we focus on methods created by studying the gradient flow, 
in various metrics, induced by
an energy that measures the divergence of the current estimate of the target from the true
target. With this perspective, we provide a unifying viewpoint on a number of sampling
methods appearing in the literature and a methodology for deriving new methods.
We focus on the Fisher-Rao, Wasserstein, and Stein metrics, and variants thereof.
Creating useful algorithms out of this picture requires further simplifications;
we study mean-field models, which lead to particle methods, and methods based on
confining the gradient descent of the energy to the space of Gaussians. 
In both settings we precisely define the concept of being affine invariant;
roughly speaking this concept requires that any invertible affine transformation of $\theta$  
makes no difference to the gradient flow.
We include numerical experiments which demonstrate the advantage of affine
invariant methods for anisotropic targets. Because the analysis is cleaner
we work in continuous time; but time-discretization is employed to make
implementable algorithms. 
Furthermore, we emphasize that our statements about the existence of
gradient flows are purely formal. For the rigorous underpinnings of
gradient flows see \cite{ambrosio2005gradient}; and for a recent
extension of this rigorous analysis to a sub-class of gradient flows 
with respect to an affine invariant metric see \cite{burger2023covariance}.

\subsubsection{Gradient Flows}

There is existing literature on the use of gradient flows in the probability density space,
employing a variety of different metric tensors, to minimize an energy defined as the Kullback–Leibler (KL) divergence between the current density and the target distribution. Particle realizations of these flows then lead to sampling schemes. For example, the Wasserstein gradient flow~\cite{jordan1998variational,otto2001geometry} and Stein variational gradient flow~\cite{liu2016stein,liu2017stein} have led to sampling algorithms based on
Langevin dynamics and Stein variational gradient descent respectively;
in~\cite{lu2022birth}, the Fisher-Rao gradient flow, using kernel-based density 
approximations, has been proposed for sampling. Furthermore, the paper
\cite{lu2019accelerating} proposed the Wasserstein-Fisher-Rao gradient flow to sample multi-modal distributions. In~\cite{garbuno2020interacting,garbuno2020affine}, the Kalman-Wasserstein metric was introduced and gradient flows with respect to this
metric were advocated. Interpolation between the Wasserstein metric and Stein metric was studied in \cite{he2022regularized}. Accelerated gradient flows in the probability space have been studied in \cite{wang2022accelerated}. A recent overview of the use
of gradient flows in optimization and sampling 
can be found in \cite{garcia2020bayesian,TrillosNoticeAMS}.
%\dzh{Specifically, gradient flows of KL divergence for sampling under Kalman-Wasserstein metric~\cite{garbuno2020interacting} and Wasserstein-Fisher-Rao metric~\cite{lu2019accelerating} are reviewed.}

The Wasserstein gradient flow was identified in the seminal work~\cite{jordan1998variational}. The
authors showed that the Fokker-Planck equation is the Wasserstein gradient flow of the KL divergence
of the current density estimate from the target. Since then, Wasserstein gradient flow has played a significant role in optimal transport~\cite{santambrogio2017euclidean}, sampling~\cite{chen2018natural,lambert2022variational}, machine learning~\cite{chizat2018global,salimans2018improving}, partial differential equations~\cite{otto2001geometry,carrillo2018analytical} and many other areas. 
The Fisher-Rao metric was introduced by \as{C.R. Rao~\cite{rao1945information}} via the Fisher information matrix. The original definition is in parametric density spaces, and the corresponding Fisher-Rao gradient flow in the parameter space leads to natural gradient descent~\cite{amari1998natural}. 
The Fisher-Rao metric in infinite dimensional probability spaces was discussed in~\cite{FRmetricInfDim1991, srivastava2007riemannian}. The concept underpins information geometry \cite{amari2016information,ay2017information}. The gradient flow of the KL divergence under the Fisher-Rao metric is induced by a mean-field model of birth-death type. The birth-death process has been used in sequential Monte Carlo samplers to reduce the variance of particle weights~\cite{del2006sequential} and to accelerate Langevin sampling~\cite{lu2019accelerating,lu2022birth}. 
The discovery of the Stein metric~\cite{liu2017stein} follows the introduction of the Stein variational gradient descent algorithm~\cite{liu2016stein}. The study of the Stein gradient flow~\cite{liu2017stein,lu2019scaling,duncan2019geometry} sheds light on the analysis and improvements of the algorithm~\cite{detommaso2018stein, wang2020information, wang2022accelerated}.

\subsubsection{Mean-Field Models}

It is natural to ask which evolution equations in state space $\R^{N_{\theta}}$ give rise
to a given gradient flow in the space of probability measures. Continuous time
linear Markov processes with continuous sample paths are limited to It\^{o} SDEs (or equivalent models written in terms of Stratonovich or other stochastic integrals) \cite{pavliotis2014stochastic}. It is thus natural to seek mean-field models in the form of It\^{o} SDEs which depend on their
own density, which is therefore governed by a nonlinear Fokker-Planck equation.
Examples of particle models giving rise to linear and nonlinear Fokker-Planck
equations with gradient structure include Langevin dynamics~\cite{jordan1998variational,otto2001geometry} and Stein variational gradient descent~\cite{liu2016stein,liu2017stein} for sampling the Wasserstein gradient flow and the Stein variational gradient flow respectively. It is also of potential interest to go beyond
It\^{o} SDEs and include Levy (jump) processes \cite{bertoin1996levy,applebaum2009levy}, as well as birth-death models \cite{karlin1957classification}. 
Finally, we mention mean-field models for ensemble Kalman type algorithms \cite{calvello22}; these typically do not have a law which is a gradient flow in the space of probability measures, except
in the linear-Gaussian setting. These mean-field models combine gradient flow, Gaussian approximations, and mean-field equations \cite{ding2021ensemble,blomker2019well,calvello22}.

In practice, mean-field models are approximated by interacting particle systems \cite{ito1996diffusion}, in which
integration against the density is replaced by integration against the empirical
measure of the particle system. At the level of the nonlinear Markov process for the density
on $\R^{N_{\theta}}$ defined by the mean-field model,
this corresponds to approximation by a linear Markov process
for the density on $\R^{JN_{\theta}}$, where $J$ is the number of particles; the concepts of exchangeability
and propagation of chaos may be used to relate the two Markov processes. See~\cite{mckean1967propagation,sznitman1991topics,chaintron2021propagation} and the references therein.

% \yc{Q for Andrew: the three references are about parallel tempering (replica with different temperatures); seems not directly related to the literature on propagation of chaos (and also the intuition that you made about approximating nonlinear Markov processes by linear ones via interacting particle systems; why this is true and where does this intuition come from? I didn't find the answer in the three references. I may have an incorrect understanding of this part but the current exposition is not very clear to me.)?}
%

\subsubsection{Gaussian Approximations}

There is substantial work on the use of gradient flows in the space of Gaussian, or other parametric density spaces, to minimize the Kullback–Leibler (KL) divergence~\cite{wainwright2008graphical,blei2017variational}.
These methods, in the Gaussian setting, aim to solve the problem 
\begin{align}
\label{eq:GVI}
    (m^\star, C^\star) = \argmin_{m, C}~\mathrm{KL}[\N(m, C) \Vert  \rho_{\rm post}].
\end{align} 
Again, but now restricted to variations in the space of Gaussian densities, different metric tensors lead to different gradient flows to identify $(m^\star, C^\star).$
Recently, the work \cite{lambert2022variational} proved the global convergence of the Wasserstein natural gradient descent algorithm when the posterior is log-concave.
Other work on the use of Gaussian variational inference methods includes the papers \cite{opper2009variational,quiroz2018gaussian,khan2017conjugate,lin2019fast,galy2021particle, yumei2022variational}.

In addition to their role in parametric variational inference, Gaussian approximations have been widely deployed in various generalizations of Kalman filtering~\cite{kalman1960new,sorenson1985kalman,julier1995new,wan2000unscented,evensen1994sequential}.  For Bayesian inverse problems, iterative ensemble Kalman samplers have been proposed which are in category a) defined in subsection \ref{ssec:bg} \cite{emerick2013investigation,chen2012ensemble,wan2000unscented}. 
The paper \cite{huang2022efficient} introduced an ensemble Kalman methodology falling in category b),  defined in subsection \ref{ssec:bg}, based on a novel mean-field dynamical system  that depends on its own filtering distribution.
For all these algorithms based on a Gaussian ansatz, the accuracy depends on some measure
of being close to Gaussian.
Regarding the use of Gaussian approximations in Kalman inversion we highlight, in addition to
the approximate Bayesian methods already cited, the use of ensemble Kalman methods for optimization: see \cite{iglesias2013ensemble,chada2022convergence,kim2022hierarchical,huang2022iterated,weissmann2021adaptive}. Kalman filtering has also been used in combination with variational inference~\cite{lambert2022continuous}. 
The relation between iterative Kalman filtering and Gauss-Newton or Levenberg Marquardt algorithms are studied in~\cite{bell1993iterated,bell1994iterated,huang2022iterated,chada2020iterative},
and leads to ensemble Kalman based optimization methods which are affine invariant.

\subsubsection{Affine Invariance}

The idea of affine invariance was introduced
for MCMC methods in \cite{goodman2010ensemble,foreman2013emcee},  motivated by the empirical success
of the Nelder-Mead simplex algorithm \cite{nelder1965simplex} in optimization. 
Sampling methods with the affine invariance property can be effective for highly anisotropic distributions; this is because they behave identically in all coordinate systems related through an affine transformation; 
in particular, they can be understood by studying the best possible coordinate system, which
reduces anisotropy to the maximum extent possible within the class of affine transformations. The numerical studies presented in~\cite{goodman2010ensemble} demonstrate that affine-invariant MCMC methods offer significant performance improvements over standard MCMC methods. 
This idea has been further developed to enhance sampling algorithms in more general contexts. Preconditioning strategies for Langevin dynamics to achieve affine-invariance were discussed in \cite{leimkuhler2018ensemble}. 
And in~\cite{garbuno2020interacting}, the Kalman-Wasserstein 
metric was introduced, gradient flows in this metric were advocated and in
\cite{garbuno2020affine} the methodology was shown
to achieve affine invariance. Moreover, the authors in \cite{garbuno2020interacting,garbuno2020affine,pidstrigach2021affine} used the empirical covariance of an interacting particle approximation of the mean-field
limit, leading to a family of derivative-free sampling approaches in continuous time.
Similarly, the work \cite{liu2022second} employed the empirical covariance  to precondition second order Langevin dynamics.  
Affine invariant samplers can also be combined with the pCN (preconditioned Crank–Nicolson) MCMC method~\cite{cotter2013mcmc}, to boost the performance of MCMC in function space~\cite{coullon2021ensemble,dunlop2022gradient}.
Another family of affine-invariant sampling algorithms is based on Newton or Gauss-Newton, since 
the use of the Hessian matrix as the preconditioner in Newton's method induces the affine invariance property. 
Such methods include stochastic Newton MCMC~\cite{martin2012stochastic} and the Newton flow with different metrics~\cite{detommaso2018stein, wang2020information}.

\subsection{Our Contributions} 
\label{ssec:OC}

The primary contributions of the work are as follows:

\vspace{0.1in}

\begin{itemize}

\item we highlight a general methodology for the design of algorithms to sample
a target probability distribution known up to normalization, unifying and generalizing an
emerging scattered literature; the methodology is based on the introduction of gradient flows of the KL divergence between the target density and the time-dependent density that represents the solution of the gradient flow;

%\item the methodology is based on the introduction of gradient flows of the KL divergence between the time-dependent density which is solution of the gradient flow, and the target density;

\item we justify the choice of the KL divergence as the energy functional by showing that, among all $f$-divergences, it is the unique choice (up to scaling) for which the resulting gradient flow is independent of the normalization constant of the target distribution; 

\item the notion of gradient flow requires a metric and we employ
the Fisher-Rao, Wasserstein and Stein metrics
to provide concrete instantiations of the methodology;

\item to design implementable algorithms from the gradient
flows we discuss the use of particle approximations of mean-field models, 
whose law is governed by the gradient flow,  and restriction of
the gradient flow to a parameterized Gaussian family, which
we show to be equivalent to a moment closure approach;

\item we  define the concept of affine invariant metrics, demonstrate links to
affine invariant mean-field models and variational methods restricted to the set of Gaussians, and describe numerical results highlighting the benefits 
of affine invariant methods;

\item we prove results concerning the long time behavior of the underlying gradient flows, in both  the full and Gaussian density spaces, further highlighting the benefits of affine invariance in some cases.

\end{itemize}
% \vspace{0.1in}

\subsection{Organization}
\label{ssec:ORG}
The remainder of the paper is organized as follows. 
In \Cref{sec:Energy}, we introduce energy functionals in the density space.
In \Cref{sec:GF and AI}, we review the basics of gradient flows in the space of probability density functions, covering Fisher-Rao, Wasserstein and Stein gradient flows, and establishing links to mean-field models. Building upon the notion of diffeomorphism invariance in the Fisher-Rao gradient flow, we propose to adopt
the weaker, but more computationally tractable, notion of affine invariant metrics, leading to affine invariant gradient flows and mean-field models. 
In particular, we introduce affine invariant Wasserstein and Stein gradient flows. Their convergence properties are studied theoretically; some of these results highlight the benefits of affine invariance.
In \Cref{sec:GGF}, we review the basics of Gaussian approximate 
gradient flows. We define different Gaussian approximate gradient flows under the 
aforementioned metrics, computing the dynamics governing the evolution of the mean and covariance,
studying their convergence properties, and again identifying the effects of affine invariance. 
A by-product of our computations is to show that the evolution equations for mean and covariance
can be computed by simple use of moment closure. 
In \Cref{sec:Numerics}, numerical experiments are provided to empirically confirm the theory 
and in particular to demonstrate the effectiveness of the affine invariance property
in designing algorithms for certain classes of problems.
We make concluding remarks in \Cref{sec:conclusion}. Five appendices contain
details of the proofs of the results stated in the main body of the paper.

\section{Energy Functional} 
\label{sec:Energy}
% Consider the space of all strictly positive probability density functions, defined by\footnote{Here, and in what
% follows, we omit the domain $\R^{N_{\theta}}$ of the integral, unless including it is needed
% for purposes of disambiguation. Furthermore, extension to non-negative densities is possible and
% is relevant in some applications; but we work in the simpler setting of strict positivity
% throughout this paper.}
% \begin{equation}
%     \label{eq:PP}
% \mathcal{P} = \Bigl\{\rho \in L_1(\R^{N_{\theta}}) : \int \rho \mathrm{d}\theta = 1 ,\, \rho > 0\Bigr\}.
% \end{equation}
% Now consider any $\rho \in \mathcal{P}$ which is absolutely continuous with respect to
% $\rho_{\rm post}$. 

% We can define the energy $\mathcal{E}(\rho)$ as the KL divergence

An energy functional in the density space maps a probability density to a real number. An important example of an energy functional is the KL divergence: 
\begin{align}
\label{eq:energy}
\mathcal{E}(\rho) = \mathrm{KL}[\rho \Vert  \rho_{\rm post}]  =\int\rho \log\Bigl(\frac{ \rho}{ \rho_{\rm post}}\Bigr)\,\mathrm{d}\theta,
\end{align}
but we will also discuss other energy functionals in this paper. A key property of any energy functional is that its minimizer is the target distribution $\rho_{\rm post}$; this then suggests the derivation of algorithms to
identify $\rho_{\rm post}$ based on minimization of $\mathcal{E}(\rho)$.

\begin{newremark} 
Gradient flows, as the continuous counterpart of gradient descent algorithms, are typical approaches to minimize the energy functional. Methodologically, to introduce gradient flows, one needs a differential structure in the density space, which then leads to the definition of tangent spaces and metric tensors that determine a gradient flow. In general, the appropriate function spaces over which to define $\mathcal{E}(\cdot)$, its first variation, the tangent spaces, and the metric tensors are very technical. And whilst they may be rigorously determined in specific settings\footnote{The rigorous theory of gradient flows in suitable infinite-dimensional functional spaces and its link with evolutionary PDEs is a long-standing subject; see \cite{ambrosio2005gradient, ambrosio2006gradient} for discussions and a rigorous treatment of gradient flows in metric space.}
we seek to keep such technicalities to a minimum and focus on a formal methodology for deriving algorithms.
For the purposes of understanding the formal methodology that we adopt
it suffices to consider
\begin{equation}
\label{eqn-smooth-positive-densities}
\mathcal{P} = \Bigl\{\rho \in C^{\infty}(\R^{N_\theta}) : \int \rho \mathrm{d}\theta = 1 ,\, \rho > 0\Bigr\}
\end{equation}
as the appropriate space of probability densities; that is, we assume $\rho, \rho_{\rm post} \in \PP$. 
% Then, at any $\rho \in \PP$, the tangent space of $\PP$ satisfies
% \begin{equation}
% \label{eqn-tangent-space-smooth-positive-densities}
%     T_\rho \PP \subseteq \Bigl\{\sigma \in C^{\infty}(\R^{N_\theta}): \int \sigma \mathrm{d}\theta = 0\Bigr\}.
% \end{equation}

The above choice of $\mathcal{P}$ is useful as it allows us to use formal differential structures under the smooth topology to calculate\footnote{The formal Riemannian geometric calculations in the density space were first proposed by Otto in \cite{otto2001geometry}.  The calculations in the smooth setting are rigorous if $\R^{N_{\theta}}$ is replaced by a compact manifold, as noted in \cite{Lott08}. For rigorous results in general probability space, we refer to \cite{ambrosio2005gradient}.} many gradient flow equations; see \Cref{sec:GF and AI}. Once the gradient flow equation has been identified through the formal calculation, we can use it directly for general probability distributions and study the theoretical and numerical properties of this equation rigorously based on PDE tools;  a collection of such results may be found in \Cref{ssec:gf-theory}.
\end{newremark}

% \yc{Gradient flow, as the continuous counterpart of gradient descent, is a typical approach to minimize the energy functional. Methodologically, to introduce gradient flows, we need a differential structure in the probability space. In this paper, we follow the strategy of defining the differential structure on positive and smooth densities in a compact set $\Omega \subset \R^{N_{\theta}}$, namely the densities lie in
% \begin{equation}
% \label{eqn-smooth-compact-positive-densities}
% \mathcal{P} = \Bigl\{\rho \in C^{\infty}(\Omega) : \int \rho \mathrm{d}\theta = 1 ,\, \rho > 0\Bigr\}.
% \end{equation}
% For $\mathcal{P}$, the differential structure is simper to be made mathematically concise, and we can write down precisely the equation of gradient flows, discuss various choices of their ingredients, and study their properties in sampling, such as mean field models and affine invariance. Once the gradient flow equation has been identified, we can often use it directly for general probability distributions and study its theoretical and numerical properties in such a context. We also refer to \cite{ambrosio2005gradient} for rigorous treatment of possible differential structures in general probability space.}

\subsection{KL Divergence is A Special Energy Functional}
In principle, we can use any energy functional, and we are not limited to the KL divergence. Here we discuss some desired properties of energy functionals in the context of sampling. In doing so, we identify the KL divergence as a special choice of energy functional that is favorable in sampling problems.

When minimizing $\mathcal{E}(\rho)$, the first variation plays a central role.
For the choice of KL divergence as the energy functional,
the first variation is formally given by
\begin{align}
\label{eq:vare}
\frac{\delta \mathcal{E}}{\delta \rho} = \log \rho- \log \rho_{\rm post} + \text{constant} ,
\end{align}
where we have used the fact that $(\rho \log \rho)'=\log \rho +1$. Here $\frac{\delta \mathcal{E}}{\delta \rho}$ is defined up to a constant since what really matters is the action integral $\int \frac{\delta \mathcal{E}}{\delta \rho} \sigma$ for a signed measure $\sigma$ satisfying $\int \sigma =0$; for more details about the definition of the first variation see \Cref{sec:GF and AI}.

From the formula \cref{eq:vare} we observe that, for the KL divergence, $\frac{\delta \mathcal{E}}{\delta \rho}$  remains unchanged (up to constants) if we scale $\rho_{\rm post}$ by any positive constant $c >0$, i.e. if we change $\rho_{\rm post}$ to $c\rho_{\rm post}$. This  property eliminates the need to know the normalization constant of $\rho_{\rm post}$ in order to calculate
the first variation. It is common in
Bayesian inference for the normalization to be unknown and 
indeed the fact that MCMC methods do not need the normalization constant 
is central to their widespread use; it is desirable that
the methodology presented here has the same property.

The above property can also be phrased in terms of the energy functional. If we write $\mathcal{E}(\rho;\rho_{\rm post})$, making explicit the dependence on $\rho_{\rm post}$, then the property can be stated as: $\mathcal{E}(\rho;c\rho_{\rm post})-\mathcal{E}(\rho;\rho_{\rm post})$ is independent of $\rho$, for any $c \in (0,\infty)$. 
% In particular, when $\mathcal{E}$ is the KL divergence, $\mathcal{E}(\rho;c\rho_{\rm post})-\mathcal{E}(\rho;\rho_{\rm post}) = -\log c$.

The following \Cref{prop: KL unique f divergence} shows that this property of the KL divergence is special: among all $f$-divergences with continuously differentiable  $f$ defined on the positive reals
it is the only one to have this property.
Here the $f$-divergence between two continuous
density functions $\rho$ and $\rho_{\rm post}$, positive everywhere, is defined as 
\[D_f[\rho \Vert \rho_{\rm post}] = \int \rho_{\rm post} f\Bigl(\frac{\rho}{\rho_{\rm post}}\Bigr) {\rm d}\theta.\]
For convex $f$ with $f(1) = 0$, Jensen's inequality implies that $D_f[\rho \Vert \rho_{\rm post}] \geq 0$. The KL divergence used in \eqref{eq:energy} corresponds to the choice $f(x)=x\log x.$ In what follows we view this $f$-divergence as a function of the probability density $\rho$, parameterized
by $\rho_{\rm post}$; in particular we observe that this parameter-dependent function of
probability density $\rho$ makes sense if $\rho_{\rm post}$ is simply a positive function: 
it does not need
to be a probability density; we may thus scale $\rho_{\rm post}$ by any positive real.

\begin{theorem}
\label{prop: KL unique f divergence} Assume that $f:(0,\infty) \to \mathbb{R}$ is continuously differentiable and $f(1) = 0$.
    Then the KL divergence is the only $f$-divergence (up to scalar factors) such that $D_f[\rho \Vert c\rho_{\rm post}]-D_f[\rho \Vert \rho_{\rm post}]$ is independent of $\rho \in \mathcal{P}$,
    % whose first variation with respect to $\rho$ is invariant with respect to $\rho_{\rm post} \mapsto c\rho_{\rm post}$, 
    for any $c \in (0,\infty) $ and 
    for any $\rho_{\rm post} \in \mathcal{P}$.
\end{theorem}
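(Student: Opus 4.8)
The plan is to reduce the stated invariance property to a functional equation for $f$ and then solve it. First I would substitute $c\rho_{\rm post}$ into the definition of the $f$-divergence and compute
\[
D_f[\rho \Vert c\rho_{\rm post}]-D_f[\rho \Vert \rho_{\rm post}]
= \int \rho_{\rm post}\, g_c\!\Bigl(\frac{\rho}{\rho_{\rm post}}\Bigr)\,\mathrm{d}\theta,
\qquad g_c(u) := c\,f(u/c)-f(u),
\]
where $g_c \in C^1(0,\infty)$ because $f$ is. Writing $u=\rho/\rho_{\rm post}$, the integral is the expectation of $g_c(u)$ under the probability measure $\rho_{\rm post}\,\mathrm{d}\theta$, and the side constraint that $\rho$ be a probability density is exactly $\int u\,\rho_{\rm post}\,\mathrm{d}\theta = 1$. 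The requirement that this expectation be independent of $\rho$ is thus the statement that $\int g_c(u)\,\rho_{\rm post}\,\mathrm{d}\theta$ is constant over all admissible $u$ subject to the single linear constraint $\int u\,\rho_{\rm post}\,\mathrm{d}\theta=1$.

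The first key step is to show that this forces $g_c$ to be affine. I would argue variationally: fix $\rho\in\PP$ and perturb $\rho \mapsto \rho+\epsilon\sigma$ with $\sigma$ smooth, compactly supported, and mean-zero ($\int\sigma\,\mathrm{d}\theta=0$), so that $\rho+\epsilon\sigma\in\PP$ for small $\epsilon$. Differentiating the (constant) integral at $\epsilon=0$ gives $\int g_c'(\rho/\rho_{\rm post})\,\sigma\,\mathrm{d}\theta=0$ for every such $\sigma$, whence $g_c'(\rho/\rho_{\rm post})$ is constant in $\theta$. Since $\rho_{\rm post}$ is positive on all of $\R^{N_\theta}$, the ratio $\rho/\rho_{\rm post}$ can be made to sweep out any prescribed subinterval of $(0,\infty)$ as $\rho$ varies, so $g_c'$ is a constant $A(c)$ on all of $(0,\infty)$; hence $g_c(u)=A(c)\,u+B(c)$ for some $B(c)$.

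The second step is to solve the resulting identity $c\,f(u/c)-f(u)=A(c)\,u+B(c)$, valid for all $u,c>0$. Differentiating in $u$ removes $B(c)$ and yields $f'(u/c)-f'(u)=A(c)$; setting $h:=f'$ and passing to logarithmic variables $\phi(x):=h(e^{x})-h(1)$ converts this into the Cauchy equation $\phi(x+y)=\phi(x)+\phi(y)$. Because $f\in C^1$ makes $\phi$ continuous, the only solutions are linear, $\phi(x)=\kappa x$, i.e.\ $f'(u)=\kappa\log u + f'(1)$. Integrating and imposing $f(1)=0$ gives $f(u)=\kappa\,u\log u + \beta(u-1)$ for constants $\kappa,\beta$. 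Finally I would observe that the term $\beta(u-1)$ contributes $\beta\int(\rho-\rho_{\rm post})\,\mathrm{d}\theta=0$ to every $f$-divergence between probability densities, so it is invisible; up to the overall scalar $\kappa$ this leaves exactly $f(u)=u\log u$, the KL divergence, which indeed has the property since then $g_c(u)=-(\log c)\,u$ is affine.

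The step I expect to be the main obstacle is the passage from ``the expectation is independent of $\rho$'' to ``$g_c$ is affine'': one must handle the admissibility constraints (positivity, smoothness, the single integral constraint) carefully and ensure that the attainable range of $\rho/\rho_{\rm post}$ really is all of $(0,\infty)$, so that constancy of $g_c'$ on ratio-values upgrades to constancy on the whole half-line. The subsequent functional-equation analysis is routine once continuity of $f'$ is used to exclude the pathological (nonmeasurable) solutions of Cauchy's equation.
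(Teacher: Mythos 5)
Your proposal is correct and follows essentially the same route as the paper's proof: a first-variation argument against smooth, compactly supported, mean-zero perturbations of $\rho$ forces $f'(u/c)-f'(u)$ to be constant in $u$, and the resulting functional equation is reduced to Cauchy's equation in logarithmic variables and solved using the continuity of $f'$, with the affine remainder $\beta(u-1)$ contributing nothing to the divergence. The only cosmetic differences are that you first record that $g_c$ is affine before differentiating in $u$, and you establish constancy of $g_c'$ on all of $(0,\infty)$ by sweeping intervals of attained ratio values, whereas the paper anchors the constant at a point $\theta^\dagger$ where $\rho/\rho_{\rm post}=1$.
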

The proof of the theorem can be found in \Cref{sec: Proofs of prop: KL unique f divergence}.

\begin{newremark} 
As a consequence of Theorem \ref{prop: KL unique f divergence},
the gradient flows defined by the energy \eqref{eq:energy} 
do not depend on the normalization 
constant of the posterior, as we will see in the next section. Hence the numerical implementation is more straightforward in comparison with the use
of other divergences or metrics to define the energy $\mathcal{E}(\rho)$. This justifies the choice of the KL divergence as an energy functional for sampling,
and our developments in most of this paper are specific to the energy 
\cref{eq:energy}. However, other energy functionals can be, and are, used for 
constructing gradient flows; for example, the chi-squared divergence~\cite{chewi2020svgd,lindsey2022ensemble}:
\begin{equation}
\label{eq:chi-squared}
    \chi^2(\rho \Vert \rho_{\rm post}) = \int \rho_{\rm post}\Bigl(\frac{\rho}{\rho_{\rm post}} - 1\Bigr)^2 \mathrm{d}\theta
    = \int \frac{\rho^2}{\rho_{\rm post}}\mathrm{d}\theta - 1.
\end{equation}

The normalization constant can appear explicitly in the gradient flow equation for general energy functionals. Additional structures need to be explored to simulate such flows. For example, when the energy functional is the chi-squared divergence, in \cite{chewi2020svgd}, kernelization is used to avoid the normalization constant in the Wasserstein gradient flow. Moreover, in \cite{lindsey2022ensemble} where a modification of the Fisher-Rao metric is used, ensemble methods with birth-death type dynamics are adopted to derive numerical methods; the normalization constant can be absorbed into the birth-death rate.
\end{newremark}

\subsection{Constrained Minimization}
In the context of algorithms, it is also of interest to consider minimization of 
$\mathcal{E}(\cdot)$  given by \cref{eq:energy} over parameterized manifolds
in $\PP$, which leads to parametric variational inference. To illustrate this, we consider the 
manifold of Gaussian densities\footnote{The extension to $C \succeq 0$ may be
relevant for some applications but we work in the simpler, strictly positive covariance, setting
here.} $\PPG \subset \PP$

\begin{subequations}
\label{eq:PPG}
\begin{align}
    \PPG &:= \Bigl\{\rho_{\rhoa}: \rho_{\rhoa}(\theta) = \frac{\exp\bigl({-\frac{1}{2}(\theta - m)^T C^{-1} (\theta - m)}\bigr)}{\sqrt{|2\pi C|}} \textrm{ with } \rhoa =(m,C) \in \mathcal{A}\Bigr\},\\ 
    \mathcal{A}&=
    \Bigl\{(m,C) : m\in\R^{N_\theta},\,C\succ 0\in \R^{N_\theta \times N_\theta}
    \Bigr\};
\end{align}
\end{subequations}
here $|\cdot|$ denotes the determinant when the argument is a matrix.
This definition leads to Gaussian variational inference:
\begin{equation}
\label{eq:Gaussian-KL0}
\min_{\rho \in \PPG} \mathrm{KL}[\rho \Vert  \rho_{\rm post}] .
\end{equation}
Any minimizer $\rho_{\rhoa_\star} = \N(m_{\star},C_{\star})$ satisfies \cite{lambert2022recursive}\footnote{We use $\Hess f(\theta)$ to denote the Hessian matrix 
associated with scalar field $f(\theta)$. In doing so we follow the convention 
in the continuum mechanics literature, noticing that the Hessian operator is formed as the composition
of the gradient acting on a scalar field, followed by the gradient acting 
on a vector field~\cite{gonzalez2008first}. The notation 
$\nabla_{\theta}^2 f(\theta)$ is used by some authors to denote the Hessian;
we avoid this because of potential confusion with its useage in some fields 
to denote the Laplacian (trace of the Hessian).}
\begin{align}
\label{eq:Gaussian-KL}
\E_{\rho_{\rhoa_\star}}\bigl[\nabla_\theta  \log \rho_{\rm post}(\theta)   \bigr] = 0 \quad \textrm{and}
\quad C_{\star}^{-1} = -\E_{\rho_{\rhoa_\star}}\bigl[\Hess\log \rho_{\rm post}(\theta)\bigr] .
\end{align}

\section{Gradient Flow}
\label{sec:GF and AI}
In this section, we start by introducing the concept of metric, the gradient flow of energy
\cref{eq:energy} that it induces, the related mean-field dynamics in the state space $\R^{N_{\theta}}$, and 
the concept of affine invariance, all in \Cref{sec:gradient-flow}. Then, in subsequent subsections, we introduce the Fisher-Rao (\Cref{ssec:FR}), Wasserstein (\Cref{ssec:Wasserstein}) and Stein gradient flows (\Cref{ssec:Stein}), together with affine invariant modifications when relevant.  
In fact the Fisher-Rao metric has a stronger invariance property: it is invariant under any diffeomorphism of the parameter space. Finally, we discuss the convergence properties of these gradient flows in \Cref{ssec:gf-theory}.

\subsection{Basics of Gradient Flows}
\label{sec:gradient-flow}
In this subsection, we introduce gradient flows in the probability space and affine invariance in this context. Our focus is on formal calculations to derive these flows. We do not focus on the rigorous analytical underpinnings of gradient flows in a metric space; the reader interested in
further details should consult \cite{ambrosio2005gradient}.

\subsubsection{Metric}
\label{ssec:m}
Recall that the density space we consider is the manifold $\PP$ of smooth 
strictly positive densities \cref{{eqn-smooth-positive-densities}}. At any $\rho \in \PP$, the tangent 
space of $\PP$ satisfies\footnote{The inclusion becomes equality if $\R^{N_{\theta}}$ is replaced by a compact manifold; see related analysis in \cite{Lott08}.}
\begin{align}
    T_\rho \PP \subseteq \Bigl\{\sigma \in C^{\infty}(\R^{N_\theta}): \int \sigma \mathrm{d}\theta = 0\Bigr\}.
\end{align}
% Since $C_c^\infty(\Omega)\subset T_\rho\PP$, 
The cotangent space $T_\rho^* \PP$ is the dual of $T_
\rho\PP$, which can be identified as a subset of distributions on $\Omega$; see \cite[Section 7]{semmesintroduction}.
We can introduce a bilinear map $\langle\cdot,\cdot\rangle$ as the dual pairing
% the dual pair $\Bigl(T_\rho^* \PP, T_\rho \PP, \langle\cdot,\cdot\rangle\Bigr)$, where $\langle\cdot,\cdot\rangle$ represents the duality pairing: 
$T_\rho^* \PP \times T_\rho \PP \rightarrow \R$.
For any $\psi \in T^*_{\rho}\PP$ and $\sigma \in T_{\rho}\PP$, if the distribution $\psi$ is a classical function,
the duality pairing $\langle\cdot,\cdot\rangle$ between $T^*_{\rho}\PP$ and $T_{\rho}\PP$ can be identified in terms of $L^2$ integration:
$\langle\psi,\sigma \rangle = \int \psi \sigma {\rm d}\theta$.

% \st{
% We can identify the cotangent space $T_\rho^* \PP \simeq T_\rho \PP$. This isomorphism relates the tangent vector $\sigma \in T_\rho \PP$ to the covector in $T_\rho^* \PP$ via the identification $\langle\sigma,\cdot\rangle$,
% and where $\langle\sigma_1,\sigma_2\rangle = \int \sigma_1 \sigma_2 $ denotes 
% the inner product in $T_\rho \PP$. 
% From any cotangent vector in $T_{\rho}^{*}\PP$ we can use the Riesz 
% representation theorem to represent it in $T_{\rho}\PP.$}
% }

% Since the tangent space and the quotient space $\{\sigma \in C^{\infty}(\R^{N_{\theta}})\} / \R$ are isormophic,  any cotangent vector in $T_{\rho}^{*}\PP$ can be represented in the quotient space. 
% We will use this representation in the following paper. 

%Given any Riemannian metric $g_{\rho}: T_\rho \PP \times T_\rho \PP \to \R$
Given a metric tensor at $\rho$,  denoted by $M(\rho): T_{\rho}\PP \rightarrow T_{\rho}^{*}\PP$, we may define the Riemannian metric $g_{\rho}: T_\rho \PP \times T_\rho \PP \to \R$ via $g_{\rho}(\sigma_1, \sigma_2) = \langle M(\rho)\sigma_1, \sigma_2\rangle$. 
The symmetric property of the Rimannian metric $g_{\rho}$ implies that $\langle M(\rho)\sigma_1, \sigma_2\rangle = \langle M(\rho)\sigma_2, \sigma_1\rangle$. 
The inverse of $M(\rho)$, denoted by $M(\rho)^{-1}: T_{\rho}^{*}\PP \rightarrow T_{\rho}\PP$, is sometimes referred to as the
Onsager operator~\cite{onsager1931reciprocal,onsager1931reciprocal-II,mielke2016generalization}.

% \begin{newremark}
% \label{remark: primal dual identification}
% \st{
% In the definition of $M(\rho)$, we identify 
% $M(\rho)\sigma_1 \in T^*_{\rho}\PP$ as its representer in $T_{\rho}\PP$ and thus $\langle M(\rho)\sigma_1, \sigma_2\rangle$ is well-defined, given that $\langle\cdot,\cdot\rangle$ is the $L^2$ inner product in the tangent space. 
% In fact, for ease of notation, in this paper we sometimes use the same notation for a dual element and its representer in the primal space. One may also understand $\langle\cdot,\cdot\rangle$ in $\langle M(\rho)\sigma_1, \sigma_2\rangle$ as the primal-dual pairing between $T^*_{\rho}\PP$ and $T_{\rho}\PP$; however we employ the $L^2$ inner product interpretation throughout this paper for convenience.
% }
% \end{newremark}

The geodesic distance 
$\calD: \PP \times \PP \to \R^+$ under metric $g$ is defined by the formula
\begin{equation}
\label{eqn: geodesic distance}
\calD(\rho_A, \rho_B)^2 = \inf_{\rho_t} \Bigl\{\int_0^1 g_{\rho_t}(\partial_t{\rho}_t, \partial_t{\rho}_t) \mathrm{d}t: \,\rho_{0} = \rho_A, \rho_{1} = \rho_B\Bigr\}.
\end{equation}
Here $\rho_t$ is a smooth curve in $\PP$ with respect to $t$.
The distance $\mathcal{D}$ defines a metric on probability densities; however,
to avoid confusion with the Riemannian metric $g$, in this paper we always refer to $\mathcal{D}$ as a distance.

We also recall that the geodesic distance $\calD$ has the following property \cite{do1992riemannian}:
\begin{equation}
    \label{eq:geo}
    \lim_{\epsilon \to 0}\frac{1}{\epsilon^2}\calD(\rho+\epsilon \sigma,\rho)^2=g_{\rho}(\sigma,\sigma) = \langle M(\rho)\sigma, \sigma \rangle.
\end{equation}

\subsubsection{Flow Equation}
\label{ssec:FE}
Recall that the first variation of $\mathcal{E}(\rho)$, denoted by $\frac{\delta \mathcal{E}}{\delta \rho} \in T_{\rho}^*\PP$, is defined by
\begin{align*}
\Bigl\langle \frac{\delta \mathcal{E}}{\delta \rho}, \sigma \Bigr\rangle
=  \lim_{\epsilon \rightarrow 0} \frac{\mathcal{E}(\rho + \epsilon \sigma) - \mathcal{E}(\rho)}{\epsilon},
\end{align*}
for any $\sigma \in T_\rho \PP$. 
% \dzh{\sout{In the preceding, we identify $\frac{\delta \mathcal{E}}{\delta \rho} \in T_{\rho}^*\PP$ via its representer in $T_{\rho}\PP$ 
% so that the inner product notation is well-defined and consistent; 
% see \cref{remark: primal dual identification}.}}
The gradient of $\mathcal{E}$ under the Riemannian metric, denoted by $\nabla \mathcal{E}$, is defined via the condition 
$$\forall \sigma \in T_\rho \PP\qquad g_{\rho}(\nabla \mathcal{E}, \sigma) = 
\Bigl\langle \frac{\delta \mathcal{E}}{\delta \rho},\sigma \Bigr\rangle.$$ 
Using the metric tensor, we can write $\nabla \mathcal{E}(\rho) = M(\rho)^{-1} \frac{\delta \mathcal{E}}{\delta \rho}$.

The gradient flow of $\mathcal{E}$ with respect to this metric is thus defined by 
\begin{equation}
\label{eq:GF}
    \frac{\partial \rho_t}{\partial t}  = -\nabla \mathcal{E}(\rho_t)=-M(\rho_t)^{-1}\frac{\delta \mathcal{E}}{\delta \rho}\Bigr|_{\rho = \rho_t}, 
\end{equation}
in which the right hand side is an element in $T_{\rho_t} \PP.$

\begin{newremark}
    The gradient flow can also be interpreted from the proximal perspective.
Given the metric $g$ and the geodesic distance function under this metric, $\calD$,  the proximal point method uses the following iteration
\begin{align}
\label{eq:prox0}
    \rrho_{{n+1}} = \argmin_{\rho \in \PP}\Bigl(\EE(\rho) + \frac{1}{2\Delta t}\calD(\rho, \rrho_{n})^2\Bigr)
\end{align}
to minimize the energy functional $\EE$ in  density  space $\PP$. When $\Delta t$ is small
it is natural to seek $\rrho_{n+1}=\rrho_n+\Delta t\sigma_n$ and note that, invoking the
approximation implied by \cref{eq:geo}, 
\begin{align*}
    \sigma_{{n}} &\approx \argmin_{\sigma \in T_{\rrho_n} \PP}\Bigl(\EE(\rrho_n)+\Delta t\Bigl\langle  \frac{\delta \mathcal{E}}{\delta \rho}\Bigr|_{\rho = \rrho_n}, \sigma\Bigr\rangle + \frac{1}{2}{\Delta t}\Bigl\langle M(\rrho_n)\sigma, \sigma \Bigr\rangle \Bigr).
    % \\
    % &= \argmin_{\sigma \in T_{\rrho_n} \PP}\Bigl( \frac{\Delta t}{2}\Bigl\langle M(\rrho_n) (\sigma +  M(\rrho_n)^{-1} \frac{\delta \mathcal{E}}{\delta \rho}\Bigr|_{\rho = \rrho_n}), \sigma +  M(\rrho_n)^{-1} \frac{\delta \mathcal{E}}{\delta \rho}\Bigr|_{\rho = \rrho_n} \Bigr\rangle \Bigr)
\end{align*}
To leading order in $\Delta t$, this expression is minimized by choosing
$$\sigma_n \approx -M(\rrho_n)^{-1} \frac{\delta \mathcal{E}}{\delta \rho}\Bigr|_{\rho = \rrho_n}. $$  
Letting
$\rrho_n \approx \rho_{n\Delta t}$, the formal continuous time limit of the proximal
algorithm leads to the corresponding gradient flow~\cref{eq:GF}~\cite{jordan1998variational}.
\end{newremark}

% For the specific choice \eqref{eq:energy} of KL divergence functional 
% as the energy functional,
% the first variation is given by \eqref{eq:vare}.
% Thus the resulting gradient flow has the form
% \begin{equation}
% \label{eq:GF2}
%     \frac{\partial \rho_t}{\partial t}  = -M(\rho_t)^{-1} \bigl(\log \rho_t- \log \rho_{\rm post} - \int (\log \rho- \log \rho_{\rm post}){\rm d}\theta \bigr).
% \end{equation}
% Most of the paper is focused on this gradient flow, for specific choices
% of metric, mean-field models that have law governed by this gradient flow,
% or restrictions of this gradient flow to variations in the manifold of
% Gaussians.

\subsubsection{Affine Invariance}
We now introduce the concept of affine invariance.
The concept of affine invariance in sampling is first introduced for MCMC methods in \cite{christen2010general,goodman2010ensemble},  motivated by the attribution of the empirical success
of the Nelder-Mead algorithm \cite{nelder1965simplex}
for optimization to a similar property; further development of the method in the context of sampling
algorithms is discussed in~\cite{leimkuhler2018ensemble,garbuno2020affine,pidstrigach2023affine}. 
Moreover, Newton's method for optimization exhibits affine invariance, which inspired a diverse range of affine invariant samplers, such as Mirror-Langevin process~\cite{hsieh2018mirrored,zhang2020wasserstein,chewi2020exponential}.
However, the concept of affine invariance in the context of gradient flow has not been systematically explored or discussed. Roughly speaking, affine invariant gradient flows are invariant under any invertible affine transformations of the density variables;
as a consequence, the convergence rate is independent of the affine transformation. It is
thus natural to expect that algorithms with this property have an advantage for sampling highly anisotropic posteriors.

Let $\varphi:\theta \rightarrow \tilde\theta$ denote a diffeomorphism in $\R^{N_{\theta}}$. When $\varphi(\theta) = A\theta + b$, $A \in \R^{N_{\theta}\times N_{\theta}}, b \in \R^{N_{\theta}}$ and $A$ is invertible, the diffeomorphism is an affine transformation. 

\begin{definition} 
\label{def:pushforward}
We define the {\em pushforward operation} $\#$ for various objects as follows:
\begin{itemize}
    \item for density $\rho$, we write $\tilde{\rho} = \varphi \# \rho$, which satisfies $\tilde{\rho}(\tilde\theta) = \rho(\varphi^{-1}(\tilde{\theta}))|\nabla_{\tilde{\theta}} \varphi^{-1}(\tilde{\theta})|$;
    \item for tangent vector $\sigma \in T_\rho \PP$, we have $\tilde{\sigma} = \varphi\#\sigma \in T_{\tilde{\rho}} \PP$ which satisfies \[\tilde{\sigma}(\tilde\theta) = \sigma(\varphi^{-1}(\tilde{\theta}))|\nabla_{\tilde{\theta}} \varphi^{-1}(\tilde{\theta})|;\] 
    \item for functional $\mathcal{E}$ on $\PP$, we define $\tilde{\EE} = \varphi\#\EE$ via $\tilde{\EE}(\tilde{\rho}) = \EE(\varphi^{-1}\# \tilde{\rho})$.
\end{itemize}
% We note that
% $$\tilde{\sigma}(\tilde\theta) = \sigma(\varphi^{-1}(\tilde{\theta}))|\nabla \varphi^{-1}(\tilde{\theta})|.$$
\end{definition}
We note that the pushforward operation $\#$ is defined for general measures through duality. More precisely consider probability measures $\mu, \nu$ in $\R^{N_{\theta}}$. Then $\nu = \varphi \# \mu$ if and only if
\[ \int f(\theta) {\rm d}\nu = \int f(\varphi(\theta)) {\rm d}\mu, \]
for any integrable $f$ under measure $\nu$. If $\mu, \nu$ admit densities $\rho$ and $\tilde{\rho}$ respectively, we can use the change-of-variable formula to derive the forms of $\tilde{\rho}$ in \Cref{def:pushforward}. Many results in this paper involving affine transformations may be proved alternatively using the definition of pushforward via duality; by adopting this approach the arguments could be extended to general measures, rather
than those with smooth Lebesgue density. However, in the present study we consider probability densities $\rho \in \PP$, for which the pushforward defined through \Cref{def:pushforward} is convenient.

% In the above definition, the forms of $\tilde{\rho}$ and $\tilde{\sigma}$ can be derived by the standard change-of-variable formula \cite{villani2021topics}.

Now we can define affine invariant gradient flow, metric, and mean-field dynamics.

\begin{definition}[Affine Invariant Gradient Flow] 
\label{def: Affine Invariant Gradient Flow} Fix a Riemannian metric $g$ and the gradient
operation $\nabla$ with respect to this metric. 
Consider the gradient flow
\[\frac{\partial \rho_t}{\partial t} = -\nabla \mathcal{E}(\rho_t)\] and the affine transformation $\tilde\theta = \varphi(\theta) = A\theta + b$. Let $\tilde{\rho}_t  :=\varphi \# \rho_t$ denote the distribution of $\tilde{\theta}$ at time $t$ and set $\tilde{\mathcal{E}} = \varphi\#\EE.$ 
The {\em gradient flow is affine invariant} if 
\[\frac{\partial \tilde{\rho}_t}{\partial t} = -\nabla \tilde{\mathcal{E}}(\tilde{\rho}_t),\]
for any invertible affine transformation $\varphi$.
\end{definition}

The key idea in the preceding definition is that, after the change of variables, the dynamics of $\tilde{\rho}_t$ is itself a gradient flow, in the same metric as the gradient flow 
in the original variables.

\begin{definition}[Affine Invariant Metric]
\label{def: Affine Invariant Metric}
 Define the {\em pull-back operator on Riemannian metric $g$} by 
    \[ (\varphi^{\#}g)_{\rho} (\sigma_1,\sigma_2) = g_{\varphi \# \rho} (\varphi \# \sigma_1,\varphi \# \sigma_2), \]
    for any $\rho \in \PP$ and $\sigma_1, \sigma_2 \in T_\rho \PP$.
We say that {\em Riemannian metric $g$ is affine invariant} if $\varphi^{\#}g = g$ for any affine transformation $\varphi$.  
\end{definition}
The affine invariance of gradient flows is closely related to that of the Riemannian metric:
\begin{proposition}
\label{prop: affine invariant metric tensor}
The following two conditions are equivalent:
\begin{enumerate}
    \item the gradient flow under Riemannian metric $g$ is affine invariant for any $\mathcal{E}$;
    \item the Riemannian metric $g$ is affine invariant.
\end{enumerate}
\end{proposition}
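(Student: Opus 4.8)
The plan is to prove the two implications separately, working directly from \Cref{def: Affine Invariant Gradient Flow} and \Cref{def: Affine Invariant Metric}, and using the characterization of the gradient as $\nabla\mathcal{E}(\rho) = M(\rho)^{-1}\frac{\delta\mathcal{E}}{\delta\rho}$ together with the defining relation $g_\rho(\nabla\mathcal{E},\sigma) = \langle\frac{\delta\mathcal{E}}{\delta\rho},\sigma\rangle$. The cleanest route is to phrase everything at the level of the weak form $g_\rho(\nabla\mathcal{E},\sigma)$, since this sidesteps inverting $M(\rho)$ explicitly.

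First I would establish two auxiliary compatibility facts about the pushforward under an affine map $\varphi$. (i) The pushforward commutes with taking first variations of the pushed-forward energy: using $\tilde{\mathcal{E}}(\tilde\rho)=\mathcal{E}(\varphi^{-1}\#\tilde\rho)$ and the definition of the first variation as a limit of difference quotients, I would show that $\langle\frac{\delta\tilde{\mathcal{E}}}{\delta\tilde\rho},\varphi\#\sigma\rangle = \langle\frac{\delta\mathcal{E}}{\delta\rho},\sigma\rangle$ for $\tilde\rho=\varphi\#\rho$. This follows because the pushforward of densities is linear and invertible (so $\varphi^{-1}\#(\tilde\rho+\epsilon\,\varphi\#\sigma) = \rho+\epsilon\sigma$), making the difference quotients on the two sides literally equal. (ii) The pushforward commutes with the time derivative: if $\tilde\rho_t=\varphi\#\rho_t$ then $\partial_t\tilde\rho_t = \varphi\#(\partial_t\rho_t)$, which is immediate since $\varphi\#$ is a fixed linear operator on densities (the Jacobian factor $|\nabla_{\tilde\theta}\varphi^{-1}|$ is constant in $t$).

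Then I would prove (2)$\Rightarrow$(1). Assume $g$ is affine invariant, i.e. $g_{\varphi\#\rho}(\varphi\#\sigma_1,\varphi\#\sigma_2)=g_\rho(\sigma_1,\sigma_2)$. I want to show $\partial_t\tilde\rho_t = -\nabla\tilde{\mathcal{E}}(\tilde\rho_t)$, which by the definition of $\nabla$ is equivalent to checking that, for all $\tilde\sigma\in T_{\tilde\rho_t}\PP$, one has $g_{\tilde\rho_t}(\partial_t\tilde\rho_t,\tilde\sigma) = -\langle\frac{\delta\tilde{\mathcal{E}}}{\delta\tilde\rho},\tilde\sigma\rangle$. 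Since $\varphi\#$ is a bijection of tangent spaces, it suffices to test against $\tilde\sigma=\varphi\#\sigma$ for arbitrary $\sigma\in T_{\rho_t}\PP$. On the left, using (ii) and affine invariance of $g$, $g_{\tilde\rho_t}(\varphi\#\partial_t\rho_t,\varphi\#\sigma)=g_{\rho_t}(\partial_t\rho_t,\sigma)$; since $\rho_t$ solves the original flow, this equals $-g_{\rho_t}(\nabla\mathcal{E}(\rho_t),\sigma)=-\langle\frac{\delta\mathcal{E}}{\delta\rho},\sigma\rangle$. On the right, fact (i) gives $-\langle\frac{\delta\tilde{\mathcal{E}}}{\delta\tilde\rho},\varphi\#\sigma\rangle = -\langle\frac{\delta\mathcal{E}}{\delta\rho},\sigma\rangle$. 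The two sides agree, proving (1).

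Finally I would prove (1)$\Rightarrow$(2) by contraposition or by exploiting the quantifier ``for any $\mathcal{E}$''. Assuming the gradient flow is affine invariant for every energy, I would run the computation above in reverse: affine invariance of the flow forces $g_{\tilde\rho}(\varphi\#\nabla\mathcal{E}(\rho),\varphi\#\sigma) = g_\rho(\nabla\mathcal{E}(\rho),\sigma)$ for all $\sigma$ and all $\mathcal{E}$. The main obstacle, and the step requiring care, is the freedom in $\mathcal{E}$: I must argue that as $\mathcal{E}$ ranges over all admissible energies, the gradients $\nabla\mathcal{E}(\rho)$ (equivalently the first variations $\frac{\delta\mathcal{E}}{\delta\rho}$) sweep out a set rich enough to span $T_\rho\PP$, so that the bilinear identity $g_{\tilde\rho}(\varphi\#v,\varphi\#\sigma)=g_\rho(v,\sigma)$ holds for all $v,\sigma\in T_\rho\PP$, which is exactly $\varphi^\#g=g$. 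I expect this surjectivity/density argument to be the crux; it should follow by choosing suitable energies (e.g. linear functionals $\mathcal{E}(\rho)=\int V\rho\,\mathrm{d}\theta$ whose first variation is the arbitrary function $V$, adjusted to lie in the cotangent space), but making it precise within the formal smooth framework $\PP$ is where the real content lies.
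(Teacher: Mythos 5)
Your proposal is correct and follows essentially the same route as the paper's proof: both reduce affine invariance of the flow to the identity $\varphi\#(\nabla\mathcal{E}(\rho))=\nabla\tilde{\mathcal{E}}(\tilde\rho)$, test the weak form against pushed-forward tangent vectors $\tilde\sigma=\varphi\#\sigma$, and rest on the change-of-variable identity for the first variation (your fact (i) is exactly the paper's \cref{lemma:first_variation_change_variable}). The only difference is presentational: you are more explicit about the two points the paper glosses over, namely that $\varphi\#$ commutes with $\partial_t$, and that the gradients $\nabla\mathcal{E}(\rho)$ sweep out all of $T_\rho\PP$ as $\mathcal{E}$ varies (your linear-energy argument is a valid way to make precise what the paper dispatches with ``since $\mathcal{E}$ and $\sigma$ are arbitrary'').
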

We provide a proof for this proposition in \cref{proof: prop affine invariant metric tensor}. 
Given this, it suffices to focus on the affine invariance of the Riemannian metrics that we 
consider in this paper; furthermore, we may modify them where needed to make them affine invariant. 
\begin{newremark}
    In \cref{prop: affine invariant metric tensor}, we consider the affine invariance property to hold for any $\mathcal{E}$; the metric is \textit{independent} of $\mathcal{E}$. However, it is possible to choose a metric that depends on the energy functional $\mathcal{E}$. An example of this is Newton's method where the Riemannian metric is given by the Hessian of the energy functional, assuming it is positive definite; see the discussion of Newton's flow on probability space in \cite{wang2020information}.
\end{newremark}
\begin{newremark}
\label{remark: transform E and transform posterior}
    Recall that our motivation for introducing the affine invariance property is that algorithms with this property will, in settings where an affine transformation removes anisotropy, have
    favorable performance when sampling highly anisotropic posteriors. Our current definition of affine invariance is tied to the energy functional $\mathcal{E}$ without direct reference to $\rho_{\rm post}$. Given $\mathcal{E}$, an affine invariant gradient flow has the same convergence property when the energy functional changes to $\mathcal{E}(\varphi^{-1}\# \rho)$ where $\varphi$ is an invertible affine transformation.  To connect the transformation of the energy functional to that of $\rho_{\rm post}$, we note that the KL divergence satisfies the property
    \begin{equation}
    \label{eq:Echange}
        \mathcal{E}(\varphi^{-1}\# \rho) = {\rm KL}[\varphi^{-1}\# \rho \Vert \rho_{\rm post}] = {\rm KL}[ \rho \Vert \varphi\# \rho_{\rm post}]. 
    \end{equation}
    Therefore, affine invariant gradient flows of the KL divergence have the same convergence property when $\rho_{\rm post}$ changes to $\varphi\# \rho_{\rm post}$, for any invertible affine transformation $\varphi$. This suggests that the flow will have favorable behavior for sampling highly anisotropic posteriors provided, under at least one affine transformation, the anisotropy is removed.
\end{newremark}

\subsubsection{Mean-Field Dynamics}
\label{sssec:MFD}
Approximating the dynamics implied by \cref{eq:GF} is often a substantial task. One
approach is to identify a mean-field stochastic
dynamical system, with state space $\R^{N_{\theta}}$, 
defined so that its law is given by  \cref{eq:GF}. 
For example, we may introduce the It\^o SDE
\begin{equation}
\begin{aligned}
\label{eq:MFD}
\mathrm{d}\theta_t = f(\theta_t; \rho_t, \rho_{\rm post})\mathrm{d}t + h(\theta_t; \rho_t,\rho_{\rm post}){\rm d}W_t, 
\end{aligned}
\end{equation}
where $W_t\in \R^{N_{\theta}}$ is a standard Brownian
motion. Because the drift $f: \R^{N_{\theta}} \times \PP \times \PP\rightarrow \R^{N_{\theta}}$ and diffusion coefficient $h : \R^{N_{\theta}}\times\PP \times \PP \rightarrow \R^{N_{\theta}\times N_{\theta}}$ are evaluated at $\rho_t$, the density of $\theta_t$ itself, this is a mean-field 
model. While other types of mean-field dynamics, such as the birth-death dynamics, do exist, here we mainly consider the It\^o SDE type dynamics.

The density is governed by a nonlinear Fokker-Planck equation
\begin{equation} 
\label{eq:MFD-2}
 \frac{\partial \rho_t}{\partial t} = -\nabla_\theta \cdot(\rho_t f) + \frac12\nabla_{\theta}\cdot\bigl(\nabla_\theta \cdot(hh^T \rho_t)\bigr).
\end{equation}
By choice of $f, h$ it may be possible to ensure that \cref{eq:MFD-2} coincides with
\cref{eq:GF}. Then an interacting particle system can be used to approximate
\cref{eq:MFD}, generating an empirical measure which approximates $\rho_t.$

As the affine invariance property is important for gradient flows, we also need to study this property for mean-field dynamics that are used to approximate these flows.

\begin{definition}[Affine Invariant Mean-Field Dynamics]
\label{def: affine invariant mean-field equation}
 Consider the mean-field dynamics~\cref{eq:MFD}
and the affine transformation $\tilde\theta = \varphi(\theta) = A\theta + b$. 
The mean-field dynamics is affine invariant, when 
\begin{subequations}
\label{eq:aMFD-AI}
\begin{align}
\begin{split}
\label{eq:aMFD-AI-f}
&Af(\theta; \rho, \rho_{\rm post}) = f(\varphi(\theta); \varphi \# \rho, \varphi \# \rho_{\rm post}), 
\end{split}\\
\begin{split}
\label{eq:aMFD-AI-sigma}
&Ah(\theta; \rho, \rho_{\rm post}) = h(\varphi(\theta); \varphi \# \rho, \varphi \# \rho_{\rm post}),
\end{split}
\end{align}
\end{subequations}
for any affine transformation $\varphi$. This implies that $\tilde{\theta}_t = \varphi(\theta_t)$ satisfies a SDE of the same form as \cref{eq:MFD}:
\begin{equation}
\begin{aligned}
\mathrm{d}\tilde{\theta}_t = f(\tilde{\theta}_t; \tilde{\rho}_t, \tilde{\rho}_{\rm post})\mathrm{d}t + h(\tilde{\theta}_t; \tilde{\rho}_t,\tilde{\rho}_{\rm post}){\rm d}W_t, 
\end{aligned}
\end{equation}
where $\tilde{\rho}_t = \varphi \# \rho_t$ and $\tilde{\rho}_{\rm post} = \varphi \# \rho_{\rm post}$ by definition.
\end{definition}

If we use this definition, then mean-field dynamics of affine invariant gradient flows need not be affine invariant, since there may be different $f, h$ giving rise to the same flow -- equivalence classes. For the affine invariance of the corresponding mean-field dynamics, we have the following proposition, noting that the condition on the energy is satisfied for \cref{eq:energy}
by \cref{eq:Echange}.
\begin{proposition}
\label{prop: affine invariance of mean-field dynamics}
    Consider the energy functional $\mathcal{E}(\rho; \rho_{\rm post})$, making explicit the dependence on $\rho_{\rm post}$, and assume that $\mathcal{E}(\varphi^{-1} \# \rho; \rho_{\rm post}) = \mathcal{E}(\rho; \varphi \#\rho_{\rm post})$ holds. Then, corresponding to any
    affine invariant gradient flow of $\mathcal{E}$  
    there is a mean-field dynamics of the form \eqref{eq:MFD}
    which is affine invariant.
    \end{proposition}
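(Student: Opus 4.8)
The plan is to build the affine invariant mean-field dynamics by realizing the flow \emph{deterministically}, that is with $h\equiv 0$, so that the two covariance requirements in \eqref{eq:aMFD-AI} collapse to a single condition on the drift that is then supplied directly by the affine invariance of the flow.

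First I would record the covariance of the flow itself. Writing the gradient flow as $\partial_t\rho = F(\rho;\rho_{\rm post})$ with $F(\rho;\rho_{\rm post})=-\nabla\mathcal{E}(\rho)$, I would combine \Cref{def: Affine Invariant Gradient Flow} with the hypothesis $\mathcal{E}(\varphi^{-1}\#\rho;\rho_{\rm post})=\mathcal{E}(\rho;\varphi\#\rho_{\rm post})$ to deduce, for every invertible affine $\varphi(\theta)=A\theta+b$,
\begin{equation}
\label{eq:Fcov-plan}
F(\varphi\#\rho;\varphi\#\rho_{\rm post}) = \varphi\#\,F(\rho;\rho_{\rm post}),
\end{equation}
with $\varphi\#$ acting on tangent vectors as in \Cref{def:pushforward}. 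The identity \eqref{eq:Fcov-plan} holds because pushforward commutes with $\partial_t$, so $\partial_t(\varphi\#\rho_t)=\varphi\#(-\nabla\mathcal{E}(\rho_t))$, while the energy hypothesis identifies $\varphi\#\mathcal{E}$ with the same functional now evaluated against the target $\varphi\#\rho_{\rm post}$, whence $\nabla(\varphi\#\mathcal{E})(\varphi\#\rho)=\varphi\#\nabla\mathcal{E}(\rho)$.

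Next I would pass to a divergence-form realization. Since the flow is assumed to be of the form \eqref{eq:MFD}, its right-hand side is expressed through the Fokker--Planck operator \eqref{eq:MFD-2}; absorbing the second-order term into an effective velocity via $\tfrac12\nabla_\theta\cdot\bigl(\nabla_\theta\cdot(hh^T\rho)\bigr)=\nabla_\theta\cdot\bigl(\tfrac12\nabla_\theta\cdot(hh^T\rho)\bigr)$ lets me write $F(\rho;\rho_{\rm post})=-\nabla_\theta\cdot(\rho f)$ for a single velocity field $f(\cdot;\rho,\rho_{\rm post})$, equivalently the Onsager operator of the affine invariant metric applied to $\tfrac{\delta\mathcal{E}}{\delta\rho}$ and written in transport form. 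Taking $h\equiv 0$ together with this $f$ gives a valid realization of the flow in the form \eqref{eq:MFD}, and the diffusion condition \eqref{eq:aMFD-AI-sigma} is then satisfied trivially. It remains to arrange the drift condition \eqref{eq:aMFD-AI-f}, namely $A f(\theta;\rho,\rho_{\rm post})=f(\varphi(\theta);\varphi\#\rho,\varphi\#\rho_{\rm post})$. The key point is that the continuity operator $f\mapsto-\nabla_\theta\cdot(\rho f)$ is equivariant under the affine action when $f$ transforms as a vector field, so by \eqref{eq:Fcov-plan} the field $\tilde f(\tilde\theta):=A f(\varphi^{-1}(\tilde\theta);\rho,\rho_{\rm post})$ realizes $F(\varphi\#\rho;\varphi\#\rho_{\rm post})$. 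I would therefore \emph{define} $f$ on each affine orbit of pairs $(\rho,\rho_{\rm post})$ by this covariance relation; for the metrics treated here this coincides with the explicit covariance-preconditioned velocity $f=-C_\rho\nabla\tfrac{\delta\mathcal{E}}{\delta\rho}$, which is manifestly covariant because the covector $\nabla\tfrac{\delta\mathcal{E}}{\delta\rho}$ is converted into a vector by the covariant tensor $C_\rho$, and \eqref{eq:aMFD-AI-f} is then immediate.

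The main obstacle is precisely the diffusion condition \eqref{eq:aMFD-AI-sigma}. A natural diffusion $h$ with $hh^T$ equal to the (covariant) diffusion tensor transforms as $Ah$, whereas the prescribed value is a matrix square root of $A(hh^T)A^T$, and \emph{no} choice of square root is equivariant under the full general linear group because of the residual orthogonal ambiguity $h\mapsto hO$; this is why the deterministic ($h\equiv 0$) realization is essential, and it is exactly the freedom within the equivalence class of mean-field models noted after \Cref{def: affine invariant mean-field equation} that makes the construction possible. A secondary, more technical issue is well-definedness of the orbit-wise definition of $f$ when a pair $(\rho,\rho_{\rm post})$ has a nontrivial affine stabilizer: since any such stabilizer is a compact subgroup of the invertible matrices, one restores consistency by averaging the representative velocity over the stabilizer with respect to Haar measure, which preserves the realization of $F$ because the continuity operator is linear in $f$.
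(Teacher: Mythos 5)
Your proof is correct, and its core mechanism is the same one the paper uses in its appendix proof of \cref{prop: affine invariance of mean-field dynamics}: combine the affine invariance of the flow with the hypothesis $\mathcal{E}(\varphi^{-1}\#\rho;\rho_{\rm post})=\mathcal{E}(\rho;\varphi\#\rho_{\rm post})$ to obtain the covariance identity $F(\varphi\#\rho;\varphi\#\rho_{\rm post})=\varphi\# F(\rho;\rho_{\rm post})$, then check by the change-of-variables formulas for gradient and divergence that the Fokker--Planck operator is equivariant when the coefficients transform as $f\mapsto Af$. You differ in two respects. First, you set $h\equiv 0$ and absorb the diffusion into an effective drift, whereas the paper keeps a general $h$ and transports it as $h\mapsto Ah$ alongside $f$; note that \cref{def: affine invariant mean-field equation} only asks for $\tilde h(\tilde\theta)=Ah(\theta)$, not for $\tilde h$ to be a canonical square root of $A(hh^T)A^T$, so the orthogonal ambiguity you raise is not an obstruction to the paper's route --- it only bites if one insists on deriving $h$ as a symmetric square root of a covariant diffusion tensor. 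Second, you explicitly address well-definedness of the orbit-wise definition at pairs $(\rho,\rho_{\rm post})$ with nontrivial affine stabilizer, by Haar averaging over the (compact) stabilizer and using linearity of the continuity operator in $f$; the paper is silent on this point, so on the existence claim as literally stated your argument is the more complete one, and here the choice $h\equiv 0$ genuinely earns its keep, since averaging would not respect the quadratic dependence of \eqref{eq:MFD-2} on $h$.
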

%    up to equivalence classes of $f$ and $h$. 
% \as{TODO: DANIEL/YIFAN Add proof of this proposition. Also, with this general 
% proof, do we need proofs in \cref{proof:AI-Wasserstein-MD} 
% and \cref{proof:AI-Stein-MD}? Either remove them or (maybe be better) note 
% that they are special cases of this general result.}
The proof of this proposition may be found in
\cref{appendix: Proof of prop: affine invariance of mean-field dynamics}.
%\ref{{appendix: Proof of prop: affine invariance of mean-field dynamics}}.
%\cref{{prop: affine invariance of mean-field dynamics}}.
%\cref{appendix: Proof of prop: affine invariance of mean-field dynamics.}

As a consequence, \cref{prop: affine invariance of mean-field dynamics} unifies the affine invariance property of the gradient flow in probability space and the corresponding mean-field dynamics. We note, however, that the mean-field dynamics
is not unique and we only prove the existence of one choice (amongst many)
which is affine invariant. In our later discussions, we will give some specific construction of the mean-field dynamics for several gradient flows, and show that they are indeed affine invariant.
% Therefore, in our later discussions, it suffices to consider only the affine invariance property of the gradient flow in probability space.

\begin{newremark}
The condition assumed in \cref{prop: affine invariance of mean-field dynamics} indicates that the pushforward of the functional $\EE$ (See \cref{def:pushforward}) satisfies
\begin{equation}
\tilde{\mathcal{E}}(\tilde{\rho})=\tilde{\mathcal{E}}(\tilde{\rho};\rho_{\rm post}) = \mathcal{E}(\varphi^{-1} \# \tilde{\rho};\rho_{\rm post}) = \mathcal{E}(\tilde{\rho};\varphi \# \rho_{\rm post}) = \mathcal{E}(\tilde{\rho};\tilde{\rho}_{\rm post}).
\end{equation}
Thus, this condition allows to connect the affine invariance defined via the transformation of energy functional and via the transformation of the target posterior distributions, as explained in \cref{remark: transform E and transform posterior}. Beyond the KL divergence (see \cref{eq:Echange}), the condition is also satisfied by various widely used energy functionals, such as the Hellinger distance and the chi-squared divergence.
\end{newremark}

\subsection{Fisher-Rao Gradient Flow} 
\label{ssec:FR}

\subsubsection{Metric}

The Fisher-Rao Riemannian metric is
\[g_{\rho}^{\mathrm{FR}}(\sigma_1,\sigma_2)=\int \frac{\sigma_1\sigma_2}{\rho} \mathrm{d}\theta.\]

\begin{newremark}
    Writing tangent vectors on a multiplicative scale, by setting $\sigma=\rho \psi_\sigma$, we
see that this metric may be written as
\[g_{\rho}^{\mathrm{FR}}(\sigma_1,\sigma_2) = \int \psi_{\sigma_1} \psi_{\sigma_2} \rho \mathrm{d}\theta,\]
and hence that in the $\psi_\sigma$ variable 
the metric is described via the $L_\rho^2$ inner-product. That is, the Fisher-Rao Riemannian metric measures the multiplicative factor via the $L_\rho^2$ energy. In \Cref{remark-Wasserstein-as-L2-velocity}, we will see that another important metric, the Wasserstein Riemannian metric, may also be understood as a $L_{\rho}^2$ measurement, but of the velocity field instead.
\end{newremark}

The Fisher-Rao metric tensor $\M^{\mathrm{FR}}(\rho)$ associated to $g_{\rho}^{\mathrm{FR}}$ satisfies\footnote{ Although functions in $T^*_{\rho}\PP$ are not uniquely defined under the $L^2$ inner product, since 
$\langle \psi,\sigma\rangle = \langle \psi+c,\sigma \rangle$ for all $\sigma \in \T_\rho \PP$ and any constant $c$,
a unique representation can be identified by requiring, for example, that $\psi \in T_\rho^*\PP$ satisfies $\E_\rho[\psi]=0$. Under this choice, the Fisher-Rao metric tensor naturally reduces to a multiplication by the density $\rho$: $\M^{\mathrm{FR}}(\rho)^{-1} \psi = \rho\psi, \quad  \forall~\psi \in T_{\rho}^{*}\PP$.}
\begin{subequations}
\label{eq:gm}
\begin{align}
&\M^{\mathrm{FR}}(\rho) \sigma = \psi_{\sigma} - \int \psi_{\sigma} {\rm d}\theta, \quad  \forall~\sigma \in T_{\rho}\PP\\
&\M^{\mathrm{FR}}(\rho)^{-1} \psi = \rho(\psi - \E_\rho[\psi]), \quad  \forall~\psi \in T_{\rho}^{*}\PP.
\end{align}
\end{subequations}
The corresponding geodesic distance $\calD^{\rm FR}: \PP \times \PP \to \R^+$ is 
\begin{equation}
\label{eqn: FR geodesic distance}
\calD^{\rm FR}(\rho_A, \rho_B)^2 = \inf_{\rho_t} \Bigl\{\int_0^1 \mathrm{d}t\int\frac{|\partial_t{\rho}_t|^2}{\rho_t} \mathrm{d}\theta : \,\rho_0 = \rho_A, \rho_1 = \rho_B\Bigr\}.
\end{equation}
If we do not restrict the distributions $\rho_t$ to be on the probability space and we allow them to have any positive mass, then by using the relation $$\frac{|\partial_t{\rho}_t|^2}{\rho_t} = 4\Bigl|\frac{\mathrm{d}}{\mathrm{d}t}\sqrt{\rho_t}\Bigr|^2$$ 
and the Cauchy-Schwarz inequality, we can solve the optimization problem in \cref{eqn: FR geodesic distance} explicitly. The optimal objective value will be $4\int |\sqrt{\rho_A} - \sqrt{\rho_B}|^2 \mathrm{d}\theta.$
This is (up to a constant scaling) the Hellinger distance \cite{gibbs2002choosing}. 

On the other hand, if we constrain $\rho_t$ to be on the probability space, then the geodesic distance will be (up to a constant scaling) the spherical Hellinger distance:
\[\calD^{\rm FR}(\rho_A, \rho_B)^2 = 4 \operatorname{arccos}^2\left(\int \sqrt{\rho_A}\sqrt{\rho_B}{\rm d}\theta\right).\]
For more discussions, see \cite{halder2018gradient, laschos2019geometric, lu2022birth}. In view of this relation, Fisher-Rao gradient flows are sometimes referred to as spherical Hellinger gradient flows in the literature \cite{lindsey2022ensemble,lu2022birth}.

\subsubsection{Flow Equation}
From \eqref{eq:vare} and (\ref{eq:gm}b) we see that
the Fisher-Rao gradient flow of the KL divergence is
\begin{equation}
\begin{aligned}    
\label{eq:mean-field-Fisher-Rao}
\frac{\partial \rho_t}{\partial t} =& -\M^{\mathrm{FR}}(\rho_t)^{-1}\frac{\delta \mathcal{E}}{\delta \rho}\Bigr|_{\rho=\rho_t}, \\
=&  \rho_t \bigl(\log \rho_{\rm post} - \log \rho_t\bigr) - \rho_t\E_{\rho_t}[\log \rho_{\rm post} - \log \rho_t].
\end{aligned}
\end{equation}
\begin{newremark}
The gradient flow \cref{eq:mean-field-Fisher-Rao} in probability space has the form typical 
of a mean-field model which is a birth-death process -- it is possible to create and kill particles to sample this process. However, the support of the empirical distribution using this algorithm never increases during evolution. To address this issue, the work~\cite{lu2019accelerating} added Langevin diffusion to the birth-death process, resulting in what they term  the Wasserstein-Fisher-Rao gradient flow. Alternatively, the authors in ~\cite{lindsey2022ensemble} utilized a Markov chain kernel and MCMC to sample the birth death dynamics arising from the Fisher-Rao gradient flow, using the chi-squared divergence~\cite{liese2006divergences} instead of \cref{eq:energy}.
\end{newremark}

\begin{newremark} \label{rem:homotopy}
When the target distribution \eqref{eq:posterior} arises from a Bayesian
inverse problem it may be written in the form
\begin{align}
\label{eq:posterior1}
    \rho_{\rm post}(\theta) \propto \exp(-\VL(\theta))\rho_0(\theta);
\end{align}
function $\VL: \R^{N_{\theta}} \to \R_+$ is the negative log likelihood and $\rho_0$ is the prior.
In this context, it is interesting to consider 
\begin{align}
\label{eq:Bayesian_U}
\mathcal{E}(\rho) = \int \rho(\theta) \,\VL(\theta)\, \mathrm{d}\theta
\end{align}
with associated Fisher-Rao gradient flow
\begin{align}
    \frac{\partial \rho_t}{\partial t} = -\rho_t \left( \VL - \E_{\rho_t}[\VL]\right).
\end{align}
It may be shown that the density $\rho_t$ is explicitly given by
\begin{align}
    \rho_t (\theta) = \frac{\exp(-t\VL(\theta))\rho_0(\theta)}{\E_{\rho_0}[\exp(-t\VL)]}.
\end{align}
Hence we recover (\ref{eq:posterior1}) at $t=1$. This observation is at the heart of homotopy-based approaches to Bayesian inference \cite{del2006sequential}, leading to methods based on particle filters; the link to
an evolution equation for $\rho_t$ is employed and made explicit in various other approaches to filtering \cite{daumetal2010,reich2011dynamical}. See \cite{chopin2020introduction,calvello22} for overviews. Such Fisher-Rao gradient flow structure for Bayes updates has also been identified in the context of filtering in \cite{laugesen2015poisson, halder2018gradient, halder2017gradient}.

We also note that by letting $t\to \infty$ one finds that 
\begin{align} 
    \lim_{t\to \infty}\rho_t = \delta_{\theta^\ast}
\end{align}
where $\theta^\ast$ denotes the (assumed unique) minimizer of $\VL$, in the support of $\rho_0$, 
and $\delta_{\theta^\ast}$ denotes the Dirac delta function centred at $\theta^\ast$.
\end{newremark}

\subsubsection{Affine Invariance} 
\label{ssec:FR-AI}
The Fisher-Rao metric is affine invariant. One may understand this property through the affine invariance property of Newton's method when the energy functional is the KL divergence. To see this
note that, from \cref{eq:vare}, the Hessian of $\mathcal{E}$ given by \cref{eq:energy} has the
form
\begin{align}
 \frac{\delta^2 \mathcal{E}(\rho)}{\delta\rho^2}=\frac{\delta^2 \mathrm{KL}[\rho\Vert \rho_{\rm post}]}{\delta\rho^2} = \frac{1}{\rho} = \M^{\text{FR}}(\rho).
\end{align}
Therefore, the Fisher-Rao gradient flow of the KL divergence behaves like Newton's method, which is affine invariant.
In fact, the Fisher-Rao metric is invariant under \textit{any diffeomorphism} of the parameter space, not just invertible affine transformations. Indeed, it is the only metric, up to constant, that satisfies this strong invariance property \cite{cencov2000statistical, ay2015information, bauer2016uniqueness}. This diffeomorphism invariance implies that the convergence property of the Fisher-Rao gradient flows for general target densities is the same as for Gaussian target distributions. This intuition explains why the Fisher-Rao gradient flows have an exceptional uniform exponential convergence rate for general target distributions; see \Cref{ssec:gf-theory}.
% \dzh{For completeness, we prove the following lemma in~\cref{proof:fisher-rao-affine-invariant}
% \begin{lemma}
% \label{lem:fisher-rao-affine-invariant}
% Consider the invertible mapping from $\theta \in \R^{N_{\theta}}$ to  $\tilde\theta \in \R^{N_{\theta}}$ defined by $\tilde\theta = \varphi (\theta)$ and correspondingly $\tilde{\rho} = \varphi \# \rho$. 
% Then the Fisher-Rao metric $M^{\mathrm{FR}}$ is affine invariant.  Consequently, the associated gradient flow of the KL divergence~\cref{eq:mean-field-Fisher-Rao}
% is affine invariant. 
% \end{lemma}
% }

\subsubsection{Mean-Field Dynamics} 
The Fisher-Rao gradient flow (\ref{eq:mean-field-Fisher-Rao}) in $\rho_t$ can be realized as 
the law of a mean-field ordinary differential equation in $\theta_t$ 
\begin{equation}
\label{eq:Fisher-Rao-MD}
    \frac{\mathrm{d}\theta_t}{\mathrm{d}t} = f(\theta_t; \rho_t, \rho_{\rm post}).
\end{equation}
Writing the nonlinear Liouville equation associated with this model and equating it to
\cref{eq:mean-field-Fisher-Rao} shows that drift $f$ satisfies 
\begin{equation} \label{eq:Fisher-Rao-drift}
-\nabla_\theta \cdot (\rho_t f) = 
\rho_t \bigl(\log \rho_{\rm post} - \log \rho_t\bigr) - \rho_t\E_{\rho_t}[\log \rho_{\rm post} - \log \rho_t].
\end{equation}
Note that $f$ is not uniquely determined by (\ref{eq:Fisher-Rao-drift}). Writing $f$
as a gradient of a potential, with respect to $\theta$, shows that the potential satisfies a linear elliptic
PDE, and under some conditions this will have a unique solution; but there will be other
choices of $f$ which are not a pure gradient, leading to nonuniqueness. 

By \cref{prop: affine invariance of mean-field dynamics}, for affine invariant gradient flows, there exist mean-field dynamics (i.e., via choosing certain $f$ in \eqref{eq:Fisher-Rao-drift}) that are affine invariant. Here, we construct a specific class of $f$ that leads to affine invariant mean-field dynamics for the Fisher-Rao gradient flow.

First, we introduce a matrix valued function: 
$\Prec: \R^{N_\theta}\times \PP \to \R^{N_\theta\times N_\theta}_{\succ 0},$
where the output space is the cone of positive-definite symmetric matrices; we refer to matrices such as $\Prec$ as \emph{preconditioners} throughout this paper. Then, the following proposition shows that the choice of $f= \Prec(\theta, \rho_t)\nabla \phi$ leads to affine invariance of the dynamics, under certain conditions on $P$. The proof can be found in \cref{proof:AI-Fisher-Rao-MD}.

\begin{proposition}
\label{proposition:AI-Fisher-Rao-MD}
Consider any invertible affine transformation $\tilde\theta = \varphi(\theta) = A \theta + b$ and correspondingly $\tilde{\rho} = \varphi \# \rho$. Assume that the preconditioning matrix satisfies
\begin{align} 
\label{eq:P-affine}
\Prec(\tilde\theta, \tilde\rho) = A \Prec(\theta, \rho) A^T.
\end{align}
Assume, furthermore, that the solution 
$\phi(\theta; \rho, \rho_{\rm post})$ of the equation 
\begin{equation}
\label{eqn: affine invariant Fisher Rao mean field dynamics}
-\nabla_\theta \cdot (\rho \Prec \nabla_{\theta} \phi) = 
\rho \bigl(\log \rho_{\rm post} - \log \rho\bigr) - \rho\E_{\rho}[\log \rho_{\rm post} - \log \rho]
\end{equation}
exists, is unique (up to constants) and belongs to $C^2(\mathbb{R}^{N_{\theta}})$, 
for any $\rho \in \mathcal{P}$. Then, the corresponding mean-field equation \cref{eq:Fisher-Rao-MD} with $f = \Prec \nabla_{\theta} \phi$ is affine invariant.
\end{proposition}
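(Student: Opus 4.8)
The plan is to verify the defining condition \eqref{eq:aMFD-AI-f} of an affine invariant mean-field dynamics directly for $f = \Prec\nabla_\theta\phi$. Since the dynamics \eqref{eq:Fisher-Rao-MD} is a deterministic ODE there is no diffusion term, so it suffices to show $A f(\theta;\rho,\rho_{\rm post}) = f(\tilde\theta;\tilde\rho,\tilde\rho_{\rm post})$ under $\tilde\theta = \varphi(\theta) = A\theta + b$ and $\tilde\rho = \varphi\#\rho$, writing $\tilde\rho_{\rm post} = \varphi\#\rho_{\rm post}$. Substituting $f = \Prec\nabla_\theta\phi$ and invoking the hypothesis $\Prec(\tilde\theta,\tilde\rho) = A\Prec(\theta,\rho)A^T$, together with the invertibility of $A$ and positivity of $\Prec$, this condition collapses to the single scalar relation $A^T\nabla_{\tilde\theta}\phi(\tilde\theta;\tilde\rho,\tilde\rho_{\rm post}) = \nabla_\theta\phi(\theta;\rho,\rho_{\rm post})$. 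This is exactly the statement that $\phi$ transforms as a scalar field under $\varphi$, so the whole proposition reduces to showing $\phi(\tilde\theta;\tilde\rho,\tilde\rho_{\rm post}) = \phi(\varphi^{-1}(\tilde\theta);\rho,\rho_{\rm post})$ up to an additive constant.

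To establish this I would define the candidate $\tilde\phi(\tilde\theta) := \phi(\varphi^{-1}(\tilde\theta);\rho,\rho_{\rm post})$ and check that it solves the elliptic PDE \eqref{eqn: affine invariant Fisher Rao mean field dynamics} posed in the tilde variables with data $(\tilde\rho,\tilde\rho_{\rm post})$; the conclusion then follows from the assumed uniqueness (up to constants) of the $C^2$ solution, noting that $\tilde\phi\in C^2$ because $\phi\in C^2$ and $\varphi$ is affine. Verifying this requires two ingredients. First, the right-hand side of \eqref{eqn: affine invariant Fisher Rao mean field dynamics} is, by construction, the Fisher-Rao gradient $-\nabla\mathcal{E}$ of the KL divergence; since the Fisher-Rao metric is affine invariant (\Cref{ssec:FR-AI}), \Cref{prop: affine invariant metric tensor} together with \eqref{eq:Echange} shows that this right-hand side transforms as a tangent vector, i.e. as the pushforward $\varphi\#$, acquiring a Jacobian factor $|\det A|^{-1}$ when re-expressed in the original variables. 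Second, I would carry out the change of variables in the weighted divergence operator on the left-hand side.

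The heart of the computation, and the step I expect to be the main obstacle, is this change of variables for the operator $-\nabla_{\tilde\theta}\cdot(\tilde\rho\,\Prec(\tilde\theta,\tilde\rho)\,\nabla_{\tilde\theta}\tilde\phi)$. Using $\nabla_{\tilde\theta}\tilde\phi = A^{-T}\nabla_\theta\phi$ (chain rule), the hypothesis $\Prec(\tilde\theta,\tilde\rho) = A\Prec(\theta,\rho)A^T$, and the density transformation $\tilde\rho(\tilde\theta) = |\det A|^{-1}\rho(\theta)$, the flux satisfies $\tilde\rho\,\Prec(\tilde\theta,\tilde\rho)\,\nabla_{\tilde\theta}\tilde\phi = |\det A|^{-1}A\,[\rho\,\Prec(\theta,\rho)\,\nabla_\theta\phi]$; that is, the transformed flux is the affine pushforward of the original flux. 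A direct chain-rule computation of the divergence then shows that the matrix $A$ (from $\Prec$) contracted against $A^{-1}$ (from differentiating through $\varphi^{-1}$) collapses to the identity, yielding $\nabla_{\tilde\theta}\cdot(\cdots) = |\det A|^{-1}(\nabla_\theta\cdot[\rho\,\Prec\,\nabla_\theta\phi])(\theta)$. This matches exactly the $|\det A|^{-1}$ scaling of the right-hand side, so $\tilde\phi$ solves the transformed PDE and uniqueness closes the argument. The delicate point is the bookkeeping of the three Jacobian contributions — the $A$ from the conjugated preconditioner, the $A^{-T}$ from the transformed gradient, and the $|\det A|^{-1}$ from the transformed density — which must conspire to reproduce the tangent-vector scaling of the source term; getting these factors consistent is the only genuine difficulty.
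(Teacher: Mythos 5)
Your proposal is correct and follows essentially the same route as the paper: both arguments hinge on showing that the pullback $\tilde\phi(\tilde\theta)=\phi(\varphi^{-1}(\tilde\theta);\rho,\rho_{\rm post})$ solves the transformed elliptic equation (the flux $\tilde\rho\,\Prec(\tilde\theta,\tilde\rho)\nabla_{\tilde\theta}\tilde\phi$ being the affine pushforward of $\rho\,\Prec\nabla_\theta\phi$, with the $A$, $A^{-T}$ and $|\det A|^{-1}$ factors cancelling against the $|\det A|^{-1}$ scaling of the source term) and then invoking the assumed uniqueness up to constants to conclude $\phi(\tilde\theta;\tilde\rho,\tilde\rho_{\rm post})=\phi(\theta;\rho,\rho_{\rm post})$ and hence $Af=\tilde f$. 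The only cosmetic difference is that the paper outsources the change-of-variables bookkeeping to the already-proved \cref{prop: affine invariance of mean-field dynamics}, whereas you carry it out directly; the substance is identical.
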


\begin{newremark}
    Examples of preconditioning matrices that satisfy \cref{eq:P-affine} include the covariance matrix $P(\theta,\rho) = C(\rho)$ and some local preconditioners,  such as \[P(\theta,\rho) = \bigl(\theta - m(\rho)\bigr)\bigl(\theta - m(\rho)\bigr)^T\] or \[P(\theta,\rho) = \int (\theta' - m(\rho))(\theta' - m(\rho))^T\kappa(\theta,\theta', \rho) \rho(\theta') d\theta'.\]
    Here $\kappa: \R^{N_{\theta}}\times\R^{N_{\theta}}\times\PP \rightarrow \R$ is a positive definite kernel for any fixed $\rho$, and it is affine invariant, namely $\kappa(\tilde\theta, \tilde\theta', \tilde\rho) = \kappa(\theta, \theta', \rho)$ 
    under any invertible affine transformation $\tilde\theta = \varphi(\theta) = A \theta + b$ and correspondingly $\tilde{\rho} = \varphi \# \rho$. A potential choice is $$\kappa(\theta,\theta', \rho) = \exp\bigl\{-\frac{1}{2}(\theta - \theta')^TC(\rho)^{-1}(\theta - \theta')\bigr\}.$$
\end{newremark}
\begin{newremark}
More generally, given any alternative functional $\mathcal{E}$, such as 
(\ref{eq:Bayesian_U}), one can define affine invariant mean-field ordinary differential equations of the form (\ref{eq:Fisher-Rao-MD}) with drift
$f = \Prec \nabla_{\theta} \phi$ and potential $\phi$ satisfying the equation
\begin{align} \label{eq:Fisher-Rao-Liouville}
    -\nabla_\theta \cdot (\rho \Prec \nabla_{\theta} \phi) = 
\rho \Bigl( \frac{\delta \mathcal{E}}{\delta \rho}  - \E_{\rho}\Bigl[
\frac{\delta \mathcal{E}}{\delta \rho}\Bigr] \Bigr).
\end{align}
\end{newremark}
In addition to the above choice of mean field models, birth-death type mean field dynamics have also been used to simulate Fisher-Rao gradient flows for sampling; see \cite{lu2019accelerating,lu2022birth}.

\subsection{Wasserstein Gradient Flow} 
\label{ssec:Wasserstein}
\subsubsection{Metric}
Generalizing the relationship between $\sigma$ and $\psi_\sigma$ introduced
in the Fisher-Rao context, we define 
$\psi_\sigma$ to be the solution of the PDE
\begin{equation}
\label{eq:liouville}
-\nabla_{\theta} \cdot (\rho \nabla_{\theta} \psi_\sigma)=\sigma.
\end{equation}
This definition requires specification of function spaces to ensure 
unique invertibility
of the divergence form elliptic operator.
One then defines the Wasserstein metric tensor $\M^{\mathrm{W}}(\rho)$ 
and its inverse by
\begin{subequations}
\label{eq:gm2}
\begin{align}
&\M^{\mathrm{W}}(\rho) \sigma = \psi_\sigma \quad \forall~\sigma \in T_{\rho}\PP,\\
&\M^{\mathrm{W}}(\rho)^{-1} \psi = -\nabla_\theta \cdot (\rho \nabla_\theta \psi) , \quad  \forall~\psi \in T_{\rho}^{*}\PP.
\end{align}
\end{subequations}
Elementary manipulations show that the corresponding Riemannian metric
% \yc{Riemannian structure of the} Wasserstein 
% metric~\cite{villani2009optimal} 
is given by
\begin{subequations}
\label{eq:Wasserstein-metric}
\begin{align}
g_{\rho}^{\mathrm{W}}(\sigma_1,\sigma_2) &= \langle \M^{\mathrm{W}}(\rho)\sigma_1, 
 \sigma_2 \rangle\\
&=\langle  \psi_{\sigma_1}, \M^{\mathrm{W}}(\rho)^{-1}\psi_{\sigma_2} \rangle\\
&= \int \rho(\theta) \nabla_{\theta} \psi_{\sigma_1}(\theta)^T  \nabla_{\theta} 
\psi_{\sigma_2}(\theta) \mathrm{d}\theta.
\end{align}
\end{subequations}
Here $g_{\rho}^{\mathrm{W}}$ is positive-definite and hence a valid metric. It is termed the Wasserstein Riemannian metric throughout this paper.

\begin{newremark}
\label{remark-Wasserstein-as-L2-velocity}
    The Wasserstein Riemannian metric has a transport interpretation. To understand
    this fix $\sigma \in T_\rho \PP$ and consider the family of \emph{velocity
    fields} $v$ related to $\sigma$ via the constraint $\sigma = -\nabla_\theta \cdot (\rho v)$. Then define $v_{\sigma} = \argmin_v \int \rho |v|^2$ in which
    the minimization is over all $v$ satisfying the constraint. A formal Lagrange multiplier argument can be used to deduce that $v_{\sigma} = \nabla_\theta \psi_{\sigma}$ for some $\psi_{\sigma}$. This motivates the relationship
    appearing in \cref{eq:liouville} as well as the form of the
Wasserstein Riemannian metric appearing in \cref{eq:Wasserstein-metric}
which may then be viewed as measuring the kinetic energy $\int \rho |v_{\sigma}|^2 \mathrm{d}\theta$. We emphasize that, for ease of understanding, our discussion on the Riemannian structure of the Wasserstein metric is purely formal; for rigorous treatments, the reader can consult \cite{ambrosio2005gradient}.
\end{newremark}

To further develop the preceding discussion, consider the Liouville equation for the dynamical system in $\R^{N_{\theta}}$ driven by vector field $v_{\sigma} := \nabla_\theta \psi_{\sigma}$.
%In what follows it is expedient to somewhat overload notation as follows.
Let $\rho_A, \rho_B$ be two elements in $\PP$ and let $\rho_t$ be a path in time
governed by this Liouville equation, and satisfying the boundary conditions
$\rho_0 = \rho_A, \rho_1 = \rho_B.$ Then 
%If $\sigma_t$ denotes the tangent
%vector to $\rho_t$, and $v_t$ the corresponding velocity field
%$v_{t} := \nabla_\theta \psi_{t}$ where, with further overload of notation,
%$\psi_t=\psi_{\sigma_t}$, then
\begin{equation}
\label{eq:Leq}
\frac{\partial \rho_t}{\partial t} + \nabla_\theta\cdot(\rho_t \nabla_\theta \psi_t) = 0, \rho_0 = \rho_A,\rho_1 = \rho_B.
\end{equation}
With this equation, we can write the geodesic distance $\calD^{\rm W}: \PP \times \PP \to \R^+$ as: 
\begin{equation}
\begin{split}
    \calD^{\rm W}(\rho_A, \rho_B)^2 
    &= \inf_{\rho_t} \Bigl\{\int_0^1 g_{\rho_t}^{\rm W}(\partial_t{\rho}_t, \partial_t{\rho}_t) \mathrm{d}t: \,\rho_0 = \rho_A, \rho_1 = \rho_B\Bigr\}\\
    &= \inf_{\psi_t \in \mathsf{L}}\Bigl\{\int_0^1 \mathrm{d}t \int \rho_t  |\nabla_\theta \psi_t|^2 \mathrm{d}\theta\Bigr\},
\end{split}
\end{equation}
where $\mathsf{L}$ is the set of time-dependent potentials $\psi_t$ such that
equation \cref{eq:Leq} holds.
This is the celebrated Benamou-Brenier formula for the 2-Wasserstein distance~\cite{benamou2000computational}.

\subsubsection{Flow Equation}
The Wasserstein gradient flow of the KL divergence is
\begin{equation}
\begin{aligned}    
\label{eq:mean-field-Wasserstein}
\frac{\partial \rho_t}{\partial t} &= -\M^{\mathrm{W}}(\rho_t)^{-1}\frac{\delta \mathcal{E}}{\delta \rho}\Bigr|_{\rho=\rho_t}\\
&= \nabla_{\theta} \cdot \big(\rho_t (\nabla_{\theta} \log \rho_t  - \nabla_{\theta} \log \rho_{\rm post})\big) \\
& = -\nabla_{\theta} \cdot (\rho_t \nabla_\theta \log \rho_{\mathrm{post}}) + \Delta_{\theta} \rho_t.
\end{aligned}
\end{equation}
This is simply the Fokker-Planck equation for the Langevin dynamics
\begin{equation}
\label{eq:lan}
\mathrm{d}\theta_t=\nabla_{\theta}
\log\rho_{\mathrm{post}}(\theta)\mathrm{d}t+\sqrt{2}\mathrm{d}W_t,
\end{equation}
where $W_t\in \R^{N_{\theta}}$ is a standard Brownian
motion. This is a trivial mean-field model of the form \cref{eq:MFD} in the sense that
there is no dependence on the density $\rho_t$ associated with the law of
$\theta.$

\begin{newremark}
We note that elliptic equations defining certain potentials arise in the context of both the Fisher-Rao as well as the Wasserstein metric. However, while (\ref{eq:liouville}) appears in the definition of the Wasserstein metric only, 
solving (\ref{eq:Fisher-Rao-Liouville}) is required for obtaining the 
mean-field equations (\ref{eq:Fisher-Rao-MD}) in the Fisher-Rao setting.
Returning to the cost functional (\ref{eq:Bayesian_U}), 
we find that the associated Wasserstein gradient mean-field dynamics simply reduces to gradient descent 
\begin{align}
    {\rm d}\theta_t = -\nabla_\theta \VL (\theta){\rm d}t
\end{align}
while the associated Fisher-Rao mean-field equations are more complex and linked to Bayesian inference as discussed earlier in \Cref{rem:homotopy}.
\end{newremark}

\subsubsection{Affine Invariance}
\label{ssec:W-AI}

The Wasserstein Riemannian metric~\cref{eq:Wasserstein-metric} is not affine invariant. Hence,
in this subsection, we introduce an affine invariant modification to the  Wasserstein metric.  To this end, we consider preconditioner 
$\Prec: \R^{N_\theta}\times \PP \to \R^{N_\theta\times N_\theta}_{\succ 0},$
where the output space is the cone of positive-definite symmetric matrices.
% We assume, without further explicit comment, that the dependence of
% $\Prec$ on $(\theta,\rho)$ does not involve pointwise evaluation of $\rho$ at
% $\theta.$

We generalize \cref{eq:liouville} and let $\psi_\sigma$ solve the PDE
\begin{equation}
\label{eq:liouville_AI}
-\nabla_\theta \cdot (\rho P(\theta,\rho) \nabla_\theta \psi_\sigma)=\sigma,
\end{equation}
again noting that specification of function spaces is needed to ensure 
unique invertibility of the divergence form elliptic operator
(see \Cref{proposition:AI-Fisher-Rao-MD} where similar
considerations arise).
We may then generalize the metric tensor in \cref{eq:gm2} 
to obtain $\M^{\mathrm{AIW}}(\rho)$ and inverse given by
\begin{subequations}
\label{eq:gm3}
\begin{align}
&\M^{\mathrm{AIW}}(\rho) \sigma = \psi_{\sigma}, \quad \forall~\sigma \in T_{\rho}\PP,\\
&\M^{\mathrm{AIW}}(\rho)^{-1} \psi = -\nabla_\theta \cdot (\rho P(\theta,\rho) \nabla_\theta \psi) , \quad  \forall~\psi \in T_{\rho}^{*}\PP.
\end{align}
\end{subequations}
Manipulations similar to use in \cref{eq:Wasserstein-metric}, but using $\M^{\mathrm{AIW}}(\rho)$, show that
\begin{align*}
g_{\rho}^{\mathrm{AIW}}(\sigma_1,\sigma_2) &= \langle \M^{\mathrm{AIW}}(\rho) \sigma_1, 
 \sigma_2 \rangle\\
&=\langle \psi_{\sigma_1}, \M^{\mathrm{AIW}}(\rho)^{-1} \psi_{\sigma_2} \rangle\\
&= \int \rho(\theta) \nabla_{\theta} \psi_{\sigma_1}(\theta)^T \Prec(\theta,\rho)  \nabla_{\theta} 
\psi_{\sigma_2}(\theta) \mathrm{d}\theta.
\end{align*}
It follows that $g_{\rho}^{\mathrm{AIW}}$ is positive-definite and hence 
a valid metric tensor. 
We have the following proposition to guarantee this metric tensor is affine invariant:
\begin{proposition}
\label{proposition:Wasserstein-affine-invariant}
%Consider the invertible affine transformation $\tilde\theta = \varphi(\theta) = A \theta + b$ and correspondingly $\tilde{\rho} = \varphi \# \rho$. Assume that the preconditioning matrix satisfies
%\begin{align} 
%\Prec(\tilde\theta, \tilde\rho) = A \Prec(\theta, \rho) A^T.
%\end{align}
Under the assumption on $P$ given in 
\cref{proposition:AI-Fisher-Rao-MD}, leading to \eqref{eq:P-affine}, 
the metric corresponding to $M^{\mathrm{AIW}}$ is affine invariant.  
Consequently, the associated gradient flow of the KL divergence, namely
\begin{equation}
\begin{aligned}    
\label{eq:AI-WGF}
\frac{\partial \rho_t(\theta)}{\partial t} 
&= \nabla_{\theta} \cdot \Bigl(\rho_t \Prec(\theta, \rho_t) (\nabla_{\theta} \log \rho_t  - \nabla_{\theta} \log \rho_{\rm post})\Bigr),
\end{aligned}
\end{equation}
is affine invariant.
\end{proposition}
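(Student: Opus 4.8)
The plan is to verify the defining condition of \Cref{def: Affine Invariant Metric} directly, namely that $g^{\mathrm{AIW}}_{\varphi \# \rho}(\varphi \# \sigma_1, \varphi \# \sigma_2) = g^{\mathrm{AIW}}_{\rho}(\sigma_1, \sigma_2)$ for every affine $\varphi(\theta) = A\theta + b$, and then to invoke \Cref{prop: affine invariant metric tensor} to pass from the metric to the flow. Throughout I write $\tilde\theta = A\theta + b$, $\tilde\rho = \varphi \# \rho$ and $\tilde\sigma_i = \varphi \# \sigma_i$, and I record the transformation rules I will use: from \Cref{def:pushforward}, $\tilde\rho(\tilde\theta) = \rho(\theta)|\det A|^{-1}$ and $\tilde\sigma_i(\tilde\theta) = \sigma_i(\theta)|\det A|^{-1}$; a scalar field held fixed under the change of variables has gradient transforming by $\nabla_{\tilde\theta} = A^{-T}\nabla_\theta$; and, by hypothesis \cref{eq:P-affine}, $\Prec(\tilde\theta,\tilde\rho) = A\,\Prec(\theta,\rho)\,A^T$.

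The crux is a single lemma describing how the potential $\psi_\sigma$ defined by \cref{eq:liouville_AI} transforms: I claim $\psi_{\tilde\sigma}(\tilde\theta) = \psi_\sigma(\varphi^{-1}(\tilde\theta))$, i.e. the potential is carried along as an ordinary scalar. To prove this I substitute the ansatz into \cref{eq:liouville_AI} written in the $\tilde\theta$ variable and check it reproduces $\tilde\sigma$. The vector field inside the divergence becomes $\tilde\rho\,\Prec(\tilde\theta,\tilde\rho)\nabla_{\tilde\theta}\psi_{\tilde\sigma} = |\det A|^{-1} A\,[\rho\,\Prec(\theta,\rho)\nabla_\theta\psi_\sigma]$, where the factor $A$ coming from $\Prec$ and the factor $A^{-T}$ coming from the gradient collapse to a single left-multiplication by $A$. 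A short computation, using $\nabla_{\tilde\theta}\cdot(A\,V(\theta)) = (\nabla_\theta \cdot V)(\theta)$ (which is the identity $\sum_i A_{ik}(A^{-1})_{ji} = \delta_{jk}$ in disguise), then gives $-\nabla_{\tilde\theta}\cdot(\tilde\rho\,\Prec\,\nabla_{\tilde\theta}\psi_{\tilde\sigma}) = |\det A|^{-1}\sigma(\theta) = \tilde\sigma(\tilde\theta)$. Uniqueness (up to constants) of the solution of \cref{eq:liouville_AI}, assumed in the hypotheses inherited from \cref{proposition:AI-Fisher-Rao-MD}, then identifies this ansatz with $\psi_{\tilde\sigma}$.

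With the lemma in hand, the invariance of the metric is a bookkeeping computation: I substitute $\tilde\rho(\tilde\theta) = \rho(\theta)|\det A|^{-1}$, $\nabla_{\tilde\theta}\psi_{\tilde\sigma_i} = A^{-T}\nabla_\theta\psi_{\sigma_i}$, $\Prec(\tilde\theta,\tilde\rho) = A\,\Prec(\theta,\rho)\,A^T$ and $\mathrm{d}\tilde\theta = |\det A|\,\mathrm{d}\theta$ into $g^{\mathrm{AIW}}_{\tilde\rho}(\tilde\sigma_1,\tilde\sigma_2) = \int \tilde\rho\,(\nabla_{\tilde\theta}\psi_{\tilde\sigma_1})^T \Prec(\tilde\theta,\tilde\rho)\,\nabla_{\tilde\theta}\psi_{\tilde\sigma_2}\,\mathrm{d}\tilde\theta$. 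The two determinant factors cancel and the matrix products collapse via $A^{-1}A = I$ and $A^T A^{-T} = I$, leaving exactly $\int \rho\,(\nabla_\theta\psi_{\sigma_1})^T \Prec(\theta,\rho)\,\nabla_\theta\psi_{\sigma_2}\,\mathrm{d}\theta = g^{\mathrm{AIW}}_\rho(\sigma_1,\sigma_2)$. This establishes $\varphi^{\#}g^{\mathrm{AIW}} = g^{\mathrm{AIW}}$, so the metric is affine invariant, and \Cref{prop: affine invariant metric tensor} immediately yields affine invariance of the associated gradient flow. The explicit form \cref{eq:AI-WGF} then follows by inserting the first variation \cref{eq:vare} and the inverse tensor (\ref{eq:gm3}b) into the general flow \cref{eq:GF}, noting that the additive constant in the first variation is annihilated by $\nabla_\theta$.

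I expect the main obstacle to be the transformation lemma for $\psi_\sigma$, and in particular making precise the sense in which \cref{eq:liouville_AI} admits a unique solution in an appropriate function space, so that the ansatz can legitimately be identified with $\psi_{\tilde\sigma}$; the algebra reproducing $\tilde\sigma$ is routine once the Jacobian determinant of the pushforward and the covariance-type transformation \cref{eq:P-affine} of $\Prec$ are tracked together. It is precisely the interplay of these two that forces the cancellation, which is why \cref{eq:P-affine} is exactly the right hypothesis.
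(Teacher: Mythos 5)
Your proof is correct, but it works on the opposite side of the equivalence in \Cref{prop: affine invariant metric tensor} from the paper's own argument. You verify $\varphi^{\#}g^{\mathrm{AIW}} = g^{\mathrm{AIW}}$ directly, which requires a transformation lemma for the potential $\psi_\sigma$ defined by \cref{eq:liouville_AI}: you posit $\psi_{\tilde\sigma}(\tilde\theta)=\psi_\sigma(\varphi^{-1}(\tilde\theta))$, check that it solves the transformed elliptic equation, and then must invoke uniqueness (up to constants) of that equation's solution to identify the ansatz with $\psi_{\tilde\sigma}$ — the very well-posedness issue you flag as the main obstacle. The paper instead shows that the gradient flow $\partial_t\rho = \nabla_\theta\cdot\bigl[\rho\,\Prec\,\nabla_\theta\tfrac{\delta\mathcal{E}}{\delta\rho}\bigr]$ is affine invariant for \emph{arbitrary} $\mathcal{E}$, using only the change-of-variable rules for the gradient and divergence (\cref{lemma:change_variable}) and for the first variation (\cref{lemma:first_variation_change_variable}), and then deduces metric invariance from \Cref{prop: affine invariant metric tensor}. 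Since the flow equation involves only $M^{\mathrm{AIW}}(\rho)^{-1}$ applied to $\tfrac{\delta\mathcal{E}}{\delta\rho}$ — a divergence-form operator applied to a known function — the paper's route never has to solve or invert the elliptic problem, so it avoids the uniqueness hypothesis entirely. Your computation showing how the $A$ from \cref{eq:P-affine}, the $A^{-T}$ from the gradient, and the two determinant factors cancel is the same algebra the paper performs in \cref{appendix: Proof of prop: affine invariance of mean-field dynamics} and \cref{proof:Wasserstein-affine-invariant}, just packaged around $\psi_\sigma$ rather than around $\tfrac{\delta\mathcal{E}}{\delta\rho}$; what your version buys is an explicit description of how the Onsager potentials transform, which is of independent interest, at the cost of an extra solvability assumption.
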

The proof of this proposition is provided in \cref{proof:Wasserstein-affine-invariant}. Henceforth we refer to $M^{\mathrm{AIW}}$ satisfying the condition of the
preceding proposition as an affine invariant Wasserstein metric tensor.

\subsubsection{Mean-Field Dynamics}
\label{sssec:WGF-MD}
As discussed in relation to the topic of affine invariance in \Cref{sssec:MFD},
mean-field models with a given law are not unique. In the specific context
of the Wasserstein gradient flow  which suggests
looking beyond \cref{eq:lan} for a mean-field model with governing law
given by \cref{eq:mean-field-Wasserstein}. This can be achieved as 
follows~\cite{subrahmanya2021ensemble}. Fix arbitrary 
$h : \R^{N_{\theta}} \times \R   \rightarrow \R^{N_{\theta}\times N_{\theta}}$,
define $D(\theta,\rho) = \frac{1}{2}h(\theta,\rho)h(\theta,\rho)^T$ and choose
$d(\theta,\rho) = \nabla_{\theta} \cdot D(\theta,\rho)$. Then, for any $h$, consider
the SDE
\begin{equation}
\begin{aligned}    
\label{eq:particle-Wasserstein}
\mathrm{d}\theta_t = 
\Bigl(
\nabla_{\theta} \log \rho_{\rm post}(\theta_t)  + \bigl(D(\theta_t,\rho_t) - I \bigr)\nabla_{ \theta} \log \rho_t(\theta_t) + d(\theta_t, \rho_t) 
\Bigr)\mathrm{d}t 
+ h(\theta_t,\rho_t){\rm d}W_t,
\end{aligned}
\end{equation}
When $h=\sqrt{2}I$ we recover \cref{eq:lan}. When this condition does
not hold, so that $D(\theta,\rho_t)  \neq I$, the equation requires
knowledge of the score function $\nabla_{ \theta} \log \rho_t(\theta_t)$;
and particle methods to approximate \cref{eq:particle-Wasserstein} will require
estimates of the score; various approaches have been adopted in the literature \cite{maoutsa2020interacting, wang2022optimal, shen2022self, boffi2022probability}. See also \cite{song2020sliced} and references therein
for discussion of score estimation. Notably, by choosing $h=0$ in \cref{eq:particle-Wasserstein}, one can obtain a deterministic particle system, which may be preferred in practical implementations. Alternatively, in \cite{he2022regularized}, interpolation between the Wasserstein metric and Stein metric was studied to derive deterministic particle approximations of the Wasserstein gradient flow.

We now apply similar considerations to the  preconditioned Wasserstein gradient  flow~\cref{eq:AI-WGF}. Employing the same choices of $d$ and $D$ from $h$ as in the
unpreconditioned case we obtain the following 
mean-field evolution equation:
\begin{equation} 
\label{eq:AI-Wasserstein-MD-Ct}
\begin{aligned}
\mathrm{d}\theta_t &= 
\Prec(\theta_t, \rho_t)\nabla_{\theta} \log \rho_{\rm post}(\theta_t)\mathrm{d}t\\
&\quad\quad\quad+ \Bigl(\bigl(D(\theta_t,\rho_t) - \Prec(\theta_t, \rho_t) \bigr)\nabla_{ \theta} \log \rho_t(\theta_t) + d(\theta_t, \rho_t)  
\Bigr)\mathrm{d}t\\
&\qquad\qquad\qquad\qquad\qquad\qquad+ h(\theta_t, \rho_t){\rm d}W_t.
\end{aligned}
\end{equation}
For this specific mean-field  equation~\cref{eq:AI-Wasserstein-MD-Ct}, we can also
establish affine invariance; see the following proposition and its proof in \cref{proof:AI-Wasserstein-MD}.
\begin{proposition}
\label{lem:AI-Wasserstein-MD}
    The mean-field equation~\cref{eq:AI-Wasserstein-MD-Ct} is affine invariant under the assumption on the preconditioner $P$ given in \cref{proposition:AI-Fisher-Rao-MD}, leading to \eqref{eq:P-affine}, 
    and the assumptions on $h$ given in \cref{eq:aMFD-AI-sigma}.
\end{proposition}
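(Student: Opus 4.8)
The plan is to verify directly the two defining conditions of \Cref{def: affine invariant mean-field equation}. Writing the drift of \eqref{eq:AI-Wasserstein-MD-Ct} as $f(\theta;\rho,\rho_{\rm post})$ and its diffusion coefficient as $h$, I must show that $Af(\theta;\rho,\rho_{\rm post}) = f(\varphi(\theta);\varphi\#\rho,\varphi\#\rho_{\rm post})$ and $Ah(\theta;\rho,\rho_{\rm post}) = h(\varphi(\theta);\varphi\#\rho,\varphi\#\rho_{\rm post})$ for every affine map $\varphi(\theta)=A\theta+b$. The second condition is precisely the hypothesis \eqref{eq:aMFD-AI-sigma}, so all of the work lies in the drift.

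First I would assemble elementary transformation rules for each building block of $f$, all evaluated at $\tilde\theta=\varphi(\theta)$ with $\tilde\rho=\varphi\#\rho$ and $\tilde\rho_{\rm post}=\varphi\#\rho_{\rm post}$. Differentiating the change-of-variables identity $\log\tilde\rho(\tilde\theta)=\log\rho(\theta)-\log|\det A|$ gives the score rule $\nabla_{\tilde\theta}\log\tilde\rho(\tilde\theta)=A^{-T}\nabla_\theta\log\rho(\theta)$, and identically for $\rho_{\rm post}$. The preconditioner rule $\Prec(\tilde\theta,\tilde\rho)=A\Prec(\theta,\rho)A^T$ is the hypothesis \eqref{eq:P-affine}. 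Since $h$ obeys \eqref{eq:aMFD-AI-sigma}, the matrix $D=\tfrac12 hh^T$ inherits $D(\tilde\theta,\tilde\rho)=AD(\theta,\rho)A^T$ directly from $h(\tilde\theta,\tilde\rho)=Ah(\theta,\rho)$.

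The one step needing genuine computation — and the main obstacle — is the divergence term $d=\nabla_\theta\cdot D$. I would establish $d(\tilde\theta,\tilde\rho)=Ad(\theta,\rho)$ by writing $D_{ik}(\tilde\theta)=\sum_{p,q}A_{ip}D_{pq}(\theta)A_{kq}$, applying the chain rule $\partial_{\tilde\theta_k}=\sum_r(A^{-1})_{rk}\partial_{\theta_r}$, and using the cancellation $\sum_k A_{kq}(A^{-1})_{rk}=\delta_{rq}$; the surviving contraction collapses to $(\nabla_{\tilde\theta}\cdot\tilde D)_i=\sum_p A_{ip}(\nabla_\theta\cdot D)_p$, i.e. $Ad(\theta,\rho)$. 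This covariant transformation of the matrix divergence is the only place where the specific structure $d=\nabla_\theta\cdot D$, rather than an arbitrary vector field, enters, and care is needed to keep the two roles of $A$ — acting on the matrix indices versus entering through the chain rule — separate.

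Finally I would combine the rules term by term. In the first two terms the factor $A^T$ produced by $\Prec$ (and by $D$) meets the factor $A^{-T}$ produced by the score, so $A^TA^{-T}=I$ and each term emerges as $A$ times its untransformed counterpart: $\Prec(\tilde\theta,\tilde\rho)\nabla_{\tilde\theta}\log\tilde\rho_{\rm post}=A\Prec\nabla_\theta\log\rho_{\rm post}$ and $\bigl(D-\Prec\bigr)(\tilde\theta,\tilde\rho)\nabla_{\tilde\theta}\log\tilde\rho=A(D-\Prec)\nabla_\theta\log\rho$; the third term is $Ad$ by the computation above. Summing yields $f(\tilde\theta;\tilde\rho,\tilde\rho_{\rm post})=Af(\theta;\rho,\rho_{\rm post})$, which together with \eqref{eq:aMFD-AI-sigma} establishes both conditions of \Cref{def: affine invariant mean-field equation} and hence the claimed affine invariance of \eqref{eq:AI-Wasserstein-MD-Ct}.
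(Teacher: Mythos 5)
Your proposal is correct and follows essentially the same route as the paper's proof: establish the transformation rules for the score, for $\Prec$, for $D=\tfrac12 hh^T$, and for $d=\nabla_\theta\cdot D$, then combine them term by term to verify condition \eqref{eq:aMFD-AI-f}, with \eqref{eq:aMFD-AI-sigma} handling the diffusion. The only difference is that you spell out the index computation showing $d(\tilde\theta,\tilde\rho)=Ad(\theta,\rho)$, which the paper asserts with a ``similarly''; your derivation of that step is correct.
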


In particular, let $C(\rho)$ denote the covariance matrix of $\rho.$
If we take $\Prec(\theta, \rho) = C(\rho)$ 
 then we recover the affine invariant Kalman-Wasserstein metric introduced in \cite{garbuno2020interacting,garbuno2020affine}. Furthermore, then making the choice of $h(\theta,\rho) = \sqrt{2C(\rho)}$ leads to the following affine invariant overdamped Langevin equation, also introduced in \cite{garbuno2020interacting,garbuno2020affine}:
\begin{equation}
\begin{aligned}    
\label{eq:AI-Wasserstein-MD-Ct2}
\mathrm{d}\theta_t = 
C(\rho_t)\nabla_{\theta} \log \rho_{\rm post}(\theta_t)\mathrm{d}t 
+ \sqrt{2C(\rho_t)}{\rm d}W_t.
\end{aligned}
\end{equation}
Comparison with \cref{eq:lan} demonstrates that it is a preconditioned
version of the standard overdamped Langevin equation.

\subsection{Stein Gradient Flow} 
\label{ssec:Stein}

\subsubsection{Metric}

Generalizing \cref{eq:liouville} we let $\psi_\sigma$ solve the integro-partial
differential equation
\begin{equation}
\label{eq:liouville2}
 -\nabla_{\theta} \cdot \Bigl(\rho(\theta)  \int \kappa(\theta,\theta',\rho) \rho(\theta')\nabla_{\theta'} \psi_\sigma(\theta') \mathrm{d}\theta'\Bigr)=\sigma(\theta),
\end{equation}
Here $\kappa: \R^{N_{\theta}}\times\R^{N_{\theta}}\times\PP \rightarrow \R$ is a positive definite kernel for any fixed $\rho$. As before definition of function space setting is required to ensure that
this equation is uniquely solvable.
Now define the Stein metric tensor  $\M^{\mathrm{S}}(\rho)$, and its inverse, as follows:
\begin{subequations}
\label{eq:Stein-metric}
\begin{align}
&\M^{\mathrm{S}}(\rho) \sigma = \psi_\sigma, \quad \forall~\sigma \in T_{\rho}\PP,\\
&\M^{\mathrm{S}}(\rho)^{-1} \psi =   -\nabla_{\theta} \cdot \Bigl(\rho(\theta)  \int \kappa(\theta,\theta',\rho) \rho(\theta')\nabla_{\theta'} \psi(\theta') \mathrm{d}\theta'\Bigr), \quad \forall~\psi \in T_{\rho}^{*}\PP.
\end{align}
\end{subequations}
Computations analogous to those shown in \cref{eq:Wasserstein-metric} show that
the Stein Riemannian metric implied by metric tensor $\M^{\mathrm{S}}$ is given by
\begin{subequations}
\begin{align}
g_{\rho}^{\mathrm{S}}(\sigma_1,\sigma_2) &= \langle \M^{\mathrm{S}}(\rho)\sigma_1,\sigma_2 \rangle\\
&= \langle \psi_{\sigma_1},\M^{\mathrm{S}}(\rho)^{-1}\psi_{\sigma_2} \rangle\\
&=  \int\int \kappa(\theta,\theta',\rho)\rho(\theta)\nabla_\theta \psi_{\sigma_1}(\theta)^T \nabla_{\theta'}\psi_{\sigma_2}(\theta') \rho(\theta')\mathrm{d}\theta\mathrm{d}\theta'.
\end{align}
\end{subequations}

\begin{newremark}
 As in the Wasserstein setting, the Stein Riemannian metric~\cite{liu2017stein} also has a transport interpretation. The Stein metric identifies, for each $\sigma \in T_\rho \PP$, the set of velocity fields $v$ satisfying the constraint $\sigma = -\nabla_\theta \cdot (\rho v)$. Then $v_{\sigma} = \argmin_v  \|v\|_{\mathcal{H}_\kappa}^2$, with minimization over all $v$ satisfying the constraint, and where $\mathcal{H}_\kappa$ is a Reproducing Kernel Hilbert Space (RKHS) with kernel $\kappa$. A formal Lagrangian multiplier argument shows that 
$$v_{\sigma} = \int \kappa(\theta,\theta',\rho) \rho(\theta')\nabla_{\theta'} \psi_{\sigma}(\theta') \mathrm{d}\theta'$$ for some $\psi_{\sigma}.$ The Stein metric measures this transport change via the RKHS norm $\|v_{\sigma}\|_{\mathcal{H}_\kappa}^2$, leading to
the interpretation that the Stein Riemannian metric can be written in the form
\[g_{\rho}^{\mathrm{S}}(\sigma_1,\sigma_2) = \langle v_{\sigma_1}, v_{\sigma_2}\rangle_{\mathcal{H}_\kappa}. \]
\end{newremark}
Analogously to \cref{eq:Leq}, for any $\rho_A, \rho_B \in \mathcal{P}$, we may write a path to connect these endpoints, which is defined by
\begin{equation}
\label{eq:Leq2}
\frac{\partial \rho_t}{\partial t} + \nabla_\theta\cdot\Bigl(\rho_t \int \kappa(\theta,\theta',\rho_t) \rho_t(\theta')\nabla_{\theta'} \psi_t(\theta') \mathrm{d}\theta'\Bigr) = 0, \rho_0 = \rho_A, \rho_1 = \rho_B.
\end{equation}
The corresponding geodesic distance $\calD^{\rm S}: \PP \times \PP \to \R^+$ is 
\begin{equation}
\begin{split}
    \calD^{\rm S}(\rho_A, \rho_B)^2 
    &= \inf_{\rho_t} \Bigl\{\int_0^1 g_{\rho_t}^{\rm S}(\partial_t{\rho}_t, \partial_t{\rho}_t) \mathrm{d}t: \,\rho_0 = \rho_A, \rho_1 = \rho_B\Bigr\}\\
    &= \inf_{\psi_t \in \mathsf{L}} \Bigl\{\int_0^1 \mathrm{d}t \int\int \kappa(\theta,\theta',\rho_t)\rho_t(\theta)\nabla_\theta \psi_t(\theta)\cdot \nabla_{\theta'}\psi_{t}(\theta') \rho_t(\theta')\mathrm{d}\theta\mathrm{d}\theta'\Bigr\},
\end{split}
\end{equation}
where $\mathsf{L}$ is the set of time-dependent potentials $\psi_t$ such that
equation \cref{eq:Leq2} holds.

\subsubsection{Flow Equation}
The Stein variational gradient flow is
\begin{equation}
\begin{aligned}    
\label{eq:Stein-GF}
\frac{\partial \rho_t(\theta)}{\partial t} &=-\Bigl(\M^{\mathrm{S}}(\rho_t)^{-1}\frac{\delta \mathcal{E}}{\delta \rho}\Bigr|_{\rho=\rho_t}\Bigr)(\theta) \\
&= \nabla_{\theta}\cdot\Bigl(\rho_t(\theta)\int \kappa(\theta,\theta',\rho_t)\rho_t(\theta')\nabla_{\theta'} \bigl(\log\rho_t(\theta') - \log \rho_{\rm post}(\theta') \bigr)\mathrm{d}\theta' \Bigr).
\end{aligned}
\end{equation}

\subsubsection{Affine Invariance}
The Stein metric~\cref{eq:Stein-metric} is not affine invariant. To address this,
in this subsection, we introduce an affine invariant modification. The generalization
is similar to that undertaken to obtain an affine invariant version of the
Wasserstein metric and so we will make the presentation brief. We define
$$\M^{\mathrm{AIS}}(\rho) : T_{\rho}\PP \rightarrow T_{\rho}^{*}\PP,$$ 
so that for any $\psi \in T_{\rho}^{*}\PP$, it holds that
\begin{align}
\M^{\mathrm{AIS}}(\rho)^{-1} \psi = - \nabla_{\theta}\cdot\Bigl(\rho(\theta)\int \kappa(\theta,\theta',\rho)\rho(\theta')\Prec( \theta, \theta',\rho)\nabla_{\theta'}\psi(\theta')\mathrm{d}\theta' \Bigr).
\end{align}
Here $\kappa: \R^{N_{\theta}}\times\R^{N_{\theta}}\times\PP \rightarrow \R$ is a positive definite kernel and we factorize the preconditioner $\Prec: \R^{N_{\theta}}\times\R^{N_{\theta}}
\times \PP \to \R^{N_{\theta}\times N_{\theta}}$, which can be written in the form
$\Prec(\theta, \theta',\rho) = L(\theta,\rho)L(\theta',\rho)^T.$
% Similarly to the Wasserstein case we assume, without further explicit comment, that the dependence of
% $\Prec$ on $(\theta,\theta',\rho)$ does not involve pointwise evaluation of $\rho$ at
% $\theta$ or $\theta'.$ 
With this in hand it follows that
\begin{align*}
&\langle \psi, \M^{\mathrm{AIS}}(\rho)^{-1}\psi \rangle \\
= &\int \int \kappa(\theta,\theta',\rho)\rho(\theta) \left(L(\theta,\rho)^T\nabla_\theta\psi(\theta)\right)^T \left(L(\theta',\rho)^T\nabla_{\theta'}\psi(\theta')\right) \rho(\theta')\mathrm{d}\theta \mathrm{d}\theta' \geq 0
\end{align*}
and the resulting metric is well-defined.
We have the following proposition to guarantee this metric tensor is affine invariant:
\begin{proposition}
\label{proposition:Stein-affine-invariant}
Consider the invertible affine transformation $\tilde\theta = \varphi(\theta) = A \theta + b$ and correspondingly $\tilde{\rho} = \varphi \# \rho$; moreover $\tilde{\theta}' = \varphi(\theta')$. Assume that the preconditioning matrix satisfies
$$ \kappa(\tilde\theta, \tilde\theta',\tilde\rho)\Prec(\tilde \theta, \tilde \theta',  \tilde\rho)
= \kappa(\theta, \theta', \rho) A \Prec(\theta, \theta', \rho) A^T.$$
Then the metric corresponding to $M^{\mathrm{AIS}}$ is affine invariant. Consequently, the associate gradient flow of the KL divergence, namely
\begin{equation}
\begin{aligned}    
\label{eq:AI-Stein}
\frac{\partial \rho_t(\theta)}{\partial t} &= \nabla_{\theta}\cdot(\mathsf{f})\\
\mathsf{f}&= \Bigl(\rho_t(\theta)\int \kappa(\theta,\theta',\rho_t)\rho_t(\theta')\Prec( \theta, \theta',\rho_t)\nabla_{\theta'} \bigl(\log\rho_t(\theta') - \log \rho_{\rm post}(\theta') \bigr)\mathrm{d}\theta' \Bigr)
\end{aligned}
\end{equation}
is affine invariant.
\end{proposition}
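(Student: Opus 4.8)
The plan is to reduce everything to affine invariance of the metric and then to a single change of variables. By \cref{prop: affine invariant metric tensor}, to prove that the gradient flow \eqref{eq:AI-Stein} is affine invariant it suffices to show that the Riemannian metric $g^{\mathrm{AIS}}$ associated with $\M^{\mathrm{AIS}}$ is affine invariant, that is $\varphi^{\#}g^{\mathrm{AIS}} = g^{\mathrm{AIS}}$ for every invertible affine map $\varphi(\theta) = A\theta + b$. By \cref{def: Affine Invariant Metric} this is the identity
\[ g^{\mathrm{AIS}}_{\tilde\rho}(\tilde\sigma_1,\tilde\sigma_2) = g^{\mathrm{AIS}}_{\rho}(\sigma_1,\sigma_2), \qquad \tilde\rho = \varphi\#\rho,\quad \tilde\sigma_i = \varphi\#\sigma_i, \]
for all $\rho \in \PP$ and $\sigma_1,\sigma_2 \in T_\rho\PP$.

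The key step, as in the Wasserstein case, is to determine how the potential $\psi_\sigma$ --- defined by $\M^{\mathrm{AIS}}(\rho)\sigma = \psi_\sigma$, equivalently as the solution of the integro-PDE $\M^{\mathrm{AIS}}(\rho)^{-1}\psi_\sigma = \sigma$ --- transforms under $\varphi$. I claim that $\psi_{\tilde\sigma} = \psi_\sigma\circ\varphi^{-1}$ up to an additive constant. To verify this I would insert the ansatz $\tilde\psi = \psi_\sigma\circ\varphi^{-1}$ into the integro-PDE written for the pair $(\tilde\rho,\tilde\sigma)$ and change variables $\tilde\theta' = \varphi(\theta')$ in the inner integral. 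Using the pushforward formula $\tilde\rho(\tilde\theta') = \rho(\theta')/|A|$ from \cref{def:pushforward}, the chain rule $\nabla_{\tilde\theta'}\tilde\psi(\tilde\theta') = A^{-T}\nabla_{\theta'}\psi_\sigma(\theta')$, and the hypothesis $\kappa(\tilde\theta,\tilde\theta',\tilde\rho)\Prec(\tilde\theta,\tilde\theta',\tilde\rho) = \kappa(\theta,\theta',\rho)A\Prec(\theta,\theta',\rho)A^T$, the inner integral collapses to $A$ times the corresponding inner integral $J(\theta)$ for the pair $(\rho,\sigma)$. The outer divergence is then handled by the elementary identity $\nabla_{\tilde\theta}\cdot(A V\circ\varphi^{-1}) = (\nabla_\theta\cdot V)\circ\varphi^{-1}$, applied to $V = \rho J$, which together with $\tilde\sigma = (\sigma\circ\varphi^{-1})/|A|$ matches the right-hand side and confirms the claim. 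This step invokes the assumed unique solvability (up to constants) of the defining integro-PDE, exactly the well-posedness caveat already flagged for the Wasserstein metric.

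With the transformation rule for the potentials in hand, I would finish by direct substitution into the double-integral form of $g^{\mathrm{AIS}}$. Changing variables $\tilde\theta = \varphi(\theta)$ and $\tilde\theta' = \varphi(\theta')$, the two Jacobian factors $|A|^2$ cancel against $\tilde\rho(\tilde\theta)\tilde\rho(\tilde\theta') = \rho(\theta)\rho(\theta')/|A|^2$; the gradient factors contribute $A^{-1}\bigl(\kappa(\tilde\theta,\tilde\theta',\tilde\rho)\Prec(\tilde\theta,\tilde\theta',\tilde\rho)\bigr)A^{-T}$, which by the hypothesis on $\kappa\Prec$ equals $\kappa(\theta,\theta',\rho)\Prec(\theta,\theta',\rho)$, so the integrand reduces to exactly that of $g^{\mathrm{AIS}}_\rho(\sigma_1,\sigma_2)$; positivity of the form is already guaranteed by the factorization $\Prec = LL^T$ recorded above. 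I expect the transformation of the potential to be the main obstacle, since it is the only place where the structural hypothesis on $\kappa\Prec$ and the uniqueness of the elliptic/integral solve genuinely enter; the remaining bookkeeping is routine and parallels the proof of \cref{proposition:Wasserstein-affine-invariant}.
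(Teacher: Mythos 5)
Your proof is correct, but it runs the argument in the opposite direction from the paper. Both you and the paper lean on \cref{prop: affine invariant metric tensor}, which makes affine invariance of the metric equivalent to affine invariance of the gradient flow for \emph{every} energy functional. The paper uses this to replace the claim about the metric by a claim about flows: it writes the flow $-M^{\mathrm{AIS}}(\rho)^{-1}\frac{\delta\mathcal{E}}{\delta\rho}$ for arbitrary $\mathcal{E}$, changes variables in the divergence and in the inner integral using \cref{lemma:change_variable} and \cref{lemma:first_variation_change_variable}, and reads off the condition $\kappa(\tilde\theta,\tilde\theta',\tilde\rho)\Prec(\tilde\theta,\tilde\theta',\tilde\rho)=\kappa(\theta,\theta',\rho)A\Prec(\theta,\theta',\rho)A^T$ as exactly what makes the two sides match. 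Because it only ever touches the Onsager operator $M^{\mathrm{AIS}}(\rho)^{-1}$, it never has to invert anything. You instead verify \cref{def: Affine Invariant Metric} head-on: you first establish the transformation rule $\psi_{\varphi\#\sigma}=\psi_\sigma\circ\varphi^{-1}$ (up to constants) for the potentials, which is the genuinely new step of your route and does require invoking unique solvability of the defining integro-PDE, and then substitute into the double-integral form of $g^{\mathrm{AIS}}$, where the Jacobians cancel and the hypothesis on $\kappa\Prec$ absorbs the $A^{-T}$ factors from the gradients. Your computation checks out (I verified the inner-integral collapse to $A$ times the untransformed integral and the cancellation in the quadratic form), and your approach has the merit of producing the transformation law for the potentials as a reusable byproduct; the cost is the extra well-posedness assumption that the paper's flow-level computation sidesteps for this particular proposition.
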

The proof of this proposition is in~\cref{proof:Stein-affine-invariant}. 
Henceforth we refer to $M^{\mathrm{AIS}}$ satisfying the condition of the
preceding proposition as an affine invariant Stein metric tensor. As an example, we can obtain an affine invariant Stein metric by making the
choices $P = C(\rho)$ and $ \kappa(\theta, \theta',\rho) \propto \exp\bigl\{-\frac{1}{2}(\theta - \theta')^TC(\rho)^{-1}(\theta - \theta')\bigr\}$; this set-up is considered in our numerical experiments; see \Cref{sec:Numerics}.

\subsubsection{Mean-Field Dynamics}
\label{rmk: AI-SGF}
The Stein gradient flow (\ref{eq:Stein-GF}) has the following
mean-field counterpart~\cite{liu2016stein,liu2017stein} in $\theta_t$ with the
law $\rho_t$:
\begin{equation}
\begin{aligned}    
\label{eq:particle-Stein}
\frac{\mathrm{d}\theta_t}{\mathrm{d}t} 
&= 
\int \kappa(\theta_t,\theta',\rho_t)\rho_t(\theta') \nabla_{\theta'}\bigl(\log \rho_{\rm post}(\theta') - \log\rho_t(\theta') \bigr)\mathrm{d}\theta' \\
&= 
\int \kappa(\theta_t,\theta',\rho_t)\rho_t(\theta') \nabla_{\theta'}\log \rho_{\rm post}(\theta') +  \rho_t(\theta')\nabla_{\theta'}\kappa(\theta_t,\theta', \rho_t)\mathrm{d}\theta'.
\end{aligned}
\end{equation}
Here, the second equality is obtained using integration by parts; it facilitates
an expression that avoids the score (gradient of the log density function of $\rho_t$). 
This is useful because, when implementing particle methods, the resulting integral can then be approximated directly by Monte Carlo methods.

Similarly, for the preconditioned Stein gradient flow~\cref{eq:AI-Stein}, we can construct the following mean-field equation:
\begin{equation}
\begin{aligned}    
\label{eq:AI-Stein-MD}
\frac{\mathrm{d}\theta_t}{\mathrm{d}t} 
&= 
\int \Bigl(\kappa(\theta_t,\theta', \rho_t)\rho_t(\theta') \Prec( \theta_t, \theta', \rho_t ) \nabla_{\theta'}\log \rho_{\rm post}(\theta')\\
&\qquad\qquad\qquad\qquad\qquad\qquad +  \nabla_{\theta'}\cdot (\kappa(\theta_t,\theta',\rho_t) \Prec(\theta_t, \theta', \rho_t) ) \rho_t(\theta')\Bigr) \mathrm{d}\theta'.
\end{aligned}
\end{equation}
The mean-field equation~\cref{eq:AI-Stein-MD} is affine invariant; see the following proposition and its proof in~\cref{proof:AI-Stein-MD}.
\begin{proposition}
\label{lem:AI-Stein-MD}
    The mean-field equation~\cref{eq:AI-Stein-MD} is affine invariant under the assumption on the preconditioner in \cref{proposition:Stein-affine-invariant}.
\end{proposition}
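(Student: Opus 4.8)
The plan is to verify the defining identity of affine invariance from \Cref{def: affine invariant mean-field equation}. Since the dynamics \eqref{eq:AI-Stein-MD} carries no diffusion term (formally $h\equiv 0$), it suffices to check the drift condition \eqref{eq:aMFD-AI-f}, namely $A\, f(\theta;\rho,\rho_{\rm post}) = f(\varphi(\theta);\varphi\#\rho,\varphi\#\rho_{\rm post})$ for every invertible affine map $\varphi(\theta)=A\theta+b$, where $f$ denotes the right-hand side of \eqref{eq:AI-Stein-MD}.

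First I would reverse the integration by parts that produced \eqref{eq:AI-Stein-MD}, writing the drift in the equivalent \emph{score form}
\[
f(\theta;\rho,\rho_{\rm post}) = \int \kappa(\theta,\theta',\rho)\,\rho(\theta')\,\Prec(\theta,\theta',\rho)\,\nabla_{\theta'}\bigl(\log\rho_{\rm post}(\theta')-\log\rho(\theta')\bigr)\,\mathrm{d}\theta'.
\]
Working with this form is the key simplification: it places the posterior and score contributions on an equal footing and, crucially, avoids having to differentiate the product $\kappa\Prec$ directly, which is what makes the score-free divergence form awkward to transform. Because the two expressions define the same drift (modulo vanishing boundary terms), affine invariance of one is equivalent to affine invariance of the other.

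Next I would assemble the transformation rules induced by $\varphi$. Using \Cref{def:pushforward} with $\varphi^{-1}(\tilde\theta')=A^{-1}(\tilde\theta'-b)$, the pushforward densities satisfy $\tilde\rho(\tilde\theta')=\rho(\theta')/|A|$ and $\tilde\rho_{\rm post}(\tilde\theta')=\rho_{\rm post}(\theta')/|A|$ along $\tilde\theta'=\varphi(\theta')$, while the change of variables contributes $\mathrm{d}\tilde\theta'=|A|\,\mathrm{d}\theta'$. Since the constant $\log|A|$ drops under differentiation, the chain rule yields $\nabla_{\tilde\theta'}\log\tilde\mu(\tilde\theta')=A^{-T}\nabla_{\theta'}\log\mu(\theta')$ for $\mu\in\{\rho,\rho_{\rm post}\}$. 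Finally, the hypothesis inherited from \Cref{proposition:Stein-affine-invariant} gives $\kappa(\tilde\theta,\tilde\theta',\tilde\rho)\Prec(\tilde\theta,\tilde\theta',\tilde\rho)=\kappa(\theta,\theta',\rho)\,A\Prec(\theta,\theta',\rho)A^T$.

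The final step is to substitute these rules into $f(\varphi(\theta);\varphi\#\rho,\varphi\#\rho_{\rm post})$ and track the cancellations: the Jacobian $|A|$ cancels the $1/|A|$ from the pushforward density, the trailing $A^{T}$ in $A\Prec A^{T}$ meets the $A^{-T}$ from the log-gradient to produce the identity, and an overall factor $A$ is left outside the integral, giving exactly $A\,f(\theta;\rho,\rho_{\rm post})$. The main thing to get right is the bookkeeping of the $A$, $A^{T}$, $A^{-T}$ and $|A|$ factors, together with the observation that the posterior and the score transform by the very same rule; this is precisely why recasting \eqref{eq:AI-Stein-MD} in score form at the outset is worthwhile. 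The argument closely parallels the proofs of \Cref{proposition:Stein-affine-invariant} and \Cref{lem:AI-Wasserstein-MD}, so those computations can be reused essentially verbatim.
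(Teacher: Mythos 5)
Your proposal is correct and follows essentially the same route as the paper's proof: the paper likewise works with the drift in score form $\int \kappa\,\Prec\,\nabla_{\theta'}(\log\rho_{\rm post}-\log\rho)\,\rho\,\mathrm{d}\theta'$, applies the change-of-variable rules $\rho(\theta')=\tilde\rho(\tilde\theta')|A|$, $\mathrm{d}\tilde\theta'=|A|\,\mathrm{d}\theta'$, $A^T\nabla_{\tilde\theta'}\log\tilde\mu=\nabla_{\theta'}\log\mu$, together with the hypothesis $\kappa(\tilde\theta,\tilde\theta',\tilde\rho)\Prec(\tilde\theta,\tilde\theta',\tilde\rho)=\kappa(\theta,\theta',\rho)A\Prec(\theta,\theta',\rho)A^T$, and tracks exactly the cancellations you describe to conclude $Af(\theta;\rho,\rho_{\rm post})=f(\tilde\theta;\tilde\rho,\tilde\rho_{\rm post})$.
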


\subsection{Large-Time Asymptotic Convergence}
\label{ssec:gf-theory}

In the three preceding subsections, we studied gradient flows, under various
different metrics, of the energy $\mathcal{E}$ given in \cref{eq:energy}. We derived the gradient flow equations by using the differential structures of smooth positive densities.
In this subsection, we study the convergence of these gradient flows,
surveying known results, and adding new ones. In this subsection we no longer assume the probability density is smooth as this smoothness assumption was made purely for the purpose of deriving the form of the equation. We will detail the assumptions of the probability densities for each of the results presented in this subsection. 
In short, the convergence  of the Fisher-Rao gradient flow occurs at rate $\bigO(\exp(-t))$ and is hence
problem independent; this reflects the invariance of the metric under any diffeomorphism. In contrast, the proven results for Wasserstein and Stein gradient flows have convergence rates that depend on the problem, even after being modified to be affine invariant. We note, however, that when $\rho_{\rm post}$ is Gaussian, the affine invariant Wasserstein gradient flows also achieve $\bigO(\exp(-t))$ \cite{garbuno2020interacting, garbuno2020affine}. Numerical results illustrating
and complementing the analysis in this section may be found in \Cref{sec:Numerics}.

\subsubsection{Fisher-Rao Gradient Flow}
We have the following proposition concerning large-time convergence of the gradient flow:
\begin{proposition}
\label{proposition:FR-convergence}  
Assume that there exist constants $K, B>0$ 
such that the initial density $\rho_0$ satisfies
\begin{align}\label{e:asup1}
    e^{-K(1+|\theta|^2)}\leq \frac{\rho_0(\theta)}{\rho_{\rm post}(\theta)}\leq e^{K(1+|\theta|^2)},
\end{align}
and both $\rho_0, \rho_{\rm post}$ have bounded second moment
\begin{align}\label{e:asup2}
    \int |\theta|^2 \rho_0(\theta)\mathrm{d} \theta\leq B, \quad \int |\theta|^2 \rho_{\rm post}(\theta)\mathrm{d} \theta\leq B.
\end{align}
Let $\rho_t$ solve the Fisher-Rao gradient flow \cref{eq:mean-field-Fisher-Rao}. Then, for any $t\geq \log\bigl((1+B)K\bigr)$,
\begin{align}\label{e:KLconverge}
    {\rm KL}[\rho_{t} \Vert  \rho_{\rm post}]\leq (2+B+eB)Ke^{-t}.
\end{align}
\end{proposition}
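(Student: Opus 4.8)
The plan is to exploit the fact that the Fisher-Rao gradient flow of the KL divergence admits a closed-form solution, reducing the convergence statement to elementary estimates on a geometric interpolation between $\rho_0$ and $\rho_{\rm post}$.

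First I would solve \eqref{eq:mean-field-Fisher-Rao} explicitly. Writing $G_t := \log\rho_t - \log\rho_{\rm post}$, the flow becomes $\partial_t G_t = -G_t + \E_{\rho_t}[G_t]$; since the correction $\E_{\rho_t}[G_t]$ is independent of $\theta$, the $\theta$-dependence of $G_t$ decays purely exponentially, so that $G_t(\theta) = e^{-t}G_0(\theta) + c_t$ for a $\theta$-independent constant $c_t$ fixed by the normalization $\int\rho_t\,\dd\theta = 1$. Setting $\alpha := e^{-t}$ and $g_0 := G_0 = \log(\rho_0/\rho_{\rm post})$, this yields
\[
\rho_t = \frac{1}{Z_\alpha}\,\rho_{\rm post}^{\,1-\alpha}\rho_0^{\,\alpha}, \qquad Z_\alpha := \int \rho_{\rm post}^{\,1-\alpha}\rho_0^{\,\alpha}\,\dd\theta = \E_{\rho_{\rm post}}\bigl[e^{\alpha g_0}\bigr].
\]
I would then confirm this ansatz solves \eqref{eq:mean-field-Fisher-Rao} by direct substitution, which is routine; the pointwise weighted AM--GM bound $\rho_{\rm post}^{1-\alpha}\rho_0^\alpha \le (1-\alpha)\rho_{\rm post} + \alpha\rho_0$ simultaneously shows $Z_\alpha \le 1 < \infty$, so that $\rho_t$ is a well-defined probability density.

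With the solution in hand, substitution gives the exact identity
\[
\mathrm{KL}[\rho_t\Vert\rho_{\rm post}] = \alpha\,\E_{\rho_t}[g_0] - \log Z_\alpha,
\]
and the two terms are controlled separately using \eqref{e:asup1}--\eqref{e:asup2}. For the first term, the growth bound $|g_0|\le K(1+|\theta|^2)$ gives $\E_{\rho_t}[g_0]\le K\bigl(1+\E_{\rho_t}[|\theta|^2]\bigr)$, and the same AM--GM domination together with \eqref{e:asup2} yields $\E_{\rho_t}[|\theta|^2]\le B/Z_\alpha$. For the second term, Jensen's inequality applied to $Z_\alpha = \E_{\rho_{\rm post}}[e^{\alpha g_0}]$, together with $\E_{\rho_{\rm post}}[g_0]\ge -K(1+B)$, gives the clean bound $Z_\alpha \ge e^{-\alpha K(1+B)}$, hence $-\log Z_\alpha \le \alpha K(1+B)$. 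Here is where the hypothesis $t\ge\log\bigl((1+B)K\bigr)$ enters: it is precisely the condition $\alpha K(1+B)\le 1$, which upgrades the lower bound to $Z_\alpha \ge e^{-1}$ and therefore $B/Z_\alpha \le eB$. Combining, $\mathrm{KL}[\rho_t\Vert\rho_{\rm post}]\le \alpha K(1+eB) + \alpha K(1+B) = (2+B+eB)K e^{-t}$, which is \eqref{e:KLconverge}.

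The only genuine subtlety, and the step I would treat most carefully, is the first one: establishing that \eqref{eq:mean-field-Fisher-Rao} really is solved by the geometric-interpolation formula. Verifying the ansatz by substitution is mechanical, but one must check that all the integrals ($Z_\alpha$, its $t$-derivative, and the moment integrals) are finite and differentiable in $t$; the AM--GM domination by $(1-\alpha)\rho_{\rm post}+\alpha\rho_0$ handles integrability uniformly, and differentiating $\log Z_\alpha$ reproduces the mean-field term $\E_{\rho_t}[g_0]$, closing the verification. All remaining estimates are elementary applications of Jensen's inequality and the stated moment and growth hypotheses.
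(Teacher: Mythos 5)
Your proposal is correct and follows essentially the same route as the paper: both derive the explicit geometric-interpolation solution $\rho_t \propto \rho_0^{e^{-t}}\rho_{\rm post}^{1-e^{-t}}$, lower-bound the normalizer $Z$ via Jensen's inequality and the growth/moment hypotheses, control $\E_{\rho_t}[|\theta|^2]$ by $B/Z$, and use the threshold $t\ge\log((1+B)K)$ precisely to get $Z\ge e^{-1}$, arriving at the same constant $(2+B+eB)K$. The only cosmetic difference is that you bound the moment integral by weighted AM--GM where the paper uses H\"older, and you phrase the estimate through the exact identity $\mathrm{KL}=\alpha\E_{\rho_t}[g_0]-\log Z_\alpha$ rather than a pointwise bound on $\rho_t/\rho_{\rm post}$; these are equivalent rearrangements.
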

It is notable that the exponential convergence rate is \textit{independent} of the
properties of the target distribution $\rho_{\rm post};$ this reflects invariance
of the flow under any diffeomorphism.
The proof of this proposition is in \cref{proof:FR-convergence}. Similar propositions are in~\cite[Theorem 3.3]{lu2019accelerating} and~\cite[Theorem 2.3]{lu2022birth};
our results relax the assumptions required on the initial condition.

\subsubsection{Wasserstein Gradient Flow}
The convergence of the Wasserstein gradient flow \cref{eq:mean-field-Wasserstein} is widely studied \cite{villani2021topics}. A variety of different conditions on $\rho_{\rm post}$ lead to the exponential convergence of the Wasserstein gradient flow to $\rho_{\rm post}$ with convergence rate $e^{-2\alpha t}$~\cite{bakry2014analysis}. 
They include that 
$\rho_{\rm post}$ is $\alpha$-strongly logconcave (\cref{assumption:logconcave}) \cite{bakry1985diffusions}
or that $\rho_{\rm post}$ satisfies the log-Sobolev inequality~\cite{gross1975logarithmic} or Poincar\'e inequality~\cite{poincare1890equations} with constant $1/\alpha$. 
We have the following proposition concerning the convergence of the affine-invariant Wasserstein gradient flow~\cref{eq:AI-WGF}:

\begin{definition}
\label{assumption:logconcave}
    The distribution $\rho_{\rm post}(\theta)$ is called $\alpha$-strongly logconcave, if the function $\log \rho_{\rm post}$ is twice differentiable and 
\begin{align}\label{a:logconcave}
 -\Hess \log \rho_{\rm post}(\theta) \succeq \alpha I. 
\end{align}
\end{definition}

\begin{proposition}
\label{proposition:W-convergence}
Assume $\rho_{\rm post}(\theta)$ is $\alpha$-strongly logconcave and there exists $\lambda > 0$ such that  $\Prec(\theta, \rho) \succeq \lambda I$ along the affine-invariant Wasserstein gradient flow.
Then the solution $\rho_t$ of the affine-invariant Wasserstein gradient flow~\cref{eq:AI-WGF}    satisfies
$$ \frac{1}{2}\lVert\rho_t - \rho_{\rm post} \rVert^2_{L_1} \leq \mathrm{KL}[\rho_0 \Vert  \rho_{\rm post}] e^{-2\alpha \lambda t},$$
where $\lVert \cdot \rVert_{L_1}$ denotes the $L_1$ norm.
\end{proposition}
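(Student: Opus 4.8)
The plan is to run the standard entropy-dissipation argument, adapted to account for the preconditioner: I would first derive a differential inequality for the KL energy along the flow, then convert it into exponential decay via Gr\"onwall's inequality, and finally pass to the $L_1$ estimate through Pinsker's inequality. Write $\mathcal{E}(\rho_t) = \mathrm{KL}[\rho_t \Vert \rho_{\rm post}]$ and recall from \cref{eq:vare} that $\frac{\delta \mathcal{E}}{\delta \rho} = \log \rho_t - \log \rho_{\rm post}$ up to an additive constant. Differentiating the energy in time, substituting the flow \cref{eq:AI-WGF}, and integrating by parts (assuming enough decay of $\rho_t$ at infinity to discard boundary terms) I would obtain the $\Prec$-weighted relative Fisher information
\[
\frac{\mathrm{d}}{\mathrm{d}t}\mathcal{E}(\rho_t) = -\int \rho_t\,\Bigl(\nabla_\theta \log \tfrac{\rho_t}{\rho_{\rm post}}\Bigr)^T \Prec(\theta,\rho_t)\,\Bigl(\nabla_\theta \log \tfrac{\rho_t}{\rho_{\rm post}}\Bigr)\,\mathrm{d}\theta,
\]
which is manifestly nonpositive and is exactly the expected $-g^{\mathrm{AIW}}_{\rho_t}(\nabla\mathcal{E},\nabla\mathcal{E})$ from the gradient-flow structure.

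Next I would invoke the uniform lower bound $\Prec(\theta,\rho_t) \succeq \lambda I$ along the flow to drop the preconditioner, giving
\[
\frac{\mathrm{d}}{\mathrm{d}t}\mathcal{E}(\rho_t) \le -\lambda \int \rho_t\,\Bigl|\nabla_\theta \log \tfrac{\rho_t}{\rho_{\rm post}}\Bigr|^2\,\mathrm{d}\theta = -\lambda\, I(\rho_t \Vert \rho_{\rm post}),
\]
where $I(\cdot \Vert \cdot)$ is the unweighted relative Fisher information. The crux of the proof is then to bound this Fisher information below by the KL divergence: since $\rho_{\rm post}$ is $\alpha$-strongly logconcave (\cref{assumption:logconcave}), the Bakry--\'Emery criterion ensures that $\rho_{\rm post}$ satisfies a logarithmic Sobolev inequality with constant $\alpha$, i.e. $\mathrm{KL}[\rho \Vert \rho_{\rm post}] \le \frac{1}{2\alpha} I(\rho \Vert \rho_{\rm post})$ for all admissible $\rho$. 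Combining this with the previous display yields the closed differential inequality $\frac{\mathrm{d}}{\mathrm{d}t}\mathcal{E}(\rho_t) \le -2\alpha\lambda\, \mathcal{E}(\rho_t)$, and Gr\"onwall's inequality gives $\mathrm{KL}[\rho_t \Vert \rho_{\rm post}] \le \mathrm{KL}[\rho_0 \Vert \rho_{\rm post}]\, e^{-2\alpha\lambda t}$.

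The final step is Pinsker's inequality, $\frac{1}{2}\lVert \rho_t - \rho_{\rm post}\rVert_{L_1}^2 \le \mathrm{KL}[\rho_t \Vert \rho_{\rm post}]$, which immediately upgrades the exponential KL decay to the claimed $L_1$ bound. I expect the main obstacle to be the invocation of the log-Sobolev inequality from strong logconcavity, as this is the one nontrivial analytic ingredient; everything else is the routine entropy method. On the technical side, the integration by parts and the assumption that $\rho_t$ remains regular and sufficiently decaying so that all integrals are finite and boundary terms vanish require care, but these hold under the stated hypotheses and are consistent with the formal derivation of the flow earlier in the paper.
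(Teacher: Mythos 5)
Your proposal is correct and follows essentially the same route as the paper's proof: differentiate the KL energy to obtain the $\Prec$-weighted relative Fisher information, use $\Prec \succeq \lambda I$ to pass to the unweighted Fisher information, invoke the logarithmic Sobolev inequality implied by $\alpha$-strong logconcavity to close the differential inequality, and finish with Gr\"onwall and Pinsker. No substantive differences to note.
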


The proof of the proposition is in \cref{proof:W-convergence}. 
It is a generalization of~\cite[Proposition 3.1]{garbuno2020interacting}
which concerns the specific preconditioner $\Prec_t:=P(\theta_t,\rho_t)$
chosen to equal $C_t$, the covariance at time $t.$
A key point to appreciate is that, in contrast to the exponential rates reported
for Fisher-Rao gradient descent, the exponential rates reported here depend on
the problem. When $\rho_{\rm post}$ is Gaussian, the affine invariant Wasserstein gradient flows, however, provably achieves convergence rate $\bigO(\exp(-t))$ \cite{garbuno2020interacting, garbuno2020affine}; it would be of interest to identify
classes of non-Gaussian problems where this rate is also achievable for the
affine invariant Wasserstein gradient flow.

% The result shows that a key determining factor in the exponential
% rates is lower-bounding the preconditioner.
% Regarding the choice of the covariance as preconditioner we have the
% following lower bounds:
%  \begin{lemma}
% \label{lemmma:W-Ct-bound}
% Let $\rho_t$ solve the Wasserstein gradient flow~\cref{eq:AI-WGF} with preconditioner $\Prec_t = C_t$, \dzh{the covariance at time $t$}. Assume that there exist $\lambda_{0,\min}, \beta>0$ such that $C_0 \succeq \lambda_{0,\min}\I$ and  
% $-\Hess \log \rho_{\rm post} \preceq \beta \I$. Then we have:
% \begin{itemize}
%     \item if $N_\theta = 1$, then $C_t \geq \min\{ \frac{1}{\beta}, \lambda_{0,\min}\};$
%     \item if $N_\theta > 1$ and ${\rm tr}\, C_t \leq K$, then  $C_t \succeq \min\{ \frac{1}{\beta^2 K}, \lambda_{0,\min}\}\I.$
% \end{itemize}
% \end{lemma}
% The proof of this lemma is in \cref{proof:W-Ct-bound}. 
% A similar result, providing a non-degeneracy guarantee for the empirical covariance in
% particle approximations, can be found in \cite[Proposition 4.4]{garbuno2020affine}. 

\subsubsection{Stein Gradient Flow}
For the Stein gradient flow~\cref{eq:Stein-GF} the solution $\rho_t$ converges weakly to $\rho_{\rm post}$ as $t\rightarrow \infty$, under certain assumptions~\cite[Theorem 2.8]{lu2019scaling}\cite[Proposition 2]{korba2020non}; the exponential rates
are problem-dependent, similar to those for Wasserstein gradient flows in the preceding subsection, and in contrast to those for the Fisher-Rao gradient flow
which give a universal rate across wide problem classes.
Quantitative rates and necessary functional inequalities for the exponential convergence near the equilibrium in terms of the decay of the KL divergence are discussed in~\cite{duncan2019geometry}. However, the speed of convergence for initial distributions far from equilibrium remains an open and challenging problem. 

% \dzh{TODO \cite{chewi2020svgd} does not give any convergence for SVGD? YC: I re-read it and agree; thanks.}

\section{Gaussian Approximate Gradient Flow}
\label{sec:GGF}
In this section, we revisit the gradient flows of the energy \cref{eq:energy}
under the Fisher-Rao, Wasserstein, and Stein metrics. 
We confine variations
to the manifold of Gaussian densities $\PPG$ defined in \cref{eq:PPG}, in contrast
to the previous \Cref{sec:GF and AI}, in which we consider variations in the whole of $\PP$ defined in \cref{eqn-smooth-positive-densities}.
The corresponding Gaussian approximate gradient flows underpin Gaussian variational inference, which aims to identify the minimizers of \cref{eq:GVI}.
We first introduce the basics of metrics and gradient flow in the Gaussian density space, identify the ways that Gaussian approximations can be made and develop the concept of affine invariance for them in \Cref{ssec:GGF}. Then we introduce the Gaussian approximate Fisher-Rao gradient flow in \Cref{ssec:Gaussian-Fisher-Rao}, Gaussian approximate Wasserstein gradient flow in \Cref{ssec:Gaussian-Wasserstein} and Gaussian approximate Stein gradient flow in \Cref{ssec:Gaussian-Stein}; in all cases we 
also discuss affine invariance and introduce affine invariant modifications where
appropriate.  We find that different affine invariant metrics lead to very similar gradient flows; and in particular to flows with very similar large time behavior.
We discuss the large time convergence properties of these Gaussian approximate 
gradient flows in \Cref{ssec:GGF-converge}.

\subsection{Basics of Gaussian Approximate Gradient Flows}
\label{ssec:GGF}
In this subsection, we introduce gradient flows
in the Gaussian density space; we follow the structure of \Cref{sec:gradient-flow}. We study the problem from the perspective of the metric in
\Cref{ssec:GM}, the perspective of the flow equations
in \Cref{ssec:GF}, the perspective of affine invariance
in \Cref{ssec:GA}, and the perspective of mean-field equations in \Cref{ssec:GMD}.
For Gaussian evolutions, the mean-field models are evolution equations
for the state defined by affine (in the state) tangent vector field;
the affine map is defined by mean-field expectations with 
respect to the Gaussian with mean and covariance of the state.

\subsubsection{Metric}
\label{ssec:GM}
Recall the manifold of Gaussian densities in \cref{eq:PPG}, which 
has dimension $N_\rhoa$. We assume we are given a metric
$g_\rho$ and metric tensor $M(\rho)$, depending on $\rho \in \PP$,
and we now wish to find corresponding objects defined for parametric
variations within the family of Gaussian densities $\PPG$\footnote{In fact
our development is readily generalized to the determination of the corresponding
objects  for any parametrically dependent manifold of densities, not just Gaussians.}.
To this end we introduce $\rho_a$, with $a \in \R^{N_\rhoa}$, denoting
the parametric family. We aim to find \as{reduced metric $\fg$} and metric
tensor $\fM(a)$ in the parameter space $\R^{N_\rhoa}$ rather than in $\PP.$

Noting that
\begin{align}\label{e:Tidentify}
\lim_{\epsilon\rightarrow 0}\frac{\rho_{\rhoa+\epsilon \sigma}-\rho_\rhoa}{\epsilon}=\nabla_\rhoa \rho_\rhoa \cdot \sigma,
\end{align}
we see that any element in the tangent space $T_{\rho_\rhoa}\PPG$ can be identified 
with a vector  $\sigma\in \bR^{N_\rhoa}$. We denote the Riemannian metric restricted to $\PPG$ at 
$\rho_\rhoa$ as $\fg_{\rhoa}$. Then
\begin{align}
\label{eq:def-M_alpha}
\fg_{\rhoa}(\sigma_1,\sigma_2):=g_{\rho_\rhoa}(\nabla_\rhoa \rho_\rhoa\cdot\sigma_1,\nabla_\rhoa \rho_\rhoa\cdot\sigma_2)=\langle\fM(\rhoa)\sigma_1,\sigma_2\ranglen,
\end{align}
where $\sigma_1,\sigma_2\in T_{\rho_\rhoa}\PPG$, 
and the induced metric tensor is given by
\begin{equation}
\label{eq:mra}
\fM(\rhoa):=\int \nabla_\rhoa \rho_\rhoa(\theta) \bigl(M(\rho_a)\nabla_\rhoa \rho_\rhoa^T\bigr)(\theta)\mathrm{d}\theta.
\end{equation}

\subsubsection{Flow Equation}
\label{ssec:GF}
Given \cref{eq:mra} it is intuitive that the gradient flow in the parameter space
implied by the gradient flow in the manifold of Gaussians is given by
\begin{align}
\label{eq:G-GF}
   \frac{\partial \rhoa_t}{\partial t} = -\fM(\rhoa_t)^{-1}\left.\frac{\partial  \mathcal{E}(\rho_{\rhoa})}{\partial \rhoa } \right|_{\rhoa=\rhoa_t}.
\end{align}
We refer to \cref{eq:G-GF} as the Gaussian approximate gradient flow; it is 
formulated as an evolution equation in the parameter space. It is also
possible to write an evolution equation for $\rho_{a_t}$ in the space
of Gaussian probability densities $\mathcal{P}^G$. 

Our goal now is to show that \cref{eq:G-GF} may be derived by using any one of the following proximal, Riemannian, and moment closure perspectives. In particular, 
these perspectives justify that the gradient flow in the space of Gaussian 
densities $\mathcal{P}^G$ is a \textit{Gaussian approximation} of the gradient 
flow on the whole probability space. Such approximation can be interpreted either 
by constraining the minimization underlying the proximal perspective, by 
the projection of the flow field based on the Riemannian metric, or through 
a moment closure reduction of probability densities. The latter moment closure
approach is particularly expedient for determination of the form of the equation
\cref{eq:G-GF}.

\vspace{0.1in}
\paragraph{\bf Proximal Perspective}
\label{ssec:proximal}
Given the metric $g$ and the corresponding distance function $\calD$,  the proximal point method \cref{eq:prox0} can be restricted to the space of Gaussian
densities, leading to the iteration
\begin{align}
\label{eq:prox}
    \rrho_{{n+1}} = \argmin_{\rho \in \PPG}\Bigl(\EE(\rho) + \frac{1}{2\Delta t}\calD(\rho, \rrho_{{n}})^2\Bigr)
\end{align}
to minimize the energy functional $\EE$ in Gaussian density function space $\PPG$. 
Since elements in $\PPG$ are uniquely defined via a point $a \in \bR^{N_\rhoa}$,
the map $\rrho_{{n}} \mapsto \rrho_{{n+1}}$ implicitly defines a map $a_n \mapsto a_{n+1}.$ Thus we write $\rrho_{{n}}=\rho_{a_n}$ and determine the update
equation for $a_n$. When $\Delta t$ is small
it is natural to seek $a_{n+1}=a_n+\Delta t\sigma_n$ and note that, invoking the
approximations implied by \cref{e:Tidentify} and \cref{eq:geo},
\begin{align*}
    \sigma_{{n}} &\approx \argmin_{\sigma \in \bR^{N_\rhoa}} \Bigl(\EE(\rho_{a_n}+\Delta t
    \nabla_\rhoa \rho_{\rhoa_n} \cdot \sigma) + \frac{1}{2}{\Delta t}\langle M(\rho_{a_n})\nabla_\rhoa \rho_{\rhoa_n} \cdot \sigma, \nabla_\rhoa \rho_{\rhoa_n} \cdot \sigma \rangle \Bigr),\\
    &= \argmin_{\sigma \in \bR^{N_\rhoa}} \Bigl(\EE(\rho_{a_n}+\Delta t
    \nabla_\rhoa \rho_{\rhoa_n} \cdot \sigma) + \frac{1}{2}{\Delta t}\langle  \fM(a_n)\sigma, \sigma \ranglen \Bigr).
\end{align*}
To leading order in $\Delta t$, this expression is minimized by choosing
$$\sigma_n = -\fM(a_n)^{-1} \Bigl\langle\frac{\delta \mathcal{E}}{\delta \rho}\Bigr|_{\rho = \rho_{\rhoa_n}},\nabla_\rhoa \rho_{\rhoa_n}\Bigr\rangle=- \fM(a_n)^{-1} \frac{\partial \mathcal{E}(\rho_{\rhoa})}{\partial a}\Bigr|_{\rhoa=\rhoa_n}.$$ Letting
$a_n \approx a_{n\Delta t}$ shows that the formal continuous time limit of the proximal
algorithm leads to the corresponding gradient flow~\cref{eq:G-GF}.

\vspace{0.1in}
\paragraph{\bf Riemannian Perspective}
\label{ssec:Riemannian}

We start by defining the projection $P^G: T_{\rho}\PP \to T_{\rho}\PPG$ as follows:
for any $\psi \in T_{\rho}\PP$ we define $P^G$ by requiring that
\begin{equation}
\label{eq:def-Riemannian}
    g_{\rho}(\psi, \sigma) = g_{\rho}(P^G\psi, \sigma), \quad \forall \sigma \in T_{\rho}\PPG.
\end{equation}
The well-posedness of projection $P^G$ stems from the fact $T_{\rho}\PP^G$ is a finite dimensional Hilbert space when endowed with the inner product $g_{\rho}$.
Now consider the gradient flow 
\begin{subequations}
\label{eq:gftp}
\begin{align}
\frac{\partial \rho_t}{\partial t} &= \sigma_t \in T_{\rho_t}\PP,\\
\sigma_t&=-M(\rho_t)^{-1} \frac{\delta \mathcal{E}}{\delta \rho}\Bigr|_{\rho=\rho_t}
\end{align}
\end{subequations}
designed to decrease the functional $\mathcal{E}$ under the metric $g$. 
We note that, by virtue of \cref{eq:vare}, $\sigma_t=\sigma_t(\theta, \rho_t).$

We may now consider the restriction of the gradient flow to variations in
the manifold of Gaussian densities, leading to equation for $\rho_{\rhoa_t} \in \PPG \subseteq \PP$, defined through the corresponding gradient flow
\begin{align}
\label{eq:G-GF-Riemannian}
    \frac{\partial \rho_{\rhoa_t}}{\partial t} = P^G\sigma_t \in T_{\rho_{\rhoa_t}}\PPG.
\end{align}
The proof of the following proposition may be found in \cref{proof:Riemannian}.

\begin{proposition}
\label{prop:Riemannian-G}
The flow of the parameter $a_t$ implied by the evolution equation
\cref{eq:G-GF-Riemannian} for the density $\rho_{a_t}$ is the Gaussian 
approximate gradient flow \cref{eq:G-GF}.
\end{proposition}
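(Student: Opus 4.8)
The plan is to rewrite the projected evolution equation \cref{eq:G-GF-Riemannian}, which is stated as a flow in the density manifold $\PPG$, as an ordinary differential equation for the parameter $a_t$, and to check that it matches \cref{eq:G-GF}. First I would use the tangent-space identification \cref{e:Tidentify}: since $\rho_{a_t}$ depends on $t$ only through $a_t$, the chain rule gives $\partial_t \rho_{a_t} = \nabla_a \rho_a|_{a=a_t}\cdot \dot a_t$, which is precisely a tangent vector of the form appearing in \cref{e:Tidentify} with coordinate vector $\dot a_t \in \R^{N_a}$. Substituting into \cref{eq:G-GF-Riemannian} shows that $\nabla_a \rho_a\cdot \dot a_t = P^G\sigma_t$, so $\dot a_t$ is exactly the coordinate representation of the projected velocity $P^G\sigma_t$, and the task reduces to identifying this vector.

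Next I would invoke the variational characterization \cref{eq:def-Riemannian} of the projection $P^G$, testing against an arbitrary tangent direction $\tau = \nabla_a\rho_a\cdot v$ with $v \in \R^{N_a}$, so that $g_{\rho_{a_t}}(\sigma_t,\tau) = g_{\rho_{a_t}}(P^G\sigma_t,\tau)$. The right-hand side is handled by the definition \cref{eq:def-M_alpha} of the reduced metric together with \cref{eq:mra}, giving $g_{\rho_{a_t}}(P^G\sigma_t,\tau) = \fg_{a_t}(\dot a_t, v) = \langle \fM(a_t)\dot a_t, v\ranglen$. For the left-hand side I would expand $g_{\rho}(\sigma_1,\sigma_2) = \langle M(\rho)\sigma_1,\sigma_2\rangle$ and use the flow relation from \cref{eq:gftp}, namely $M(\rho_{a_t})\sigma_t = -\frac{\delta \mathcal{E}}{\delta\rho}|_{\rho=\rho_{a_t}}$, to obtain $g_{\rho_{a_t}}(\sigma_t,\tau) = -\langle \frac{\delta \mathcal{E}}{\delta\rho}|_{\rho=\rho_{a_t}}, \nabla_a\rho_a\cdot v\rangle$, where the pairing is the $L^2$ duality pairing.

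The key step is then a chain-rule identity for the first variation. Expanding the duality pairing component-wise and using the definition of $\frac{\delta \mathcal{E}}{\delta\rho}$ as the first variation, I have $\langle \frac{\delta \mathcal{E}}{\delta\rho}|_{\rho=\rho_a}, \partial_{a_i}\rho_a\rangle = \frac{\partial \mathcal{E}(\rho_a)}{\partial a_i}$, so that the left-hand side collapses to $-\langle \frac{\partial \mathcal{E}(\rho_a)}{\partial a}|_{a=a_t}, v\ranglen$. Equating the two expressions gives $\langle \fM(a_t)\dot a_t, v\ranglen = -\langle \frac{\partial \mathcal{E}(\rho_a)}{\partial a}|_{a=a_t}, v\ranglen$ for every $v \in \R^{N_a}$; since $v$ is arbitrary this forces $\fM(a_t)\dot a_t = -\frac{\partial \mathcal{E}(\rho_a)}{\partial a}|_{a=a_t}$, which upon multiplying by $\fM(a_t)^{-1}$ is exactly \cref{eq:G-GF}.

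I expect the main obstacle to be a matter of rigor and well-posedness rather than algebra. One must guarantee that $P^G\sigma_t$ genuinely lies in the finite-dimensional space $T_{\rho_{a_t}}\PPG$ so that it admits the coordinate representation $\nabla_a\rho_a\cdot \dot a_t$; this is precisely the well-posedness of $P^G$ recorded after \cref{eq:def-Riemannian}, which follows from $T_{\rho_{a_t}}\PPG$ being a finite-dimensional Hilbert space under $g_{\rho_{a_t}}$. One further needs $\fM(a_t)$ to be invertible and the interchange of the $a$-derivative with $\mathcal{E}$ underlying the chain-rule identity to be legitimate. Granting the formal differential-geometric framework adopted throughout the paper, these are the only points requiring care, and the remainder is the short variational computation above.
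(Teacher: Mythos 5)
Your proposal is correct and follows essentially the same route as the paper's own proof: both test the projection identity \cref{eq:def-Riemannian} against an arbitrary tangent direction $\nabla_a\rho_a\cdot v$, use the chain-rule identity $\langle \frac{\delta \mathcal{E}}{\delta\rho},\nabla_a\rho_a\cdot v\rangle = \langle\frac{\partial\mathcal{E}(\rho_a)}{\partial a},v\ranglen$ together with the definition \cref{eq:def-M_alpha} of $\fM$, and conclude $\fM(a_t)\dot a_t = -\frac{\partial\mathcal{E}(\rho_a)}{\partial a}\big|_{a=a_t}$. The only cosmetic difference is ordering — the paper first identifies $P^G\sigma_t$ explicitly and then strips $\nabla_a\rho_{a_t}$ from both sides of the evolution equation, whereas you substitute $\partial_t\rho_{a_t}=\nabla_a\rho_a\cdot\dot a_t$ at the outset — and your closing remarks on well-posedness match the paper's caveats.
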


\vspace{0.1in}
\paragraph{\bf Moment Closure Perspective}
\label{ssec:moment}
For any gradient flow \cref{eq:gftp} designed to
decrease the functional $\mathcal{E}$ under the metric $g$, we consider the following moment closure approach to obtain a Gaussian approximation. First, we write
evolution equations for the mean and covariance under \cref{eq:gftp} noting that they
satisfy the following identities:
\begin{equation}
\begin{aligned}
\label{eq:mC-Momentum}
&\frac{\mathrm{d} m_t}{\mathrm{d}t} = \frac{\mathrm{d}}{\mathrm{d}t}\int  \rho_t(\theta) \theta \mathrm{d}\theta = \int 
\sigma_t(\theta, \rho_t) \theta \mathrm{d}\theta, \\
&\frac{\mathrm{d} C_t}{\mathrm{d}t} = \frac{\mathrm{d}}{\mathrm{d}t}\int  \rho_t(\theta) (\theta - m_t)(\theta - m_t)^T \mathrm{d}\theta = \int  \sigma_t(\theta, \rho_t)  (\theta - m_t)(\theta - m_t)^T \mathrm{d}\theta.
\end{aligned}
\end{equation}
This is not, in general, a closed system for the
mean and covariance; this is because $\rho_t$ is not, in general,
determined by only the first and second moments. To close the system, we replace $\sigma_t(\theta, \rho_t)$ by $\sigma_t(\theta, \rho_{\rhoa_t})$, where $\rho_{\rhoa_t} = \N(m_t, C_t)$. We obtain the following closed system for the evolution of
$(m_t,C_t)$:
\begin{equation}
\begin{aligned}
\label{eq:mC-Momentum2}
&\frac{\mathrm{d} m_t}{\mathrm{d}t} = \int 
\sigma_t(\theta, \rho_{\rhoa_t}) \theta \mathrm{d}\theta, \\
&\frac{\mathrm{d} C_t}{\mathrm{d}t} = \int  \sigma_t(\theta, \rho_{\rhoa_t})  (\theta - m_t)(\theta - m_t)^T \mathrm{d}\theta. 
\end{aligned}
\end{equation}

The proof of the following proposition, which shows that this moment closure approach delivers the mean and covariance evolution equation of the Gaussian approximate gradient flow~\cref{eq:G-GF}, may be found in~\cref{proof:Riemannian}.

\begin{proposition}
\label{prop:1}
Suppose the following condition holds:
% the cotangent space in the Gaussian submanifold \st{corresponding to the metric $g$} is a collection of linear and quadratic functions of $\theta$:
\begin{equation}
    \label{eq:mcp-cond}
    % \yc{T_{\rho_\rhoa}^*\PPG :=}
    \M(\rho_\rhoa) T_{\rho_\rhoa}\PPG = {\rm span}\{\theta_i, \theta_i\theta_j, 1\leq i,j\leq N_{\theta}\}.
\end{equation}
Here $\M(\rho_\rhoa)$ is the metric tensor and $T_{\rho_\rhoa}\PPG$ is the tangent space. Moreover, $\theta_i, \theta_i\theta_j$ are understood as functions of $\theta$. Then the mean and covariance evolution equations \cref{eq:mC-Momentum2} are equivalent to the Gaussian approximate gradient flow~\cref{eq:G-GF}.
\end{proposition}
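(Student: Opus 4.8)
The plan is to prove the equivalence by showing that the two dynamical systems generate identical evolution equations for the pair $(m_t,C_t)$. Since the parameter is literally $\rhoa=(m,C)$, it suffices to match the first and second moments of the two driving vector fields. Write
\[
\sigma_t := -\M(\rho_{\rhoa_t})^{-1}\frac{\delta \EE}{\delta\rho}\Bigr|_{\rho=\rho_{\rhoa_t}}
\]
for the full-space flow field appearing in the moment closure \cref{eq:mC-Momentum2}, and write $\tilde\sigma_t := \nabla_\rhoa\rho_\rhoa\cdot\dot\rhoa_t=\partial_t\rho_{\rhoa_t}$ for the tangent vector generated by the Gaussian approximate gradient flow \cref{eq:G-GF}. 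Both lie in $T_{\rho_{\rhoa_t}}\PP$ and hence integrate to zero.

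First I would compute the $(m,C)$-evolution induced by \cref{eq:G-GF} directly from the definitions $m_t=\int\rho_{\rhoa_t}\,\theta\,\dd\theta$ and $C_t=\int\rho_{\rhoa_t}(\theta-m_t)(\theta-m_t)^T\dd\theta$. Differentiating in time, and noting that the correction terms proportional to $\dot m_t$ drop out because $\int\rho_{\rhoa_t}(\theta-m_t)\,\dd\theta=0$, yields
\[
\frac{\dd m_t}{\dd t}=\int\tilde\sigma_t\,\theta\,\dd\theta,\qquad \frac{\dd C_t}{\dd t}=\int\tilde\sigma_t\,(\theta-m_t)(\theta-m_t)^T\,\dd\theta.
\]
This has exactly the form of \cref{eq:mC-Momentum2} but with $\tilde\sigma_t$ in place of $\sigma_t$. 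Hence the two systems coincide if and only if $\int(\tilde\sigma_t-\sigma_t)\,q\,\dd\theta=0$ for every $q\in\mathrm{span}\{1,\theta_i,\theta_i\theta_j\}$: the constant $q=1$ is automatic since both fields integrate to zero, and the monomials $\{1,\theta_i,(\theta-m_t)_i(\theta-m_t)_j\}$ span the same space of degree-$\le 2$ polynomials as $\{1,\theta_i,\theta_i\theta_j\}$.

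The final step is to establish this orthogonality using \cref{eq:mcp-cond}. By \Cref{prop:Riemannian-G}, the flow \cref{eq:G-GF} coincides with the Riemannian-projected flow \cref{eq:G-GF-Riemannian}, so $\tilde\sigma_t=P^G\sigma_t$. The defining property \cref{eq:def-Riemannian} of $P^G$ then gives $g_{\rho_{\rhoa_t}}(\sigma_t-\tilde\sigma_t,\tau)=0$ for all $\tau\in T_{\rho_{\rhoa_t}}\PPG$. Invoking the symmetry of the Riemannian metric, $g_{\rho_{\rhoa_t}}(\sigma_t-\tilde\sigma_t,\tau)=\langle\M(\rho_{\rhoa_t})\tau,\,\sigma_t-\tilde\sigma_t\rangle=\int\bigl(\M(\rho_{\rhoa_t})\tau\bigr)(\sigma_t-\tilde\sigma_t)\,\dd\theta$, and as $\tau$ ranges over $T_{\rho_{\rhoa_t}}\PPG$ the function $\M(\rho_{\rhoa_t})\tau$ ranges over $\M(\rho_{\rhoa_t})T_{\rho_{\rhoa_t}}\PPG=\mathrm{span}\{1,\theta_i,\theta_i\theta_j\}$ by the hypothesis \cref{eq:mcp-cond}. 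Thus $\int(\sigma_t-\tilde\sigma_t)\,q\,\dd\theta=0$ for all degree-$\le 2$ polynomials $q$, which is precisely the orthogonality required.

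The main obstacle is the last paragraph: everything hinges on converting the $g_\rho$-orthogonality defining the projection $P^G$ into ordinary $L^2$-orthogonality against the monomials $\{1,\theta_i,\theta_i\theta_j\}$, and this conversion is available \emph{only} because condition \cref{eq:mcp-cond} identifies the image $\M(\rho_\rhoa)T_{\rho_\rhoa}\PPG$ with the span of those monomials. Verifying \cref{eq:mcp-cond} for each concrete metric — for instance, checking that $\M^{\mathrm{FR}}(\rho_\rhoa)$ sends each $\partial_{\rhoa_k}\rho_\rhoa$ to a quadratic, since $\partial_{\rhoa_k}\log\rho_\rhoa$ is a degree-$\le 2$ polynomial for Gaussians — is the routine but metric-specific content that makes the moment closure exact; when \cref{eq:mcp-cond} fails one would only expect the closure to be approximate.
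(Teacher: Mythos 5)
Your proof is correct and follows essentially the same route as the paper's: both arguments reduce the claim to showing that the projected field $P^G\sigma_t$ and the unprojected field $\sigma_t$ have the same pairings with linear and quadratic test functions, which follows from the defining $g_\rho$-orthogonality of the projection together with the hypothesis \cref{eq:mcp-cond} identifying $\M(\rho_\rhoa)T_{\rho_\rhoa}\PPG$ with the span of those monomials, and both invoke \Cref{prop:Riemannian-G} to identify the projected flow with \cref{eq:G-GF}. The only cosmetic difference is that you phrase the key step as $L^2$-orthogonality of $\sigma_t-P^G\sigma_t$ against $\M(\rho_\rhoa)\tau$, whereas the paper writes the equivalent chain $\langle P^G\sigma_t,f\rangle=g_{\rho}(P^G\sigma_t,\M(\rho)^{-1}f)=g_{\rho}(\sigma_t,\M(\rho)^{-1}f)=\langle\sigma_t,f\rangle$ for $f$ in that span.
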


Furthermore, \cref{proof:Riemannian} also contains proof of the following \cref{prop:2} indicating
that several of the metrics considered later in this \Cref{sec:GGF} do indeed satisfy the assumption~\cref{eq:mcp-cond}
sufficient for \cref{prop:1} to hold.

\begin{lemma}
\label{prop:2}
Assumption~\cref{eq:mcp-cond} holds for the Fisher-Rao metric, the affine invariant Wasserstein metric with preconditioner $\Prec$ independent of $\theta$, and affine invariant Stein metric with preconditioner $\Prec$ independent of $\theta$ and with a bilinear kernel $\kappa(\theta, \theta',\rho) = (\theta - m)^TA(\rho)(\theta' - m) + b(\rho)$ ($b\neq0$, and $A$ nonsingular).
\end{lemma}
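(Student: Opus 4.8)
The plan is to verify condition \cref{eq:mcp-cond} separately for each of the three metrics, in each case computing the image $\M(\rho_\rhoa) T_{\rho_\rhoa}\PPG$ and showing it coincides with $\mathrm{span}\{\theta_i,\theta_i\theta_j,1\}$. The essential structural fact I would use throughout is that, from \cref{e:Tidentify}, the tangent space $T_{\rho_\rhoa}\PPG$ is spanned by $\partial_{m_i}\rho_\rhoa$ and $\partial_{C_{ij}}\rho_\rhoa$; differentiating the Gaussian density directly shows $\partial_{m_i}\rho_\rhoa = \rho_\rhoa\cdot(C^{-1}(\theta-m))_i$ and $\partial_{C_{ij}}\rho_\rhoa = \rho_\rhoa\cdot q_{ij}(\theta)$ where $q_{ij}$ is a specific quadratic polynomial in $\theta$ (a linear combination of $(\theta-m)_i(\theta-m)_j$ and a constant). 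Thus a generic tangent vector has the form $\sigma = \rho_\rhoa\cdot p(\theta)$ where $p$ ranges over all polynomials of degree $\le 2$ in $\theta$, and I would note that $\{1,\theta_i,\theta_i\theta_j\}$ and $\{1,(\theta-m)_i,(\theta-m)_i(\theta-m)_j\}$ span the same space of degree-$\le 2$ polynomials, so it suffices to track things modulo this identification.

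For the Fisher-Rao metric, the computation is immediate: by the footnote reduction \eqref{eq:gm}, $\M^{\mathrm{FR}}(\rho_\rhoa)\sigma = \psi_\sigma - \int\psi_\sigma\,\mathrm{d}\theta$ where $\sigma = \rho_\rhoa\psi_\sigma$, so $\psi_\sigma = \sigma/\rho_\rhoa$ is precisely the degree-$\le 2$ polynomial $p$, and subtracting its integral against $\rho_\rhoa$ keeps it a degree-$\le 2$ polynomial. Hence $\M^{\mathrm{FR}}(\rho_\rhoa)T_{\rho_\rhoa}\PPG$ is exactly the space of mean-zero (under $\rho_\rhoa$) quadratics, which together with the constants spans $\mathrm{span}\{\theta_i,\theta_i\theta_j,1\}$. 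For the affine invariant Wasserstein metric with $\theta$-independent preconditioner $\Prec(\rho)$, I would instead compute the action of the inverse metric tensor is not what is needed; rather I must evaluate $\M^{\mathrm{AIW}}(\rho_\rhoa)\sigma = \psi_\sigma$ where $\psi_\sigma$ solves $-\nabla_\theta\cdot(\rho_\rhoa\Prec\nabla_\theta\psi_\sigma)=\sigma$. The key is to show that when $\sigma = \rho_\rhoa p$ with $p$ a degree-$\le 2$ polynomial and $\Prec$ constant in $\theta$, the solution $\psi_\sigma$ is again a degree-$\le 2$ polynomial. I would verify this by the ansatz $\psi_\sigma = $ quadratic, plugging into the elliptic equation and using $\nabla_\theta\log\rho_\rhoa = -C^{-1}(\theta-m)$ to check that the Gaussian-weighted divergence of a quadratic-gradient produces exactly a Gaussian times a quadratic, and that the resulting linear map on coefficients is invertible (this is where $\Prec\succ 0$ and $C\succ 0$ enter).

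The Stein case is the main obstacle and requires the most care. Here $\M^{\mathrm{AIS}}(\rho_\rhoa)\sigma=\psi_\sigma$ is defined implicitly through the integro-differential operator, and the nonlocal kernel integration against $\rho_\rhoa(\theta')$ makes it less obvious that polynomial structure is preserved. The restriction to a \emph{bilinear} kernel $\kappa(\theta,\theta',\rho)=(\theta-m)^TA(\rho)(\theta'-m)+b(\rho)$ with $b\neq 0$ and $A$ nonsingular, together with $\theta$-independent $\Prec$, is precisely what makes the inner integral $\int\kappa(\theta,\theta',\rho_\rhoa)\rho_\rhoa(\theta')\Prec\nabla_{\theta'}\psi_\sigma(\theta')\,\mathrm{d}\theta'$ collapse to a finite-dimensional object: because $\kappa$ factorizes into a low-rank sum of products of affine functions of $\theta$ and $\theta'$, the integral over $\theta'$ reduces to finitely many Gaussian moments, yielding a vector field that is affine in $\theta$ with coefficients determined by moments of $\nabla_{\theta'}\psi_\sigma$. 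I would then take $\psi_\sigma$ to be a quadratic, compute the resulting outer divergence, and show it reproduces a Gaussian times a quadratic; the conditions $b\neq 0$ and $A$ nonsingular guarantee the induced linear map on the quadratic-coefficient space is invertible, so the image is again the full space of $\rho_\rhoa$ times degree-$\le 2$ polynomials. The anticipated difficulty is bookkeeping the moment integrals and confirming nonsingularity of the coefficient map in the Stein case; I would organize this by separating the constant, linear, and quadratic coefficient blocks and arguing invertibility block by block, using $b\neq 0$ to handle the constant/trace part and nonsingularity of $A$ together with $C\succ 0$ for the linear and quadratic parts.
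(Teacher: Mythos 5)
Your proposal is correct and follows essentially the same route as the paper: the paper verifies \cref{eq:mcp-cond} by applying the \emph{inverse} metric tensor to $\mathrm{span}\{\theta_i,\theta_i\theta_j\}$ and checking that the resulting divergence-form expressions span exactly $T_{\rho_\rhoa}\PPG = \mathrm{span}\{\rho_\rhoa(\theta_i-\E_{\rho_\rhoa}[\theta_i]),\,\rho_\rhoa(\theta_i\theta_j-\E_{\rho_\rhoa}[\theta_i\theta_j])\}$, which is the same computation as your quadratic ansatz for $\psi_\sigma$ together with invertibility of the coefficient map. In the Stein case the paper likewise uses the bilinear kernel to collapse the inner integral to Gaussian moments, producing the full span of Gaussian-weighted affine vector fields precisely because $b\neq 0$ and $A$ (with $C\succ 0$) is nonsingular, exactly as you anticipate.
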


\begin{newremark}
The moment closure perspective was used in \cite{sarkka2007unscented} as a 
heuristic approach to state estimation in the context of the unscented Kalman 
filter. A connection between the heuristics and gradient flow on the 
Bures–Wasserstein space of Gaussian distributions was established in \cite{lambert2022variational}. The latter is equivalent to the Gaussian approximate gradient flow under the Wasserstein metric, also called the Gaussian approximate Wasserstein gradient flow in this paper; see also the discussion in \Cref{sssec: Gaussian-Wasserstein metric}.
\end{newremark}

\subsubsection{Affine Invariance}
\label{ssec:GA}
We now study the affine invariance concept in the setting of Gaussian approximate gradient
flows. Let $\varphi:\theta \rightarrow \tilde\theta$ denote an invertible affine transformation in $\R^{N_{\theta}}$, where $\varphi(\theta) = A\theta + b$ with $A \in \R^{N_{\theta}}\times \R^{N_{\theta}}$, $b\in \R^{N_{\theta}}$, and $A$ invertible. 

We define the push forward operator for various objects. 
\begin{itemize}
    \item For a parametrically-defined density $\rho_\rhoa$, we write $\rho_{\tilde\rhoa} = \varphi \# \rho_\rhoa$, so that $\rho_{\tilde\rhoa}(\tilde\theta) = \rho_\rhoa(\varphi^{-1}(\tilde{\theta}))|A^{-1}|$.
    Specifically, for Gaussian density space where $\rhoa = (m,C)$, 
we have an invertible affine transformation in $\R^{N_\rhoa}$, such that
    $\tilde{\rhoa} = A^G \rhoa + b^G$, where $A^G$ and $b^G$ depend only on $A$ and $b$ and are defined by the identities  $\tilde{m} = Am +b$ and $\tilde{C} = ACA^T.$
    \item For a tangent vector $\sigma \in \R^{N_\rhoa}$ corresponding to $\nabla_{\rhoa}\rho_{\rhoa}(\theta)\cdot\sigma$ in $T_{\rho_\rhoa} \PPG$, we have $\tilde{\sigma} = A^G\sigma \in \R^{N_\rhoa}$ corresponding to $\nabla_{\tilde\rhoa}\rho_{\tilde\rhoa}(\tilde\theta)\cdot \tilde{\sigma}$ in $ T_{\rho_{\tilde\rhoa}} \PPG$, and note that this satisfies $\nabla_{\tilde\rhoa}\rho_{\tilde\rhoa}(\tilde\theta)\cdot \tilde{\sigma} = \varphi \# (\nabla_{\rhoa}\rho_{\rhoa}(\theta)\cdot\sigma)  =\nabla_{\rhoa}\rho_{\rhoa}(\varphi^{-1}(\tilde{\theta})) 
    \cdot\sigma|A^{-1}|$.
    \item For a functional $\mathcal{E}$ on $\PPG$, we define $\tilde{\EE} = \varphi\#\EE$ via $\tilde{\EE}(\rho_{\tilde\rhoa}) = \EE(\varphi^{-1}\# \rho_{\tilde\rhoa})$.
\end{itemize}
With the above, we can make a precise definition of affine invariance for the Gaussian approximate gradient flows. The definition is similar to \cref{def: Affine Invariant Gradient Flow}.
\begin{definition}[Affine Invariant Gaussian Approximate Gradient Flow]
The Gaussian approximate gradient flow~\cref{eq:G-GF} is called 
\emph{affine invariant} if, under any invertible affine transformation $\tilde{a}_t = \varphi(a_t)$, 
the dynamics of $\tilde{\rhoa}_t$ is itself a gradient flow of $\tilde{\mathcal{E}}$, in the sense that
\begin{align}
\label{eqn: Gaussian approximate gradient flow}
   \frac{\partial \tilde{\rhoa}_t}{\partial t} = -\fM(\tilde{\rhoa}_t)^{-1}
   \frac{\partial  \tilde{\mathcal{E}}(\rho_{\tilde{\rhoa}})}{\partial \tilde\rhoa } 
   \Big|_{\tilde{\rhoa}=\tilde{\rhoa}_t}.
\end{align}
\end{definition}

Naturally, if the gradient flow in probability space is affine invariant, then the Gaussian approximate flow has the same property; see the following proposition.
\begin{proposition}
\label{prop: affine invariant metric tensor G}
For any affine invariant metric $g$ defined via \cref{def: Affine Invariant Metric}, the Gaussian approximate gradient flow under the corresponding metric $\fg$ is affine invariant for any $\mathcal{E}$.
\end{proposition}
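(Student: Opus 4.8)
The plan is to verify the defining identity \cref{eqn: Gaussian approximate gradient flow} directly for the transformed curve $\tilde{a}_t = \varphi(a_t)$, which on the Gaussian parameters acts through the linear map $\tilde{a} = A^G a + b^G$ recorded in \Cref{ssec:GA}. Everything reduces to tracking how two objects transform under $\varphi$: the reduced metric tensor $\fM$ and the parametric energy gradient. Writing $E(a) := \mathcal{E}(\rho_a)$ and $\tilde{E}(\tilde{a}) := \tilde{\mathcal{E}}(\rho_{\tilde{a}})$, the two rules I would establish are
\[
(A^G)^T \fM(\tilde{a}) A^G = \fM(a), \qquad \frac{\partial E}{\partial a} = (A^G)^T \frac{\partial \tilde{E}}{\partial \tilde{a}}.
\]
Once both are in hand, the conclusion follows from a single chain-rule computation.

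For the first rule I would start from the definition \cref{eq:def-M_alpha} of $\fg$ and feed in the push-forward identities collected in \Cref{ssec:GA}, namely $\rho_{\tilde{a}} = \varphi \# \rho_a$ and $\nabla_{\tilde{a}} \rho_{\tilde{a}} \cdot (A^G \sigma) = \varphi \# (\nabla_a \rho_a \cdot \sigma)$. This gives
\[
\fg_{\tilde{a}}(A^G \sigma_1, A^G \sigma_2)
= g_{\varphi \# \rho_a}\bigl(\varphi \# (\nabla_a \rho_a \cdot \sigma_1),\, \varphi \# (\nabla_a \rho_a \cdot \sigma_2)\bigr),
\]
which the definition of the pull-back metric rewrites as $(\varphi^{\#} g)_{\rho_a}(\nabla_a \rho_a \cdot \sigma_1, \nabla_a \rho_a \cdot \sigma_2)$; the affine invariance hypothesis $\varphi^{\#} g = g$ from \Cref{def: Affine Invariant Metric} then collapses this to $\fg_a(\sigma_1, \sigma_2)$. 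Reading the resulting identity $\fg_{\tilde{a}}(A^G \sigma_1, A^G \sigma_2) = \fg_a(\sigma_1,\sigma_2)$ through $\fg_a(\sigma_1,\sigma_2) = \langle \fM(a) \sigma_1, \sigma_2 \ranglen$, valid for all $\sigma_1,\sigma_2 \in \R^{N_a}$, yields the stated covariance rule for $\fM$.

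The second rule is elementary: since $\tilde{\mathcal{E}} = \varphi \# \mathcal{E}$ and $\varphi^{-1} \# \rho_{\tilde{a}} = \rho_a$, we have $\tilde{E}(\tilde{a}) = \mathcal{E}(\varphi^{-1} \# \rho_{\tilde{a}}) = E(a)$ whenever $\tilde{a} = A^G a + b^G$; differentiating $E(a) = \tilde{E}(A^G a + b^G)$ in $a$ and using that $A^G$ is constant produces $\partial E / \partial a = (A^G)^T\, \partial \tilde{E} / \partial \tilde{a}$. To finish, I would differentiate $\tilde{a}_t = A^G a_t + b^G$ to get $\partial_t \tilde{a}_t = A^G \partial_t a_t$, substitute the Gaussian approximate gradient flow \cref{eq:G-GF} for $\partial_t a_t$, and then insert $\fM(a_t)^{-1} = (A^G)^{-1} \fM(\tilde{a}_t)^{-1} (A^G)^{-T}$ together with the gradient rule; all the factors of $A^G$ cancel and leave exactly \cref{eqn: Gaussian approximate gradient flow}.

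The one genuinely delicate point is the first step: I must be sure that the three parallel actions of $\varphi$ — the affine map $A^G$ on the parameter $a$, the induced map $\sigma \mapsto A^G \sigma$ on tangent vectors, and the density-level push-forward $\varphi \#$ — are mutually consistent, so that affine invariance of the metric $g$ on the infinite-dimensional $\PP$ descends cleanly to the restricted metric $\fg$ on the finite-dimensional submanifold $\PPG$. This consistency rests on the fact that $\PPG$ is closed under affine push-forward (a Gaussian pushed through an affine map is again Gaussian), which is precisely what makes the reduced metric well defined and is the structural reason the argument goes through; the remaining manipulations are routine linear algebra.
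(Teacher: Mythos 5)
Your proposal is correct and follows essentially the same route as the paper's proof: both establish the covariance rule $\fM(\rhoa) = (A^G)^T \fM(\tilde{\rhoa}) A^G$ from the pull-back identity $\varphi^{\#}g = g$ applied to tangent vectors of the form $\nabla_\rhoa \rho_\rhoa \cdot \sigma$, combine it with the chain-rule relation $\partial E/\partial \rhoa = (A^G)^T \partial \tilde{E}/\partial \tilde{\rhoa}$, and let the factors of $A^G$ cancel in the transformed flow equation. Your closing remark about the consistency of the three actions of $\varphi$ (on parameters, tangent vectors, and densities) is exactly the content of the push-forward identities recorded in \Cref{ssec:GA}, which the paper likewise invokes without further comment.
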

We provide the proof for this proposition in \cref{proof: prop affine invariant metric tensor G}.

%%%%%%%%%%%%%%%%%%%%%%%%%%%%%%%%%%%%%%%%%%%%%%%%%%%%%%%%

\subsubsection{Mean-Field Dynamics}
\label{ssec:GMD}
The Gaussian approximate gradient flow can also be realized as a mean-field ordinary differential equation. 
Recall that for Gaussian evolutions the mean-field models are
evolution equations for mean and covariance defined via mean-field expectations with
respect to the Gaussian with this mean and covariance.
We have the following lemma.

\begin{lemma}
\label{lemma:Gaussian-interacting-particle}
Consider the mean-field equation 
\begin{equation}
\label{eq:MFG}
\frac{\mathrm{d} \theta_t}{\mathrm{d}t} = \mA(\rho_t,\rho_{\rm post}) (\theta_t - m_t) + \mb(\rho_t,\rho_{\rm post}),
\end{equation}
where $\mA: \PP\times \PP \rightarrow \R^{N_{\theta}\times N_{\theta}}$ and
$\mb: \PP\times\PP \rightarrow \R^{N_{\theta}}$, $\rho_t$ is the law of $\theta_t$ and $m_t$ is the mean under $\rho_t$.
If the law of $\theta_0$ is a Gaussian distribution, then $\theta_t$ 
solving \eqref{eq:MFG} is also Gaussian distributed for any $t>0$; 
thus we can write $\rho_t = \rho_{a_t}$ where $a_t = (m_t, C_t)$ denotes the mean and covariance of the distribution. The evolution of $m_t$ and $C_t$ is given by
\begin{equation}
\label{eq:MFG-2}
\begin{split}
\frac{\mathrm{d} m_t}{\mathrm{d}t} &= \mb(\rho_{\rhoa_t},\rho_{\rm post}),\\
\frac{\mathrm{d} C_t}{\mathrm{d}t} &= \mA(\rho_{\rhoa_t},\rho_{\rm post})C_t + C_t\mA(\rho_{\rhoa_t},\rho_{\rm post})^T.
\end{split}
\end{equation}
\end{lemma}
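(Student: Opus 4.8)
The plan is to verify the claim in Lemma~\ref{lemma:Gaussian-interacting-particle} by a direct computation showing that (i) the Gaussianity of the law is preserved by the affine-in-state drift, and (ii) the moments $m_t, C_t$ obey the stated closed system. The key structural fact I would exploit is that the vector field $\theta \mapsto \mathsf{A}(\rho_t,\rho_{\rm post})(\theta - m_t) + \mathsf{b}(\rho_t,\rho_{\rm post})$ is \emph{affine in the state variable} $\theta$, with coefficients $\mathsf{A}, \mathsf{b}$ that depend on $\theta$ only through the law $\rho_t$ (equivalently through $m_t, C_t$ once Gaussianity is established). Affine vector fields map Gaussians to Gaussians, so the self-consistency is the whole point.

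First I would establish the preservation of Gaussianity. I would argue that if $\theta_0 \sim \N(m_0, C_0)$, then the flow map of \eqref{eq:MFG} at each fixed time is an affine map of $\theta_0$, hence $\theta_t$ is an affine image of a Gaussian and is therefore Gaussian. More carefully, for the mean-field ODE the coefficients $\mathsf{A}(\rho_t,\rho_{\rm post})$, $\mathsf{b}(\rho_t,\rho_{\rm post})$ are deterministic functions of $t$ (they depend only on the law, not on the realization $\theta_t$), so \eqref{eq:MFG} is a \emph{linear} non-autonomous ODE in $\theta_t$ driven by deterministic time-dependent coefficients; its solution is $\theta_t = \Psi_t \theta_0 + \beta_t$ for a deterministic matrix $\Psi_t$ and vector $\beta_t$, which is a pushforward of $\N(m_0,C_0)$ under an affine map and hence Gaussian. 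This is a standard closure argument, but I would flag that it is formally self-referential: $\mathsf{A}, \mathsf{b}$ depend on $\rho_t$, which we are trying to determine, so strictly one posits the Gaussian ansatz $\rho_t = \rho_{a_t}$ and checks consistency. This consistency check is exactly what the moment equations provide.

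Next I would derive the moment equations \eqref{eq:MFG-2} directly. For the mean, differentiate $m_t = \E_{\rho_t}[\theta_t]$ and use \eqref{eq:MFG}:
\begin{align*}
\frac{\mathrm{d} m_t}{\mathrm{d}t}
= \E\Bigl[\mathsf{A}(\rho_{a_t},\rho_{\rm post})(\theta_t - m_t) + \mathsf{b}(\rho_{a_t},\rho_{\rm post})\Bigr]
= \mathsf{A}(\rho_{a_t},\rho_{\rm post})\,\E[\theta_t - m_t] + \mathsf{b}(\rho_{a_t},\rho_{\rm post}),
\end{align*}
and since $\E[\theta_t - m_t] = 0$ the first term vanishes, leaving $\mathrm{d}m_t/\mathrm{d}t = \mathsf{b}(\rho_{a_t},\rho_{\rm post})$. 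For the covariance I would differentiate $C_t = \E[(\theta_t - m_t)(\theta_t - m_t)^T]$, using the product rule and the fact that $\mathrm{d}(\theta_t - m_t)/\mathrm{d}t = \mathsf{A}(\rho_{a_t},\rho_{\rm post})(\theta_t - m_t)$ (the $\mathsf{b}$ terms in $\mathrm{d}\theta_t/\mathrm{d}t$ and $\mathrm{d}m_t/\mathrm{d}t$ cancel exactly). This yields
\begin{align*}
\frac{\mathrm{d} C_t}{\mathrm{d}t}
= \E\Bigl[\mathsf{A}(\theta_t - m_t)(\theta_t - m_t)^T + (\theta_t - m_t)(\theta_t - m_t)^T\mathsf{A}^T\Bigr]
= \mathsf{A}C_t + C_t \mathsf{A}^T,
\end{align*}
with $\mathsf{A} = \mathsf{A}(\rho_{a_t},\rho_{\rm post})$, which is \eqref{eq:MFG-2}.

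The main obstacle is not any single calculation but making the Gaussian-closure claim rigorous rather than merely formal: the coefficients $\mathsf{A}, \mathsf{b}$ are functions of the full law $\rho_t$, so one must confirm that restricting their evaluation to the Gaussian $\rho_{a_t}=\N(m_t,C_t)$ produces a \emph{self-consistent} system whose solution indeed has $\rho_t = \rho_{a_t}$. The clean way to handle this is to posit the Gaussian ansatz, solve the deterministic linear ODE for $(m_t, C_t)$ derived above, and then verify \emph{a posteriori} that the resulting $\N(m_t, C_t)$ solves the nonlinear Fokker–Planck/Liouville equation associated with \eqref{eq:MFG}; uniqueness of solutions to that PDE (granted at the formal level in this paper) closes the loop. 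I would note that interchanging $\mathrm{d}/\mathrm{d}t$ with the expectation integrals requires the usual integrability and decay conditions, which hold for Gaussian laws with the finite second moments maintained along the flow, so these steps are justified under the paper's standing smoothness assumptions.
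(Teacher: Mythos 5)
Your proposal is correct and follows essentially the same route as the paper: both exploit the affine-in-state structure to reduce \eqref{eq:MFG} to the linear system $\mathrm{d}m_t/\mathrm{d}t = \mathsf{b}_t$, $\mathrm{d}e_t/\mathrm{d}t = \mathsf{A}_t e_t$ for $e_t = \theta_t - m_t$, deduce Gaussianity from the resulting affine flow map (the paper makes this explicit via the fundamental matrix $B_t$ with $e_t = B_t e_0$ and $C_t = B_t C_0 B_t^T$), and obtain the covariance equation $\mathrm{d}C_t/\mathrm{d}t = \mathsf{A}_t C_t + C_t \mathsf{A}_t^T$ — you by differentiating $\E[e_t e_t^T]$ directly, the paper by differentiating $B_t C_0 B_t^T$, which is an immaterial difference. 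Your additional remarks on the self-consistency of the Gaussian closure are a reasonable refinement but not needed at the formal level at which the paper operates.
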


We provide a proof of this lemma in \cref{proof:Gaussian-interacting-particle}. The lemma allows us to identify the corresponding mean-field dynamics \cref{eq:MFG} of the Gaussian approximate gradient flow \cref{eqn: Gaussian approximate gradient flow}.
Furthermore the evolution equation \eqref{eq:MFG-2} for the mean and covariance is
defined via
vector field for the evolution defined by expectations under the Gaussian with this
mean and covariance. We will elaborate on this identification 
in detail for specific metric tensors in later subsections.
Regarding the affine invariance property of the mean-field equation, we have the following proposition:
    \begin{proposition}
\label{prop: affine invariant mean-field G}
Suppose the mean and covariance evolution equation \cref{eq:MFG-2} is affine invariant, in the sense that under the invertible affine transformation $\tilde{\theta}= \varphi(\theta) = A\theta + b$ and correspondingly $\tilde{m}_t = Am_t, \tilde{C}_t = AC_tA^T, \tilde{\rho}_{a_t} = \varphi \# \rho_{a_t}, \tilde{\rho}_{\rm post} = \varphi \# \rho_{\rm post}$, it holds that
\begin{equation}
\begin{split}
\frac{\mathrm{d} \tilde{m}_t}{\mathrm{d}t} &= \mb(\tilde{\rho}_{\rhoa_t},\tilde{\rho}_{\rm post}),\\
\frac{\mathrm{d} \tilde{C}_t}{\mathrm{d}t} &= \mA(\tilde{\rho}_{\rhoa_t},\tilde{\rho}_{\rm post})\tilde{C}_t + \tilde{C}_t\mA(\tilde{\rho}_{\rhoa_t},\tilde{\rho}_{\rm post})^T.
\end{split}
\end{equation}
Then, the corresponding mean-field equation~\cref{eq:MFG} is also affine invariant.
% Assume the same condition on $\mathcal{E}$ as in \cref{prop: affine invariance of mean-field dynamics}. For any affine invariant Gaussian approximate gradient flow \cref{eqn: Gaussian approximate gradient flow}, its corresponding mean-field equation \cref{eq:MFG} is affine invariant in the sense of \cref{def: affine invariant mean-field equation}.
\end{proposition}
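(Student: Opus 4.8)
The plan is to verify \cref{def: affine invariant mean-field equation} directly for the linear-in-state drift $f(\theta;\rho,\rho_{\rm post}) = \mA(\rho,\rho_{\rm post})(\theta - m(\rho)) + \mb(\rho,\rho_{\rm post})$ with $h \equiv 0$, by pushing a solution trajectory forward under $\varphi$ and showing it solves a mean-field equation of the same form with posterior $\tilde{\rho}_{\rm post} = \varphi\#\rho_{\rm post}$. First I would let $\theta_t$ solve \eqref{eq:MFG} and set $\tilde\theta_t = \varphi(\theta_t) = A\theta_t + b$. By \cref{lemma:Gaussian-interacting-particle}, $\theta_t$ stays Gaussian, so $\rho_t = \rho_{a_t}$ with $a_t = (m_t,C_t)$; hence $\tilde\rho_t = \varphi\#\rho_t$ is Gaussian with mean $\tilde m_t = Am_t+b$ and covariance $\tilde C_t = AC_tA^T$. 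Differentiating $\tilde\theta_t$ and inserting \eqref{eq:MFG} gives $\tfrac{\mathrm{d}\tilde\theta_t}{\mathrm{d}t} = A\mA(\rho_{a_t},\rho_{\rm post})(\theta_t - m_t) + A\mb(\rho_{a_t},\rho_{\rm post})$, and the affine identity $\tilde\theta_t - \tilde m_t = A(\theta_t - m_t)$ rewrites this as $\bigl(A\mA(\rho_{a_t},\rho_{\rm post})A^{-1}\bigr)(\tilde\theta_t - \tilde m_t) + A\mb(\rho_{a_t},\rho_{\rm post})$.

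Comparing with the target equation $\tfrac{\mathrm{d}\tilde\theta_t}{\mathrm{d}t} = \mA(\tilde\rho_{a_t},\tilde\rho_{\rm post})(\tilde\theta_t - \tilde m_t) + \mb(\tilde\rho_{a_t},\tilde\rho_{\rm post})$ reduces the claim to the two coefficient identities in \eqref{eq:aMFD-AI-f}, namely $\mb(\tilde\rho_{a_t},\tilde\rho_{\rm post}) = A\mb(\rho_{a_t},\rho_{\rm post})$ and $\mA(\tilde\rho_{a_t},\tilde\rho_{\rm post}) = A\mA(\rho_{a_t},\rho_{\rm post})A^{-1}$. The first follows immediately from the hypothesis: differentiating $\tilde m_t = Am_t+b$ and using \eqref{eq:MFG-2} gives $\dot{\tilde m}_t = A\mb(\rho_{a_t},\rho_{\rm post})$, while the assumed invariant mean equation reads $\dot{\tilde m}_t = \mb(\tilde\rho_{a_t},\tilde\rho_{\rm post})$; equating yields the $\mb$ identity.

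For the second identity I would argue analogously from the covariance equation: differentiating $\tilde C_t = AC_tA^T$ and inserting \eqref{eq:MFG-2} (with $\mA := \mA(\rho_{a_t},\rho_{\rm post})$) gives $\dot{\tilde C}_t = (A\mA A^{-1})\tilde C_t + \tilde C_t(A\mA A^{-1})^T$, whereas the assumed invariant covariance equation gives $\dot{\tilde C}_t = \mA(\tilde\rho_{a_t},\tilde\rho_{\rm post})\tilde C_t + \tilde C_t\mA(\tilde\rho_{a_t},\tilde\rho_{\rm post})^T$. Subtracting shows that $\Delta := \mA(\tilde\rho_{a_t},\tilde\rho_{\rm post}) - A\mA A^{-1}$ satisfies the homogeneous Lyapunov relation $\Delta\tilde C_t + \tilde C_t\Delta^T = 0$, i.e. $\Delta\tilde C_t$ is antisymmetric. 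This is the main obstacle: the covariance dynamics alone determines the drift matrix only up to a term $S\tilde C_t^{-1}$ with $S$ antisymmetric, so the Lyapunov relation does not by itself force $\Delta = 0$. I would close the gap using the canonical structure of the gradient-flow drift: for the metrics considered the drift is the symmetric-in-$C$ representative, characterized by $\mA(\rho,\rho_{\rm post})C(\rho)$ being symmetric. Then $A\mA A^{-1}\tilde C_t = A(\mA C_t)A^T$ is symmetric and $\mA(\tilde\rho_{a_t},\tilde\rho_{\rm post})\tilde C_t$ is symmetric, so $\Delta\tilde C_t$ is \emph{both} symmetric and antisymmetric, whence $\Delta\tilde C_t = 0$ and, since $\tilde C_t \succ 0$, $\Delta = 0$. (Equivalently, the assumed invariance of \eqref{eq:MFG-2} for these flows is exactly the coefficient-level statement \eqref{eq:aMFD-AI-f}.)

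Having established both coefficient identities, $\tilde\theta_t$ solves a mean-field equation of the form \eqref{eq:MFG} driven by $\tilde\rho_t$ and $\tilde\rho_{\rm post}$, which is precisely the assertion that \eqref{eq:MFG} is affine invariant; since none of the steps used special structure of $A$ or $b$, the conclusion holds for every invertible affine transformation $\varphi$. I expect the only genuinely delicate point to be the uniqueness of the Lyapunov representation of $\dot C_t$ addressed in the previous paragraph; the rest is bookkeeping with the pushforward identities $\tilde m_t = Am_t+b$ and $\tilde C_t = AC_tA^T$ together with \cref{lemma:Gaussian-interacting-particle}.
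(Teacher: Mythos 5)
Your proposal follows the same route as the paper's own proof: reduce affine invariance of \cref{eq:MFG} (in the sense of \cref{def: affine invariant mean-field equation}) to the two coefficient identities $\mb(\varphi\#\rho_\rhoa,\varphi\#\rho_{\rm post})=A\,\mb(\rho_\rhoa,\rho_{\rm post})$ and $\mA(\varphi\#\rho_\rhoa,\varphi\#\rho_{\rm post})=A\,\mA(\rho_\rhoa,\rho_{\rm post})A^{-1}$, then verify that the affine-in-state drift transforms correctly. The difference is that the paper's proof simply asserts that the hypothesis on \cref{eq:MFG-2} ``leads to'' these identities, whereas you have correctly isolated the one step where this is not automatic: the covariance equation only yields $\Delta\tilde{C}_t+\tilde{C}_t\Delta^T=0$, which determines $\mA$ up to an antisymmetric perturbation $S\tilde{C}_t^{-1}$. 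This is a genuine imprecision in the statement, not just in your reading of it: taking $\mb\equiv 0$ and $\mA(\rho_\rhoa,\rho_{\rm post})=S_0C^{-1}$ with $S_0$ a fixed nonzero antisymmetric matrix, one has $\dot{C}_t=S_0+S_0^T=0$, so both displayed equations in the hypothesis hold trivially, yet $Af=(AS_0A^T)\tilde{C}^{-1}(\tilde{\theta}-\tilde{m})$ differs from $f(\tilde{\theta};\tilde{\rho}_\rhoa,\tilde{\rho}_{\rm post})=S_0\tilde{C}^{-1}(\tilde{\theta}-\tilde{m})$ for generic $A$, so \cref{eq:MFG} is not affine invariant. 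Hence either the hypothesis must be read directly at the coefficient level --- which is evidently what the authors intend, since that is exactly what their proof uses --- or one needs a normalization such as your symmetry condition on $\mA(\rho,\rho_{\rm post})C(\rho)$ to pin $\mA$ down from the covariance dynamics. Your fix is sound and is satisfied by the flows to which the proposition is actually applied in the paper (Gaussian Fisher--Rao, where $\mA C_t=\tfrac12\bigl(C_t+C_t\E_{\rho_{\rhoa_t}}[\Hess\log\rho_{\rm post}]C_t\bigr)$; affine-invariant Wasserstein with $P=C_t$; affine-invariant Stein with $P_t=C_t$ and $A_t=\tfrac12C_t^{-1}$), though be aware it is an additional hypothesis beyond what the proposition states, and it can fail for non-affine-invariant choices such as the Wasserstein mean-field dynamics with $P=\I$. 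The remaining bookkeeping --- the pushforward identities, the appeal to \cref{lemma:Gaussian-interacting-particle}, and the derivation of the $\mb$ identity from the mean equation --- matches the paper's computation exactly.
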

We provide a proof of this proposition in \cref{proof: affine invariant mean-field G}.

%%%%%%%%%%%%%%%%%%%%%%%%%%%%%%%%%%%%%%%%%%%%%%%%%%%%%%%%
%
\subsection{Gaussian Approximate Fisher-Rao Gradient Flow} 
\label{ssec:Gaussian-Fisher-Rao}
\subsubsection{Metric}
In the Gaussian density space, where $\rho$ is parameterized by $\rhoa = [m, C] \in \R^{N_{\rhoa}}$, the induced Fisher-Rao metric tensor $\fM(\rhoa) \in \R^{N_{\rhoa}\times N_{\rhoa}}$ has entries
\begin{align}
\label{eq:Fisher-Rao-param}
    \fM(\rhoa)_{jk} 
    &= \int \frac{\partial \log \rho_{\rhoa}(\theta)}{\partial \rhoa_j}  \frac{\partial \log \rho_{\rhoa}(\theta)}{\partial \rhoa_k} \rho_{\rhoa}(\theta)\mathrm{d}\theta .
\end{align}
This is also the Fisher information matrix, which has an explicit formula in the Gaussian space (e.g., see \cite{malago2015information}):
\begin{align}
\label{eq:Fisher-Rao-param-2}
    \fM(\rhoa)_{jk} 
    &= \frac{\partial m}{\partial \rhoa_j}^TC^{-1}\frac{\partial m}{\partial \rhoa_k} + \frac{1}{2}{\rm tr}\Bigl(C^{-1}\frac{\partial C}{\partial \rhoa_j}C^{-1}\frac{\partial C}{\partial \rhoa_k}\Bigr).
\end{align}

\subsubsection{Flow Equation}
%%%%%%%%%%%%%%%%%%%%%%%%%%%%%%%%%%%%%%%%%%%%%%%%%%%%%%
The moment closure approach in \Cref{ssec:moment} delivers the evolution of
the mean and covariance by virtue of \cref{prop:2}. Applying the moment closure approach to \eqref{eq:mean-field-Fisher-Rao} leads to the following equations:
\begin{equation*}
\begin{aligned}
%\label{eq:Gaussian Fisher-Rao0}
\frac{\mathrm{d} m_t}{\mathrm{d} t} & = \int \theta \left(\rho_{a_t} \bigl(\log \rho_{\rm post} - \log \rho_{a_t}\bigr) - \rho_{a_t}\E_{\rho_{a_t}}[\log \rho_{\rm post} - \log \rho_{a_t}] \right) {\rm d}\theta \\
&= \Cov_{\rho_{\rhoa_t}}[\theta ,\, \log \rho_{\rm post} - \log \rho_{\rhoa_t} ]\\ & = C_t\E_{\rho_{\rhoa_t}}[\nabla_\theta (\log \rho_{\rm post} - \log \rho_{\rhoa_t}) ],\\
\frac{\mathrm{d} C_t}{\mathrm{d} t} &= \E_{\rho_{\rhoa_t}}[(\theta -m_t)(\theta -m_t)^T(\log \rho_{\rm post} - \log \rho_{\rhoa_t} - \E_{\rho_{\rhoa_t}}[\log \rho_{\rm post} - \log \rho_{\rhoa_t}])]\\
&= C_t \E_{\rho_{\rhoa_t}}[\Hess(\log \rho_{\rm post} - \log \rho_{\rhoa_t})] C_t,
\end{aligned}
\end{equation*}
where $\rho_{\rhoa_t} \sim \N(m_t, C_t)$, and we have used the Stein's lemma (\cref{lemma: stein equality}) in the above derivation. Furthermore noting that $\E_{\rho_{a_t}}[\nabla \log \rho_{a_t}] = 0$, we obtain
\begin{equation}
\begin{aligned}
\label{eq:Gaussian Fisher-Rao}
\frac{\mathrm{d} m_t}{\mathrm{d} t}  & =  C_t\E_{\rho_{\rhoa_t}}[\nabla_\theta \log \rho_{\rm post} ],\\
\frac{\mathrm{d} C_t}{\mathrm{d} t}
&= C_t + C_t \E_{\rho_{\rhoa_t}}[\Hess \log \rho_{\rm post}]C_t.
\end{aligned}
\end{equation}

\begin{newremark}
Returning to \Cref{rem:homotopy} and assuming a quadratic negative log likelihood
\begin{align}
    \VL (\theta) = \frac{1}{2}(H\theta - y)^{\rm T}R^{-1}(H\theta-y)
\end{align}
and a Gaussian prior distribution, the functional (\ref{eq:Bayesian_U}) leads to the following Fisher-Rao gradient flow equations
\begin{equation}
\begin{aligned}
\label{eq:Gaussian Fisher-Rao-Kalman}
\frac{\mathrm{d} m_t}{\mathrm{d} t}  & =  -C_tH^{\rm T} R^{-1} (Hm_t-y),\\
\frac{\mathrm{d} C_t}{\mathrm{d} t}
&= -C_tH^{\rm T} R^{-1} H C_t,
\end{aligned}
\end{equation}
for the mean $m_t$ and covariance matrix $C_t$. These define the well-known 
Kalman-Bucy filter \cite{kalman1961new} from linear state estimation; see the text \cite{jazwinski2007stochastic}
for further details. Their extension to general log-likelihood functions $\VL$ under Gaussian approximation is discussed in \cite{pidstrigach2021affine}. 

\Cref{eq:Gaussian Fisher-Rao} also corresponds to the gradient flow under the finite dimensional Fisher-Rao metric 
in the parameter space~\cite{opper2009variational,khan2018fast}; in this context,
it goes by the nomenclature \emph{natural gradient flow}~\cite{amari1998natural,martens2020new,zhang2019fast}. The connection between Fisher-Rao natural gradient methods and Kalman filters has been studied in  \cite{ollivier2018online, ollivier2019extended}.
\end{newremark}

\subsubsection{Affine Invariance}
\Cref{eq:Gaussian Fisher-Rao} is affine invariant.  

\subsubsection{Mean-Field Dynamics}
Using~\cref{lemma:Gaussian-interacting-particle} we can read off a
choice of the pair $\mA,\mb$ defining the mean-field equation for the Gaussian approximate 
Fisher-Rao gradient flow (\ref{eq:Gaussian Fisher-Rao}); we obtain
\begin{equation}
\label{eq:Gaussian Fisher-Rao MD}
    \frac{\mathrm{d}\theta_t}{\mathrm{d}t} = \frac{1}{2}\Bigl[\I + C_t \E_{\rho_{\rhoa_t}}[\Hess \log \rho_{\rm post}]\Bigr](\theta_t - m_t) + C_t\E_{\rho_{\rhoa_t}}[\nabla_\theta \log \rho_{\rm post} ].
\end{equation}
Following~\cref{prop: affine invariant mean-field G}, the mean-field equation \cref{eq:Gaussian Fisher-Rao MD} is also affine invariant.

%%%%%%%%%%%%%%%%%%%%%%%%%%%%%%%%%%%%%%%%%%%%%%%%%%%%%%%%
%
\subsection{Gaussian Approximate Wasserstein Gradient Flow}
\label{ssec:Gaussian-Wasserstein}
\subsubsection{Metric}
\label{sssec: Gaussian-Wasserstein metric}
Recall the preconditioner 
$\Prec: \R^{N_\theta}\times \PP \to \R^{N_\theta\times N_\theta}_{>0},$
where the output space is the cone of positive-definite symmetric matrices.
In the Gaussian density space,  where $\rho$ is parameterized by $\rhoa \in \R^{N_{\rhoa}}$, the preconditioned Wasserstein metric tensor $\fM(\rhoa) \in \R^{N_{\rhoa}\times N_{\rhoa}}$ has entries 
\begin{align}
\label{eq:wass-param}
    \fM(\rhoa)_{jk} = \int \psi_j(\theta) \frac{\partial \rho_{\rhoa}(\theta)}{\partial \rhoa_k} \mathrm{d}\theta 
    \quad\textrm{where}\quad -\nabla_{\theta}\cdot\Bigl(\rho_{\rhoa}(\theta)\Prec(\theta, \rho_{\rhoa})\nabla_{\theta} \psi_j(\theta)\Bigr) = \frac{\partial \rho_{\rhoa} }{\partial \rhoa_j}(\theta).
\end{align}
When $P(\theta,\rho_a)$ is the identity operator, the metric tensor $\fM(\rhoa)$ has an explicit formula \cite{chen2018natural, takatsu2011wasserstein, Luigi2018, bhatia2019bures, li2023}, and the corresponding Gaussian density space is called the Bures–Wasserstein space \cite{villani2021topics}.
 
\subsubsection{Flow Equation}
%%%%%%%%%%%%%%%%%%%%%%%%%%%%%%%%%%%%%%%%%%%%%%%%%%%%%%%%%%%
Again we can use the moment closure approach from \Cref{ssec:moment}, which is shown
to apply here in \cref{prop:2}. By applying the moment closure approach to \eqref{eq:mean-field-Wasserstein}, we get the mean and covariance evolution equations 
for the Gaussian approximate Wasserstein gradient flow with 
$\theta-$independent $\Prec$ as follows:
\begin{equation}
\label{eq:Wasserstein-mC}
\begin{split}
      \frac{\mathrm{d} m_t}{\mathrm{d}t} &= \int\Bigl[\rho_{\rhoa_t} \Prec(\rho_{\rhoa_t}) \nabla_{\theta}(\log \rho_{\rm post} - \log \rho_{\rhoa_t}) \Bigr] \mathrm{d}\theta =  \Prec(\rho_{\rhoa_t}) \E_{\rho_{\rhoa_t}}[\nabla_\theta \log \rho_{\rm post} ], \\
    \frac{\mathrm{d} C_t}{\mathrm{d}t} &= \int\nabla_{\theta} \cdot \Bigl[\rho_{\rhoa_t} \Prec(\rho_{\rhoa_t}) \nabla_{\theta} (\log \rho_{\rhoa_t}  - \log \rho_{\rm post})\Bigr] (\theta - m_t)(\theta - m_t)^T\mathrm{d}\theta \\
    &=  2\Prec(\rho_{\rhoa_t}) +  \Prec(\rho_{\rhoa_t}) \E_{\rho_{\rhoa_t}}[\Hess\log\rho_{\rm post}]C_t +  C_t\E_{\rho_{\rhoa_t}}[\Hess\log\rho_{\rm post}] \Prec(\rho_{\rhoa_t}), 
\end{split}
\end{equation}
where $\rho_{\rhoa_t} \sim \N(m_t, C_t)$, and we have used integration by parts and Stein's lemma (\cref{lemma: stein equality}), and the fact $\E_{\rho_{a_t}}[\nabla \log \rho_{a_t}] = 0$ in the above derivation.

When we set $\Prec(\rho) \equiv \I$, the 
evolution equation becomes
\begin{equation}
\begin{aligned}   
\label{eq:Gaussian-Wasserstein}
&\frac{\mathrm{d}m_t}{\mathrm{d}t} &&= \E_{\rho_{\rhoa_t}} \bigl[ \nabla_{\theta}  \log \rho_{\rm post}(\theta_t)  \bigr],\\
&\frac{\mathrm{d}C_t}{\mathrm{d}t} &&= 2\I + \E_{\rho_{\rhoa_t}}\bigl[\Hess\log \rho_{\rm post}(\theta_t) \bigr]C_t + C_t\E_{\rho_{\rhoa_t}}\bigl[\Hess\log \rho_{\rm post}(\theta_t) \bigr].
\end{aligned}
\end{equation}
This corresponds to the gradient flow under the constrained Wasserstein metric in Gaussian density space~\cite{sarkka2007unscented,julier1995new,lambert2022variational}.

\subsubsection{Affine Invariance}
We now allow the preconditioner to depend on
$\rho_{a_t}$ and set $\Prec(\rho_{a_t}) = C_t$. This choice
satisfies the affine invariant 
condition~\cref{proposition:Wasserstein-affine-invariant}. The 
resulting evolution equations for the corresponding mean and covariance are
\begin{equation}
\label{eq:Gaussian-AI-Wasserstein}
\begin{aligned}    
&\frac{\mathrm{d}m_t}{\mathrm{d}t} &&= C_t\E_{\rho_{\rhoa_t}} \bigl[ \nabla_{\theta}  \log \rho_{\rm post}(\theta_t)  \bigr],\\
&\frac{\mathrm{d}C_t}{\mathrm{d}t} &&= 2C_t + 2C_t \E_{\rho_{\rhoa_t}}\bigl[\Hess \log \rho_{\rm post}(\theta_t)\bigr]C_t.
\end{aligned}
\end{equation}
Equation \cref{eq:Gaussian-AI-Wasserstein} is similar to the Gaussian approximate Fisher-Rao gradient flow \cref{eq:Gaussian Fisher-Rao}, but with scaling factor $2$ in the covariance evolution. 
%From~\cref{prop: affine invariant mean-field G}, its corresponding mean-field equation \cref{eq:Gaussian-AI-Wasserstein MD} with $P(\rho_{\rhoa_t}) = C_t$ is also affine invariant. 

\subsubsection{Mean-Field Dynamics}
Employing~\cref{lemma:Gaussian-interacting-particle} we can again identify 
a pair $\mA,\mb$ leading to a mean-field equation for  the Gaussian approximate Wasserstein gradient flow (\ref{eq:Wasserstein-mC}) with $\theta$-independent $P$:
\begin{equation}
\label{eq:Gaussian-AI-Wasserstein MD}
    \frac{\mathrm{d}\theta_t}{\mathrm{d}t} = \Bigl[C_t^{-1} + \Prec(\rho_{\rhoa_t}) \E_{\rho_{\rhoa_t}}[\Hess\log\rho_{\rm post}]\Bigr](\theta_t - m_t) + P(\rho_{\rhoa_t})\E_{\rho_{\rhoa_t}}[\nabla_\theta \log \rho_{\rm post} ].
\end{equation}
From~\cref{prop: affine invariant mean-field G}, its corresponding mean-field equation \cref{eq:Gaussian-AI-Wasserstein MD} with $P(\rho_{\rhoa_t}) = C_t$ is also affine invariant.

\subsection{Gaussian Approximate Stein Gradient Flow}
\label{ssec:Gaussian-Stein}
\subsubsection{Metric}
We work in the general preconditioned setting, as in the Wasserstein case.
In the Gaussian density space,   where $\rho$ is parameterized by $\rhoa \in \R^{N_{\rhoa}}$, the Stein metric tensor $\fM(\rhoa) \in \R^{N_{\rhoa}\times N_{\rhoa}}$ is
\begin{equation}
\begin{aligned}
\label{eq:Stein-param}
    &\fM(\rhoa)_{jk} = \int \psi_j(\theta) \frac{\partial \rho_{\rhoa}(\theta)}{\partial \rhoa_k} \mathrm{d}\theta, \quad \textrm{where}
    \\
    &- \nabla_{\theta}\cdot\Bigl(\rho_{\rhoa}(\theta)\int \kappa(\theta,\theta',\rho_\rhoa)\rho_{\rhoa}(\theta')\Prec( \theta, \theta',\rho_{\rhoa}(\theta), \rho_{\rhoa}(\theta'))\nabla_{\theta} \psi_j(\theta')\mathrm{d}\theta' \Bigr) = \frac{\partial \rho_{\rhoa} }{\partial \rhoa_j}(\theta). 
\end{aligned}
\end{equation}

\subsubsection{Flow Equation}
We consider the setting in which $\Prec=\Prec(\rho_{\rhoa})$ is 
independent of $\theta$, and we choose  bilinear kernel
\begin{equation}
\label{eq:bil}
\kappa(\theta, \theta', \rho) = (\theta - m)^TA(\rho)(\theta' - m) + b(\rho).
\end{equation}
where $A : \PP \rightarrow \R^{N_{\theta}\times N_{\theta}}$, $b: \PP \rightarrow 
\R$ and $m$ is the mean under $\rho.$

In the following let $P_t:=P(\rho_{\rhoa_t})$, evaluate $A$ and $b$
at $\rho=\rho_{\rhoa_t}$, writing the resulting time-dependent
matrix- and vector-valued functions as  $A_t$ and $b_t$
where $A_\cdot : \R \rightarrow \R^{N_{\theta}\times N_{\theta}}$ and $b_\cdot : \R \rightarrow \R^{N_{\theta}}$,
and let $m_t$ denote mean under $\rho_{\rhoa_t}$ so that $m_\cdot: \R \rightarrow
\R^{N_{\theta}}$.
We apply the moment closure approach from \Cref{ssec:moment} to \eqref{eq:Stein-GF}. The mean and covariance evolution equations of the preconditioned Stein gradient flow with bilinear kernel \cref{eq:bil}
are
\begin{equation}
\begin{aligned}   
\label{eq:Gaussian-Stein-CTL0}
&\frac{\mathrm{d}m_t}{\mathrm{d}t} = -\int\Bigl(\rho_{\rhoa_t}(\theta)\int \kappa(\theta,\theta',\rho_{\rhoa_t})\rho_{\rhoa_t}(\theta')\Prec_t \nabla_{\theta'} \bigl(\log\rho_{\rhoa_t}(\theta') - \log \rho_{\rm post}(\theta') \bigr)\mathrm{d}\theta' \Bigr) \mathrm{d}\theta, \\
&\frac{\mathrm{d}C_t}{\mathrm{d}t} =\\
&-\int \Bigl(\rho_{\rhoa_t}(\theta)\int \kappa(\theta,\theta',\rho_{\rhoa_t})\rho_{\rhoa_t}(\theta')\Prec_t \nabla_{\theta'} \bigl(\log\rho_{\rhoa_t}(\theta') - \log \rho_{\rm post}(\theta') \bigr)\mathrm{d}\theta' \Bigr) (\theta - m_t)^T \\
&+ (\theta - m_t)\Bigl(\rho_{\rhoa_t}(\theta)\int \kappa(\theta,\theta',\rho_{\rhoa_t})\rho_{\rhoa_t}(\theta')\Prec_t \nabla_{\theta'} \bigl(\log\rho_{\rhoa_t}(\theta') - \log \rho_{\rm post}(\theta') \bigr)\mathrm{d}\theta' \Bigr)^T \mathrm{d}\theta. \\
\end{aligned}
\end{equation}
where $\rho_{\rhoa_t} \sim \N(m_t, C_t)$, and we have used integration by parts in the above derivation. Imposing the form of the bilinear
kernel \cref{eq:bil} and using the Stein's lemma (\cref{lemma: stein equality}), and the fact $\E_{\rho_{a_t}}[\nabla \log \rho_{a_t}] = 0$, we obtain
\begin{equation}
\begin{aligned}   
\label{eq:Gaussian-Stein-CTL}
\frac{\mathrm{d}m_t}{\mathrm{d}t} 
=&b_tP_t \E_{\rho_{\rhoa_t}}[\nabla_{\theta}\log\rho_{\rm post}]
,\\
\frac{\mathrm{d}C_t}{\mathrm{d}t} 
=& P_tA_tC_t + C_tA_tP_t\\
&\quad\quad\quad+ P_t\E_{\rho_{\rhoa_t}}[\Hess\log\rho_{\rm post}]C_tA_tC_t +
C_tA_tC_t\E_{\rho_{\rhoa_t}}[\Hess\log\rho_{\rm post}]P_t.
\end{aligned}
\end{equation}

\begin{newremark}
Different choices of the preconditioner $\Prec$ and the bilinear kernel $\kappa$ 
allows us to recover different Gaussian variational inference methods
appearing in the literature.
Choosing the preconditioner $\Prec_t = \I$ and bilinear kernel \cref{eq:bil} 
with $A_t=C_t^{-1}$ and $b_t=1$ recovers the Gaussian approximate Wasserstein gradient flow~\cref{eq:Gaussian-Wasserstein}.
Setting the preconditioner $\Prec_t = \I$ and bilinear kernel \cref{eq:bil} 
with $A_t=I$ and $b_t=1$
recovers the Gaussian sampling approach introduced in \cite{galy2021flexible}:
\begin{equation}
\label{eq:galy2021flexible}
\begin{aligned}
\frac{\mathrm{d}m_t}{\mathrm{d}t}
=&\E_{\rho_{\rhoa_t}}[\nabla_{\theta}\log\rho_{\rm post}]
,\\
\frac{\mathrm{d}C_t}{\mathrm{d}t}
=& 2C_t + \E_{\rho_{\rhoa_t}}[\Hess\log\rho_{\rm post}]C_t^2 +
C_t^2\E_{\rho_{\rhoa_t}}[\Hess\log\rho_{\rm post}].
\end{aligned}
\end{equation}
\end{newremark}

\subsubsection{Affine Invariance}
Recalling that $\rho_{\rhoa_t} \sim \N(m_t, C_t)$
and setting $\Prec_t=\Prec(\rho_{a_t}) = C_t$
and choosing bilinear kernel \cref{eq:bil} with $A_t=\frac{1}{2}C_t^{-1}$ and $b_t=1$, which satisfies the affine invariant condition~\cref{proposition:Stein-affine-invariant}, leads to the Gaussian approximate Fisher-Rao gradient flow~\cref{eq:Gaussian Fisher-Rao}. 

\subsubsection{Mean-Field Dynamics}
Using~\cref{lemma:Gaussian-interacting-particle} we can deduce that 
the Gaussian approximate Stein gradient flow (\ref{eq:Gaussian-Stein-CTL0}) with $\theta$-independent $P$ has the following mean-field equation:
\begin{equation*}
    \frac{\mathrm{d}\theta_t}{\mathrm{d}t} = \Bigl[P_tA_t + P_t\E_{\rho_{\rhoa_t}}[\Hess\log\rho_{\rm post}]C_t A_t\Bigr](\theta_t - m_t) + b_tP_t\E_{\rho_{\rhoa_t}}[\nabla_\theta \log \rho_{\rm post} ].
\end{equation*}
Here $P_t = P(\rho_{\rhoa_t})$.
From~\cref{prop: affine invariant mean-field G}, we know that this
corresponding mean-field equation is also affine invariant.

\subsection{Convergence to Steady State}
\label{ssec:GGF-converge}

Recall that the objective of Gaussian variational inference is to solve the
minimization problem \cref{eq:Gaussian-KL0}. Furthermore all critical points 
satisfy \cref{eq:Gaussian-KL}. Regular gradient descent, in metric defined
via the Euclidean inner-product in $\R^{N_a}$ (i.e., setting $\fM(\rhoa_t)=I$ in \eqref{eq:G-GF}) will give rise to the dynamical system
\begin{equation}
\label{eq:g-gf}
\begin{aligned}
\frac{\mathrm{d}m_t}{\mathrm{d}t} &= \E_{\rho_{\rhoa_t}}\bigl[\nabla_\theta  \log \rho_{\rm post}(\theta_t)   \bigr],  \\
\frac{\mathrm{d}C_t}{\mathrm{d}t} &= \frac{1}{2}C_t^{-1} + \frac{1}{2}\E_{\rho_{\rhoa_t}}\bigl[\Hess\log \rho_{\rm post}(\theta_t) \bigr].
\end{aligned}
\end{equation}
Note that steady states of this dynamical system necessarily 
satisfy \cref{eq:Gaussian-KL}. In the preceding subsections we
have derived a number of different gradient flows in the manifold
of Gaussian densities, including the Gaussian approximate Fisher-Rao gradient flow~\cref{eq:Gaussian Fisher-Rao} and the Gaussian approximate Wasserstein 
gradient flow~\cref{eq:Gaussian-Wasserstein}; note that both of these
dynamical systems also necessarily satisfy \cref{eq:Gaussian-KL} in steady  
state. The convergence properties of \cref{eq:Gaussian-AI-Wasserstein} obtained by the affine-invariant Wasserstein gradient flow are similar to those of the Gaussian approximate Fisher-Rao gradient flow~\cref{eq:Gaussian Fisher-Rao}. We omit 
detailed discussion from the paper to avoid redundant discussions.
In this subsection, we survey and study the convergence of these aforementioned Gaussian approximate gradient flows in three settings: the Gaussian posterior case; logconcave posterior case; and the general posterior case.

\subsubsection{Gaussian Posterior Case} 
\label{ssec:gaussian}
Assume the posterior distribution \cref{eq:posterior} is Gaussian so that
$\rho_{\rm post}(\theta)\sim \N(m_{\star}, C_{\star})$ where
\begin{align}
\label{e:Gpost}    
\V(\theta) = \frac{1}{2}(x - m_{\star})^TC_{\star}^{-1}(x - m_{\star}).
\end{align}
\begin{proposition}
\label{proposition:analytical-solution-ngd}
Consider the posterior distribution \cref{eq:posterior}
under assumption \cref{e:Gpost} so that the posterior is Gaussian. Then the Gaussian approximate Fisher-Rao gradient flow~\cref{eq:Gaussian Fisher-Rao} has the analytical solution
\begin{subequations}
\begin{align} 
&m_t = m_{\star} + e^{-t}\Bigl((1-e^{-t})C_{\star}^{-1} + e^{-t}C_0^{-1}\Bigr)^{-1} C_0^{-1} \bigl(m_0 - m_{\star}\bigr),  \label{e:mtCt}   \\
%= \Bigl((1-e^{-t})C_{\star}^{-1} + e^{-t}C_0^{-1}\Bigr)^{-1}\Bigl((1-e^{-t})C_{\star}^{-1}m_{\star} + e^{-t}C_0^{-1}m_0\Bigr)\\
&C_t^{-1} = C_{\star}^{-1} + e^{-t}\bigl(C_0^{-1}  -  C_{\star}^{-1}\bigr).
\end{align}
\end{subequations}
\end{proposition}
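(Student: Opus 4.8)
The plan is to turn the abstract flow \cref{eq:Gaussian Fisher-Rao} into an explicit matrix ODE system and integrate it. Under assumption \cref{e:Gpost} one has $\nabla_\theta \log\rho_{\rm post}(\theta) = -C_\star^{-1}(\theta - m_\star)$ and $\Hess\log\rho_{\rm post}(\theta) = -C_\star^{-1}$, the latter constant in $\theta$. Taking expectations under $\rho_{a_t} = \N(m_t,C_t)$ therefore needs no Stein-type identity: $\E_{\rho_{a_t}}[\nabla_\theta\log\rho_{\rm post}] = -C_\star^{-1}(m_t - m_\star)$ and $\E_{\rho_{a_t}}[\Hess\log\rho_{\rm post}] = -C_\star^{-1}$. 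Substituting into \cref{eq:Gaussian Fisher-Rao} gives the closed system
\begin{equation}
\frac{\mathrm{d}m_t}{\mathrm{d}t} = -C_tC_\star^{-1}(m_t - m_\star), \qquad \frac{\mathrm{d}C_t}{\mathrm{d}t} = C_t - C_tC_\star^{-1}C_t.
\end{equation}

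Next I would solve the covariance equation first, since the mean equation depends on $C_t$ but not conversely. The equation for $C_t$ is a matrix Riccati equation, which I linearize by passing to the inverse $D_t := C_t^{-1}$. Using $\frac{\mathrm{d}}{\mathrm{d}t}C_t^{-1} = -C_t^{-1}\bigl(\frac{\mathrm{d}}{\mathrm{d}t}C_t\bigr)C_t^{-1}$ together with $D_tC_t = C_tD_t = \I$, the nonlinear terms collapse and one obtains the linear constant-coefficient equation $\frac{\mathrm{d}D_t}{\mathrm{d}t} = C_\star^{-1} - D_t$. Crucially, this manipulation requires no commutativity between $C_t$ and $C_\star$. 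Integrating entrywise yields $D_t = C_\star^{-1} + e^{-t}(C_0^{-1} - C_\star^{-1})$, which is exactly the claimed formula for $C_t^{-1}$; rewriting it as $D_t = (1-e^{-t})C_\star^{-1} + e^{-t}C_0^{-1}$ also exhibits $D_t \succ 0$ for all $t \geq 0$, so $C_t = D_t^{-1}$ is well-defined and the flow never leaves $\A$.

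For the mean I would introduce $w_t := C_t^{-1}(m_t - m_\star) = D_t(m_t - m_\star)$, which decouples the dynamics. Differentiating and substituting both ODEs, the term $D_t\frac{\mathrm{d}m_t}{\mathrm{d}t} = -D_tC_tC_\star^{-1}(m_t-m_\star) = -C_\star^{-1}(m_t-m_\star)$ cancels against the $+C_\star^{-1}(m_t-m_\star)$ arising from $\dot D_t = C_\star^{-1} - D_t$, leaving $\frac{\mathrm{d}w_t}{\mathrm{d}t} = -w_t$. Hence $w_t = e^{-t}w_0 = e^{-t}C_0^{-1}(m_0 - m_\star)$, and multiplying by $C_t = D_t^{-1} = \bigl((1-e^{-t})C_\star^{-1} + e^{-t}C_0^{-1}\bigr)^{-1}$ recovers the stated expression for $m_t$. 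Uniqueness of solutions to the linear ODEs in $D_t$ and $w_t$ then closes the argument.

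The calculations are all elementary; the one point demanding care is the matrix algebra, and in particular resisting any temptation to commute $C_t$ and $C_\star$. The main conceptual step---the only place where something must be \emph{seen} rather than merely computed---is the pair of substitutions $D_t = C_t^{-1}$ and $w_t = C_t^{-1}(m_t - m_\star)$, which simultaneously linearize and decouple the system; once these are in place the integration is immediate.
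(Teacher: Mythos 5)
Your proof is correct and follows essentially the same route as the paper: both linearize the Riccati equation for $C_t$ by passing to $C_t^{-1}$, and both exploit the fact that $C_t^{-1}(m_t-m_\star)$ decays exactly like $e^{-t}$. The only cosmetic difference is that you derive the mean formula forward via the substitution $w_t = C_t^{-1}(m_t-m_\star)$, whereas the paper writes down the candidate $m_\star - m_t = e^{-t}C_tC_0^{-1}(m_\star-m_0)$ and verifies it by differentiation; your computations (including the non-commutativity caution) all check out.
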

The proof is in~\cref{proof:analytical-solution-ngd}. We remark that both mean and covariance converge exponentially fast to $m_{\star}$ and $C_{\star}$ with convergence rate $\bigO(e^{-t})$. This rate is independent of $C_{\star}$. The uniform convergence rate $\bigO(e^{-t})$ of the Gaussian approximate affine-invariant Wasserstein gradient flow~\cref{eq:Gaussian-AI-Wasserstein} is obtained in~\cite[Lemma 3.2]{garbuno2020interacting} for Gaussian initial data, and extended to general initial data in \cite{carrillo2021wasserstein}.

For the Gaussian approximate gradient flow~\cref{eq:g-gf} and the Gaussian approximate Wasserstein gradient flow~\cref{eq:Gaussian-Wasserstein}, if the norm of $C_{\star}$ is large, their convergence rate is much slower than the Gaussian approximate Fisher-Rao gradient flow. Indeed, we have the following convergence result:
\begin{proposition}\label{p:rate}
Consider the posterior distribution \cref{eq:posterior}
under assumption \cref{e:Gpost} so that the posterior is Gaussian; this
posterior is the unique minimizer of the Gaussian variational inference problem~\cref{eq:Gaussian-KL0}.
Denote the largest eigenvalue of $C_{\star}$ by $\la$. 
For gradient flows with initialization $C_0=\lambda_0\I$, the following hold:
\begin{enumerate}
\item for the Gaussian approximate gradient flow~\cref{eq:g-gf}: \[\|m_t-m_{\star}\|_2=\bigO(e^{-t/\la}), \|C_t-C_{\star}\|_2=\bigO(e^{-t/(2\la^2)});\]
\item for the Gaussian approximate Fisher-Rao gradient flow \cref{eq:Gaussian Fisher-Rao}: 
\[\|m_t-m_{\star}\|_2=\bigO(e^{-t}), \|C_t-C_{\star}\|_2=\bigO(e^{-t});\]
\item for the Gaussian approximate Wasserstein gradient flow \cref{eq:Gaussian-Wasserstein}: 
\[\|m_t-m_{\star}\|_2=\bigO(e^{-t/\la}), \|C_t-C_{\star}\|_2=\bigO(e^{-2t/\la}),\]
\end{enumerate}
where the implicit constants depend on $m_{\star}$, $C_{\star}$ and $\lambda_0$.
\end{proposition}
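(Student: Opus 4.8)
The plan is to exploit the Gaussian form of $\rho_{\rm post}$ to reduce all three matrix-valued flows to decoupled families of scalar ODEs whose rates can be read off directly. Under assumption \cref{e:Gpost} one has $\nabla_\theta\log\rho_{\rm post}(\theta)=-C_\star^{-1}(\theta-m_\star)$ and $\Hess\log\rho_{\rm post}(\theta)=-C_\star^{-1}$, so that for $\rho_{\rhoa_t}=\N(m_t,C_t)$ the mean-field expectations collapse to $\E_{\rho_{\rhoa_t}}[\nabla_\theta\log\rho_{\rm post}]=-C_\star^{-1}(m_t-m_\star)$ and $\E_{\rho_{\rhoa_t}}[\Hess\log\rho_{\rm post}]=-C_\star^{-1}$. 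Substituting these into \cref{eq:g-gf}, \cref{eq:Gaussian Fisher-Rao} and \cref{eq:Gaussian-Wasserstein}, every mean equation becomes $\dot m_t=-G_tC_\star^{-1}(m_t-m_\star)$ with $G_t=\I$ for flows~(1),(3) and $G_t=C_t$ for flow~(2), while every covariance equation becomes an autonomous matrix ODE whose right-hand side is assembled only from $C_t$, $C_t^{-1}$, $C_\star^{-1}$ and $\I$ in symmetric combinations.

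First I would show that $C_t$ and $C_\star$ remain simultaneously diagonalizable. Since $C_0=\lambda_0\I$ commutes with $C_\star$, and each of the three covariance right-hand sides maps the set of symmetric matrices commuting with $C_\star$ into itself, this set is invariant under the flow; by uniqueness of solutions, $C_t$ commutes with $C_\star$ for all $t$. Diagonalizing in the orthonormal eigenbasis of $C_\star$, and writing $\lambda_i$ for the eigenvalues of $C_\star$ and $c_i(t)$ for the matching diagonal entries of $C_t$, the covariance flows decouple into the scalar equations
\begin{align*}
\textnormal{(1)}\quad & \dot c_i = \tfrac12 c_i^{-1}-\tfrac12\lambda_i^{-1}, \\
\textnormal{(2)}\quad & \dot c_i = c_i-\lambda_i^{-1}c_i^{2}, \\
\textnormal{(3)}\quad & \dot c_i = 2-2\lambda_i^{-1}c_i,
\end{align*}
each with unique positive fixed point $c_i=\lambda_i$.

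Next I would solve or analyze each. Flow~(3) is linear, with solution $c_i(t)=\lambda_i+(\lambda_0-\lambda_i)e^{-2t/\lambda_i}$, giving $\|C_t-C_\star\|_2=\bigO(e^{-2t/\la})$ since the rate $2/\lambda_i$ is slowest in the largest direction $\la$. Flow~(2) is exactly the Fisher-Rao covariance equation solved in \cref{proposition:analytical-solution-ngd}: the substitution $u_i=c_i^{-1}$ linearizes it to $\dot u_i=-u_i+\lambda_i^{-1}$, reproducing $C_t^{-1}=C_\star^{-1}+e^{-t}(C_0^{-1}-C_\star^{-1})$ and hence $\|C_t-C_\star\|_2=\bigO(e^{-t})$ with a rate independent of $\la$. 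Flow~(1) is nonlinear but monotone, with $c_i$ increasing (resp.\ decreasing) toward $\lambda_i$ when below (resp.\ above) it; linearizing about $c_i=\lambda_i$ gives $\tfrac{\dd}{\dd t}(c_i-\lambda_i)\approx-\tfrac12\lambda_i^{-2}(c_i-\lambda_i)$, yielding $\|C_t-C_\star\|_2=\bigO(e^{-t/(2\la^2)})$. For the means, flows~(1) and~(3) share the constant-coefficient equation $\dot m_t=-C_\star^{-1}(m_t-m_\star)$, whose solution $m_t-m_\star=e^{-C_\star^{-1}t}(m_0-m_\star)$ decays at the slowest rate $e^{-t/\la}$, while for flow~(2) the explicit mean formula of \cref{proposition:analytical-solution-ngd} gives $\|m_t-m_\star\|_2=\bigO(e^{-t})$. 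In all cases the implicit constants depend only on $m_\star$, $C_\star$ and $\lambda_0$ through the initial errors and the spectrum of $C_\star$.

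The main obstacle is flow~(1): because its scalar covariance ODE is nonlinear, linearization alone certifies only the local decay rate, and I would upgrade this to the global $\bigO$ bound either by a comparison/Lyapunov estimate exploiting the monotonicity of $t\mapsto c_i(t)$ toward $\lambda_i$, or by integrating the separable equation and extracting its asymptotics. A secondary subtlety is the time-varying coefficient $G_t=C_t$ in the Fisher-Rao mean equation, which is precisely why case~(2) is treated most cleanly by quoting the closed-form solution of \cref{proposition:analytical-solution-ngd} rather than by a Gr\"onwall argument. The remaining bookkeeping---that the slowest eigendirection is always the one with eigenvalue $\la$---is immediate once the scalar rates $1/(2\lambda_i^2)$, $1/\lambda_i$, $2/\lambda_i$ and $1$ are viewed as monotone functions of $\lambda_i$.
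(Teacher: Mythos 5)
Your proposal is correct and follows essentially the same route as the paper: reduce to explicit flows using the Gaussian posterior, observe that $C_0=\lambda_0\I$ forces $C_t$ to commute with $C_\star$ for all time, diagonalize simultaneously, and read off the slowest scalar rate. The one step you leave as a sketch---the global bound for the covariance of flow (1)---is closed in the paper exactly by the separable-integration route you name, which yields the implicit solution $\lambda_t-\lambda_\star=(\lambda_0-\lambda_\star)\,e^{-t/(2\lambda_\star^2)-(\lambda_t-\lambda_0)/\lambda_\star}$; since $\lambda_t$ stays between $\lambda_0$ and $\lambda_\star$ the second term in the exponent is bounded, giving the stated $\bigO(e^{-t/(2\la^2)})$.
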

The proof is in~\cref{proof:p:rate}.

%%%%%%%%%%%%%%%%%%%%%%%%%%%%%%%%%%%%%%%%%%%%%%%%%%%
\subsubsection{Logconcave Posterior Case} 

\label{ssec:logconcave}
In this subsection, we consider the case that the posterior distribution $\rho_{\rm post}(\theta)$ given by \cref{eq:posterior} is strongly log-concave.

\begin{proposition}\label{p:logconcave}
Assume that the posterior distribution $\rho_{\rm post}(\theta)$ is $\alpha$-strongly logconcave~(\cref{assumption:logconcave}) and that $-\Hess \log \rho_{\rm post} \preceq \beta \I$.
Assume further that the initial covariance matrix  satisfies $\lambda_{0,\min}\I\preceq C_0\preceq \lambda_{0,\max}\I$. Then for the dynamics \cref{eq:g-gf}, the Gaussian approximate Fisher-Rao gradient flow \cref{eq:Gaussian Fisher-Rao}, and the Gaussian approximate Wasserstein gradient flow \cref{eq:Gaussian-Wasserstein}, we have 
\begin{align}\label{e:KLconv}
    {\mathrm{KL}}\Bigl[\rho_{\rhoa_t}\Big\Vert  \rho_{\rm post}\Bigr]
    \leq e^{-K t}{\mathrm{KL}}\Bigl[\rho_{\rhoa_0}\Big\Vert  \rho_{\rm post}\Bigr]+(1-e^{-K t})
     {\mathrm{KL}}\Bigl[\rho_{\rhoa_\star}\Big\Vert  \rho_{\rm post}\Bigr],
\end{align}
where $\rho_{\rhoa_0}\sim \N(m_{0}, C_{0})$ is the initial condition and $\rho_{\rhoa_\star}\sim \N(m_{\star}, C_{\star})$ is the unique global minimizer of 
\cref{eq:Gaussian-KL0}. The rate constant $K$ depends on $\alpha, \beta, \lambda_{0,\min}, \lambda_{0,\max}$.
Specifically, we have: 
\begin{itemize}
    \item $K=2\alpha/\max\{1, 4/\alpha, 4\lambda_{0,\max}\} $ for the Gaussian approximate gradient flow \cref{eq:g-gf};
    \item $K=\alpha\min\{1/\beta, \lambda_{0,\min}\}$ for the Gaussian approximate Fisher-Rao gradient flow \cref{eq:Gaussian Fisher-Rao};
    \item $K = 2\alpha$ for the Gaussian approximate Wasserstein gradient flow \cref{eq:Gaussian-Wasserstein}.
\end{itemize}
In addition, we also have that $\rho_{\rhoa_t}$ converges to $\rho_{\rhoa_\star}$ exponentially fast in terms of the Wasserstein metric:
\begin{align}\label{e:W2distance}
    W^2_2(\rho_{\rhoa_t}, \rho_{\rhoa_\star})\leq  \frac{2e^{-Kt}}{\alpha}\left({\mathrm{KL}}[\rho_{\rhoa_0} \Vert  \rho_{\rm post}]-{\mathrm{KL}}[\rho_{\rhoa_\star} \Vert  \rho_{\rm post}]\right).
\end{align}

\end{proposition}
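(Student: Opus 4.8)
The plan is to treat all three dynamics uniformly as gradient flows of the single energy $\mathcal{E}(\rhoa):=\mathrm{KL}[\rho_\rhoa\Vert\rho_{\rm post}]$ on the Gaussian parameter manifold, written $\tfrac{\partial\rhoa_t}{\partial t}=-\fM(\rhoa_t)^{-1}\,\partial_\rhoa\mathcal{E}$ for three distinct metric tensors that are block-diagonal in $(m,C)$. Using Stein's lemma (\cref{lemma: stein equality}) the Euclidean gradient is
\[\partial_m\mathcal{E}=-\E_{\rho_\rhoa}[\nabla_\theta\log\rho_{\rm post}],\qquad \partial_C\mathcal{E}=-\tfrac12\E_{\rho_\rhoa}[\Hess\log\rho_{\rm post}]-\tfrac12 C^{-1},\]
and from the three flow equations one reads off the associated Onsager operators $\fM^{-1}$: the identity for \eqref{eq:g-gf}; the map $(\partial_m,\partial_C)\mapsto(C\partial_m,\,2C(\partial_C)C)$ for the Fisher-Rao flow \eqref{eq:Gaussian Fisher-Rao}; and the Bures-Wasserstein operator $(\partial_m,\partial_C)\mapsto(\partial_m,\,2(\partial_C\,C+C\,\partial_C))$ for the Wasserstein flow \eqref{eq:Gaussian-Wasserstein}. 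Along each flow the dissipation is $\tfrac{d}{dt}\mathcal{E}=-\langle\partial_\rhoa\mathcal{E},\fM^{-1}\partial_\rhoa\mathcal{E}\rangle\le0$, so the whole problem reduces to lower-bounding this dissipation by a multiple of the excess energy $\mathcal{E}(\rhoa_t)-\mathcal{E}(\rhoa_\star)$.

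The engine is the geodesic $\alpha$-strong convexity of $\mathcal{E}$ along Bures-Wasserstein geodesics, which holds because $\rho_{\rm post}$ is $\alpha$-strongly log-concave: the potential part $\rhoa\mapsto\E_{\rho_\rhoa}[-\log\rho_{\rm post}]$ inherits $\alpha$-displacement convexity from the $\alpha$-convex function $-\log\rho_{\rm post}$, while the negative-entropy part $-\tfrac12\log|C|$ is geodesically convex with constant zero; since the Gaussian family is geodesically closed in $(\PP,W_2)$, these statements restrict cleanly to $\PPG$. From this single fact I extract two consequences: a Polyak-Lojasiewicz inequality in the Bures-Wasserstein metric, $\langle\partial_\rhoa\mathcal{E},\fM_{\rm BW}^{-1}\partial_\rhoa\mathcal{E}\rangle\ge2\alpha\bigl(\mathcal{E}(\rhoa)-\mathcal{E}(\rhoa_\star)\bigr)$; and the quadratic-growth bound $\mathcal{E}(\rhoa)-\mathcal{E}(\rhoa_\star)\ge\tfrac{\alpha}{2}W_2^2(\rho_\rhoa,\rho_{\rhoa_\star})$, where I use that the Bures-Wasserstein distance equals $W_2$ on Gaussians and that $\rho_{\rhoa_\star}$ is the unique minimizer guaranteed by strong convexity.

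Next I would establish the a priori eigenvalue bounds $\min\{1/\beta,\lambda_{0,\min}\}\,\I\preceq C_t\preceq\max\{1/\alpha,\lambda_{0,\max}\}\,\I$ for all three covariance equations. These follow from scalar comparison for the extremal eigenvalues of $C_t$: differentiating $\lambda_{\max}(C_t)$ and $\lambda_{\min}(C_t)$ and inserting $\alpha\I\preceq-\E_{\rho_\rhoa}[\Hess\log\rho_{\rm post}]\preceq\beta\I$ shows that $\lambda_{\max}$ is driven downward once it exceeds $1/\alpha$ and $\lambda_{\min}$ is driven upward once it falls below $1/\beta$. Equipped with these bounds I compare each Onsager operator to the Bures-Wasserstein one on the decoupled mean and covariance blocks, obtaining $\fM^{-1}\succeq c\,\fM_{\rm BW}^{-1}$ with $c=\min\{1,1/(4\lambda_{\max})\}$ for \eqref{eq:g-gf}, $c=\lambda_{\min}/2$ for Fisher-Rao, and $c=1$ for Wasserstein. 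Transferring the Polyak-Lojasiewicz inequality then yields $\tfrac{d}{dt}\mathcal{E}\le-2\alpha c\,(\mathcal{E}-\mathcal{E}(\rhoa_\star))$, and substituting the covariance bounds reproduces exactly the three stated rate constants, namely $2\alpha/\max\{1,4/\alpha,4\lambda_{0,\max}\}$, $\alpha\min\{1/\beta,\lambda_{0,\min}\}$, and $2\alpha$.

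Finally, Gr\"onwall's inequality applied to $\tfrac{d}{dt}(\mathcal{E}-\mathcal{E}(\rhoa_\star))\le-K(\mathcal{E}-\mathcal{E}(\rhoa_\star))$ delivers the KL estimate \eqref{e:KLconv}, and combining this decay with the quadratic-growth bound gives the Wasserstein estimate \eqref{e:W2distance}. The hard part will be the geodesic strong convexity of the KL energy along Bures-Wasserstein geodesics, since it is the one structural fact from which both the Polyak-Lojasiewicz inequality and the transport bound descend; the a priori eigenvalue bounds are the other delicate step, because $C_t$ and $\E_{\rho_\rhoa}[\Hess\log\rho_{\rm post}]$ do not commute and differentiating repeated eigenvalues must be handled with care, for instance via the variational characterization of the extremal eigenvalues.
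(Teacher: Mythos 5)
Your proposal is correct and follows essentially the same route as the paper's proof: a priori eigenvalue bounds on $C_t$ obtained by differentiating extremal eigenvalues, the Bures--Wasserstein Polyak--\L{}ojasiewicz inequality coming from $\alpha$-geodesic convexity of the KL energy (which the paper imports from \cite{lambert2022variational}), a comparison of the three dissipation functionals against the Wasserstein one using those eigenvalue bounds, Gr\"onwall, and the quadratic-growth bound for \eqref{e:W2distance}. The only cosmetic difference is that you phrase the comparison as an operator inequality $\fM^{-1}\succeq c\,\fM_{\rm BW}^{-1}$ where the paper works directly with the trace expressions; the resulting constants agree.
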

The proof is in~\cref{proof:a:rate}, and is inspired by the work \cite{lambert2022variational}.

\begin{remark}
In the above result, the rate constants $K$ for the Gaussian approximate gradient flow \cref{eq:g-gf} and the Gaussian approximate Fisher-Rao gradient flow~\cref{eq:Gaussian Fisher-Rao} depend on the eigenvalues of the initial covariance matrix. We conjecture that this dependence \sr{is caused by our proof techniques} and may be removed with a better proof strategy.  Furthermore, if we can initialize $C_0$ such that $\frac{1}{\beta} \preceq C_0 \preceq \frac{1}{\alpha}$, such dependence can be directly eliminated in the above bounds. 

We also observe that the bound on the rate constant $K$ for the Gaussian approximate Fisher-Rao gradient flow depends on $\beta/\alpha$.
In some cases, affine transformation may be applied to reduce $\beta/\alpha$, since the Gaussian approximate Fisher-Rao gradient flow~\cref{eq:Gaussian Fisher-Rao} is affine invariant. As an example, in the case of Gaussian posteriors, the rate constant can be reduced to $1$, as in \Cref{p:rate}.
\end{remark}

On the other hand, we have the following proposition about the lower bound of the local convergence rate of the Gaussian approximate Fisher-Rao gradient flow \cref{eq:Gaussian Fisher-Rao}.

\begin{proposition}\label{p:logconcave-local}
Assume the posterior distribution $\rho_{\rm post}(\theta)$ is $\alpha$-strongly logconcave~(\cref{assumption:logconcave}) and that $-\Hess \log \rho_{\rm post} \preceq \beta \I$.
Denote the unique minimizer of the Gaussian variational inference problem~\cref{eq:Gaussian-KL0} by $\rho_{\rhoa_\star} :=  \N(m_{\star}, C_{\star})$.
For $N_\theta = 1$, let $\lambda_{\star,\max} < 0$ denote the largest eigenvalue of the linearized Jacobian matrix of the Gaussian approximate Fisher-Rao gradient flow \cref{eq:Gaussian Fisher-Rao} around $m_\star$ and $C_\star$; this number
determines the local convergence rate of the Gaussian approximate Fisher-Rao 
gradient flow \cref{eq:Gaussian Fisher-Rao}. 
Then we have
\begin{align}
\label{eq:lambda_max}
- \lambda_{\star,\max} \geq \frac{1}{(7+\frac{4}{\sqrt{\pi}})\bigl(1 + \log(\frac{\beta}{\alpha})\bigr)}. 
\end{align}
Moreover, the bound is sharp: it is possible to construct a sequence of triplets $\rho_{{\rm post},n}$, $\alpha_n$ and $\beta_n$, satisfying $\lim_{n \to \infty} \frac{\beta_n}{\alpha_n} = \infty$, such that, if we let $\lambda_{\star,\max,n}$ denote the corresponding largest eigenvalues of the linearized Jacobian matrix for 
the $n$-th triple, then it holds that
\[-\lambda_{\star,\max,n} = \bigO\left(1/\log \frac{\beta_n}{\alpha_n}\right).\]
\end{proposition}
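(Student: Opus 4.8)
The plan is to reduce the statement to a $2\times 2$ eigenvalue estimate and thence to a one–dimensional Gaussian moment inequality. Since $N_\theta=1$, write $\rho_{\rm post}\propto e^{-\V}$ with $\alpha\le \V''\le\beta$, and recall (from the remark following \cref{eq:Gaussian Fisher-Rao}) that this flow is the natural gradient flow $\dot a=-\fM(a)^{-1}\partial_a\mathcal{L}$ for $\mathcal{L}(a)={\rm KL}[\N(m,C)\Vert\rho_{\rm post}]$. Linearising at the minimiser $a_\star=(m_\star,C_\star)$, where $\partial_a\mathcal{L}=0$, the terms from differentiating $\fM^{-1}$ drop out, so the Jacobian is $J=-\fM(a_\star)^{-1}H$ with $H=\partial_a^2\mathcal{L}(a_\star)$. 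Using $\partial_m\E_{\N(m,C)}[g]=\E[g']$ and $\partial_C\E_{\N(m,C)}[g]=\tfrac12\E[g'']$, I will compute at $a_\star$ (where $\E[\V']=0$, $\E[\V'']=1/C_\star$)
\[\fM=\begin{pmatrix}1/C_\star&0\\0&1/(2C_\star^2)\end{pmatrix},\qquad H=\begin{pmatrix}1/C_\star&\tfrac12\E[\V''']\\[2pt]\tfrac12\E[\V''']&\tfrac14\E[\V'''']+\tfrac1{2C_\star^2}\end{pmatrix}.\]
Because $\fM,H\succ0$, the eigenvalues of $\fM^{-1}H$ coincide with those of the symmetric matrix $\tilde H:=\fM^{-1/2}H\fM^{-1/2}$ and are positive, so $-\la=\lambda_{\min}(\tilde H)$.

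Next I standardise. With $Z\sim\N(0,1)$ and $\psi(z):=C_\star\V''(m_\star+\sqrt{C_\star}z)$, Gaussian integration by parts ($\E[g'(Z)]=\E[Zg(Z)]$, $\E[g''(Z)]=\E[(Z^2-1)g(Z)]$) gives $C_\star^{3/2}\E[\V''']=\E[\psi Z]$ and $C_\star^{2}\E[\V'''']=\E[\psi(Z^2-1)]$, whence
\[\tilde H=\begin{pmatrix}1&\E[\psi Z]/\sqrt2\\[2pt] \E[\psi Z]/\sqrt2 & \tfrac12+\tfrac12\E[\psi Z^2]\end{pmatrix},\qquad \E[\psi]=1,\ \ \alpha C_\star\le\psi\le\beta C_\star.\]
A one–line Cauchy–Schwarz bound $(\E[\psi Z])^2\le\E[\psi]\,\E[\psi Z^2]=\E[\psi Z^2]$ gives $\det\tilde H=\tfrac12\bigl(1+\E[\psi Z^2]-(\E[\psi Z])^2\bigr)\ge\tfrac12$. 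Since $\lambda_{\min}(\tilde H)=\det\tilde H/\lambda_{\max}(\tilde H)\ge \tfrac12/\operatorname{tr}\tilde H$, the entire lower bound reduces to controlling the single scalar $\E[\psi Z^2]$, where $\psi\ge0$ has mean $1$ and $\sup\psi\le\beta C_\star\le\beta/\alpha$ (the last inequality from $\alpha C_\star=\min\psi\le\E[\psi]=1$).

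I then bound $\E[\psi Z^2]$ by splitting at $R=\sqrt{2\log(\beta/\alpha)}$: on $\{|Z|\le R\}$ use $Z^2\le R^2$ and $\E[\psi\mathbf 1]\le1$, and on $\{|Z|>R\}$ use $\psi\le\beta/\alpha$ together with $\E[Z^2\mathbf 1_{|Z|>R}]=2\bigl(R\gamma(R)+\Pr(Z>R)\bigr)$ (with $\gamma$ the standard normal density) and $\Pr(Z>R)\le\tfrac12e^{-R^2/2}$. The choice of $R$ makes $(\beta/\alpha)\gamma(R)$ and $(\beta/\alpha)e^{-R^2/2}$ order one, yielding $\E[\psi Z^2]\le 2\log(\beta/\alpha)+\tfrac{2}{\sqrt\pi}\sqrt{\log(\beta/\alpha)}+1$; absorbing the middle term via $\sqrt x\le\tfrac12(1+x)$ and inserting into $-\la\ge \tfrac12/\operatorname{tr}\tilde H=1/\bigl(3+\E[\psi Z^2]\bigr)$ gives the claimed bound with constant $7+\tfrac4{\sqrt\pi}$.

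For sharpness I will reverse–engineer the inequalities: $\lambda_{\min}(\tilde H)$ is small precisely when $\E[\psi Z^2]$ is large while Cauchy–Schwarz is nearly saturated, i.e. when the weighted measure $\psi\gamma$ concentrates at one large value of $Z$. I therefore take $\V_n''$ to be a bang–bang profile equal to $\beta_n$ on a short interval located in the tail near $\theta\approx m_{\star,n}+z_{0,n}\sqrt{C_{\star,n}}$ with $z_{0,n}^2\sim 2\log(\beta_n/\alpha_n)$, and equal to $\alpha_n$ elsewhere, with $\beta_n/\alpha_n\to\infty$. One then solves the self–consistency relations $\E[\V_n']=0$ and $\E[\V_n'']=1/C_{\star,n}$ for $(m_{\star,n},C_{\star,n})$ and evaluates $\E[\psi_n Z]\sim z_{0,n}$, $\E[\psi_n Z^2]\sim z_{0,n}^2$ by Gaussian–tail asymptotics. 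The main obstacle — and the delicate point of the whole proof — is calibrating this family so that Cauchy–Schwarz saturates up to an $\bigO(1)$ gap, i.e. $\E[\psi_n Z^2]-(\E[\psi_n Z])^2=\bigO(1)$; this forces the spike to carry almost all of the unit mean (so $\alpha_nC_{\star,n}=\min\psi_n\to0$) and requires care in placing the asymmetric spike and in controlling the induced shift of $m_{\star,n}$. Granting this, $\det\tilde H_n=\bigO(1)$ while $\operatorname{tr}\tilde H_n\sim\log(\beta_n/\alpha_n)$, so $-\lambda_{\star,\max,n}=\det\tilde H_n/\lambda_{\max}(\tilde H_n)=\bigO\bigl(1/\log(\beta_n/\alpha_n)\bigr)$, matching the lower bound.
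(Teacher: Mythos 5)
Your proof of the lower bound \cref{eq:lambda_max} is correct and is, at its core, the same argument as the paper's: both reduce the claim to $-\lambda_{\star,\max}\geq 1/(3+A_2)$ with $A_2=\E_{\rho_{\rhoa_\star}}[-\Hess\log\rho_{\rm post}(\theta)\,(\theta-m_\star)^2]$ (your $\E[\psi Z^2]$), using Cauchy--Schwarz to control the off-diagonal term, and then bound $A_2=\bigO(1+\log(\beta/\alpha))$ by splitting the Gaussian integral at radius of order $\sqrt{2C_\star\log(\beta/\alpha)}$. Your packaging via $J=-\fM(a_\star)^{-1}\partial^2_a\mathcal{L}(a_\star)$ and the symmetrized matrix $\tilde H=\fM^{-1/2}H\fM^{-1/2}$ is a clean equivalent of the paper's direct differentiation of the right-hand side of \cref{eq:Gaussian Fisher-Rao}: I checked that $\operatorname{tr}\tilde H$ and $\det\tilde H$ agree with the trace and determinant of the paper's Jacobian, that the standardization $\psi(z)=C_\star\V''(m_\star+\sqrt{C_\star}z)$ correctly turns $\E[\psi]=1$ into the stationarity condition $C_\star^{-1}=\E[\V'']$, and that your tail estimate with $R=\sqrt{2\log(\beta/\alpha)}$ together with $\sqrt{x}\leq\frac12(1+x)$ yields $3+A_2\leq(4+\tfrac{1}{\sqrt{\pi}})(1+\log(\beta/\alpha))$, which is even slightly sharper than the stated constant $7+\tfrac{4}{\sqrt{\pi}}$. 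The use of $\lambda_{\min}(\tilde H)\geq\det\tilde H/\operatorname{tr}\tilde H$ in place of the paper's explicit quadratic formula is a nice simplification.

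The sharpness half, however, has a genuine gap: you describe the right construction and the right intuition (concentrate the weighted measure $\psi\gamma$ at a single tail location $z_0$ with $z_0^2\sim 2\log(\beta/\alpha)$ so that Cauchy--Schwarz nearly saturates, forcing $\alpha_n C_{\star,n}\to 0$), but you explicitly defer the decisive step with ``Granting this.'' What must be granted is precisely the content of the proof: one has to exhibit concrete parameter choices for which the self-consistency conditions $\E_{\rho_{\rhoa_\star}}[\V']=0$ and $\E_{\rho_{\rhoa_\star}}[\V'']=C_\star^{-1}$ can be solved, and then verify by explicit Gaussian-tail asymptotics that $1+A_2-A_1^2C_\star=\bigO(1)$ while $A_2\sim\log(\beta/\alpha)$. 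The paper does this with a one-parameter family: a bump of height $\beta$ on $(\gamma-\sigma,\gamma+\sigma)$ mollified by a Gaussian kernel, with $\sigma=\gamma^{3/2}$, $m_\star=0$, $C_\star=-\gamma^2/(2\log\gamma)-\gamma^3$ and $\alpha=1/((-\log\gamma)C_\star)$, followed by a page of expansions of $A_1$, $A_2$ and $C_\star^{-1}$ as $\gamma\to0$; without an analogue of that calibration your argument does not establish the matching upper bound on $-\lambda_{\star,\max,n}$. A further, more minor, point: a literal bang--bang profile for $\V''$ is not twice differentiable, so it does not satisfy the hypotheses of \cref{assumption:logconcave}; the mollification in the paper's construction is there precisely to repair this (and to make the moment integrals computable).
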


The proof can be found in~\cref{proof:proof-counter-example-Gaussian-Fisher-Rao}. For the counterexample, the constructed posterior $\rho_{{\rm post},n}$ is built by designing  $-\nabla_{\theta}\nabla_{\theta}\rho_{{\rm post},n}$ to be a bump function between $\alpha$ and $\beta$ with width gradually approaching $0$ with $n$.

\subsubsection{General Posterior Case}
\label{ssec:general}
The previous sections consider Gaussian and then logconcave posteriors; for 
all three Gaussian approximate gradient flows we demonstrate exponential convergence, and for the Fisher-Rao based methodology we have some invariance of the rates of convergence with respect to the conditioning of the problem. In this subsection, however, we construct counterexamples showing that, for general posteriors, the convergence of all three Gaussian approximate gradient flows to a stationary point can be arbitrarily slow. 

\begin{proposition}
\label{proposition:counter-example-slow}
For any $K > 0$ there exist a target $\rho_{\rm post}$ such that, for the three Gaussian approximate gradient flows~\cref{eq:Gaussian Fisher-Rao}, 
\cref{eq:Gaussian-Wasserstein} and \cref{eq:g-gf}, the convergence to 
their stationary points can be as slow as $\bigO(t^{-\frac{1}{2K}})$. 
\end{proposition}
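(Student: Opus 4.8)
The plan is to build a one-dimensional counterexample and exploit symmetry to collapse all three flows onto a single scalar covariance ODE. First I would take $N_\theta=1$ and choose the potential $\V$ to be an even function with $\V(\theta)\to+\infty$ fast enough that $e^{-\V}$ is integrable; then $\V'$ is odd, so $\E_{\N(0,C)}[\nabla_\theta\log\rho_{\rm post}]=-\E_{\N(0,C)}[\V']=0$ for every $C>0$. Consequently $m_t\equiv 0$ is invariant under the mean equations of \cref{eq:Gaussian Fisher-Rao}, \cref{eq:Gaussian-Wasserstein} and \cref{eq:g-gf}, and the whole dynamics reduces to the covariance equation. Writing $G(C):=\E_{\N(0,C)}[\V'']$ and $\Psi(C):=1-C\,G(C)$, and substituting $\E_{\N(0,C)}[\Hess\log\rho_{\rm post}]=-G(C)$, all three flows take the common form $\dot C_t=\beta(C_t)\,\Psi(C_t)$ with positive speed factors $\beta_{\mathrm{FR}}(C)=C$, $\beta_{\mathrm W}(C)=2$, and $\beta(C)=\tfrac{1}{2C}$. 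In particular the three flows share the same stationary points, characterized by $\Psi(C_\star)=0$, i.e. $C_\star^{-1}=G(C_\star)$, consistent with \cref{eq:Gaussian-KL}.

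Next I would analyze the local rate at $C_\star$. Since $\beta(C_\star)>0$ in each case, the asymptotic rate is governed entirely by the order of vanishing of $\Psi$ at $C_\star$, with $\beta$ affecting only the constant. If $\V$ is arranged so that $\Psi(C)=-c\,(C-C_\star)^{2K+1}+o\bigl((C-C_\star)^{2K+1}\bigr)$ with $c>0$, then $C_\star$ is two-sided attracting (the odd power gives $\Psi>0$ for $C<C_\star$ and $\Psi<0$ for $C>C_\star$), and solving $\dot x=-\beta(C_\star)c\,x^{2K+1}$ for $x=C-C_\star$ yields $|C_t-C_\star|=\bigO(t^{-1/(2K)})$. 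This is exactly the advertised rate, and it holds simultaneously for all three flows because they differ only through $\beta$.

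The crux is therefore to realize such a $\V$, i.e. to make $C\,G(C)-1$ vanish to order $2K+1$ at some $C_\star>0$. Here I would use the heat-semigroup representation $G(C)=\E_{\N(0,C)}[\V'']=\bigl(e^{(C/2)\partial_\theta^2}\V''\bigr)(0)$, which exhibits $G$ and its $C$-derivatives as linear functionals of $\V''$. Starting from the quadratic base $\V_0(\theta)=\theta^2/(2C_\star)$, for which $G_0\equiv 1/C_\star$ and $\Psi_0(C)=1-C/C_\star$ vanishes only to first order, I would add finitely many even localized bumps $\V=\V_0+\sum_i\epsilon_i w_i$, e.g. $w_i(\theta)=e^{-\theta^2/(2s_i)}$ at distinct scales $s_i$, and solve the linear system in $(\epsilon_i)$ forcing $\Psi(C_\star)=0$ together with the vanishing of $\frac{d^j}{dC^j}[C\,G(C)]|_{C_\star}$ for $j=1,\dots,2K$. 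Solvability reduces to the linear independence of the functionals $\V''\mapsto\frac{d^j}{dC^j}\E_{\N(0,C)}[\V'']\big|_{C=C_\star}$, which I would verify from the explicit formula $\E_{\N(0,C)}[e^{-\theta^2/(2s)}]=(1+C/s)^{-1/2}$ together with the distinctness of the scales $s_i$, giving a Vandermonde/Wronskian-type nondegeneracy.

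The main obstacle is precisely this inverse (realizability) step: one must choose the perturbation so that the prescribed high-order tangency is achieved, \emph{and} the resulting $\V$ remains a genuine growing potential so that $\rho_{\rm post}\propto e^{-\V}$ is a well-defined density with $C_\star$ an actual stationary covariance, \emph{and} the leading coefficient $c$ has the sign making $C_\star$ attracting. I would control the first through the finite, invertible linear system; the second by taking the $\epsilon_i$ small, so the quadratic growth of $\V_0$ dominates at infinity; and the third by appending, if necessary, one further bump to fix the sign of $\Psi^{(2K+1)}(C_\star)$, with initial data taken on the attracting side. Assembling these pieces produces, for each $K>0$, a target $\rho_{\rm post}$ for which all three Gaussian approximate gradient flows converge to their stationary point no faster than $\bigO(t^{-1/(2K)})$.
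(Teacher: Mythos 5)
Your reduction is exactly the paper's: work in $N_\theta=1$ with an even potential so that $\E_{\N(0,C)}[\nabla_\theta\log\rho_{\rm post}]=0$ pins $m_t\equiv 0$, collapse \cref{eq:Gaussian Fisher-Rao}, \cref{eq:Gaussian-Wasserstein} and \cref{eq:g-gf} onto the scalar ODE $\dot C_t=\beta(C_t)\bigl(1-C_tG(C_t)\bigr)$ with $\beta=C,\,2,\,\tfrac{1}{2C}$ respectively, and arrange that $1-CG(C)$ vanishes to order $2K+1$ at the stationary covariance, which yields the $\bigO(t^{-1/(2K)})$ rate. Where you diverge is the realizability step. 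The paper takes $\V(\theta)=\sum_{k=0}^{2K+1}a_{2k}\theta^{2k}$ and chooses the coefficients through the identity $2k(2k-1)\tfrac{(2k-2)!}{2^{k-1}(k-1)!}a_{2k}=\binom{2K+1}{k}(-1)^{2K+1-k}$, so that $1-f(C)C=-(C-1)^{2K+1}$ holds \emph{exactly and globally}; this is fully explicit, the leading coefficient $a_{4K+2}>0$ guarantees integrability of $e^{-\V}$, and the sign of the odd-order zero (hence two-sided attraction of $C_\star=1$) is automatic. Your perturbative construction --- a quadratic base plus Gaussian bumps, with the coefficients $\epsilon_i$ determined by a $(2K+1)\times(2K+1)$ linear system built from $\E_{\N(0,C)}[e^{-\theta^2/(2s)}]=(1+C/s)^{-1/2}$ --- is plausible and would also work, but it rests on an unproved nondegeneracy claim (invertibility of the jet matrix $\frac{d^j}{dC^j}\bigl[C\,\E_{\N(0,C)}[w_i'']\bigr]\big|_{C_\star}$ for distinct scales $s_i$) and on a sign adjustment of the leading coefficient that you only gesture at; note also that the $\epsilon_i$ cannot all be taken small in the relevant sense, since the perturbation must cancel the base contribution $\Psi_0'(C_\star)=-1/C_\star$, so the Hessian perturbation is $O(1)$ even though the perturbation of $\V$ itself stays bounded. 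These gaps are fillable, but the paper's explicit polynomial sidesteps them entirely; if you complete the Wronskian-type independence argument your version goes through.
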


The proof is in~\cref{proof:counter-example-slow}.

%%%%%%%%%%%%%%%%%%%%%%%%%%%%%%%%%%%%%%%%%%%%%
%
%  Numerics
%
%%%%%%%%%%%%%%%%%%%%%%%%%%%%%%%%%%%%%%%%%%%%%%

\section{Numerical Experiments}
\label{sec:Numerics}

In this section, we perform numerical experiments to study the behavior of the aforementioned gradient flows for sampling, which complements our theoretical study. We observe the following:

\begin{itemize}
    \item In the probability density space, affine invariant gradient flows  outperform their non-affine invariant counterparts, for the Gaussian posterior case~(\cref{fig:Gaussian_gd_particle_converge}), the logconcave posterior case~(\cref{fig:Logconcave_gd_particle_converge}), and the general posterior case~(\cref{fig:Rosenbrock_gd_particle_converge}).
    \item In the restricted Gaussian density space, affine invariant gradient flows  outperform their non-affine invariant counterparts, for the Gaussian posterior case~(\cref{fig:Gaussian_gd_converge}), the logconcave posterior case~(\cref{fig:Logconcave_gd_converge}), and the general posterior case~(\cref{fig:Rosenbrock_gd_converge}).
    \item For general non-Gaussian posteriors, the convergence rates of all gradient flows deteriorate when the posterior becomes more anisotropic~(\cref{fig:Rosenbrock_gd_particle_converge}); 
consequently accurately estimating the summary statistics of these posteriors 
is challenging.

    \item The convergence curves from the use of affine invariant Wasserstein gradient flow, implemented with Langevin dynamics, oscillate slightly due to the added noise; those obtained from affine invariant Stein gradient flow, implemented by Stein variational gradient descent, are smooth~(See \cref{fig:Gaussian_gd_particle_converge,fig:Logconcave_gd_particle_converge,fig:Rosenbrock_gd_particle_converge}). However, the added noise helps for sampling non-Gaussian, highly anisotropic posteriors in comparison with the affine invariant Stein variational gradient descent~(See \cref{fig:Rosenbrock_gd_particle_density,fig:Rosenbrock_gd_particle_converge}).
\end{itemize}

\vspace{0.1in}

In the following subsections, we first introduce all test problems in \Cref{ssec:test_problems} and the setup for the numerical methods in \Cref{ssec:method-setup}. Then we present numerical results for the Gaussian posterior case 
in \Cref{ssec:Gaussian-numerics}, the logconcave posterior case in \Cref{ssec:Logconcave-numerics}, and the general posterior case in \Cref{ssec:General-numerics}. Our code is accessible online:
\begin{center}
  \url{https://github.com/Zhengyu-Huang/InverseProblems.jl}
\end{center}

\subsection{Overview of Test Problems}
\label{ssec:test_problems}
We focus our experiments on three two-dimensional  posteriors. In defining
them we use the notation $\theta = [\theta^{(1)},\theta^{(2)}]^T$.
\begin{enumerate}
    \item \emph{Gaussian Posterior.} 
        $$
    \V(\theta) = \frac{1}{2}\theta^{T}
    \begin{bmatrix}
      1 & 0\\
      0 & \lambda
    \end{bmatrix}
    \theta
    \quad \textrm{with} \quad \lambda = 0.01,\,0.1,\,1.
    $$
    We initialize the gradient flows from 
    $$
    \theta_0 \sim \N\Bigl(
    \begin{bmatrix}
  10\\
  10
    \end{bmatrix}
, 
\begin{bmatrix}
  \frac{1}{2} & 0\\
  0 & 2
\end{bmatrix}\Bigr).
$$
    \item \emph{Logconcave Posterior}. 
    $$
\V(\theta) = \frac{(\sqrt{\lambda}\theta^{(1)} - \theta^{(2)})^2}{20} +\frac{(\theta^{(2)})^4}{20} \quad \textrm{with} \quad \lambda = 0.01,\,0.1,\,1.
$$
We initialize the gradient flows from 
$
\theta_0 \sim \N\Bigl(
\begin{bmatrix}
  10\\
  10
\end{bmatrix}
, 
\begin{bmatrix}
  4 & 0\\
  0 & 4
\end{bmatrix}\Bigr)
$.

    \item \emph{General Posterior.} 
$$
\V(\theta) = \frac{\lambda( \theta^{(2)} - (\theta^{(1)})^2)^2}{20} + \frac{(1 - \theta^{(1)})^2}{20}  \quad \textrm{with} \quad \lambda = 0.01,\,0.1,\,1.
$$
This example is known as the Rosenbrock function~\cite{goodman2010ensemble}. We initialize the gradient flows from 
$$
\theta_0 \sim \N\Bigl(
\begin{bmatrix}
  0\\
  0
\end{bmatrix}
, 
\begin{bmatrix}
  4 & 0\\
  0 & 4
\end{bmatrix}\Bigr).
$$
\end{enumerate}

The summary statistics that we use to compare the resulting solution with the ground truth are the expectation $\E[\theta]$, the covariance $\Cov[\theta]$, and $\E[\cos(\omega^T \theta + b)]$; in the latter case we randomly draw $\omega\sim\N(0, I)$ and $b\sim \textrm{Uniform}(0, 2\pi)$ and report the average MSE over 20 random draws of $\omega$ and $b$. The ground truths of these summary statistics are evaluated by integrating $\rho_{\rm post}$ numerically (See \cref{sec:integration} for details). 

\subsection{Numerical Method Setup}
\label{ssec:method-setup}
The gradient flows in the nonparametric density space studied in \Cref{sec:GF and AI} are implemented by interacting particle systems with $J =100$ particles. 
    We note that straightforward particle implementations of the Fisher-Rao gradient flow suffer from the immobility of the support. One approach to address this challenge is by adding a transport step to the gradient flow, leading to the Wasserstein-Fisher-Rao gradient flow~\cite{lu2019accelerating,lu2022birth}. In our study, we focus on It\^o type mean field models; thus particle implementations of the Fisher-Rao gradient flow are not included. We will compare the performance of the following nonparametric gradient flows (GFs) in our experiments. 
\begin{itemize}
    \item Wasserstein GF: The Wasserstein gradient flow with $P(\theta,\rho) = I$, which is implemented as stochastic particle dynamics~\cref{eq:particle-Wasserstein} with $h(\theta, \rho) = \sqrt{2}\I$. 
    \item Affine-invariant Wasserstein GF: The affine-invariant Wasserstein gradient flow with $P(\theta,\rho) = C(\rho)$, which is implemented as stochastic particle dynamics \cref{eq:AI-Wasserstein-MD-Ct2}. 
    \item Stein GF: The Stein gradient flow with 
$$P = I, \quad \kappa(\theta, \theta', \rho) = (1 + 4\log(J+1)/N_{\theta})^{N_{\theta}/2}\exp(-\frac{1}{h}\lVert\theta - \theta'\rVert^2),$$
which is implemented as deterministic particle dynamics~\cref{eq:particle-Stein}.
Here $h = \textrm{med}^2/\log(J+1)$ and $\textrm{med}^2$ is the squared median of the pairwise Euclidean distance between the current particles, following~\cite{liu2016stein}.
    \item Affine-invariant Stein GF: The affine-invariant Stein gradient flow with 
$$P = C(\rho), \quad \kappa(\theta, \theta',\rho) =(1 + 2/N_\theta)^{N_{\theta}/2}\exp(-\frac{1}{2N_\theta}(\theta - \theta')^TC(\rho)^{-1}(\theta - \theta')),$$
which is implemented as deterministic particle dynamics~\cref{eq:AI-Stein-MD}.
\end{itemize}
Here the scaling constants\footnote{In the case of the Stein GF, there is no analytical formula for the integral~\cref{eq:kernel-const}. To deal with this, we estimate the scaling constant by replacing 
$\textrm{med}^2\I$ with $N_\theta C(\rho)$, for which we can analytically compute~\cref{eq:kernel-const}.} in the above definition of kernel functions are chosen such that
\begin{align}
\label{eq:kernel-const}
    \int \int \kappa(\theta, \theta', \rho) \N(\theta, m, C) \N(\theta', m, C) \mathrm{d}\theta \mathrm{d}\theta' = 1.
\end{align} 
This choice makes the Stein gradient flows comparable with the Wasserstein 
gradient flows in terms of implementation cost per step.
Since we cannot analytically compute the integral~\cref{eq:kernel-const} 
for the kernel of Stein GF, we estimate its scaling constant by replacing 
$\textrm{med}^2\I$ with $N_\theta C$.

For the gradient flows in the Gaussian density space we consider the three mean and covariant dynamics given in equations \cref{eq:Gaussian Fisher-Rao}, \cref{eq:Gaussian-Wasserstein} and \cref{eq:g-gf}.
The expectations in the evolution equations are calculated by the unscented transform~\cite{julier1997new} with $J =2N_\theta + 1 = 5$ quadrature points. Therefore, 
the Gaussian approximation has considerable speedup in comparison with the 
previously mentioned particle-based sampling approaches, where $J=100$.

\subsection{Gaussian Posterior Case}
\label{ssec:Gaussian-numerics}

The convergence of different gradient flows, according to the three summary statistics, are presented in~\cref{fig:Gaussian_gd_particle_converge} (nonparameteric density space) and in~\cref{fig:Gaussian_gd_converge} (Gaussian density space). In both nonparametric and Gaussian density spaces, the imposition of the affine invariance property makes the convergence rate independent of the anisotropy $\lambda$ and accelerates the sampling for badly scaled Gaussian ($\lambda = 0.01$).
However, all these gradient flows in the nonparametric density space 
do not converge within machine precision because of the limited number of particles.
The convergence rates of Gaussian approximate gradient flows match well with the
predictions of~\cref{p:rate}.

\begin{figure}[ht]
\centering
    \includegraphics[width=0.8\textwidth]{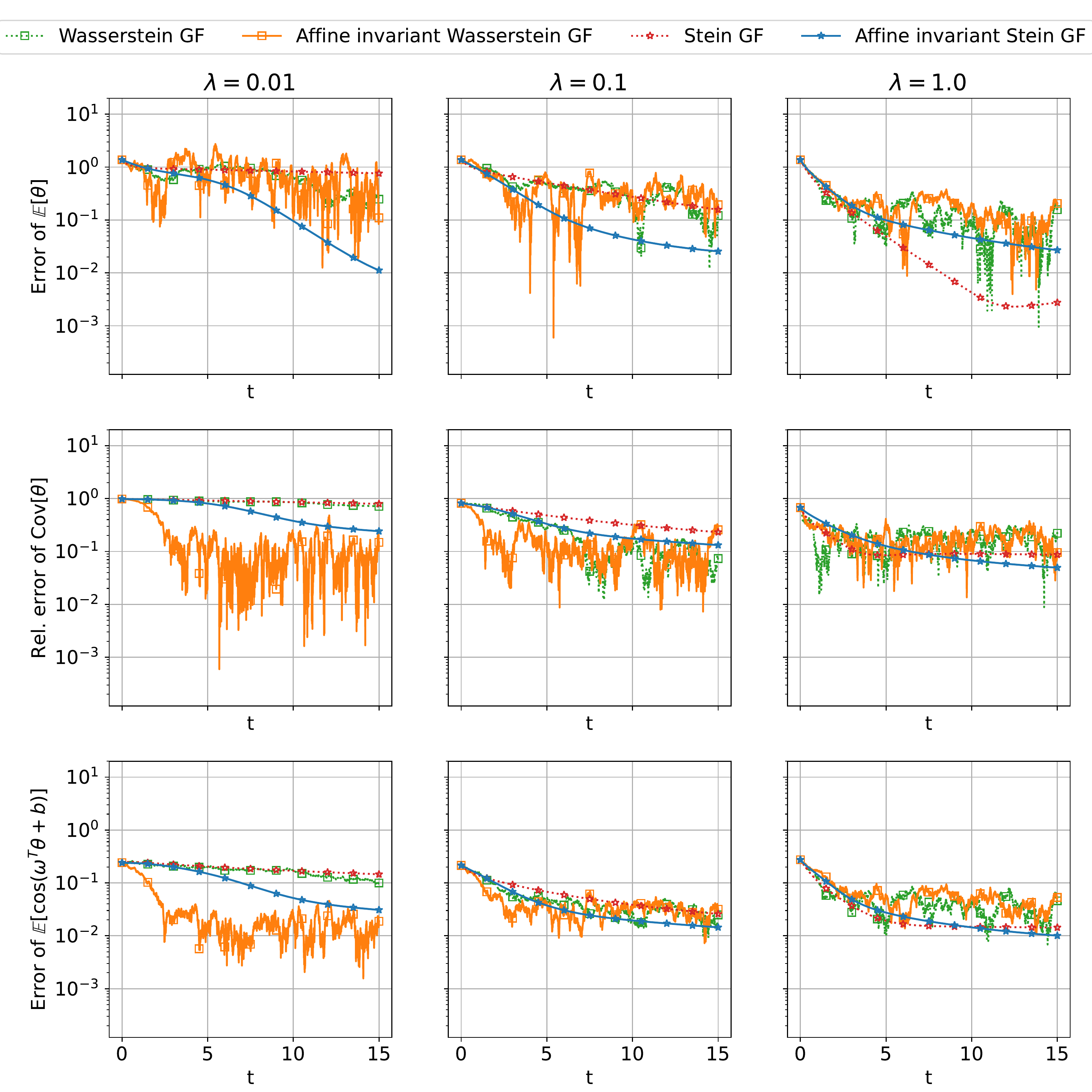}
    \caption{Gaussian posterior case: convergence of different gradient flows in terms of the $L_2$ error of $\E[\theta_t]$, the relative Frobenius norm error of the covariance $ \frac{\lVert \Cov[\theta_t] - \Cov[\theta_{{\rm true}}]\rVert_F}{\lVert \Cov[\theta_{\rm true}]\rVert_F}$, and the error of $\E[\cos(\omega^T \theta_t + b)]$. }
    \label{fig:Gaussian_gd_particle_converge}
\end{figure}

\begin{figure}[ht]
\centering
    \includegraphics[width=0.8\textwidth]{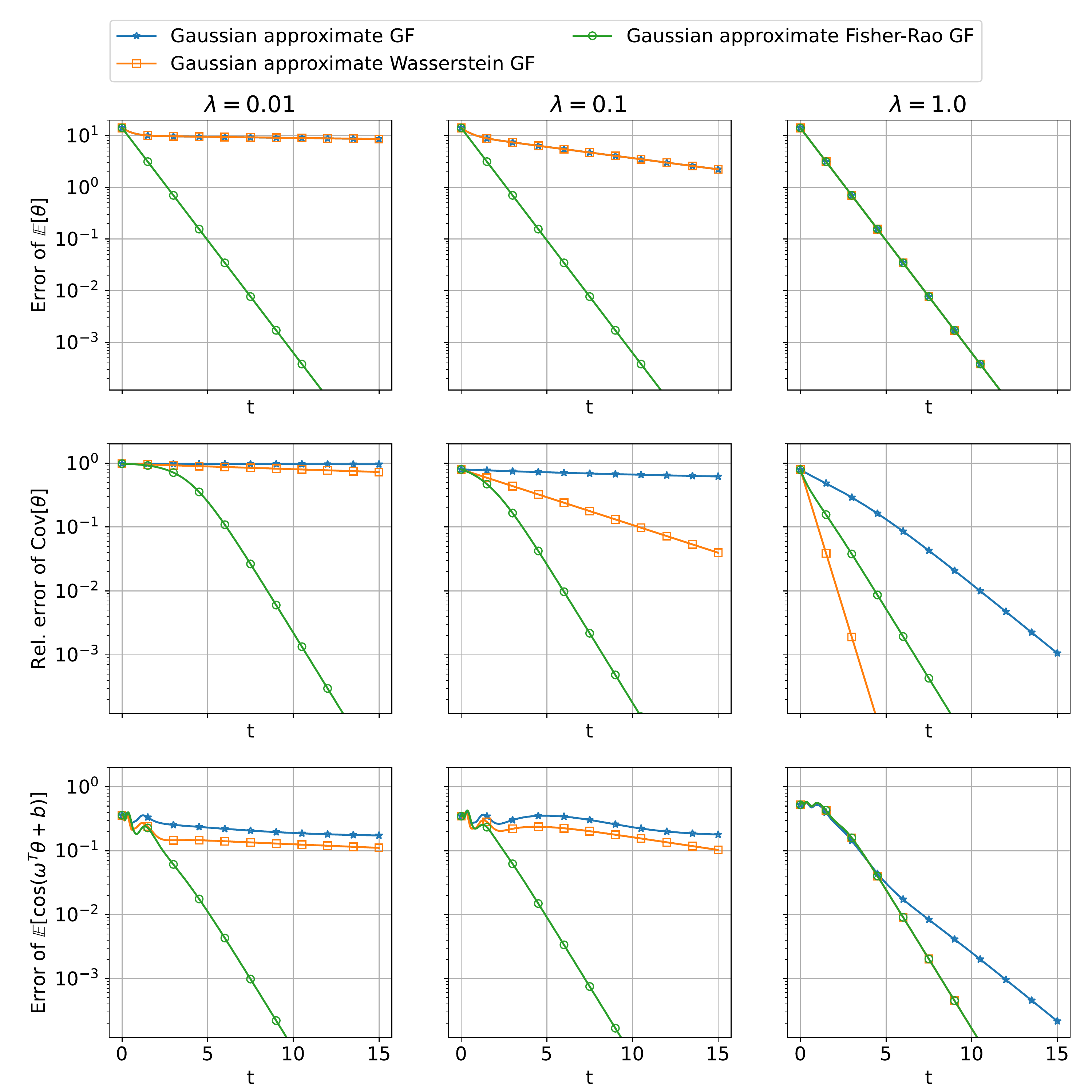}
    \caption{Gaussian posterior case: convergence of different dynamics in terms of $L_2$ error of $\E[\theta_t]$, the relative Frobenius norm error of the covariance $ \frac{\lVert \Cov[\theta_t] - \Cov[\theta_{{\rm true}}]\rVert_F}{\lVert \Cov[\theta_{\rm true}]\rVert_F}$, and the error of $\E[\cos(\omega^T \theta_t + b)]$.}
    \label{fig:Gaussian_gd_converge}
\end{figure}

\subsection{Logconcave Posterior Case}
\label{ssec:Logconcave-numerics}

The convergence of different gradient flows, according to the three summary statistics, are presented in~\cref{fig:Logconcave_gd_particle_converge} (nonparameteric density space) and in~\cref{fig:Logconcave_gd_converge} (Gaussian density space). In both nonparametric and Gaussian density spaces, the imposition of the affine invariance property makes the convergence rate independent of the anisotropy $\lambda$ and accelerates the sampling in the highly anisotropic case ($\lambda = 0.01$). We observe that the corresponding Gaussian approximate gradient flows can reach lower errors for this case with the present numerical method setup defined in \Cref{ssec:method-setup}. 
We also observe that the convergence rate of the Gaussian approximate Fisher-Rao gradient flow does not deteriorate with increased anisotropy constant $\lambda$; this 
indicates that the convergence rate in~\cref{p:logconcave}, for this gradient
flow, may not be tight.

% \begin{figure}[ht]
% \centering
%     \includegraphics[width=0.95\textwidth]{}
%     \caption{Logconcave posterior case: particles obtained by different gradient flows at $t=15$. Grey lines represent the contour of the true posterior.}
%     \label{fig:Logconcave_gd_particle_density}
% \end{figure}

\begin{figure}[ht]
\centering
    \includegraphics[width=0.8\textwidth]{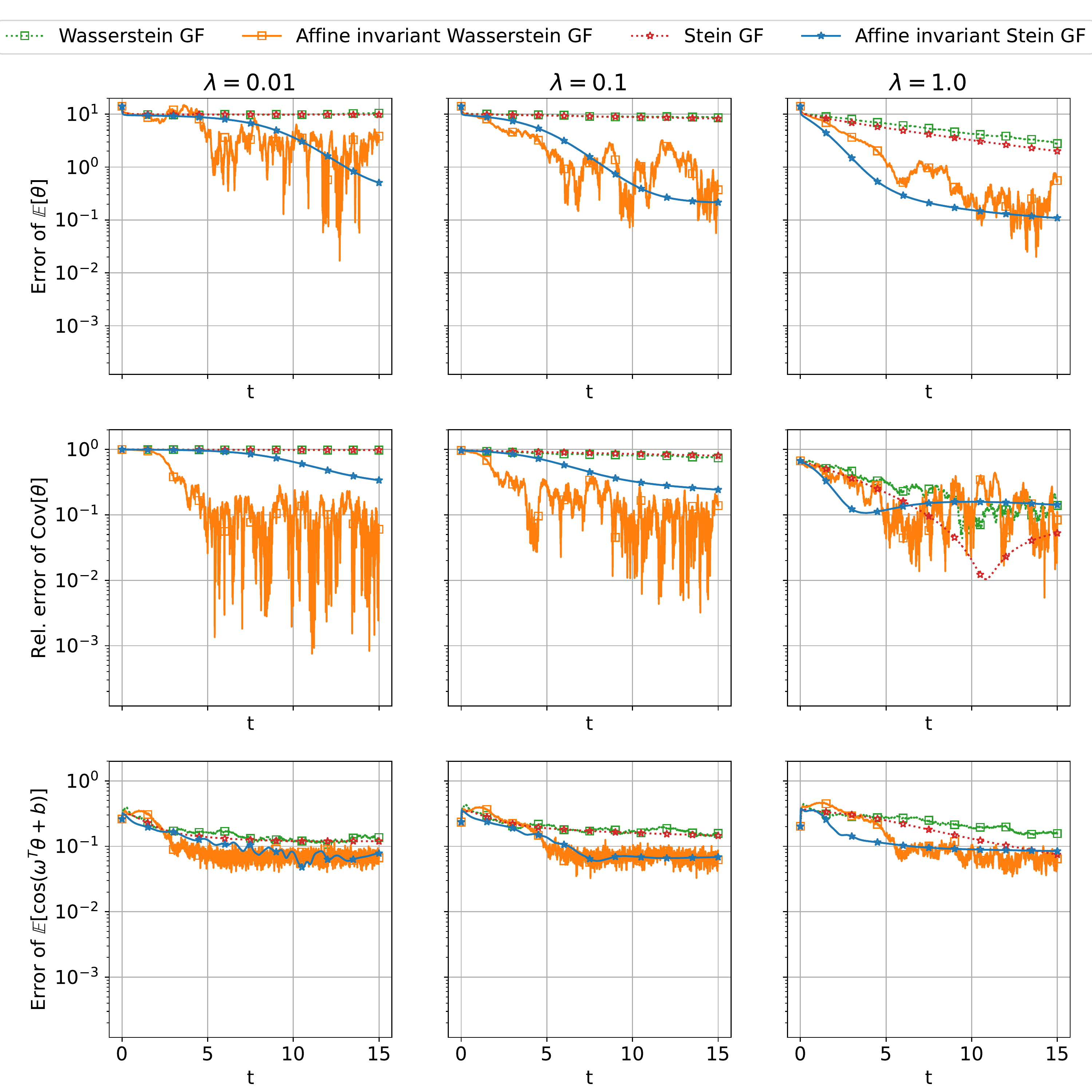}
    \caption{Logconcave posterior case: convergence of different gradient flows in terms of the $L_2$ error of $\E[\theta_t]$, the relative Frobenius norm error of the covariance $ \frac{\lVert \Cov[\theta_t] - \Cov[\theta_{{\rm true}}]\rVert_F}{\lVert \Cov[\theta_{\rm true}]\rVert_F}$, and the error of $\E[\cos(\omega^T \theta_t + b)]$.}
    \label{fig:Logconcave_gd_particle_converge}
\end{figure}

% \begin{figure}[ht]
% \centering
%     \includegraphics[width=0.95\textwidth]{}
%     \caption{Logconcave posterior case: densitity functions (3 standard deviations) obtained by different dynamics at $t=15$. Grey lines represent the contour of the true posterior.}
%     \label{fig:Logconcave_gd_density}
% \end{figure}

\begin{figure}[ht]
\centering
    \includegraphics[width=0.8\textwidth]{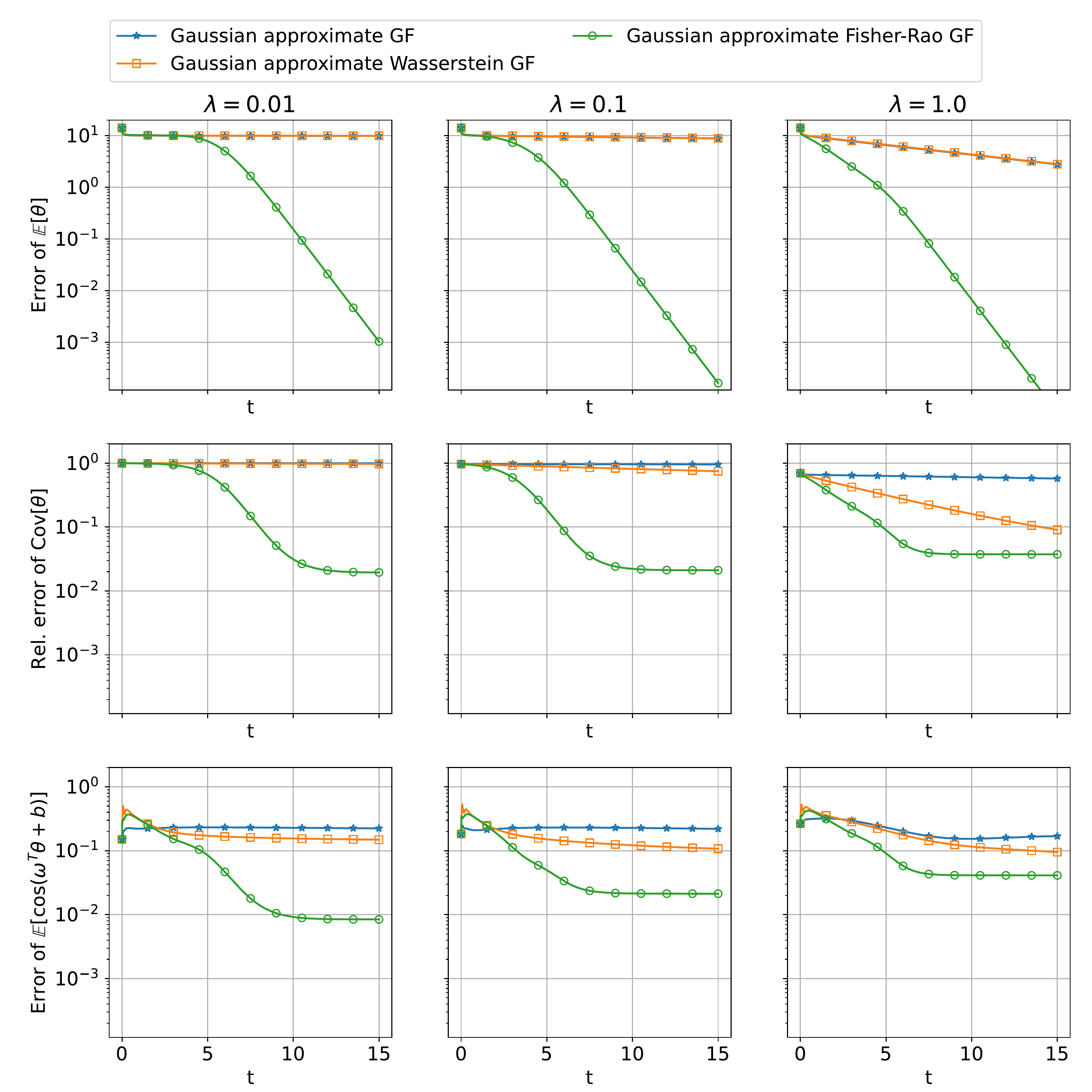}
    \caption{ Logconcave posterior case: convergence of different dynamics in terms of $L_2$ error of $\E[\theta_t]$, the relative Frobenius norm error of the covariance $ \frac{\lVert \Cov[\theta_t] - \Cov[\theta_{{\rm true}}]\rVert_F}{\lVert \Cov[\theta_{\rm true}]\rVert_F}$, and the error of $\E[\cos(\omega^T \theta_t + b)]$}
    \label{fig:Logconcave_gd_converge}
\end{figure}

\subsection{General Posterior Case}
\label{ssec:General-numerics}
We note that the Rosenbrock function is a non-convex function. Although its minimizer is at $[1, 1]$, the expectation and covariance of the posterior density function is (See \cref{sec:integration}) 
$$
\E[\theta] = \begin{bmatrix}
  1\\
  11
\end{bmatrix} \qquad 
\Cov[\theta] = \begin{bmatrix}
  10& 20\\
  20& \frac{10}{\lambda} + 240
\end{bmatrix}.
$$

The particles obtained by different nonparameteric gradient flows at $t=15$ are depicted in~\cref{fig:Rosenbrock_gd_particle_density}, and their convergences according to the three summary statistics are depicted in~\cref{fig:Rosenbrock_gd_particle_converge}. 
Estimated posterior densities (3 standard deviations) obtained by different Gaussian approximate gradient flows are presented in~\cref{fig:Rosenbrock_gd_density}, and their convergences according to the three summary statistics are depicted in~\cref{fig:Rosenbrock_gd_converge}.
For small $\lambda$ (e.g., $\lambda = 0.01$), ${\theta^{(2)}}$ is the stretch direction, and therefore the imposition of the affine invariance property makes the convergence faster.  
However, when $\lambda$ increases, the posterior density concentrates on
a manifold with significant curvature (See \cref{fig:Rosenbrock_gd_particle_density}). 
Although the particle positions match well with the density contours, the convergence of different gradient flows significantly deteriorates; the imposition of 
affine invariance does not relieve the situation.
Furthermore, the Gaussian approximation cannot represent the posterior 
distribution at all well because the posterior is far from Gaussian.

\begin{figure}[ht]
\centering
    \includegraphics[width=0.8\textwidth]{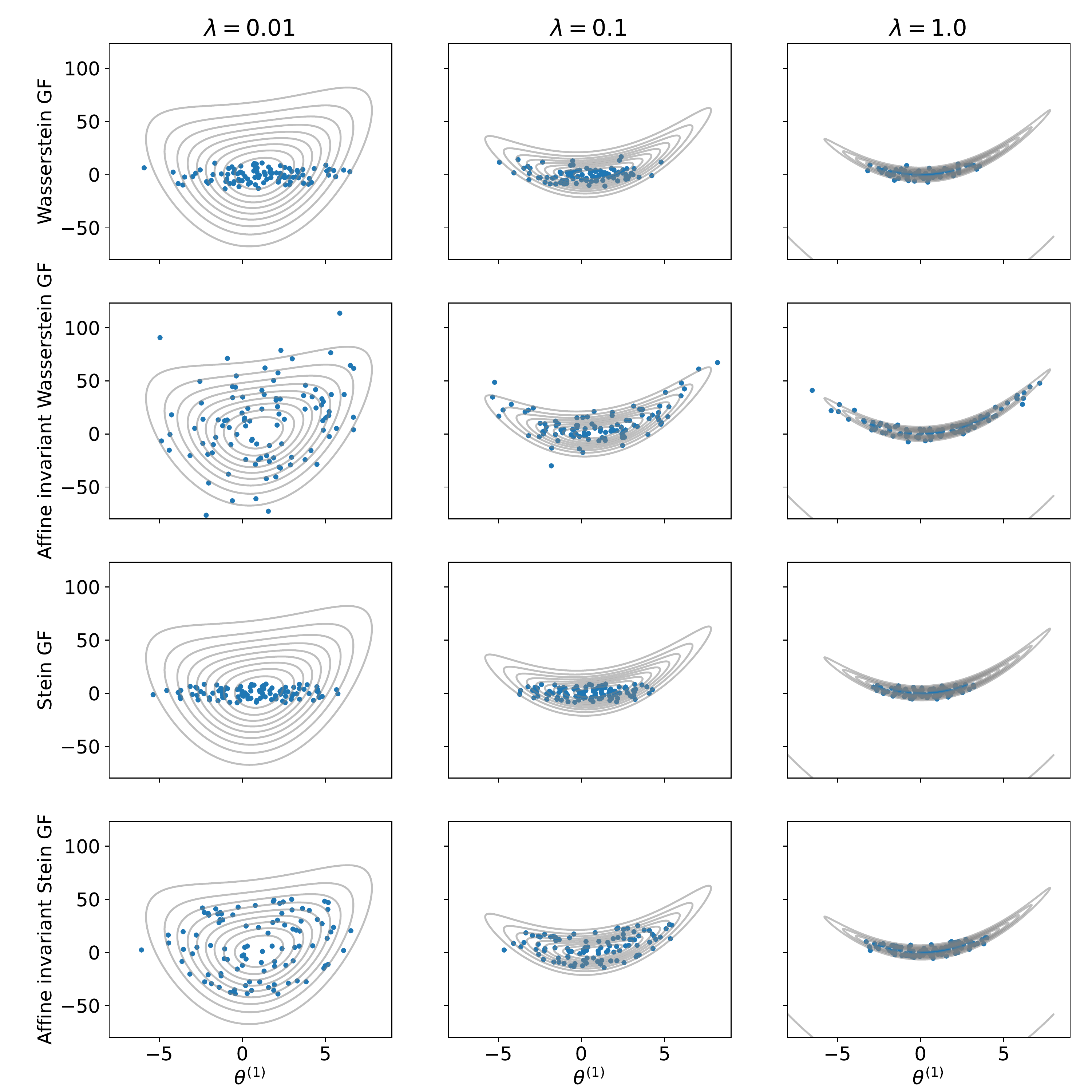}
    \caption{General posterior case: particles obtained by different gradient flows at $t=15$. Grey lines represent the contour of the true posterior.}
    \label{fig:Rosenbrock_gd_particle_density}
\end{figure}

\begin{figure}[ht]
\centering
    \includegraphics[width=0.8\textwidth]{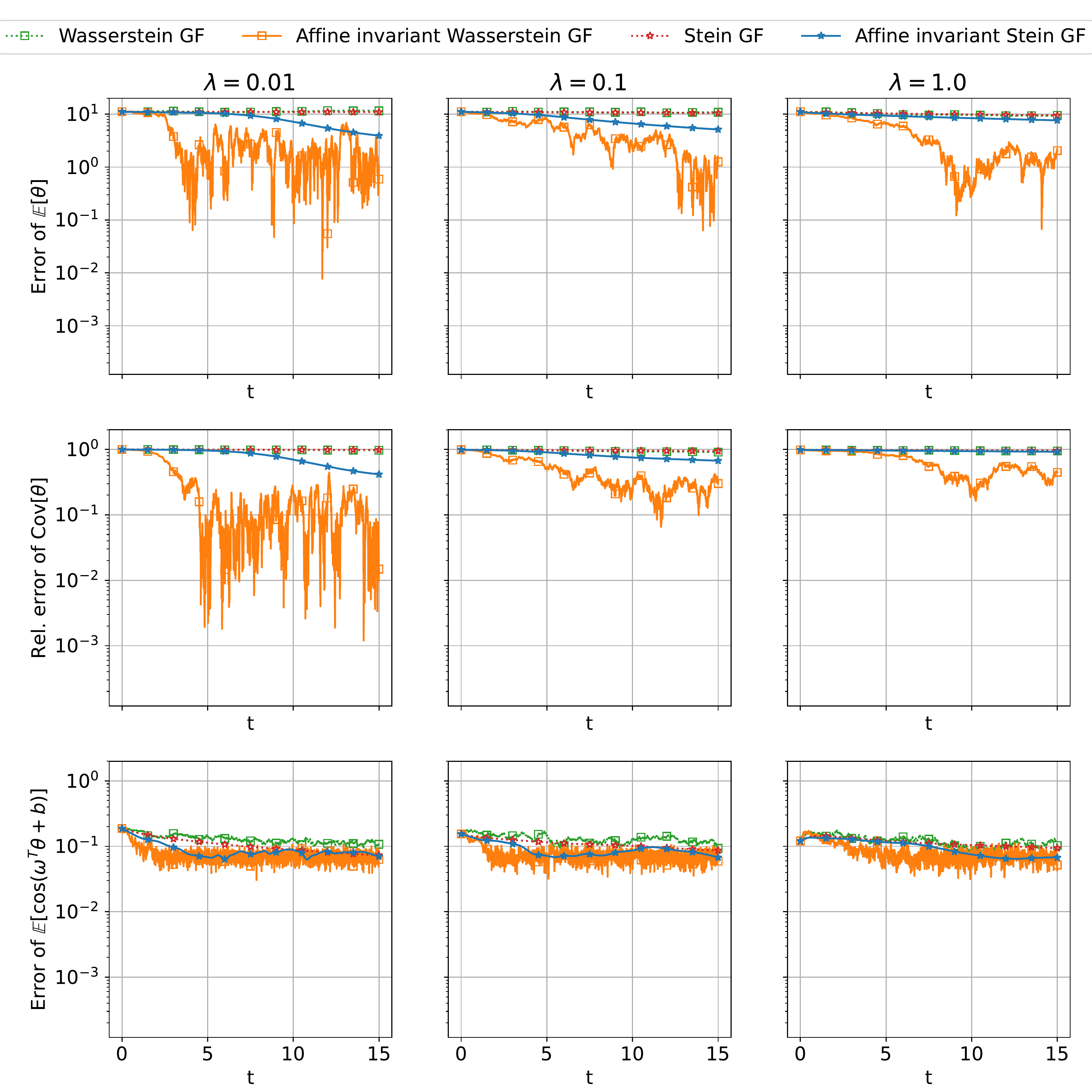}
    \caption{General posterior case: convergence of different gradient flows in terms of the $L_2$ error of $\E[\theta_t]$, the relative Frobenius norm error of the covariance $ \frac{\lVert \Cov[\theta_t] - \Cov[\theta_{{\rm true}}]\rVert_F}{\lVert \Cov[\theta_{\rm true}]\rVert_F}$, and the error of $\E[\cos(\omega^T \theta_t + b)]$.}
    \label{fig:Rosenbrock_gd_particle_converge}
\end{figure}

\begin{figure}[ht]
\centering
    \includegraphics[width=0.8\textwidth]{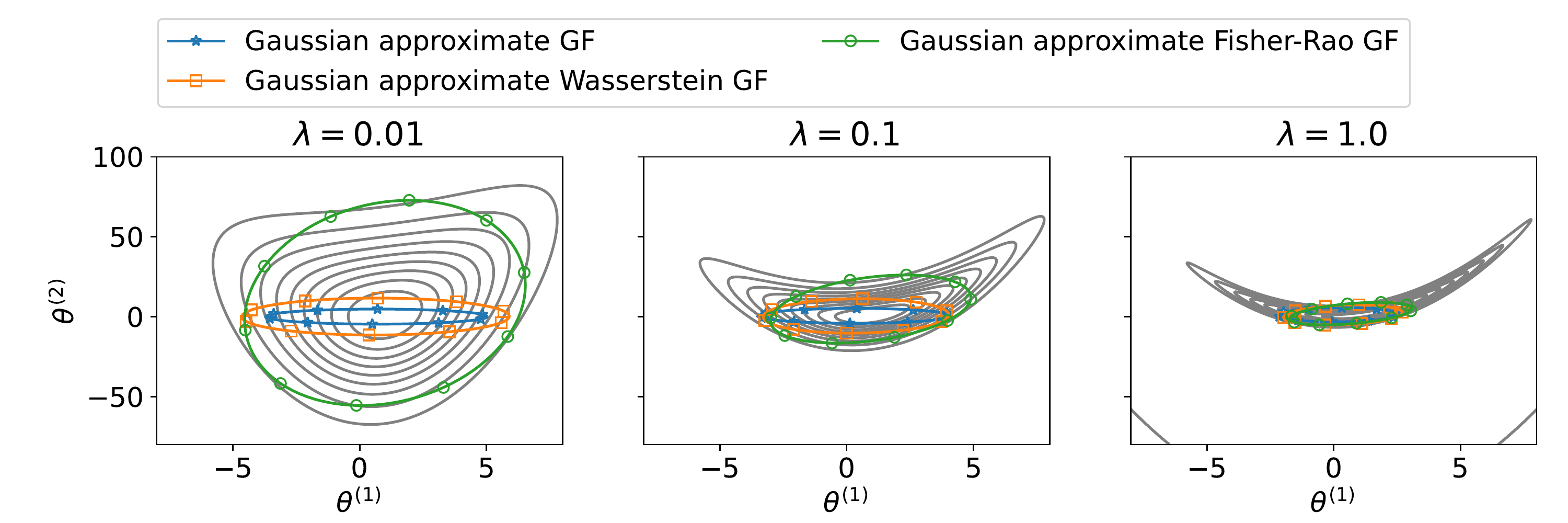}
    \caption{General posterior case: density functions (3 standard deviations) obtained by different dynamics at $t=15$. Grey lines represent the contour of the true posterior.
    }
    \label{fig:Rosenbrock_gd_density}
\end{figure}

\begin{figure}[ht]
\centering
    \includegraphics[width=0.8\textwidth]{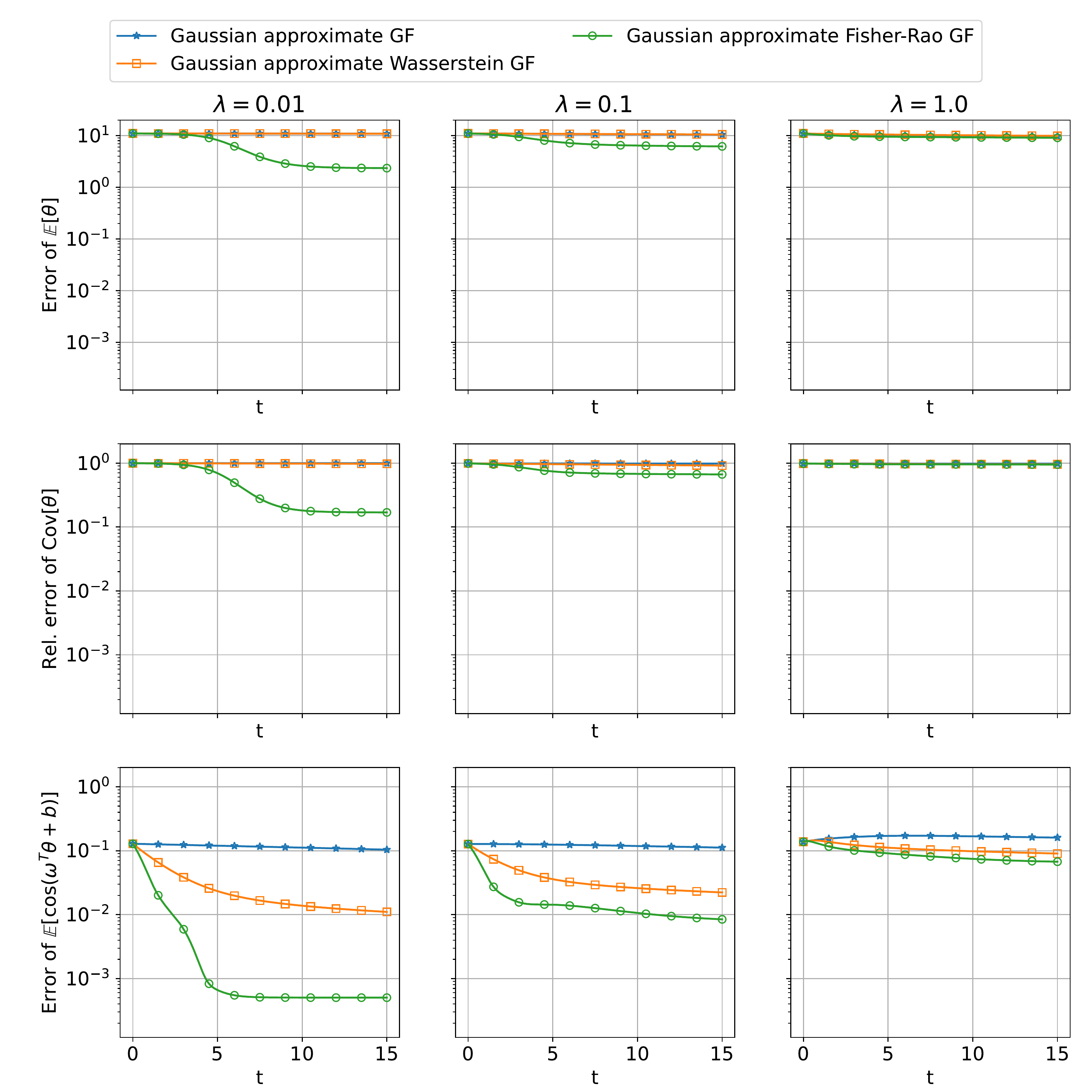}
    \caption{General posterior case: convergence of different dynamics in terms of $L_2$ error of $\E[\theta_t]$, the relative Frobenius norm error of the covariance $ \frac{\lVert \Cov[\theta_t] - \Cov[\theta_{{\rm true}}]\rVert_F}{\lVert \Cov[\theta_{\rm true}]\rVert_F}$, and the error of $\E[\cos(\omega^T \theta_t + b)]$.}
    \label{fig:Rosenbrock_gd_converge}
\end{figure}

\section{Conclusions}
\label{sec:conclusion}

\subsection{Summary}
In this work, we have studied various gradient flows in both nonparametric and Gaussian density spaces for sampling distributions with unknown normalization constants, focusing on the affine invariance property. We introduce the concept of affine 
invariant metric and use it to develop general affine invariant Wasserstein, Stein, and Fisher-Rao gradient flows in both nonparametric and Gaussian density spaces. 
We provide a theoretical analysis of these gradient flows and demonstrate that affine-invariance effectively improves the efficiency of sampling log-concave distributions.
Numerically, we demonstrate that the affine-invariance property can accelerate the convergence of gradient flows for highly anisotropic distributions.

Nevertheless, these strategies may still not perform well for general posterior 
distributions. In particular for multimodal distributions, or distributions 
which concentrate on manifolds with significant curvature such as the
Rosenbrock function used here. In a companion work~\cite{chen2022derivative}, 
we are exploring the direction of approximating the Fisher-Rao Gradient flow 
using Gaussian mixtures; this has the potential to capture multiple dominant modes efficiently. We are also interested in exploring other invariant properties and approximations that could deal with these more complex distributions. In addition, for high dimensional problems, it is of interest to develop a systematic study of model reduction of these gradient flows; see for example the projected Stein and Wasserstein gradient flows studied in \cite{chen2020projected, wang2022projected}.

Finally, we highlight that, for Bayesian inverse problems \cite{sanz2018inverse}
the methods developed here do not exploit the structure of the forward problem.
This is in contrast to ensemble Kalman based methods which have demonstrable
performance advantages for some problems in this class \cite{calvello22}; notably these ensemble methods have derivative free implementations which are favorable in large scale inverse problems. 
Developing the analysis of these methods, following the approach in this paper, 
constitutes an interesting direction for future study.

\subsection{Open Technical Problems}
Several open technical problems remain unsolved 
in the work we have presented; we list them here.
\begin{enumerate}
    \item 
    A key determining factor in the exponential rates \cref{proposition:W-convergence} of the 
affine invariant Wasserstein gradient flow with preconditioner $\Prec_t$ is the lower-bound on the preconditioner. 
Determining such a lower bound, or determiner whether the preconditioner can become singular under the flow,
is an interesting question for further study, in particular under a logconcavity assumption such as
\cref{assumption:logconcave}.
    
    \item For the affine invariant Stein gradient flow, the study of convergence 
and convergence rates for general kernel functions is of interest.
    
    \item For the Gaussian approximate Fisher-Rao gradient 
flow~\cref{eq:Gaussian Fisher-Rao}, 
in particular under the logconcavity assumption 
\cref{assumption:logconcave},
we prove a global convergence rate of at least \[e^{-\alpha\min\{\frac{1}{\beta}, \lambda_{0,\min}\}t}\] in \cref{p:logconcave}.  However, the numerical study showed
far superior behavior on one problem class. A sharp bound is of interest. We study this problem from the perspective of local convergence rate in \cref{p:logconcave-local} for $N_\theta = 1$. Studying this problem for $N_\theta > 1$ is of future interest.

    \item For the Gaussian approximate Fisher-Rao gradient flow~\cref{eq:Gaussian Fisher-Rao}, 
under the logconcave assumption such as
\cref{assumption:logconcave}, we prove that $C_t$ is bounded below and
above in \Cref{p:logconcave}. 
Furthermore, studying whether $C_t$ can become singular or infinity for the
general posterior case is of future interest.

\end{enumerate}

\vspace{0.5in}
\noindent {\bf Acknowledgments} YC acknowledges the support from the Air Force Office of Scientific Research under MURI award number FA9550-20-1-0358 (Machine Learning and Physics-Based Modeling and Simulation). DZH and AMS are supported by NSF award AGS1835860 and by the generosity of Eric and Wendy Schmidt by recommendation of the Schmidt Futures program; 
DZH is also supported by High-performance Computing Platform of Peking University;
AMS is also
supported by the Office of Naval Research (ONR) through grant N00014-17-1-2079
and by a Department of Defense Vannevar Bush Faculty Fellowship. SR is supported by Deutsche Forschungsgemeinschaft (DFG) - Project-ID 318763901 - SFB1294.
JH is supported by NSF grant DMS-2054835.

\bibliographystyle{siamplain}
\bibliography{references}

%%%%%%%%%%%%%%%%%%%%%%%%%%%%%%%%%%%%%%%%%%%%
%
% Appendices
%
%%%%%%%%%%%%%%%%%%%%%%%%%%%%%%%%%%%%%%%%%%%%

\appendix
\newpage
\section{Unique Property of the KL Divergence Energy}
\subsection{Proof of \Cref{prop: KL unique f divergence}}
\label{sec: Proofs of prop: KL unique f divergence}
\begin{proof}
% Recall from \Cref{ssec:m} that, throughout the article, we assume that $\rho$ and $\rho_{\rm post}$ are continuous density functions, positive everywhere.
We first note that the KL divergence satisfies the desired property:
for any $c \in (0,\infty)$ and $\rho_{\rm post} \in \PP$ it holds that
$$\mathrm{KL}[\rho \Vert  c\rho_{\rm post}]-\mathrm{KL}[\rho \Vert  \rho_{\rm post}] = -\log c.$$ 
Now we establish uniqueness. For any $f-$divergence with property that
$D_f[\rho \Vert c\rho_{\rm post}]-D_f[\rho \Vert \rho_{\rm post}]$ is independent of $\rho$, we have
\begin{equation}
    \lim_{t \to 0} \frac{(D_f[\rho + t\sigma \Vert c\rho_{\rm post}]-D_f[\rho + t\sigma \Vert \rho_{\rm post}]) - (D_f[\rho \Vert c\rho_{\rm post}]-D_f[\rho \Vert \rho_{\rm post}]) }{t} = 0,
\end{equation}
for any bounded, smooth function $\sigma$ supported in $B^d(0,R)$ that integrates to zero. Here $R > 0$ is a finite parameter that we will choose later, and $d=N_{\theta}$ is the dimension of $\theta$. In the above formula we have used the fact that for sufficiently small $t$, one has $\rho+t\sigma \in \PP$ since $R$ is finite. By direct calculations, we get
\begin{equation}
    \int_{B^d(0,R)} \left(f'(\frac{\rho}{c\rho_{\rm post}}) -  f'(\frac{\rho}{\rho_{\rm post}})\right)\sigma {\rm d}\theta= 0.
\end{equation}
Since $\sigma$ is arbitrary, $f'(\frac{\rho}{c\rho_{\rm post}}) -  f'(\frac{\rho}{\rho_{\rm post}})$ must be a constant function in $B^d(0,R)$. 

% We can extract this constant by integration, which leads to

% We assume that this first variation is invariant under scaling of $\rho_{\rm post}$ by a positive
% real $c$ in order to determine constraints that this imposes on $f$. We  thus have that
% \begin{equation*}
% f'\Bigl(\frac{\rho(\theta)}{\rho_{\rm post}(\theta)}\Bigr) - \int f'\Bigl(\frac{\rho}{\rho_{\rm post}}\Bigr) {\rm d}\theta = f'\Bigl(\frac{c\rho(\theta)}{\rho_{\rm post}(\theta)}\Bigr) - \int f'\Bigl(\frac{c\rho}{\rho_{\rm post}}\Bigr) {\rm d}\theta,
% \end{equation*}
% and hence that
% \begin{equation}
%     \label{eq:A1}
% f'\Bigl(\frac{\rho(\theta)}{\rho_{\rm post}(\theta)}\Bigr) - 
% f'\Bigl(\frac{\rho(\theta)}{c\rho_{\rm post}(\theta)}\Bigr)
% =\int_{B^d(0,R)} f'\Bigl(\frac{\rho}{\rho_{\rm post}}\Bigr) {\rm d}\theta  - \int_{B^d(0,R)} f'\Bigl(\frac{\rho}{c\rho_{\rm post}}\Bigr) {\rm d}\theta,
% \end{equation}
% for any $\theta \in B^d(0,R)$ and $c > 0$. 

Because $\rho$ and $\rho_{\rm post}$ integrate to $1$ and they are continuous, there exists $\theta^\dagger$ such that $\rho (\theta^\dagger)/\rho_{\rm post}(\theta^\dagger) = 1$. Choose $R$ sufficiently large such that $\theta^\dagger \in B^d(0,R)$. Then, we obtain
\begin{equation}
    \label{eq:A1}
f'\Bigl(\frac{\rho(\theta)}{\rho_{\rm post}(\theta)}\Bigr) - 
f'\Bigl(\frac{\rho(\theta)}{c\rho_{\rm post}(\theta)}\Bigr)
=f'\Bigl(\frac{\rho(\theta^\dagger)}{\rho_{\rm post}(\theta^\dagger)}\Bigr) - 
f'\Bigl(\frac{\rho(\theta^\dagger)}{c\rho_{\rm post}(\theta^\dagger)}\Bigr) = f'(1) - f'(1/c),
\end{equation}
for any $\theta \in B^d(0,R)$. As $R$ can be arbitrarily large, the above identity holds for all $\theta \in \R^d$. Let $g(1/c):=f'(1) - f'(1/c)$. We have obtained
% Setting $\theta = \theta^\dagger$ in the above identity, we obtain
% \begin{equation}
%     \label{eq:A2}
% g(1/c):=f'(1) - f'(1/c)=\int f'\Bigl(\frac{\rho}{\rho_{\rm post}}\Bigr) {\rm d}\theta - \int f'\Bigl(\frac{\rho}{c\rho_{\rm post}}\Bigr) {\rm d}\theta,
% \end{equation}
% {which is independent of $\rho$ and $\rho_{\rm post}$.}
% Combining \eqref{eq:A1}, \eqref{eq:A2} to eliminate the integrated terms we obtain
\begin{equation}
\label{eqn A1}
    f'\Bigl(\frac{\rho(\theta)}{\rho_{\rm post}(\theta)}\Bigr) - f'\Bigl(\frac{\rho(\theta)}{c\rho_{\rm post}(\theta)}\Bigr) = g(1/c),
\end{equation}
where $c$ is an arbitrary positive number. Note, furthermore, that $g(\cdot)$ 
is continuous since $f$ is continuously differentiable. 
Since $\rho$ and $\rho_{\rm post}$ are arbitrary, 
we can write \eqref{eqn A1} equivalently as
\begin{equation}
\label{eqn: KL unique f divergence, function equality}
    f'(y)-f'(cy) = g(c),
\end{equation}
for any $y, c \in \mathbb{R}_{+}$. Let $h: \mathbb{R} \to \mathbb{R}$ such that $h(z) = f'(\exp(z))$. Then, we can equivalently formulate \eqref{eqn: KL unique f divergence, function equality} as
\begin{equation}
\label{eq:a3}
    h(z_1) - h(z_2) = r(z_1 - z_2),
\end{equation}
for any $z_1, z_2\in \mathbb{R}$ and $r:\mathbb{R} \to \mathbb{R}$ such that $r(t) = g(\exp(-t))$. 

We can show $r$ is linear function. Setting $z_1=z_2$ in \eqref{eq:a3}
shows that $r(0)=0.$ Note also that, again by \eqref{eq:a3},
$$r(z_1-z_2)+r(z_2-z_3)=h(z_1)-h(z_3)=r(z_1-z_3).$$
Hence, since $z_1,z_2$ and $z_3$ are arbitrary, we deduce that
for any $x, y \in \mathbb{R}$, it holds that
\begin{equation}
\label{eqn A4}
    r(x) + r(y) = r(x+y).
\end{equation}
Furthermore $r$ is continuous since $f$ is assumed continuously
differentiable. With the above conditions, it is a standard result in functional equations that $r(x)$ is linear. Indeed, as a sketch of proof, by \eqref{eqn A4} we can first deduce $r(n) = n r(1)$ for $n \in \mathbb{Z}$. Then, by setting $x,y$ to be dyadic rationals, we can deduce $r(\frac{i}{2^k}) = \frac{i}{2^k} r(1)$ for any $i, k \in \mathbb{Z}$. Finally using the continuity of the function $r$, we get $r(x) = xr(1)$ for any $x \in \mathbb{R}$. For more details see \cite{friedman1962functional}.

Using the fact that $r$ is linear and the equation \eqref{eq:a3}, we know that $h$ is an affine function and thus $f'(\exp(z)) = az+b$ for some $a, b \in \mathbb{R}$. Equivalently, $f'(y) = a \log(y) + b$. Using the condition $f(1) = 0$, we get $f(y) = a y \log(y) + (b-a)(y-1)$. Plugging this $f$ into the formula for $D_f$, we get
\[D_f[\rho||\rho_{\rm post}] = a \text{KL}[\rho||\rho_{\rm post}],\]
noting that the affine term in $f(y)$ has zero contributions in the formula for $D_f$.
The proof is complete.
\end{proof}

\newpage
\section{Proofs for Affine Invariant Gradient Flows}
\label{appendix:affine-invariant}

\subsection{Preliminaries}
In this section, we present some lemmas that are useful for our proofs in later sections. The first lemma concerns the change-of-variable formula under 
the gradient and divergence operators.
\begin{lemma}
\label{lemma:change_variable}
Consider any invertible affine mapping from $\theta \in \R^{N_{\theta}}$ to  $\tilde\theta \in \R^{N_{\theta}}$ defined by $\tilde\theta =\varphi(\theta)= A \theta + b$. 
For any differentiable scalar field $f: \R^{N_{\theta}} \rightarrow \R$ and 
vector field $g: \R^{N_{\theta}} \rightarrow \R^{N_{\theta}}$, we have
$$\nabla_{\theta} f(\theta) = A^{T} \nabla_{\tilde{\theta}} \tilde{f}(\tilde\theta) \quad \text{and} \quad \nabla_{\theta} \cdot g(\theta) =  \nabla_{\tilde{\theta}} \cdot ( A \tilde{g} (\tilde\theta) ), $$
where $\tilde{f}(\tilde{\theta}) := f(A^{-1}(\tilde{\theta} - b))$ and $\tilde{g}(\tilde{\theta}) := g(A^{-1}(\tilde{\theta} - b))$.
\end{lemma}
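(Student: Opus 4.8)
The plan is to obtain both identities as direct consequences of the chain rule, the essential input being that the inverse map $\theta = \varphi^{-1}(\tilde\theta) = A^{-1}(\tilde\theta - b)$ has constant Jacobian $\nabla_{\tilde\theta}\theta = A^{-1}$. Since $\tilde{f}(\tilde\theta) = f(\theta)$ and $\tilde{g}(\tilde\theta) = g(\theta)$ hold by definition of $\tilde f$ and $\tilde g$, I would first record this Jacobian relation and then propagate it through each of the two differential operators separately.

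For the gradient identity, I would apply the chain rule componentwise to $\tilde f$, writing $\partial \tilde{f}/\partial \tilde\theta_i = \sum_j (\partial f/\partial \theta_j)(\partial \theta_j/\partial \tilde\theta_i)$ and using $\partial \theta_j/\partial \tilde\theta_i = (A^{-1})_{ji}$. This reads off as $\nabla_{\tilde\theta}\tilde{f} = A^{-T}\nabla_\theta f$, and left-multiplying by $A^T$ (legitimate since $A$ is invertible) yields the claimed $\nabla_\theta f(\theta) = A^T \nabla_{\tilde\theta}\tilde{f}(\tilde\theta)$.

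For the divergence identity, I would expand $\nabla_{\tilde\theta}\cdot(A\tilde{g}) = \sum_{i,k} A_{ik}\,(\partial \tilde{g}_k/\partial \tilde\theta_i)$ and substitute $\partial \tilde{g}_k/\partial \tilde\theta_i = \sum_j (\partial g_k/\partial \theta_j)(A^{-1})_{ji}$. The crucial step is the summation over the index $i$, which produces $\sum_i A_{ik}(A^{-1})_{ji} = (A^{-1}A)_{jk} = \delta_{jk}$; this collapses the triple sum to $\sum_k \partial g_k/\partial\theta_k = \nabla_\theta\cdot g$. It is precisely the insertion of $A$ inside the divergence in the statement that engineers this cancellation against the Jacobian $A^{-1}$ of the inverse transformation.

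There is no substantial obstacle here: the result is a bookkeeping exercise in the chain rule, and the only points requiring care are keeping the transposes and index orderings consistent and tracking that the matrix $A$ appearing inside the divergence is exactly what is needed for the trace-type cancellation. A coordinate-free alternative would observe that the pullback of a $1$-form transforms by $A^T$, while the divergence identity reflects invariance of the trace of $\nabla g$ under the conjugation induced by $A$; however, I expect the componentwise computation to be the most transparent route.
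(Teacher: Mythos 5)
Your proposal is correct and takes essentially the same approach as the paper: both are direct chain-rule index computations exploiting the constant Jacobian of the affine map. The only cosmetic difference is direction — the paper starts from $\nabla_\theta f$ and $\nabla_\theta\cdot g$ and uses the forward Jacobian $\partial\tilde\theta_j/\partial\theta_i = A_{ji}$, whereas you differentiate in $\tilde\theta$ with the inverse Jacobian $A^{-1}$ and then cancel against the inserted $A$; both yield the same identities.
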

\begin{proof} Note that $\tilde{f}(\tilde{\theta}) = f(\theta)$ and $\tilde{g}(\tilde{\theta}) = g(\theta)$. By direct calculations, we get
\begin{align*}
&[\nabla_{\theta} f(\theta)]_i = \frac{\partial f(\theta)}{\partial \theta_i} 
= 
\sum_j\frac{\partial f(\theta)}{\partial \tilde{\theta}_j}\frac{\partial \tilde{\theta}_j }{\partial \theta_i} 
= 
\sum_j\frac{\partial \tilde{f}(\tilde{\theta})}{\partial \tilde{\theta}_j}A_{ji} = [A^T\nabla_{\tilde{\theta}} \tilde{f}(\tilde{\theta})]_i,
\\
    %&\nabla_{\theta} \tilde{f}(\tilde{\theta}) = \nabla_{\tilde{\theta}} f(A^{-1}(\tilde{\theta} - b)) = A^{-T} \nabla_{\theta} f(\theta)\\
    &\nabla_{\theta} \cdot g(\theta) = \sum_i \frac{\partial g_i(\theta)}{\partial\theta_i} = \sum_i \sum_j \frac{\partial g_i(\theta)}{\partial\tilde{\theta}_j}\frac{\partial \tilde{\theta}_j}{\partial \theta_i} =  \sum_i \sum_j \frac{\partial \tilde{g}_i(\tilde{\theta})}{\partial\tilde{\theta}_j}A_{ji} = \nabla_{\tilde{\theta}} \cdot ( A \tilde{g} (\tilde\theta) ).
\end{align*}
This completes the proof.
\end{proof}

\begin{newremark}
\label{remark: change of variable grad and div}
    Since $\tilde{f}(\tilde{\theta}) = f(\theta)$ and $\tilde{g}(\tilde{\theta}) = g(\theta)$, we can also summarize the result in \cref{lemma:change_variable} as $\nabla_{\theta} f = A^T\nabla_{\tilde{\theta}}f$ and $\nabla_{\theta}\cdot g = \nabla_{\tilde{\theta}}\cdot (Ag)$.
\end{newremark}

The second lemma concerns the change-of-variable formula 
for the first variation of functional.
\begin{lemma}
\label{lemma:first_variation_change_variable}
Consider any invertible mapping from $\theta \in \R^{N_{\theta}}$ to  $\tilde\theta \in \R^{N_{\theta}}$ defined by $\tilde\theta = \varphi(\theta)$. 
Then, for any differentiable functional $\mathcal{E}$ in probability space, it holds that 
\[\Bigl\langle \frac{\delta \tilde{\mathcal{E}}}{\delta \tilde\rho}, \tilde{\sigma} \Bigr\rangle=\Bigl\langle \frac{\delta {\mathcal{E}}}{\delta \rho} , {\sigma} \Bigr\rangle, \]
for every $\sigma \in T_{\rho}\PP$ and $\tilde{\sigma} = \varphi \# \sigma \in T_{\tilde{\rho}}\PP$. Here, $\tilde{\mathcal{E}}$ and $\tilde\rho$ are defined in \cref{def:pushforward}. Consequently, it follows that
\[\frac{\delta \tilde{\mathcal{E}}}{\delta \tilde\rho}(\tilde{\theta}) = \frac{\delta \mathcal{E}}{\delta \rho}(\theta). \]

% Equivalently, viewing $\frac{\delta \mathcal{E}}{\delta \rho}$ and $\frac{\delta \tilde{\mathcal{E}}}{\delta \tilde\rho}$ as functional on we can write
% \begin{align*}
% \varphi\#{\frac{\delta \mathcal{E}}{\delta \rho}}  = \frac{\delta \tilde{\mathcal{E}}}{\delta \tilde\rho}
% \end{align*}
% . 
% In particular, when $\mathcal{E}$ is the KL divergence, it holds that
% \begin{align*}
% \varphi\# \bigl(\log \rho(\theta)- \log \rho_{\rm post}(\theta) \bigr)= \log \tilde{\rho}(\tilde\theta) - \log \tilde{\rho}_{\rm post}(\tilde\theta).
% \end{align*}
\end{lemma}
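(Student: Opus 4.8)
The plan is to work directly from the definition of the first variation as a one-sided directional derivative and to exploit the linearity of the pushforward operation. First I would write out the left-hand side using the definition of the first variation given just before \eqref{eq:vare}:
\[
\Bigl\langle \frac{\delta \tilde{\mathcal{E}}}{\delta \tilde\rho}, \tilde{\sigma} \Bigr\rangle
= \lim_{\epsilon\to 0}\frac{\tilde{\mathcal{E}}(\tilde\rho + \epsilon\tilde\sigma) - \tilde{\mathcal{E}}(\tilde\rho)}{\epsilon}.
\]
Then I would invoke the definition $\tilde{\mathcal{E}}(\tilde\rho) = \mathcal{E}(\varphi^{-1}\#\tilde\rho)$ from \Cref{def:pushforward}. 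The crucial observation is that the pushforward is given by the explicit formula $(\varphi^{-1}\#\mu)(\theta) = \mu(\varphi(\theta))\,|\nabla_\theta\varphi(\theta)|$, which acts \emph{linearly} on its argument $\mu$. Hence $\varphi^{-1}\#(\tilde\rho+\epsilon\tilde\sigma) = \varphi^{-1}\#\tilde\rho + \epsilon\,\varphi^{-1}\#\tilde\sigma = \rho + \epsilon\sigma$, using $\tilde\rho = \varphi\#\rho$ and $\tilde\sigma = \varphi\#\sigma$. Substituting this in gives $\tilde{\mathcal{E}}(\tilde\rho+\epsilon\tilde\sigma) = \mathcal{E}(\rho+\epsilon\sigma)$ and $\tilde{\mathcal{E}}(\tilde\rho)=\mathcal{E}(\rho)$, so the difference quotient coincides with the one defining $\langle \delta\mathcal{E}/\delta\rho,\sigma\rangle$, proving the first identity.

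For the pointwise consequence I would unfold the $L^2$ pairing and change variables. Writing $\langle \delta\tilde{\mathcal{E}}/\delta\tilde\rho,\tilde\sigma\rangle = \int \frac{\delta\tilde{\mathcal{E}}}{\delta\tilde\rho}(\tilde\theta)\,\tilde\sigma(\tilde\theta)\,\mathrm{d}\tilde\theta$ and substituting $\tilde\theta = \varphi(\theta)$, with $\tilde\sigma(\tilde\theta) = \sigma(\varphi^{-1}(\tilde\theta))\,|\nabla_{\tilde\theta}\varphi^{-1}(\tilde\theta)|$ and $\mathrm{d}\tilde\theta = |\nabla_\theta\varphi(\theta)|\,\mathrm{d}\theta$, the two Jacobian determinants are mutually reciprocal and cancel. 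This reduces the integral to $\int \frac{\delta\tilde{\mathcal{E}}}{\delta\tilde\rho}(\varphi(\theta))\,\sigma(\theta)\,\mathrm{d}\theta$. Equating with $\langle \delta\mathcal{E}/\delta\rho,\sigma\rangle = \int \frac{\delta\mathcal{E}}{\delta\rho}(\theta)\,\sigma(\theta)\,\mathrm{d}\theta$ from the first part, and noting that $\sigma$ ranges over all smooth zero-mean test functions in $T_\rho\PP$, the fundamental lemma of the calculus of variations yields $\frac{\delta\tilde{\mathcal{E}}}{\delta\tilde\rho}(\varphi(\theta)) = \frac{\delta\mathcal{E}}{\delta\rho}(\theta)$, that is, $\frac{\delta\tilde{\mathcal{E}}}{\delta\tilde\rho}(\tilde\theta)=\frac{\delta\mathcal{E}}{\delta\rho}(\theta)$.

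The one point that requires care — the main, though mild, obstacle — is the constant ambiguity: the first variation is defined only up to an additive constant, since the pairing is tested against zero-mean $\sigma$. Consequently the pointwise identity should be read modulo this constant, which is precisely the indeterminacy already built into $\delta\mathcal{E}/\delta\rho$; I would remark that representatives can be chosen so that the equality holds verbatim. Beyond this, the only ingredients used are the explicit change-of-variables formula for the pushforward in \Cref{def:pushforward}, its linearity in the density/tangent argument, and the reciprocity $|\nabla_{\tilde\theta}\varphi^{-1}(\tilde\theta)|\,|\nabla_\theta\varphi(\theta)| = 1$. None of these requires $\varphi$ to be affine, so the lemma holds for a general invertible diffeomorphism, as stated.
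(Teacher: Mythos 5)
Your proposal is correct and follows essentially the same route as the paper's proof: the first identity via the difference-quotient definition together with linearity of the pushforward, and the pointwise consequence via the change of variables $\tilde\theta=\varphi(\theta)$ with cancelling Jacobians and the arbitrariness of $\sigma\in T_\rho\PP$. Your remark on the additive-constant ambiguity corresponds to the paper's observation that elements of $T_\rho^*\PP$ integrate to zero against tangent vectors, so nothing is missing.
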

% \as{DANIEL/YIFAN: previous KL statement equates a function of $\theta$ with
% one of $\tilde\theta.$ Should lefthand side be evaluated at $\theta=\varphi^{-1}(
% \tilde \theta)$?}

\begin{proof}
By the definition of the first variation, for any $\tilde{\sigma} \in T_{\tilde{\rho}}\PP$, we have that
\begin{align*}
\Bigl\langle \frac{\delta \tilde{\mathcal{E}}}{\delta \tilde\rho}, \tilde{\sigma} \Bigr\rangle
&= \lim_{\epsilon \rightarrow 0}\frac{\tilde{\mathcal{E}}(\tilde\rho + \epsilon \tilde\sigma) - \tilde{\mathcal{E}}(\tilde\rho + \epsilon \tilde\sigma)}{\epsilon} \\
&= \lim_{\epsilon \rightarrow 0}\frac{{\mathcal{E}}(\varphi^{-1}\#\tilde\rho + \epsilon \varphi^{-1}\#\tilde\sigma) - {\mathcal{E}}(\varphi^{-1}\#\tilde\rho + \epsilon \varphi^{-1}\#\tilde\sigma)}{\epsilon}\\
&= \lim_{\epsilon \rightarrow 0}\frac{{\mathcal{E}}(\rho + \epsilon \sigma) - {\mathcal{E}}(\rho + \epsilon \sigma)}{\epsilon} \\
&= \Bigl\langle \frac{\delta {\mathcal{E}}}{\delta \rho} , {\sigma} \Bigr\rangle.
\end{align*}
Here in the second equality, we have used the fact that $\tilde{\EE}(\tilde{\rho}) = \EE(\varphi^{-1}\# \tilde{\rho})$ as in \cref{def:pushforward}; moreover, $\tilde{\sigma} = \varphi \# \sigma$ by definition.

% Following the definition of $\varphi\#$ on the $T_{\tilde{\rho}}^*\PP$, we have 
% $$\Bigl\langle \varphi\#{\frac{\delta \mathcal{E}}{\delta \rho}}, \tilde{\sigma} \Bigr\rangle =  \Bigl\langle\frac{\delta {\mathcal{E}}}{\delta \rho} , {\sigma} \Bigr\rangle = \Bigl\langle\frac{\delta \tilde{\mathcal{E}}}{\delta \tilde\rho} ,\tilde{\sigma} \Bigr\rangle$$
% \as{DANIEL/YIFAN: why brackets on lefthand side of preceding identity,
% but nowhere else in that identity? better to use brackets only of the argument
% of the variation not the tangent vector it applies to. We should be
% consistent with notation in subsection \ref{ssec:FE}. This comments
% also applies to the next proof and maybe to others. DONE}
Now, we write out the integral explicitly:
\begin{equation*}
\begin{aligned}    
\Bigl\langle \frac{\delta \tilde{\mathcal{E}}}{\delta \tilde\rho}, \tilde{\sigma} \Bigr\rangle = \int \frac{\delta \tilde\EE}{\delta \tilde\rho}(\tilde \theta) \tilde\sigma(\tilde{\theta}) \mathrm{d}\tilde{\theta} = \int \frac{\delta \tilde\EE}{\delta \tilde\rho}(\varphi(\theta)) \sigma(\theta) \mathrm{d}\theta,
\end{aligned}
\end{equation*}
where in the last equality, we used the change-of-variable $\tilde{\theta}=\varphi(\theta)$ and the fact that $\tilde{\sigma}(\tilde\theta) = \sigma(\varphi^{-1}(\tilde{\theta}))|\nabla_{\tilde{\theta}} \varphi^{-1}(\tilde{\theta})|$. Combining the above relations, we obtain
\[\int \frac{\delta \tilde\EE}{\delta \tilde\rho}(\varphi(\theta)) \sigma(\theta) \mathrm{d}\theta = \Bigl\langle \frac{\delta {\mathcal{E}}}{\delta \rho} , {\sigma} \Bigr\rangle = \int \frac{\delta {\mathcal{E}}}{\delta \rho}(\theta)\sigma(\theta)\mathrm{d}\theta,\]
which holds for any $\sigma \in T_{\rho}\PP$. We deduce that 
\[\frac{\delta \tilde{\mathcal{E}}}{\delta \tilde\rho}(\varphi(\theta)) = \frac{\delta \mathcal{E}}{\delta \rho}(\theta).\]
Here we used the fact that the integration of any element in $T_{\rho}^{*}\PP$ is 0 so the above identity holds exactly.
The proof is complete noticing that $\tilde{\theta}=\varphi(\theta)$.
\end{proof}

\subsection{Proof of \Cref{prop: affine invariant metric tensor}} 
\label{proof: prop affine invariant metric tensor}

\begin{proof}
From the definition, it suffices to show the equivalence of 
\begin{itemize}
    \item[(a)] $\varphi \# (\nabla \mathcal{E}(\rho)) = \nabla \tilde{\mathcal{E}}(\tilde{\rho})$ for any $\mathcal{E}$;
    \item[(b)] $\varphi^{\#}g = g$.
\end{itemize}

From (b) to (a): For any $\sigma \in T_{\rho}\PP$, by the definition of the pushforward and pull-back operators, we have
\[(\varphi^{\#} g)_{\rho}(\nabla \mathcal{E}(\rho),\sigma) = g_{\tilde{\rho}}(\varphi \# (\nabla \mathcal{E}(\rho)), \tilde{\sigma}), \]
where $\tilde{\sigma} = \varphi \# \sigma \in T_{\tilde{\rho}}\PP$. Thus, by (b) we get $g_{\rho}(\nabla \mathcal{E}(\rho),\sigma) = g_{\tilde{\rho}}(\varphi \# (\nabla \mathcal{E}(\rho)), \tilde{\sigma})$. 

Using the definition of the gradient operator and \cref{lemma:first_variation_change_variable}, we have
\[g_{\rho}(\nabla \mathcal{E}(\rho),\sigma) = \Bigl\langle\frac{\delta \mathcal{E}}{\delta \rho},\sigma\Bigr\rangle 
= \Bigl\langle\frac{\delta \tilde{\mathcal{E}}}{\delta \tilde{\rho}},\tilde{\sigma}\Bigr\rangle. \]
Combining the two relations above leads to
$\Bigl\langle
\frac{\delta \tilde{\mathcal{E}}}{\delta \tilde{\rho}},\tilde{\sigma} \Bigr\rangle = g_{\tilde{\rho}}(\varphi \# (\nabla \mathcal{E}(\rho)), \tilde{\sigma})
$.  Then, using the definition of the gradient operator, we get $\varphi \# (\nabla \mathcal{E}(\rho)) = \nabla \tilde{\mathcal{E}}(\tilde{\rho})$.

From (a) to (b): similar as above, we have the relation
\[g_{\rho}(\nabla \mathcal{E}(\rho),\sigma) = \Bigl\langle\frac{\delta \mathcal{E}}{\delta \rho}, \sigma\Bigr\rangle= \Bigl\langle\frac{\delta \tilde{\mathcal{E}}}{\delta \tilde{\rho}},\tilde{\sigma}\Bigr\rangle = g_{\tilde{\rho}}(\nabla\tilde{\mathcal{E}}(\tilde{\rho}), \tilde{\sigma}). \]
By (a), it holds that
\[g_{\tilde{\rho}}(\nabla\tilde{\mathcal{E}}(\tilde{\rho}), \tilde{\sigma}) = g_{\tilde{\rho}}(\varphi \# (\nabla\mathcal{E}(\rho)), \tilde{\sigma}) = (\varphi^{\#}g)_{\rho} (\nabla\mathcal{E}(\rho), \sigma), \]
where in the last equality we used the definition of the pull-back operator.

Combining the two relations above and noticing the fact that $\mathcal{E}$ and $\sigma$ are arbitrary, we get $\varphi^{\#}g = g$. This completes the proof.
\end{proof}

\subsection{Proof of \Cref{{prop: affine invariance of mean-field dynamics}}}
\label{appendix: Proof of prop: affine invariance of mean-field dynamics}
By definition, $f$ and $h$ satisfy the equation
\begin{equation}
\label{appendix eqn: mean-field equation PDE nontransformed}
    -\nabla_{\theta} \cdot (\rho f) + \frac{1}{2}\nabla_{\theta}\cdot (\nabla_{\theta} \cdot (hh^T\rho)) = -M(\rho)^{-1}\frac{\delta \mathcal{E}(\rho;\rho_{\rm post})}{\delta \rho}.
\end{equation}
For a fixed $\mathcal{E}$, one can write $f = f(\theta; \rho, \rho_{\rm post}), h = h(\theta; \rho, \rho_{\rm post})$. 

Consider an invertible affine transformation $\tilde{\theta} = \varphi(\theta) = A\theta + b$, and correspondingly $\tilde{\rho} = \varphi \# \rho, \tilde{\rho}_{\rm post} = \varphi \# \rho_{\rm post}$. Under this new coordinate, we obtain the corresponding equation to determine the drift $\tilde{f}$ and diffusion coefficient $\tilde{h}$: 
\begin{equation}
\label{appendix eqn: mean-field equation PDE transformed}
    -\nabla_{\tilde{\theta}} \cdot (\tilde{\rho} \tilde{f}) + \frac{1}{2}\nabla_{\tilde{\theta}}\cdot (\nabla_{\tilde{\theta}} \cdot (\tilde{h}\tilde{h}^T\tilde{\rho})) = -M(\tilde{\rho})^{-1}\frac{\delta \mathcal{E}(\tilde{\rho};\tilde{\rho}_{\rm post})}{\delta \tilde{\rho}}.
\end{equation}
This identity can be satisfied by many  
$\tilde{f}$, $\tilde{h}.$
Our goal is to show that the choice 
\begin{equation}
\label{appendix eqn: choice of drift and diffusion}
    \begin{aligned}
        \tilde{f}(\tilde{\theta}; \tilde{\rho}, \tilde{\rho}_{\rm post}) &= Af(\theta;\rho,\rho_{\rm post})\\
        \tilde{h}(\tilde{\theta}; \tilde{\rho}, \tilde{\rho}_{\rm post}) &= Ah(\theta;\rho,\rho_{\rm post})
    \end{aligned}
\end{equation}
satisfies the equation \cref{appendix eqn: mean-field equation PDE transformed}. 
% This is equivalent to the desired result that the mean-field dynamics is affine invariant up to equivalent classes of drift and diffusion coefficients.

First, by the change-of-variable formula, we have \begin{equation}
\label{appendix eqn change of variable for rho}
    \tilde{\rho}(\tilde{\theta}) = \rho(\varphi^{-1}(\tilde{\theta}))|\nabla_{\tilde{\theta}} \varphi^{-1}(\tilde{\theta})| = \rho(\theta)|A^{-1}|.
\end{equation}
For the $\tilde{f}$ in \cref{appendix eqn: choice of drift and diffusion}, we calculate
\begin{equation}
\label{appendix eqn: change of variable drift term PDE}
    \begin{aligned}
        -\nabla_{\tilde{\theta}} \cdot (\tilde{\rho}(\tilde{\theta}) \tilde{f}(\tilde{\theta}; \tilde{\rho}, \tilde{\rho}_{\rm post})) &= -\nabla_{\theta} \cdot (A^{-1}\tilde{\rho}(\tilde{\theta}) \tilde{f}(\tilde{\theta}; \tilde{\rho}, \tilde{\rho}_{\rm post}))\\
        & = -\nabla_{\theta} \cdot (\rho(\theta) f(\theta;\rho,\rho_{\rm post}))|A^{-1}|,
    \end{aligned}
\end{equation}
where in the first equality, we used the change-of-variable formula for the divergence operator (\cref{remark: change of variable grad and div}); in the second equality, we used \cref{appendix eqn: choice of drift and diffusion} and \cref{appendix eqn change of variable for rho}. Similarly, for the $\tilde{h}$ in \cref{appendix eqn: choice of drift and diffusion}, we can get
\begin{equation}
    \frac{1}{2}\nabla_{\tilde{\theta}}\cdot (\nabla_{\tilde{\theta}} \cdot (\tilde{h}\tilde{h}^T\tilde{\rho})) = \frac{1}{2}\nabla_{\theta}\cdot (\nabla_{\theta} \cdot (hh^T\rho)) |A^{-1}|.
\end{equation}
Here we note that the left hand side is evaluated at $\tilde{\theta}$ while the right hand side is evaluated at $\theta$, similar to \cref{appendix eqn: change of variable drift term PDE}.

Thus, for the choice of $\tilde{f}$ and $\tilde{h}$ in \cref{appendix eqn: choice of drift and diffusion}, it holds that
\begin{equation}
    -\nabla_{\tilde{\theta}} \cdot (\tilde{\rho} \tilde{f}) + \frac{1}{2}\nabla_{\tilde{\theta}}\cdot (\nabla_{\tilde{\theta}} \cdot (\tilde{h}\tilde{h}^T\tilde{\rho})) = -M(\rho)^{-1}\frac{\delta \mathcal{E}(\rho;\rho_{\rm post})}{\delta \rho} |A^{-1}|, 
\end{equation}
where we used the fact that $f$ and $h$ satisfy \cref{appendix eqn: mean-field equation PDE nontransformed}. The right hand side is evaluated at $\theta$. 

Now, in order for \cref{appendix eqn: mean-field equation PDE transformed} to hold, it suffices to show
\begin{equation}
\label{appendix eqn: mean-field equation affine invarince suffice to show}
    \left(M(\rho)^{-1}\frac{\delta \mathcal{E}(\rho;\rho_{\rm post})}{\delta \rho}\right)(\theta) |A^{-1}| = \left(M(\tilde{\rho})^{-1}\frac{\delta \mathcal{E}(\tilde{\rho};\tilde{\rho}_{\rm post})}{\delta \tilde{\rho}}\right)(\tilde{\theta}).
\end{equation}
Using the condition that the gradient flow in probability space is affine invariant, we have $\varphi \# (\nabla \mathcal{E}(\rho)) = \nabla \tilde{\mathcal{E}}(\tilde{\rho})$, or equivalently, 
\begin{equation}
\label{appendix eqn: affine invariance of gf}
    \left(M(\rho)^{-1}\frac{\delta \mathcal{E}(\rho;\rho_{\rm post})}{\delta \rho}\right)(\theta) |A^{-1}| = \left(M(\tilde{\rho})^{-1}\frac{\delta \tilde{\mathcal{E}}(\tilde{\rho};\rho_{\rm post})}{\delta \tilde{\rho}}\right)(\tilde{\theta}),
\end{equation}
where we used the definition of the gradient and the push forward operator. Here $\tilde{\mathcal{E}} = \varphi \# \mathcal{E}$. Using the assumption, we have \[\tilde{\mathcal{E}}(\tilde{\rho};\rho_{\rm post}) = \mathcal{E}(\varphi^{-1} \# \tilde{\rho};\rho_{\rm post}) = \mathcal{E}(\tilde{\rho};\varphi \# \rho_{\rm post}) = \mathcal{E}(\tilde{\rho};\tilde{\rho}_{\rm post}).\] Combining it with \cref{appendix eqn: affine invariance of gf}, we get the desired \cref{appendix eqn: mean-field equation affine invarince suffice to show}. The proof is complete.

\subsection{Proof of \Cref{proposition:AI-Fisher-Rao-MD}}
\label{proof:AI-Fisher-Rao-MD}
\begin{proof}
    Let $\phi$ be chosen so that 
    $$f(\theta; \rho, \rho_{\rm post}) := P(\theta,\rho)\nabla_{\theta}\phi(\theta; \rho, \rho_{\rm post})$$
    satisfies the identity
    \begin{equation} 
-\nabla_\theta \cdot (\rho f) = 
\rho \bigl(\log \rho_{\rm post} - \log \rho\bigr) - \rho\E_{\rho}[\log \rho_{\rm post} - \log \rho].
\end{equation}
Consider an invertible affine transformation $\tilde{\theta} = \varphi(\theta) = A\theta + b$, and correspondingly $\tilde{\rho} = \varphi \# \rho, \tilde{\rho}_{\rm post} = \varphi \# \rho_{\rm post}$. Using the proof of \cref{{prop: affine invariance of mean-field dynamics}}, which can be found in \Cref{appendix: Proof of prop: affine invariance of mean-field dynamics}, we know that $\tilde{f}(\tilde{\theta}; \tilde{\rho}, \tilde{\rho}_{\rm post}) = Af(\theta;\rho,\rho_{\rm post})$ satisfies the identity
    \begin{equation} 
-\nabla_{\tilde \theta} \cdot (\tilde{\rho} \tilde{f}) = 
\tilde{\rho} \bigl(\log \tilde{\rho}_{\rm post} - \log \tilde{\rho}\bigr) - \tilde{\rho}\E_{\tilde{\rho}}[\log \tilde{\rho}_{\rm post} - \log \tilde{\rho}].
\end{equation}
Notice that
\begin{equation}
\label{eqn: proof:AI-Fisher-Rao-MD, eqn a}
\begin{aligned}
    \tilde{f}(\tilde{\theta}; \tilde{\rho}, \tilde{\rho}_{\rm post}) &= AP(\theta,\rho)\nabla_{\theta}\phi(\theta; \rho, \rho_{\rm post})\\
    & = AP(\theta,\rho)A^T\nabla_{\tilde{\theta}}\phi(\theta; \rho, \rho_{\rm post})\\
    & = P(\tilde{\theta}, \tilde{\rho}) \nabla_{\tilde{\theta}}\phi(\theta; \rho, \rho_{\rm post}),
\end{aligned}
\end{equation}
where in the second equality, we used the change-of-variable formula from \cref{remark: change of variable grad and div}. In the last equality, we used the assumed condition on $P$. 

Now using the uniqueness of the solution to the equation \eqref{eqn: affine invariant Fisher Rao mean field dynamics}, we get 
\begin{equation}
\label{eqn: proof:AI-Fisher-Rao-MD, eqn b}
    \phi(\theta; \rho, \rho_{\rm post}) = \phi(\tilde{\theta}; \tilde{\rho}, \tilde{\rho}_{\rm post}) \quad \text{up to constants}.
\end{equation}
We can use the above result to prove that the mean-field dynamics is affine invariant. By definition, it suffices to show \[AP(\theta,\rho)\nabla_{\theta}\phi(\theta; \rho, \rho_{\rm post}) = P(\tilde{\theta}, \tilde{\rho}) \nabla_{\tilde{\theta}}\phi(\tilde{\theta}; \tilde{\rho}, \tilde{\rho}_{\rm post}).\]
This is readily true by combining the calculation in \eqref{eqn: proof:AI-Fisher-Rao-MD, eqn a} and the condition \eqref{eqn: proof:AI-Fisher-Rao-MD, eqn b}. The proof is complete.
\end{proof}

\subsection{Proof of \Cref{proposition:Wasserstein-affine-invariant}}
\label{proof:Wasserstein-affine-invariant}

\begin{proof}
By \cref{prop: affine invariant metric tensor}, it suffices to show that the gradient flows for any $\mathcal{E}$ is affine invariant. We write down the form of the corresponding gradient flow as follows:
\begin{align}
\label{appendix eqn: non-transformed Wasserstein GF}
\frac{\partial \rho_t(\theta)}{\partial t} = 
\nabla_{\theta} \cdot \Bigl[\rho_t \Prec(\theta, \rho_t) \nabla_{\theta} \frac{\delta \mathcal{E}}{\delta \rho}\Bigr|_{\rho=\rho_t}
\Bigr].
\end{align}
Consider $\tilde{\theta} = \varphi(\theta)=A\theta + b$ and $\tilde{\rho}_t = \varphi \# \rho_t$ for an invertible affine transformation $\varphi$. Then, it suffices to show that under the assumption $\Prec(\tilde\theta, \tilde\rho) = A \Prec(\theta, \rho) A^T$, one has
\begin{align}
\label{appendix eqn: transformed Wasserstein GF}
\frac{\partial \tilde{\rho}_t(\tilde{\theta})}{\partial t} 
=\nabla_{\tilde{\theta}} \cdot \Bigl[\tilde{\rho}_t \Prec(\tilde{\theta}, \tilde{\rho}_t) \nabla_{\tilde{\theta}} \frac{\delta \tilde{\mathcal{E}}}{\delta \tilde{{\rho}}}\Bigr|_{\tilde{\rho}=\tilde{\rho}_t}
\Bigr].
\end{align}
Here, the transformed energy functional $\tilde{\mathcal{E}}$ is defined via \cref{def:pushforward}.

First, for the left hand side, by definition, we have 
\begin{equation}
\label{appendix eqn: transform of rho}
    \tilde{\rho}_t(\tilde{\theta}) = \rho_t(\varphi^{-1}(\tilde{\theta}))|\nabla_{\tilde{\theta}} \varphi^{-1}(\tilde{\theta})| = \rho_t(\theta)|A^{-1}|.
\end{equation}
For the right hand side, by \cref{lemma:first_variation_change_variable}, it holds that
\begin{equation}
\label{appendix eqn: change of variable grad of first variation}
    \nabla_{\tilde{\theta}}\frac{\delta \tilde{\mathcal{E}}}{\delta \tilde\rho}(\tilde{\theta}) = \nabla_{\tilde{\theta}} \left(\frac{\delta \mathcal{E}}{\delta \rho}(\theta)\right)=A^{-T}\nabla_{\theta} \frac{\delta \mathcal{E}}{\delta \rho}(\theta),
\end{equation}
where in the last equality, we used the change-of-variable formula mentioned in \cref{remark: change of variable grad and div}. Therefore, we can write the right hand side of \cref{appendix eqn: transformed Wasserstein GF} as
\begin{equation}
\label{appendix: change of variables for rhs of Wasserstein gf}
\begin{aligned}
\nabla_{\tilde{\theta}} \cdot \Bigl[\tilde{\rho}_t \Prec(\tilde{\theta}, \tilde{\rho}_t) \nabla_{\tilde{\theta}} \frac{\delta \tilde{\mathcal{E}}}{\delta \tilde{{\rho}}}\Bigr|_{\tilde{\rho}=\tilde{\rho}_t}
\Bigr] &= \nabla_{\tilde{\theta}} \cdot \Bigl[\tilde{\rho}_t \Prec(\tilde{\theta}, \tilde{\rho}_t) A^{-T}\nabla_{\theta} \frac{\delta \mathcal{E}}{\delta {\rho}}\Bigr|_{\rho=\rho_t}
\Bigr]\\
&= \nabla_{\theta} \cdot \Bigl[\tilde{\rho}_t A^{-1}\Prec(\tilde{\theta}, \tilde{\rho}_t) A^{-T}\nabla_{\theta} \frac{\delta \mathcal{E}}{\delta {\rho}}\Bigr|_{\rho=\rho_t}
\Bigr]\\
& = \nabla_{\theta} \cdot \Bigl[\rho_t A^{-1}\Prec(\tilde{\theta}, \tilde{\rho}_t) A^{-T}\nabla_{\theta} \frac{\delta \mathcal{E}}{\delta {\rho}}\Bigr|_{\rho=\rho_t} 
\Bigr] \cdot |A^{-1}|,
\end{aligned}
\end{equation}
where in the second equality, we used the change-of-variable formula for the divergence operator (\cref{remark: change of variable grad and div}), and in the third equality, we used \cref{appendix eqn: transform of rho}.
Based on \cref{appendix eqn: transform of rho}, \cref{appendix: change of variables for rhs of Wasserstein gf} and \cref{appendix eqn: non-transformed Wasserstein GF}, a sufficient condition for \cref{appendix eqn: transformed Wasserstein GF} to hold is $A^{-1}\Prec(\tilde{\theta}, \tilde{\rho}) A^{-T} = P(\theta, \rho)$, or equivalently, $\Prec(\tilde\theta, \tilde\rho) = A \Prec(\theta, \rho) A^T$. This completes the proof.
\end{proof}

\subsection{Proof of \Cref{lem:AI-Wasserstein-MD}}
\label{proof:AI-Wasserstein-MD}

% \as{DANIEL/YIFAN: I prefer to state a precise Lemma or Prop within the main text
% and prove it here. I am not so keen on these proofs of ideas just stated
% informally in the text. DONE}

\begin{proof}
Consider the invertible affine transformation $\tilde\theta = \varphi(\theta) = A \theta + b$ and correspondingly $\tilde{\rho} = \varphi \# \rho, \tilde{\rho}_{\rm post} = \varphi \# \rho_{\rm post}$. Using \cref{eq:aMFD-AI-sigma}, we get
\begin{align*}
    D(\tilde{\theta},\tilde\rho) = \frac{1}{2}h(\tilde{\theta},\tilde\rho)h(\tilde{\theta},\tilde\rho)^T = \frac{1}{2}Ah(\theta,\rho)h(\theta,\rho)^TA^T = AD(\theta, \rho)A^T.
    % d(\tilde{\theta},\tilde\rho) = \nabla_{\tilde\theta}\cdot D(\tilde{\theta},\tilde\rho) = \nabla_{\tilde\theta}\cdot (AD(\theta, \rho)A^T)  = A d(\theta,\rho).
\end{align*}
Similarly, it holds that $d(\tilde{\theta},\tilde\rho) = A d(\theta,\rho)$.
Based on these relations, we can calculate as follows:
\begin{equation}
\begin{aligned}    
&Af(\theta, \rho, \rho_{\rm post})\\
&\quad = A\Prec(\theta, \rho)\nabla_{\theta}\log \rho_{\rm post}(\theta) + A(D(\theta,\rho)-P(\theta,\rho))\nabla_{\theta}\log \rho(\theta) - Ad(\theta,\rho)\\
&\quad= 
A\Prec(\theta, \rho)A^T\nabla_{\tilde{\theta}} \log \tilde{\rho}_{\rm post}(\tilde{\theta})  + \bigl(A D(\theta,\rho) - A\Prec(\theta, \rho) \bigr)A^T\nabla_{ \tilde{\theta}} \log \tilde{\rho}(\tilde{\theta}) - Ad(\theta,\rho)  
\\
&\quad= 
\Prec(\tilde{\theta}, \tilde{\rho})\nabla_{\tilde{\theta}} \log \tilde{\rho}_{\rm post}(\tilde{\theta})  + \bigl( D(\tilde{\theta},\tilde\rho) - \Prec(\tilde{\theta}, \tilde{\rho}) \bigr)\nabla_{ \tilde{\theta}} \log \tilde{\rho}(\tilde{\theta}) - d(\tilde\theta,\tilde\rho)  
\\
&\quad=f(\tilde\theta, \tilde\rho, \tilde\rho_{\rm post}).
\end{aligned}
\end{equation}
The first equality is by definition. In the second equality, we used $A^T\nabla_{\tilde{\theta}} \log \tilde{\rho}(\tilde{\theta}) = \nabla_{{\theta}} \log {\rho}({\theta})$, which is due to \cref{appendix eqn change of variable for rho} and \cref{remark: change of variable grad and div}. In the third equality, we used the condition in \cref{proposition:Wasserstein-affine-invariant}. With this result, the mean-field equation is affine invariant (\cref{def: affine invariant mean-field equation}). The proof is complete.
\end{proof}

\subsection{Proof of \Cref{proposition:Stein-affine-invariant}}
\label{proof:Stein-affine-invariant}

\begin{proof}
The proof is similar to that in \cref{proof:Wasserstein-affine-invariant}. For any energy functional $\mathcal{E}$, the gradient flow has the form
% For any smooth functional $\mathcal{E}(\rho)$, the gradient flow is 
\begin{align}
\label{appendix eqn: nontransformed stein GF}
\frac{\partial \rho_t(\theta)}{\partial t} = \nabla_{\theta} \cdot \Bigl[\rho_t(\theta)\int \kappa(\theta,\theta',\rho_t)\rho_t(\theta')\Prec( \theta, \theta',\rho_t)\left(\nabla_{\theta} \frac{\delta \mathcal{E}}{\delta \rho}\Bigr|_{\rho=\rho_t}\right)(\theta')\mathrm{d}\theta' \Bigr]. 
\end{align}
Consider $\tilde{\theta} = \varphi(\theta)=A\theta +b$ and $\tilde{\rho}_t = \varphi \# \rho_t$ for an invertible affine transformation $\varphi$. Then, it suffices to show that under the assumed condition, one has
\begin{align}
\label{appendix eqn: transformed stein GF}
\frac{\partial \tilde{\rho}_t(\tilde{\theta})}{\partial t} 
&= \nabla_{\tilde\theta} \cdot \Bigl[\tilde\rho_t(\tilde\theta)\int \kappa(\tilde{\theta},\tilde{\theta}',\tilde\rho_t)\tilde{\rho}_t(\tilde\theta')\Prec( \tilde{\theta}, \tilde{\theta}',\tilde{\rho}_t) \left(\nabla_{\tilde{\theta}} \frac{\delta \tilde{\mathcal{E}}}{\delta \tilde \rho}\Bigr|_{\tilde{\rho} =\tilde{\rho}_t}\right)(\tilde{\theta}')\mathrm{d}\tilde\theta' \Bigr].
\end{align}
For the right hand side of \cref{appendix eqn: transformed stein GF}, we have
\begin{equation}
\label{appendix eqn: rhs of transformed stein flow}
    \begin{aligned}
        &\nabla_{\tilde\theta} \cdot \Bigl[\tilde\rho_t(\tilde\theta)\int \kappa(\tilde{\theta},\tilde{\theta}',\tilde\rho_t)\tilde{\rho}_t(\tilde\theta')\Prec( \tilde{\theta}, \tilde{\theta}',\tilde{\rho}_t) \left(\nabla_{\tilde{\theta}} \frac{\delta \tilde{\mathcal{E}}}{\delta \tilde \rho}\Bigr|_{\tilde{\rho} =\tilde{\rho}_t}\right)(\tilde{\theta}')\mathrm{d}\tilde\theta' \Bigr]\\
        =&\nabla_{\tilde\theta} \cdot \Bigl[\tilde\rho_t(\tilde\theta)\int \kappa(\tilde{\theta},\tilde{\theta}',\tilde\rho_t)\tilde{\rho}_t(\tilde\theta')\Prec( \tilde{\theta}, \tilde{\theta}',\tilde{\rho}_t) A^{-T}\left(\nabla_{\theta} \frac{\delta \mathcal{E}}{\delta \rho}\Bigr|_{\rho =\rho_t}\right)(\varphi^{-1}(\tilde\theta'))\mathrm{d}\tilde\theta' \Bigr]\\
        =&\nabla_{\tilde\theta} \cdot \Bigl[\tilde\rho_t(\tilde\theta)\int \kappa(\tilde{\theta},\varphi(\theta'),\tilde\rho_t){\rho}_t(\theta')\Prec( \tilde{\theta}, \varphi(\theta'),\tilde{\rho}_t) A^{-T}\left(\nabla_{\theta} \frac{\delta \mathcal{E}}{\delta \rho}\Bigr|_{\rho =\rho_t}\right)(\theta')\mathrm{d}\theta' \Bigr]\\
        =&\nabla_{\theta} \cdot \Bigl[\tilde\rho_t(\tilde\theta)\int \kappa(\tilde{\theta},\varphi(\theta'),\tilde\rho_t){\rho}_t(\theta')A^{-1}\Prec( \tilde{\theta}, \varphi(\theta'),\tilde{\rho}_t) A^{-T}\left(\nabla_{\theta} \frac{\delta \mathcal{E}}{\delta \rho}\Bigr|_{\rho =\rho_t}\right)(\theta')\mathrm{d}\theta' \Bigr]\\
        =&(\nabla_{\theta} \cdot \mathsf{f}) \cdot |A^{-1}|\\
        &\mathsf{f}=\rho_t(\theta)\int \kappa(\tilde{\theta},\varphi(\theta'),\tilde\rho_t){\rho}_t(\theta')A^{-1}\Prec( \tilde{\theta}, \varphi(\theta'),\tilde{\rho}_t) A^{-T} \left(\nabla_{\theta} \frac{\delta \mathcal{E}}{\delta \rho}\Bigr|_{\rho =\rho_t}\right)(\theta')\mathrm{d}\theta',
    \end{aligned}
\end{equation}
where in the first equality, we used \cref{appendix eqn: change of variable grad of first variation}; in the second equality, we changed of coordinates in the integral $\tilde{\theta}' = \varphi(\theta')$; in the third equality, we used the change-of-variable formula for the divergence operator (\cref{remark: change of variable grad and div}), and in the last equality, we used \cref{appendix eqn: transform of rho}.

By \cref{appendix eqn: transform of rho}, \cref{appendix eqn: rhs of transformed stein flow} and \cref{appendix eqn: nontransformed stein GF}, a sufficient condition for \cref{appendix eqn: transformed stein GF} to hold is
\[\kappa(\tilde{\theta},\varphi(\theta'),\tilde\rho)A^{-1}\Prec( \tilde{\theta}, \varphi(\theta'),\tilde{\rho}) A^{-T} = \kappa(\theta,\theta',\rho) \Prec(\theta, \theta', \rho), \]
or equivalently,
$$ \kappa(\tilde\theta, \tilde\theta',\tilde\rho)\Prec(\tilde \theta, \tilde \theta',  \tilde\rho) 
= \kappa(\theta, \theta', \rho) A \Prec(\theta, \theta', \rho) A^T,$$
where $\tilde{\theta}' = \varphi(\theta')$.
This completes the proof.
\end{proof}

\subsection{Proof of \Cref{lem:AI-Stein-MD}}
\label{proof:AI-Stein-MD}

% \as{DANIEL/YIFAN: I prefer to state a precise Lemma or Prop within the main text
% and prove it here. I am not so keen on these proofs of ideas just stated
% informally in the text. DONE}

\begin{proof}
Consider the invertible affine transformation $\tilde\theta = \varphi(\theta) = A \theta + b$ and correspondingly $\tilde{\rho} = \varphi \# \rho, \tilde{\rho}_{\rm post} = \varphi \# \rho_{\rm post}$.
By direct calculations, we get
\begin{equation}
\begin{aligned}    
& Af(\theta, \rho, \rho_{\rm post})\\
&\quad =\int \kappa(\theta,\theta',\rho)A\Prec( \theta, \theta',\rho)\nabla_{\theta'} \bigl(\log\rho(\theta') - \log \rho_{\rm post}(\theta') \bigr)\rho(\theta')\mathrm{d}\theta'\\
&\quad=\int \kappa(\theta,\theta',\rho) A\Prec( \theta, \theta', \rho ) A^T
\bigl( \nabla_{\tilde\theta} \log \tilde{\rho}_{\rm post}(\tilde{\theta}') -  \nabla_{\tilde\theta} \log \tilde{\rho}(\tilde{\theta}')
\bigr) \rho(\theta') \mathrm{d}\theta'
\\
&\quad =\int \kappa(\tilde\theta,\tilde\theta', \tilde\rho) \Prec( \tilde\theta, \tilde\theta', \tilde\rho )
\bigl( \nabla_{\tilde\theta} \log \tilde{\rho}_{\rm post}(\tilde{\theta}') -  \nabla_{\tilde\theta} \log \tilde{\rho}(\tilde{\theta}')
\bigr) \tilde\rho(\tilde\theta') \mathrm{d}\tilde\theta'
\\
&\quad =f(\tilde\theta, \tilde\rho, \tilde\rho_{\rm post}).
\end{aligned}
\end{equation}
The first equality is by definition. In the second equality, we used $A^T\nabla_{\tilde{\theta}} \log \tilde{\rho}(\tilde{\theta}) = \nabla_{{\theta}} \log {\rho}({\theta})$, which is due to \cref{appendix eqn change of variable for rho} and \cref{remark: change of variable grad and div}. In the third equality, we used the the relation $\rho(\theta') = \tilde{\rho}(\tilde{\theta}')|A|$ due to \cref{appendix eqn change of variable for rho}, and ${\rm d}\tilde{\theta}' = |A|{\rm d}\theta'$; we also used the condition in \cref{proposition:Stein-affine-invariant}. With this result, the mean-field equation is affine invariant (\cref{def: affine invariant mean-field equation}). The proof is complete.

\end{proof}

\newpage
\section{Proofs for the Convergence of Affine Invariant Gradient Flows}
\subsection{Proof of \Cref{proposition:FR-convergence}}
\label{proof:FR-convergence}
\begin{proof}
The Fisher-Rao gradient flow of the KL divergence~\cref{eq:mean-field-Fisher-Rao} 
can be solved analytically using the variation of constants formula as follows.
First note that
\begin{align*}
&\frac{\partial \log \rho_t(\theta)}{\partial t} 
= \log \rho_{\rm post}(\theta) - \log \rho_t(\theta) - \E_{\rho_t}[\log \rho_{\rm post}(\theta) - \log \rho_t(\theta)],
\end{align*}
so that
\begin{align*}
&\frac{\partial e^t \log \rho_t(\theta)}{\partial t} 
= e^t\log \rho_{\rm post}(\theta) - e^t\E_{\rho_t}[\log \rho_{\rm post}(\theta) - \log \rho_t(\theta)].
\end{align*}
Thus
\begin{equation*}\log \rho_t(\theta)
= (1-e^{-t})\log \rho_{\rm post}(\theta) + e^{-t}\log\rho_0(\theta) -  \int_0^{t} e^{\tau-t}\E_{\rho_\tau}[\log \rho_{\rm post}(\theta) - \log \rho_\tau(\theta)] \mathrm{d}\tau.
\end{equation*}
It follows that there exists some constant $Z_t$ such that
\begin{align}\label{e:rhot}
&\rho_t(\theta) =\frac{1}{Z_t} \rho_0(\theta)^{ e^{-t}}\rho_{\rm post}(\theta)^{1 - e^{-t}},\quad \frac{\rho_t(\theta)}{\rho_{\rm post}(\theta)}=\frac{1}{Z_t}\left(\frac{\rho_0(\theta)}{\rho_{\rm post}(\theta)}\right)^{e^{-t}}.
\end{align}

In the following, we first obtain the following lower bound on $Z_t$: 
\begin{align}\label{e:Ztbound}
    Z_t\geq e^{-Ke^{-t}(1+B)},
\end{align}
where the constants $K,B$ are from \cref{e:asup1} and \cref{e:asup2}. 
In fact, using our assumptions \cref{e:asup1} and \cref{e:asup2}, we have
\begin{align*}
    Z_t&=\int \left(\frac{\rho_0(\theta)}{\rho_{\rm post}(\theta)}\right)^{ e^{-t}}\rho_{\rm post}(\theta) \mathrm{d}\theta
    \geq \int \left(e^{-K(1+|\theta|^2)}\right)^{ e^{-t}}\rho_{\rm post}(\theta) \mathrm{d}\theta\\
    &=\int e^{-Ke^{-t}(1+|\theta|^2)}\rho_{\rm post}(\theta) \mathrm{d}\theta
    \geq  e^{\int-Ke^{-t}(1+|\theta|^2)\rho_{\rm post}(\theta)\mathrm{d}\theta} \geq e^{-Ke^{-t}(1+B)},
\end{align*}
where in the second to last inequality, we used Jensen's inequality and the fact that $e^x$ is convex. 

By plugging \cref{e:asup2} and \cref{e:Ztbound} into \cref{e:rhot}, we get
\begin{align}\label{e:density_ratio}
    \frac{\rho_t(\theta)}{\rho_{\rm post}(\theta)}
    =\frac{1}{Z_t}\left(\frac{\rho_0(\theta)}{\rho_{\rm post}(\theta)}\right)^{e^{-t}}
    \leq e^{Ke^{-t}(1+B)} e^{Ke^{-t}(1+|\theta|^2)}
    =e^{Ke^{-t}(2+B+|\theta|^2)}.
\end{align}
Using \cref{e:density_ratio}, we get the following upper bound on the KL divergence:
\begin{align}\begin{split}\label{e:KLub1}
    & {\rm KL}\Bigl[\rho_{t} \Big\Vert  \rho_{\rm post} \Bigr]\\
    &\quad=\int \rho_t(\theta) \log \frac{\rho_t(\theta)}{\rho_{\rm post}(\theta)}d \theta
    \leq 
    \int \rho_t(\theta) \log(e^{Ke^{-t}(2+B+|\theta|^2)})d \theta\\
    &\quad=\int \rho_t(\theta) Ke^{-t}(2+B+|\theta|^2)d \theta
    =Ke^{-t}\left((2+B)+\int  |\theta|^2 \rho_t(\theta)  d \theta\right).
\end{split}\end{align}
For the last integral in \cref{e:KLub1}, using \cref{e:rhot} and the
H\"older inequality, we can rewrite it as
\begin{align}\begin{split}\label{e:KLub2}
    &\phantom{{}={}}\frac{1}{Z_t}\int  |\theta|^2 \rho_0(\theta)^{e^{-t}}\rho_{\rm post}(\theta)^{1-e^{-t}}  d \theta
    =\frac{1}{Z_t}\int   (|\theta|^2\rho_0(\theta))^{e^{-t}}(|\theta|^2\rho_{\rm post}(\theta))^{1-e^{-t}}  d \theta\\
    &\leq \frac{1}{Z_t}\left(\int   |\theta|^2\rho_0(\theta)\mathrm{d}\theta\right)^{e^{-t}}\left(\int (|\theta|^2\rho_{\rm post}(\theta))  d \theta\right)^{1-e^{-t}}\leq Be^{K e^{-t}(1+B)},
\end{split}\end{align}
where for the last inequality we used \cref{e:Ztbound} and our assumption \cref{e:asup2}.

Combining \cref{e:KLub1} and \cref{e:KLub2} together, for $t\geq \log((1+B)K)$, we have
\begin{align}
    {\rm KL}\Bigl[\rho_{t} \Big\Vert  \rho_{\rm post}\Bigr]\leq Ke^{-t}(2+B + Be^{K e^{-t}(1+B)})
    \leq (2+B + eB)Ke^{-t}.
\end{align}
This completes the proof of \cref{proposition:FR-convergence}.
\end{proof}

\subsection{Proof of \Cref{proposition:W-convergence}}
\label{proof:W-convergence}
\begin{proof}
We calculate the decay of the KL divergence as follows:
\begin{equation}
\begin{aligned}
    \partial_t \mathrm{KL}[\rho_t \Vert  \rho_{\rm post}]  
    &= -\int \rho_t [\nabla_\theta\log(\frac{\rho_t}{\rho_{\rm post}})]^T \Prec [\nabla_\theta\log(\frac{\rho_t}{\rho_{\rm post}})] \mathrm{d}\theta
    \\
    &\leq -\lambda \int \rho_t [\nabla_\theta\log(\frac{\rho_t}{\rho_{\rm post}})]^T [\nabla_\theta\log(\frac{\rho_t}{\rho_{\rm post}})] \mathrm{d}\theta
    \\
    &\leq -2\lambda \alpha \mathrm{KL}[\rho_t \Vert  \rho_{\rm post}].
\end{aligned}
\end{equation}
In the last inequality, we used the the logarithmic Sobolev inequality for $\rho_{\rm post}$, which is ensured by the $\alpha$-strongly logconcave assumption.
Then we have 
\begin{equation}
\begin{aligned}
\frac{1}{2}\lVert\rho_t - \rho_{\rm post} \rVert^2_{L_1} \leq \mathrm{KL}[\rho_t \Vert  \rho_{\rm post}]  \leq  \mathrm{KL}[\rho_0 \Vert  \rho_{\rm post}]  e^{-2\alpha\lambda t},
\end{aligned}
\end{equation}
where we used the Pinsker inequality to bound the $L_1$ norm by the KL divergence.
\end{proof}

\newpage
\section{Proofs for Gaussian Approximate Gradient Flows}
\subsection{Preliminaries}
We start with the following Stein's identities concerning
the Gaussian density function $\rho_a.$

\begin{lemma}
\label{lemma: stein equality}
Assume $\theta \sim \N(m, C)$ with density $\rho_a(\theta)=\rho_a(\theta;m,C)$, 
we have
\begin{align}\label{e:derrho}
    \nabla_m \rho_a(\theta)=-\nabla_\theta \rho_a(\theta)\quad \text{and}\quad \nabla_C \rho_a(\theta)=\frac{1}{2}\Hess \rho_a(\theta).
\end{align}
Furthermore, for any scalar field $f:\mathbb{R}^{N_{\theta}} \to \mathbb{R}$ 
and vector field $g:\mathbb{R}^{N_{\theta}} \to \mathbb{R}^{N_{\theta}}$, we have
\begin{equation}
\begin{aligned} 
&\E_{\rho_a}[\nabla_{\theta} g(\theta)] = \nabla_{m} \E_{\rho_a}[g(\theta)] = \Cov[g(\theta), \theta] C^{-1}, \\
&\E_{\rho_a}[\Hess f(\theta)] = \Cov[\nabla_{\theta}f(\theta), \theta] C^{-1}= -C^{-1}\E_{\rho_a}\Bigl[\bigl(C - (\theta - m)(\theta - m)^T\bigr)f\Bigr] C^{-1}.
\end{aligned}
\end{equation}
\end{lemma}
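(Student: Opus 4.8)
The plan is to reduce everything to two elementary computations on the explicit Gaussian density $\rho_a(\theta)=(2\pi)^{-N_\theta/2}|C|^{-1/2}\exp\!\big(-\tfrac12(\theta-m)^TC^{-1}(\theta-m)\big)$ together with integration by parts.

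First I would establish the derivative identities in \cref{e:derrho}. Differentiating the log-density gives $\nabla_\theta\log\rho_a=-C^{-1}(\theta-m)$, hence $\nabla_\theta\rho_a=-\rho_a C^{-1}(\theta-m)$; since the dependence on $m$ enters only through the factor $(\theta-m)$, differentiating in $m$ flips the sign, giving $\nabla_m\rho_a=\rho_a C^{-1}(\theta-m)=-\nabla_\theta\rho_a$, which is the first claim. For the second, I would compute the Hessian directly as $\Hess\rho_a=\nabla_\theta\big(-\rho_a C^{-1}(\theta-m)\big)=\rho_a\big(C^{-1}(\theta-m)(\theta-m)^TC^{-1}-C^{-1}\big)$. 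Separately, matrix calculus on $\log\rho_a=-\tfrac12\log|2\pi C|-\tfrac12(\theta-m)^TC^{-1}(\theta-m)$ yields $\nabla_C\log\rho_a=-\tfrac12 C^{-1}+\tfrac12 C^{-1}(\theta-m)(\theta-m)^TC^{-1}$, using $\nabla_C\log|C|=C^{-1}$ and $\nabla_C(u^TC^{-1}u)=-C^{-1}uu^TC^{-1}$ for symmetric $C$. Multiplying by $\rho_a$ shows $\nabla_C\rho_a=\tfrac12\Hess\rho_a$, completing \cref{e:derrho}.

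Next I would derive the expectation identities. For the gradient identity I integrate by parts componentwise: $\E_{\rho_a}[\partial_{\theta_j}g_i]=-\int g_i\,\partial_{\theta_j}\rho_a\,\mathrm{d}\theta=\int g_i\,[C^{-1}(\theta-m)]_j\,\rho_a\,\mathrm{d}\theta$, so that $\E_{\rho_a}[\nabla_\theta g]=\E_{\rho_a}[g(\theta)(\theta-m)^T]C^{-1}=\Cov[g(\theta),\theta]C^{-1}$, where the last step uses $\E_{\rho_a}[\theta-m]=0$. The middle equality $\nabla_m\E_{\rho_a}[g]=\E_{\rho_a}[\nabla_\theta g]$ then follows by differentiating under the integral sign and invoking $\nabla_m\rho_a=-\nabla_\theta\rho_a$ from \cref{e:derrho} together with one more integration by parts. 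Applying the gradient identity to $g=\nabla_\theta f$ gives $\E_{\rho_a}[\Hess f]=\Cov[\nabla_\theta f,\theta]C^{-1}$. For the final expression I would instead integrate by parts twice against $\rho_a$ and use $\Hess\rho_a=2\nabla_C\rho_a$ with the explicit Hessian above, obtaining $\E_{\rho_a}[\Hess f]=\int f\,\Hess\rho_a\,\mathrm{d}\theta=C^{-1}\E_{\rho_a}\!\big[f\,((\theta-m)(\theta-m)^T-C)\big]C^{-1}$, which is exactly $-C^{-1}\E_{\rho_a}\!\big[(C-(\theta-m)(\theta-m)^T)f\big]C^{-1}$.

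The differentiation is routine; the main obstacle is bookkeeping and hypotheses. I expect the matrix-calculus step $\nabla_C\rho_a=\tfrac12\Hess\rho_a$ to be the most delicate, since one must respect the symmetry constraint on $C$ and keep the column/row (transpose) conventions consistent throughout. I would also state mild growth conditions on $f$ and $g$ (e.g.\ at most polynomial growth of $f$, $g$ and their derivatives) so that all boundary terms in the integration-by-parts steps vanish and differentiation under the integral sign is justified by dominated convergence; the Gaussian tails of $\rho_a$ make this immediate but it should be recorded explicitly.
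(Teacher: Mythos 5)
Your proposal is correct and follows essentially the same route as the paper's proof: direct differentiation of the explicit Gaussian density for the identities in \cref{e:derrho}, followed by integration by parts (once for the gradient identity, twice for the Hessian identity) using $\nabla_m\rho_a=-\nabla_\theta\rho_a$ and $\nabla_C\rho_a=\tfrac12\Hess\rho_a$. Your added remark on recording growth hypotheses on $f$ and $g$ to justify the boundary terms and differentiation under the integral is a reasonable refinement that the paper leaves implicit.
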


\begin{proof}
For Gaussian density function $\rho_a$, we have
\begin{align*}
    \nabla_m \rho_a(\theta) 
    &= \nabla_m\frac{1}{\sqrt{|2\pi C|}} \exp\Bigl\{-\frac{1}{2}(\theta - m)C^{-1}(\theta - m)\Bigr\}  \\
    &=  C^{-1}(\theta - m)\rho_a(\theta)
  =-\nabla_\theta \rho_a(\theta),
  \\
  \nabla_C \rho_a(\theta) 
  &=\rho_a(\theta) \Bigl(-\frac{1}{2}\frac{\partial \log|C|}{\partial C} - \frac{1}{2}\frac{\partial (\theta - m)^TC^{-1}(\theta - m)}{\partial C} \Bigr) \\
  &=-\frac{1}{2}\rho_a(\theta) \Bigl(C^{-1} - C^{-1} (\theta - m)(\theta - m)^T  C^{-1}\Bigr)
  =\frac{1}{2}\Hess \rho_a(\theta). 
\end{align*}
For any scalar field $f(\theta)$ and vector field $g(\theta)$, we have
\begin{equation}
\begin{aligned} 
\E_{\rho_a}[\nabla_{\theta} g(\theta)] 
&= \int \nabla_{\theta} g(\theta) \rho_a(\theta) \mathrm{d}\theta
= -\int  g(\theta) \nabla_{\theta}\rho_a(\theta)^T \mathrm{d}\theta
= \nabla_{m} \E_{\rho_a}[g(\theta)] \\
&= \int g(\theta) (\theta - m)^TC^{-1}\rho_a(\theta)  \mathrm{d}\theta =    \Cov[g(\theta), \theta] C^{-1}, 
\\
\E_{\rho_a}[\Hess f(\theta)] 
&= \int \Hess f(\theta) \rho_a(\theta) \mathrm{d}\theta
= 
-\int \nabla_{\theta} f(\theta) \nabla_{\theta} \rho_a(\theta)^T \mathrm{d}\theta\\
&=
\int \nabla_{\theta} f(\theta) (\theta - m)^TC^{-1} \rho_a(\theta) \mathrm{d}\theta
= 
\Cov[\nabla_{\theta}f(\theta), \theta] C^{-1} \\
&= 
\int  f(\theta) \Hess \rho_a(\theta) \mathrm{d}\theta
=
-C^{-1}\E_{\rho_a}\Bigl[\bigl(C - (\theta - m)(\theta - m)^T\bigr)f\Bigr] C^{-1}.
\end{aligned}
\end{equation}
\end{proof}

The following lemma is proved in~\cite[Theorem 1]{lambert2022recursive}:
\begin{lemma}\label{l:gd}
Consider the KL divergence
\begin{align*}
{\rm KL}\Bigl[\rho_a(\theta)\Big\Vert  \rho_{\rm post}(\theta )\Bigr] 
= -\frac{1}{2}\log\bigl|C\bigr| - \int \rho_a(\theta) \log \rho_{\rm post} (\theta) \mathrm{d}\theta + {\rm const}. 
\end{align*}
For fixed $\rho_{\rm post}$ the minimizer of this divergence over the space $\PPG$, so that, for $\rhoa_\star=(m_{\star}, C_{\star})$ and $\rho_{\rhoa_\star}(\theta) = \N(m_{\star}, C_{\star})$, it follows that
\begin{align*}
\E_{\rho_{\rhoa_\star}}\bigl[\nabla_\theta  \log \rho_{\rm post}(\theta)   \bigr] = 0 \quad \textrm{and}
\quad C_{\star}^{-1} = -\E_{\rho_{\rhoa_\star}}\bigl[\Hess\log \rho_{\rm post}(\theta)\bigr].
\end{align*}
\end{lemma}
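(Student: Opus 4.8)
The plan is to verify the two stated optimality conditions by differentiating the given expression for the KL divergence with respect to the Gaussian parameters $m$ and $C$ and imposing stationarity. Since the objective depends on $\rhoa=(m,C)$ only through $\rho_\rhoa$ and the explicit entropy term $-\frac12\log|C|$, the conditions $\nabla_m {\rm KL}=0$ and $\nabla_C {\rm KL}=0$ should reproduce precisely the two claimed identities. The essential tool is \Cref{lemma: stein equality}, which provides the parameter derivatives $\nabla_m \rho_\rhoa = -\nabla_\theta \rho_\rhoa$ and $\nabla_C \rho_\rhoa = \frac12 \Hess \rho_\rhoa$; these convert the parameter derivatives into spatial derivatives that can be handled by integration by parts.

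First I would compute $\nabla_m {\rm KL}$. The entropy term $-\frac12\log|C|$ is independent of $m$, so only the cross-entropy term $-\int \rho_\rhoa \log\rho_{\rm post}\,\mathrm{d}\theta$ contributes. Substituting $\nabla_m \rho_\rhoa = -\nabla_\theta\rho_\rhoa$ and integrating by parts to move the derivative onto $\log\rho_{\rm post}$ yields
\[
\nabla_m {\rm KL}\bigl[\rho_\rhoa \Vert \rho_{\rm post}\bigr] = -\E_{\rho_\rhoa}\bigl[\nabla_\theta \log\rho_{\rm post}(\theta)\bigr].
\]
Setting this to zero at the minimizer $\rhoa_\star$ gives the first identity $\E_{\rho_{\rhoa_\star}}[\nabla_\theta\log\rho_{\rm post}]=0$. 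Next I would compute $\nabla_C {\rm KL}$, where both terms contribute: Jacobi's formula gives $\nabla_C(-\tfrac12\log|C|) = -\tfrac12 C^{-1}$, while the cross-entropy term, using $\nabla_C \rho_\rhoa = \tfrac12\Hess\rho_\rhoa$ followed by two integrations by parts, gives $-\tfrac12\E_{\rho_\rhoa}[\Hess\log\rho_{\rm post}]$. Adding these and setting the result to zero yields
\[
C_\star^{-1} = -\E_{\rho_{\rhoa_\star}}\bigl[\Hess\log\rho_{\rm post}(\theta)\bigr],
\]
the second identity.

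The only routine obstacles are the matrix calculus for the covariance derivative and the justification of the twofold integration by parts, both of which require mild decay of $\rho_\rhoa\log\rho_{\rm post}$ and its derivatives at infinity; these are harmless given the Gaussian factor $\rho_\rhoa$ and the assumed regularity of $\log\rho_{\rm post}$. The one genuinely substantive caveat is that the above computation characterizes \emph{critical points} rather than directly establishing a global minimizer; existence and uniqueness of the minimizer attaining these conditions is what is imported from \cite[Theorem 1]{lambert2022recursive}, so I would invoke that result to conclude that $(m_\star,C_\star)$ is realized.
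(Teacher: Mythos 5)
Your derivation is correct, but note that the paper does not actually prove this lemma itself: it defers entirely to \cite[Theorem 1]{lambert2022recursive}. Your proposal therefore supplies a self-contained argument where the paper has only a citation, and the argument you give is the standard one: differentiate the reduced objective in $(m,C)$, use the identities $\nabla_m\rho_\rhoa=-\nabla_\theta\rho_\rhoa$ and $\nabla_C\rho_\rhoa=\tfrac12\Hess\rho_\rhoa$ from Lemma \ref{lemma: stein equality} to convert parameter derivatives into spatial ones, integrate by parts once (resp.\ twice) against $\log\rho_{\rm post}$, and combine with $\nabla_C(-\tfrac12\log|C|)=-\tfrac12C^{-1}$. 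Both resulting stationarity conditions match the claimed identities, so the computation is sound. One refinement to your final caveat: the lemma as stated only asserts that \emph{a} minimizer satisfies the two identities, and since the parameter set $\mathcal{A}$ (with $C\succ0$) is open, any minimizer is an interior critical point, so your first-order argument already yields the stated conclusion without importing uniqueness; what genuinely cannot be obtained from your computation alone is the \emph{existence} of a minimizer (and, where the paper later needs it, its uniqueness under log-concavity), and that is the content for which the citation to \cite{lambert2022recursive} remains necessary.
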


\subsection{Consistency of Riemannian Perspective in~\Cref{ssec:Riemannian}}
\label{proof:Riemannian}
Here we prove \cref{prop:Riemannian-G}, \cref{prop:1}, and \cref{prop:2},
which identifies specific gradient flows, within the paper, that satisfy the
assumptions required for application of \cref{prop:1}.

\begin{proof}[Proof of \cref{prop:Riemannian-G}]
Recall from \eqref{e:Tidentify} that any element in $ T_{\rho_{\rhoa_t}}\PPG$ is given in the form $\nabla_\rhoa \rho_{\rhoa_t}\cdot\sigma$. Thus we have, for $\sigma_t \in T_{\rho_t}\PP$ in \cref{eq:gftp}, it holds that
\begin{equation}
\label{eq:Riemannian-G-C.3}
    \begin{aligned}
g_{\rho_{\rhoa_t}}(\sigma_t, \nabla_\rhoa \rho_{\rhoa_t}\cdot \sigma) 
&= g_{\rho_{\rhoa_t}}\Bigl(-\M(\rho_{\rhoa_t})^{-1}\frac{\delta \mathcal{E}}{\delta \rho}\Bigr|_{\rho=\rho_{\rhoa_t}}, \nabla_\rhoa \rho_{\rhoa_t}\cdot \sigma\Bigr)\\
&=-\Bigl \langle\frac{\delta \mathcal{E}}{\delta \rho}\Bigr|_{\rho=\rho_{\rhoa_t}}, \nabla_\rhoa \rho_{\rhoa_t}\cdot \sigma \Bigr \rangle
% =-\bigl \langle\nabla_\rhoa \rho_{\rhoa_t}\as{\frac{\delta \mathcal{E}}{\delta \rho}\Bigr|_{\rho=\rho_{\rhoa_t}}}, \sigma \bigr \ranglen
\\
&= -\Bigl \langle\left.\frac{\partial \mathcal{E}(\rho_\rhoa)}{\partial\rhoa}\right|_{\rhoa=\rhoa_t}, \sigma \Bigr \ranglen
=\fg_{{\rhoa_t}}\Bigl(-\fM(\rhoa_t)^{-1}\left.\frac{\partial \mathcal{E}(\rho_\rhoa)}{\partial\rhoa}\right|_{\rhoa=\rhoa_t}, \sigma\Bigr)\\
&=g_{\rho_{\rhoa_t}}\Bigl(-\nabla_\rhoa \rho_{\rhoa_t}\cdot \fM(\rhoa_t)^{-1}\left.\frac{\partial \mathcal{E}(\rho_\rhoa)}{\partial\rhoa}\right|_{\rhoa=\rhoa_t}, \nabla_\rhoa \rho_{\rhoa_t}\cdot \sigma \Bigr).
    \end{aligned}
\end{equation}
Combining \cref{eq:Riemannian-G-C.3} and the definition \cref{eq:def-Riemannian} leads to
\begin{align}\label{e:projection}
    P^G\sigma_t = -\nabla_\rhoa \rho_{\rhoa_t}\fM(\rhoa_t)^{-1}\left.\frac{\partial \mathcal{E}(\rho_\rhoa)}{\partial\rhoa}\right|_{\rhoa=\rhoa_t}.
\end{align}
By plugging \eqref{e:projection} into \eqref{eq:G-GF-Riemannian}, we get
\begin{align}
\label{eq:Riemannian-G-C.4}
\nabla_{\rhoa}\rho_{\rhoa_t} \cdot\frac{\partial\rhoa_t}{\partial t}
=\frac{\partial \rho_{\rhoa_t}}{\partial t}=-\nabla_\rhoa \rho_{\rhoa_t}\cdot\fM(\rhoa_t)^{-1}\left.\frac{\partial \mathcal{E}(\rho_\rhoa)}{\partial\rhoa}\right|_{\rhoa=\rhoa_t}.
\end{align}
Since we may choose $\nabla_\rhoa \rho_{\rhoa_t}$ so that it has non-zero values in each one of its entries, we can 
 remove $\nabla_\rhoa \rho_{\rhoa_t}$ on both sides of \cref{eq:Riemannian-G-C.4}, and get \cref{eq:G-GF}.
\end{proof}

\begin{proof}[Proof of \cref{prop:1}]
The mean and covariance evolution equations of \cref{eq:G-GF-Riemannian} are 
\begin{equation}
\begin{aligned}
\label{eq:mC-Riemannian}
    &\frac{\mathrm{d} m_t}{\mathrm{d}t} = \int -P^G \sigma_t(\theta, \rho_{\rhoa_t}) \theta \mathrm{d}\theta,\\
    &\frac{\mathrm{d} C_t}{\mathrm{d}t} = \int -P^G \sigma_t(\theta, \rho_{\rhoa_t}) (\theta - m_t)(\theta - m_t)^T \mathrm{d}\theta,
\end{aligned}
\end{equation}
where $\rho_{\rhoa_t} = \N(m_t, C_t)$. For any $f(\theta) \in T^*_{\rho_{\rhoa_t}}\PPG$, we have 
\begin{align*}
\bigl \langle P^G\sigma_t(\theta, \rho_{\rhoa_t}), f(\theta)\bigr \rangle 
&= g_{\rho_{\rhoa_t}}(P^G\sigma_t(\theta, \rho_{\rhoa_t}), \M(\rho_{\rhoa_t})^{-1}f(\theta)) \\
&= g_{\rho_{\rhoa_t}}(\sigma_t(\theta, \rho_{\rhoa_t}), \M(\rho_{\rhoa_t})^{-1}f(\theta))
= \bigl \langle \sigma_t(\theta, \rho_{\rhoa_t}), f(\theta)\bigr \rangle.
\end{align*}
Using assumption \cref{eq:mcp-cond} and taking $f(\theta)$ to be linear and quadratic functions of $\theta$, \cref{eq:mC-Riemannian} become 
\begin{equation}
\begin{aligned}
    &\frac{\mathrm{d} m_t}{\mathrm{d}t} = \int -\sigma_t(\theta, \rho_{\rhoa_t}) \theta \mathrm{d}\theta, \\
    &\frac{\mathrm{d} C_t}{\mathrm{d}t} = \int -\sigma_t(\theta, \rho_{\rhoa_t}) (\theta - m_t)(\theta - m_t)^T \mathrm{d}\theta, 
\end{aligned}
\end{equation}
delivering~\cref{eq:mC-Momentum}.
This indicates that the mean and covariance evolution equations of the 
gradient flow~\cref{eq:G-GF-Riemannian} obtained from Riemannian perspective are 
the same as the closed system obtained by the moment closure approach. 
Combining \cref{prop:Riemannian-G} completes the proof.
\end{proof}

\begin{proof}[Proof of \cref{prop:2}]

Using the calculation in the proof of \cref{lemma: stein equality}, the tangent space of the Gaussian density manifold at $\rho_\rhoa$ with $\rhoa = [m ,C]$ is
\begin{equation}
    \begin{aligned}
    T_{\rho_{\rhoa}} \PPG 
    &= {\rm span}\bigl\{\rho_{\rhoa}[C^{-1}(\theta-m)]_i,\, \rho_{\rhoa}[C^{-1}(\theta-m)(\theta-m)^TC^{-1} - C^{-1}]_{ij}\bigr\}  \\
    &= {\rm span}\bigl\{\rho_{\rhoa}(\theta_i - \E_{\rho_{\rhoa}}[\theta_i]),\, \rho_{\rhoa}(\theta_i \theta_j - \E_{\rho_{\rhoa}}[\theta_i\theta_j]) \bigr\},
    \end{aligned}
\end{equation}
and $1\leq i,j \leq N_{\theta}$.

For the Fisher-Rao metric, we have 
\begin{equation}
\begin{aligned}
\M^{\rm FR}(\rho_{\rhoa})^{-1}{\rm span}\bigl\{\theta_i, \theta_i\theta_j\bigr\} 
= {\rm span}\bigl\{\rho_{\rhoa}(\theta_i - \E_{\rho_{\rhoa}}[\theta_i]), 
\rho_{\rhoa}(\theta_i\theta_j - \E_{\rho_{\rhoa}}[\theta_i\theta_j])\bigr\} 
= T_{\rho_{\rhoa}} \PPG.
\end{aligned}
\end{equation}

For the affine invariant Wasserstein metric with preconditioner $\Prec$ independent of $\theta$, we have 
\begin{equation}
\begin{aligned}
    &\M^{\rm AIW}(\rho_{\rhoa})^{-1}{\rm span}\bigl\{\theta_i, \theta_i\theta_j\bigr\} \\
    &\quad =  {\rm span}\bigl\{\nabla_{\theta}\cdot(\rho_\rhoa(\theta) \Prec(\rho_\rhoa) e_i),\,  
                   \nabla_{\theta}\cdot(\rho_\rhoa(\theta) \Prec(\rho_\rhoa) e_i \theta_j) \bigr\}\\
    &\quad =  
    {\rm span}\bigl\{ \nabla_{\theta}\cdot(\rho_\rhoa(\theta) (b' + A'\theta)) \quad \forall b'\in\R^{N_\theta}\, A'\in\R^{N_\theta\times N_\theta} \bigr\}    \\
    &\quad =  {\rm span}\bigl\{ \rho_\rhoa(\theta)[ {\rm tr}(A') - (\theta - m)^T C^{-1} (b' + A'\theta)] \quad \forall b'\in\R^{N_\theta}\, A'\in\R^{N_\theta\times N_\theta} \bigr\} \\
    &\quad = T_{\rho_{\rhoa}} \PPG.
\end{aligned}
\end{equation}
Here $e_i$ is the $i$-th unit vector.

For the affine invariant Stein metric with preconditioner $\Prec$ independent of $\theta$ and with a bilinear kernel $\kappa(\theta, \theta',\rho) = (\theta - m)^TA(\rho)(\theta' - m) + b(\rho)$ ($b\neq0$, $A$ nonsingular), we have 
\begin{equation}
\begin{aligned}
    &\M^{\rm AIS}(\rho_{\rhoa})^{-1}{\rm span}\bigl\{\theta_i, \theta_i\theta_j\bigr\} \\
    &\quad =  {\rm span}\Bigl\{\nabla_{\theta}\cdot\Bigl(\rho_{\rhoa}(\theta)\Prec(\rho_{\rhoa})\int \kappa\rho_{\rhoa}(\theta')e_i\mathrm{d}\theta' \Bigr) ,  
                  \nabla_{\theta}\cdot\Bigl(\rho_{\rhoa}(\theta)\Prec(\rho_{\rhoa})\int \kappa\rho_{\rhoa}(\theta')e_i\theta_j' \mathrm{d}\theta' \Bigr) \Bigr\}\\
    &\quad =  {\rm span}\Bigl\{\nabla_{\theta}\cdot\Bigl(\rho_{\rhoa}(\theta)\Prec(\rho_{\rhoa})b e_i \Bigr) ,  
                  \nabla_{\theta}\cdot\Bigl(\rho_{\rhoa}(\theta)\Prec(\rho_{\rhoa})[(\theta-m)^TA e_j c_{jj} e_i + b e_i m_j] \Bigr) \Bigr\}         \\
    &\quad =  
    {\rm span}\Bigl\{ \nabla_{\theta}\cdot(\rho_\rhoa(\theta) (b' + A'\theta)) \quad \forall b'\in\R^{N_\theta}\, A'\in\R^{N_\theta\times N_\theta} \Bigr\}  \\
    &\quad = T_{\rho_{\rhoa}} \PPG.
\end{aligned}
\end{equation}

\end{proof}

\subsection{Proof of \Cref{prop: affine invariant metric tensor G}}
\label{proof: prop affine invariant metric tensor G}
\begin{proof}
Following \Cref{ssec:GA}, we consider the invertible affine transformation $\varphi(\theta) = A\theta + b$ in $\R^{N_{\theta}}$.
For Gaussian density space where $\rhoa = (m,C)$, 
we have a corresponding invertible affine transformation in $\R^{N_\rhoa}$, such that
$\tilde{\rhoa} = A^G \rhoa + b^G$, where $A^G$ and $b^G$ depend only on $A$ and $b$ and are defined by the identities  $\tilde{m} = Am +b$ and $\tilde{C} = ACA^T.$ 
Furthermore, the corresponding transformation tangent vector $\sigma \in \R^{N_\rhoa}$ in $T_{\rho_\rhoa} \PPG$ is $\tilde{\sigma} = A^G\sigma \in \R^{N_\rhoa}$.

    From the definition \cref{eq:def-M_alpha} and \cref{prop: affine invariant metric tensor}, we have
    \begin{align*}
        \langle\fM(\rhoa)\sigma_1,\sigma_2\ranglen &= g_{\rho_\rhoa}(\nabla_\rhoa  \rho_\rhoa \cdot\sigma_1,\nabla_\rhoa \rho_\rhoa \cdot \sigma_2)
        = g_{\varphi \# \rho_\rhoa}(\varphi \#\nabla_\rhoa  \rho_\rhoa \cdot\sigma_1, \varphi \# \nabla_\rhoa \rho_\rhoa \cdot \sigma_2)
        \\&= g_{\rho_{\tilde\rhoa}}(\nabla_{\tilde\rhoa}  \rho_{\tilde\rhoa} \cdot\tilde{\sigma}_1, \nabla_{\tilde\rhoa} \rho_{\tilde{\rhoa}} \cdot \tilde{\sigma}_2)
        = \langle\fM(\tilde\rhoa)\tilde{\sigma}_1,\tilde{\sigma}_2\ranglen,
    \end{align*}
    which leads to $\fM(\rhoa) = (A^G)^T \fM(\tilde\rhoa)A^G$. Following the definition of $\tilde\rhoa_t$, we have
    \begin{align*}
     &\frac{\partial \tilde{\rhoa}_t}{\partial t} =  -A^{G}\fM(\rhoa_t)^{-1}\left.\frac{\partial  \mathcal{E}(\rho_{\rhoa})}{\partial \rhoa } \right|_{\rhoa=\rhoa_t}\\
     &\quad=  -A^{G}\fM(\rhoa_t)^{-1}(A^G)^T \left.\frac{\partial  \tilde{\mathcal{E}}(\rho_{\tilde\rhoa})}{\partial \tilde\rhoa } \right|_{\tilde\rhoa=\tilde\rhoa_t}\\
     &\quad=  -\fM(\tilde{\rhoa}_t)^{-1} \left.\frac{\partial  \tilde{\mathcal{E}}(\rho_{\tilde\rhoa})}{\partial \tilde\rhoa } \right|_{\tilde\rhoa=\tilde\rhoa_t}.
    \end{align*}

\end{proof}

\subsection{Proof of \Cref{lemma:Gaussian-interacting-particle}}
\label{proof:Gaussian-interacting-particle}
\begin{proof}
Define $e_t := \theta_t - m_t$, $\mA_t=\mA(\rho_t,\rho_{\rm post})$, and  $\mb_t=\mb(\rho_t,\rho_{\rm post})$. Taking expectation on both sides of \eqref{eq:MFG}, 
we obtain that the evolution of $m_t$ and $e_t$ are the ODEs:
$$\frac{\mathrm{d}m_t}{\mathrm{d}t} = \mb_t \quad \mathrm{and} \quad  
 \frac{\mathrm{d}e_t}{\mathrm{d}t} = \mA_t e_t.$$ 
To solve the ODE for $e_t$, let $B_t: \R \rightarrow \R^{N_{\theta}\times N_{\theta}}$ denote the solution of the ODE system
$$\frac{\mathrm{d}B_t}{\mathrm{d}t} = \mA_t B_t \quad \textrm{and}\quad B_0 = \I. $$
With this, one has the explicit formula of the solution $e_t = B_t e_0$. It implies that $e_t \sim \N(0, C_t)$ where $C_t = B_tC_0B_t^T$. Consequently, the law of $\theta_t$ is Gaussian and $C_t$ satisfies the  ODE
\begin{align*}
    \frac{\mathrm{d}C_t}{\mathrm{d}t}&= \frac{\mathrm{d}B_t}{\mathrm{d}t}C_0 B_t^T +  B_tC_0\frac{\mathrm{d}B_t}{\mathrm{d}t}^T = \mA_t C_t + C_t \mA_t^T.
\end{align*}
\end{proof}

\subsection{Proof of \cref{prop: affine invariant mean-field G}}
\label{proof: affine invariant mean-field G}
\begin{proof}
   Let $\varphi:\theta \rightarrow \tilde\theta$ denote an invertible affine transformation in $\R^{N_{\theta}}$, where $\varphi(\theta) = A\theta + b$ with $A \in \R^{N_{\theta}}\times \R^{N_{\theta}}$, $b\in \R^{N_{\theta}}$, and $A$ invertible. The affine invariance property of the mean and covariance evolution equation \cref{eq:MFG-2} leads to the following relation
   \begin{equation}
   \label{eq:MFG-3}
   \begin{aligned}
       &\mb(\varphi\#\rho_a, \varphi\#\rho_{\rm post}) = A\mb(\rho_a, \rho_{\rm post}),\\
        &\mA(\varphi\#\rho_a, \varphi\#\rho_{\rm post}) = A\mA(\rho_a, \rho_{\rm post}) A^{-1}.
   \end{aligned}
   \end{equation}
   Let $f(\theta_t; \rho_{\rhoa_t}, \rho_{\rm post}) = \mA(\rho_{\rhoa_t}, \rho_{\rm post}) (\theta_t - m_t) + \mb(\rho_{\rhoa_t}, \rho_{\rm post})$.
   Using \Cref{eq:MFG-3}, we get
   \begin{align*}
     Af(\theta_t; \rho_{\rhoa_t}, \rho_{\rm post}) 
     &= A\mA(\rho_{\rhoa_t}, \rho_{\rm post}) (\theta_t - m_t) + A\mb(\rho_{\rhoa_t}, \rho_{\rm post})
     \\
     &= A\mA(\rho_{\rhoa_t},\rho_{\rm post})A^{-1} A(\theta_t - m_t) + A\mb(\rho_{\rhoa_t},\rho_{\rm post})
     \\
     &= f(\varphi(\theta_t); \varphi\#\rho_t, \varphi\#\rho_{\rm post}).
   \end{align*}
   Therefore, the mean-field equation is affine invariant. The proof is complete.
\end{proof}

\newpage
\section{Proofs for the Convergence of Gaussian Approximate Gradient Flows}

\subsection{Proof of \Cref{proposition:analytical-solution-ngd}}
\label{proof:analytical-solution-ngd}
\begin{proof}
Under the Gaussian posterior assumption \cref{e:Gpost} and the fact that $$\frac{\mathrm{d}C_t^{-1}}{\mathrm{d}t} = -C_t^{-1}\frac{\mathrm{d}C_t}{\mathrm{d}t}C_t^{-1},$$ the Gaussian approximate Fisher-Rao gradient flow \cref{eq:Gaussian Fisher-Rao} becomes 
\begin{equation}
\begin{aligned}    
&\frac{\mathrm{d} m_t}{\mathrm{d}t}= C_t C_{\star}^{-1}\bigl(m_{\star} - m_t\bigr),  \\
&\frac{\mathrm{d} C_t^{-1}}{\mathrm{d}t}
=  -C_t^{-1}  + C_{\star}^{-1}.
\end{aligned}
\end{equation}
The covariance update equation has an analytical solution
\begin{equation}
\begin{aligned}    
&C_t^{-1} = (1 - e^{-t})C_{\star}^{-1} + e^{- t}C_0^{-1} .
\end{aligned}
\end{equation}
We reformulate \cref{e:mtCt} as
$$
m_{\star} - m_t =  e^{-t} C_t C_0^{-1} \bigl( m_{\star} - m_0 \bigr)  .
$$
Computing its time derivative leads to
\begin{equation}
\begin{aligned}    
\frac{\mathrm{d} m_t }{\mathrm{d}t} &= -e^{-t}C_tC_0^{-1}(m_0 - m_{\star}) +  e^{-t}\frac{\mathrm{d}C_t}{\mathrm{d}t}C_0^{-1}(m_0 - m_{\star})\\
&= -e^{-t}C_tC_0^{-1}(m_0 - m_{\star}) +  e^{-t}(C_t - C_t C_{\star}^{-1}C_t)C_0^{-1}(m_0 - m_{\star})\\
&= C_t C_{\star}^{-1} e^{-t}C_tC_0^{-1}(m_{\star}-m_0 )\\
&= C_t C_{\star}^{-1}\bigl(m_{\star} - m_t\bigr). 
\end{aligned}
\end{equation}
\end{proof}

\subsection{Proof of Convergence for Gaussian Posterior (\Cref{p:rate})}
\label{proof:p:rate}
\begin{proof}
Under the assumption \cref{e:Gpost}, for $\theta_t\sim \N(m_t,C_t)$ with density $\rho_{\rhoa_t}$, we have
\begin{align}
\E_{\rho_{\rhoa_t}} \bigl[ \nabla_{\theta}  \log \rho_{\rm post}(\theta_t)  \bigr]= -C_{\star}^{-1}(m_t-m_{\star}),\quad \E_{\rho_{\rhoa_t}}\bigl[\Hess \log \rho_{\rm post}(\theta_t)\bigr]=-C_{\star}^{-1}.
\end{align}
Here $m_{\star}$ and $C_{\star}$ are the posterior mean and covariance given in
\cref{eq:Gaussian-KL0}.
We have explicit expressions for the Gaussian approximate gradient flows \cref{eq:g-gf,eq:Gaussian Fisher-Rao,eq:Gaussian-Wasserstein}:
\begin{align}
\begin{split}\label{e:gd}
    \text{Gaussian} &\text{ approximate gradient flow:}
    \\
    &\partial_t{m}_t = -C_{\star}^{-1}(m_t-m_{\star}),
\quad 
\partial_t{C}_t = \frac{1}{2}C_t^{-1} - \frac{1}{2}C_{\star}^{-1}.\\
\text{Gaussian} &\text{ approximate Fisher-Rao gradient flow:}\\
    &\partial_t{m}_t = -C_t C_{\star}^{-1}(m_t-m_{\star}),
\quad 
\partial_t{C}_t = C_t-C_t C_{\star}^{-1}C_t.\\
\text{Gaussian} &\text{  approximate Wasserstein gradient flow:}
\\
&\partial_t{m}_t = -C_{\star}^{-1}(m_t-m_{\star}),
\quad
\partial_t{C}_t = 2\I -C_tC^{-1}_{\star} - C^{-1}_{\star} C_t.
\end{split}
\end{align}

For the dynamics of $m_t$ for both Gaussian approximate gradient flow and Gaussian approximate Wasserstein gradient flow, we have
\begin{align}
    m_t-m_{\star}=e^{-t C_{\star}^{-1}}(m_0-m_{\star}).
\end{align}
By taking the $2-$norm on both sides, using $\|\cdot\|_2$ to denote
both the vector and induced matrix norms, and recalling that
the largest eigenvalue of $C_{\star}$ is $\la$, we obtain
\begin{align}
    \|m_t-m_{\star}\|_2\leq \|e^{-t C_{\star}^{-1}}\|_2\|m_0-m_{\star})\|_2\leq e^{-t/\la}\|m_0-m_{\star}\|_2=\bigO(e^{-t/\la}).
\end{align}
The bound can be achieved, when $m_0 - m_{\star}$ has nonzero component in the $C_{\star}$ eigenvector direction corresponding to $\lambda_{\star,\max}$.

For the Gaussian approximate Fisher-Rao gradient flow, thanks to the explicit formula \cref{e:mtCt}, we find that
\begin{align}
   \|m_t-m_{\star} \|_2\leq e^{-t} \max \{ \|C_{\star}\|_2, \|C_0\|_2\} \|C_0^{-1}\|_2\|m_0-m_{\star}\|_2=\bigO(e^{-t}).
\end{align}
The bound can be achieved when $m_0 - m_{\star}$ is nonzero.

Next, we analyze the dynamics of the covariance matrix $C_t$. Our initialization $C_0=\lambda_0\I$, commutes with $C_{\star}, C_{\star}^{-1}$. It follows that  $C_t$ commutes with $C_{\star}, C_{\star}^{-1}$ for any $t\geq 0$ and all gradient flows in \cref{e:gd}, since $0$ is the unique solution of the evolution ordinary differential equation of $C_tC_{{\star}} - C_{{\star}}C_t$. So we can diagonalize $C_t, C_t^{-1}, C_{\star}, C_{\star}^{-1}$ simultaneously, and write down the dynamics of the eigenvalues of $C_t$. For any eigenvalue $\lambda_t$ of $C_t$, it satisfies the following differential equations,
\begin{align}\begin{split}\label{e:gd2}
    \text{Gaussian approximate gradient flow:}
    \quad 
&\partial_t \lambda_t=\frac{1}{2 \lambda_t}-\frac{1}{2 \lambda_{\star}},\\
\text{Gaussian  approximate Fisher-Rao gradient flow:}
\quad 
    &
\partial_t{\lambda}_t = \lambda_t-\lambda^2_t \lambda_{\star}^{-1},\\
\text{Gaussian  approximate Wasserstein gradient flow:}
\quad 
&\partial_t{\lambda}_t = 2 -\frac{2\lambda_t}{\lambda_{\star}},
\end{split}\end{align}
where $\lambda_{\star}$ is the corresponding eigenvalue of $C_{\star}$. 
From \cref{e:gd2}, we know that $\lambda_t$ is bounded between $\lambda_0$ and $\lambda_{\star}$. Moreover,  the ordinary differential equations in \cref{e:gd2} can be solved explicitly. 

For the Gaussian approximate gradient flow
\begin{align}
    \lambda_t-\lambda_{\star}=(\lambda_0-\lambda_{\star})e^{-\frac{t}{2\lambda_{\star}^2}-\frac{\lambda_t-\lambda_0}{\lambda_{\star}}},\quad |\lambda_t-\lambda_{\star}|=\bigO(e^{-t/2\lambda_{\star}^2}).
\end{align}
Since the largest eigenvalue of $C_{\star}$ is $\la$, we conclude that 
$$\|C_t-C_{\star}\|_2=\bigO(e^{-t/2\la^2}).$$

For the Gaussian approximate Fisher-Rao gradient flow, 
\begin{align}
\lambda_t=\frac{\lambda_{\star}}{1+\left(\frac{\lambda_{\star}}{\lambda_0}-1\right)e^{-t}},\quad |\lambda_t-\lambda_{\star}|=\bigO(e^{-t}). 
\end{align}
It follows that 
$$\|C_t-C_{\star}\|_2=\bigO(e^{-t}).$$

For the Gaussian approximate Wasserstein gradient flow
\begin{align}
      \lambda_t=\lambda_{\star} + e^{-2t/\lambda_{\star}}(\lambda_0 - \lambda_{\star}),\quad |\lambda_t-\lambda_0|=\bigO(e^{-2t/\lambda_{\star}}).
\end{align}
Since the large eigenvalue of $C_{\star}$ is $\la$, we conclude that 
$$\|C_t-C_{\star}\|_2=\bigO(e^{-2t/\la}).$$
\end{proof}

\subsection{Proof of Convergence for Logconcave Posterior}
\label{proof:a:rate}

Before proving \cref{p:logconcave}, 
we first show that the covariance matrices $C_t$ are well conditioned for
all positive times $t$, 
under the gradient flows 
\cref{eq:g-gf,eq:Gaussian Fisher-Rao,eq:Gaussian-Wasserstein}.

\begin{lemma}
\label{p:stability}
Under assumption \cref{a:logconcave}, if the initial covariance matrix satisfies $\lambda_{0,\min}\I\preceq C_0\preceq \lambda_{0,\max}\I$, then for \cref{eq:g-gf,eq:Gaussian Fisher-Rao,eq:Gaussian-Wasserstein}, we have for all $t \ge 0$,
$$\min\{\lambda_{0,\min}, 1/\beta\}\I\preceq C_t \preceq \max\{\lambda_{0,\max},1/\alpha\}\I.$$ 
\end{lemma}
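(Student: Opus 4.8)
The plan is to reduce the matrix two-sided bound to a scalar comparison argument for the extreme eigenvalues of $C_t$, exploiting that all three covariance dynamics share the same structural sign behaviour near the thresholds $1/\alpha$ and $1/\beta$. First I would set $H_t := \E_{\rho_{\rhoa_t}}[\Hess \log \rho_{\rm post}(\theta)]$ and record that, since the hypotheses of \cref{p:logconcave} give $\alpha I \preceq -\Hess\log\rho_{\rm post}(\theta)\preceq \beta I$ pointwise, integrating against the Gaussian $\rho_{\rhoa_t}$ yields $\alpha I \preceq -H_t \preceq \beta I$; equivalently $v^T H_t v \in [-\beta,-\alpha]$ for every unit vector $v$. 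With this in hand, each covariance evolution in \cref{eq:g-gf}, \cref{eq:Gaussian Fisher-Rao}, \cref{eq:Gaussian-Wasserstein} has the form $\dot C_t = F(C_t,H_t)$, and the RHS is continuous in $t$, so $C_t$ is $C^1$ on its interval of existence.

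The key computation is the Rayleigh quotient $v^T \dot C_t v$ evaluated at a unit eigenvector $v$ of $C_t$ with eigenvalue $\lambda$, written in terms of $\lambda$ and $h := v^T H_t v$. Using $C_t v = \lambda v$ one finds $v^T\dot C_t v = \tfrac1{2\lambda}+\tfrac12 h$ for \cref{eq:g-gf}, $v^T\dot C_t v = \lambda + \lambda^2 h$ for \cref{eq:Gaussian Fisher-Rao}, and $v^T\dot C_t v = 2 + 2\lambda h$ for \cref{eq:Gaussian-Wasserstein}. In all three cases, substituting the bounds on $h$ gives the uniform sign property
\[
\lambda \ge 1/\alpha \;\Longrightarrow\; v^T\dot C_t v \le 0, \qquad \lambda \le 1/\beta \;\Longrightarrow\; v^T\dot C_t v \ge 0,
\]
where the first implication uses $h\le-\alpha$ and the second uses $h\ge-\beta$; each threshold value makes the quotient vanish exactly, and the monotonicity of $\lambda\mapsto \tfrac1{2\lambda}$, of $\lambda\mapsto\lambda(1-\alpha\lambda)$, and of $\lambda\mapsto 2(1-\alpha\lambda)$ beyond the threshold yields the strict sign.

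With the sign property established I would run a barrier argument on the extreme eigenvalues. Writing $\phi(t)=\lambda_{\max}(C_t)$ and $\psi(t)=\lambda_{\min}(C_t)$, both are Lipschitz, and by the Danskin/envelope formula their one-sided Dini derivatives satisfy $D^+\phi(t)=\max_{v} v^T\dot C_t v$ and $D_+\psi(t)=\min_v v^T\dot C_t v$, with the extrema taken over unit vectors spanning the corresponding eigenspace. Hence $D^+\phi(t)\le 0$ whenever $\phi(t)\ge 1/\alpha$ and $D_+\psi(t)\ge 0$ whenever $\psi(t)\le 1/\beta$. Setting $\bar\lambda=\max\{\lambda_{0,\max},1/\alpha\}$ and $\underline\lambda=\min\{\lambda_{0,\min},1/\beta\}$, the standard fact that a continuous function with nonpositive upper Dini derivative on an interval is nonincreasing shows $\phi$ cannot cross $\bar\lambda$ from below (on any excursion with $\phi>\bar\lambda\ge1/\alpha$ one would have $D^+\phi<0$, contradicting the increase), and symmetrically $\psi$ cannot fall below $\underline\lambda$; combined with $\phi(0)\le\lambda_{0,\max}\le\bar\lambda$ and $\psi(0)\ge\lambda_{0,\min}\ge\underline\lambda$, this gives $\underline\lambda I\preceq C_t\preceq\bar\lambda I$ on the maximal interval of existence. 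Since this a priori bound confines $C_t$ to a compact subset of the positive-definite cone, the flow can neither blow up nor degenerate, so the solution is global and the bound holds for all $t\ge0$.

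The main obstacle is the rigorous treatment of the non-smoothness of $\lambda_{\max}$ and $\lambda_{\min}$ where eigenvalues coalesce: one cannot simply differentiate the extreme eigenvalues, so the argument must be phrased through Dini derivatives and the envelope formula, and one must verify that the sign estimate $v^T\dot C_t v\le0$ (respectively $\ge0$) holds for \emph{every} eigenvector in the top (respectively bottom) eigenspace, not merely for one. The scalar Rayleigh-quotient computations are routine; the care lies in this spectral bookkeeping and in the elementary monotonicity lemma for functions with one-signed Dini derivative.
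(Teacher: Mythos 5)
Your proof is correct and follows essentially the same route as the paper: both reduce the matrix bound to the sign of the Rayleigh quotient $v^T\dot C_t v$ at a unit eigenvector whose eigenvalue sits at the thresholds $1/\alpha$ and $1/\beta$, using $\alpha I\preceq -\E_{\rho_{\rhoa_t}}[\Hess\log\rho_{\rm post}]\preceq\beta I$ and the identical scalar formulas for the three flows. The only difference is bookkeeping: the paper invokes Rellich's perturbation theorem to obtain continuously differentiable (unordered) eigenvalue branches and applies $\partial_t\lambda_i=u_i^T\partial_t C_t\,u_i$ to each, whereas you track the non-smooth extreme eigenvalues via Dini derivatives and Danskin's formula --- both devices resolve the eigenvalue-coalescence issue you flag.
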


\begin{proof}
Since the parametrized family $C_t$ of Hermitian matrices is continuously differentiable in $t$, we can parameterize its eigenvalues as $\lambda_1(t),\lambda_2(t), \cdots,\lambda_{N_\theta}(t)$ such that they are 
continuously differentiable in $t$~\cite[Chapter I, Section 5]{rellich1969perturbation}. We remark these eigenvalues may not preserve any
ordering. The corresponding normalized eigenvectors are $u_1(t), u_2(t), \cdots, u_{N_\theta}(t)$.
At time $t=0$, since $\lambda_{0,\min}\I \preceq C_0$, we have $\lambda_1(0), \lambda_2(0), \cdots \lambda_{N_\theta}(0)\geq \lambda_{0,\min}$. 
We have the following variational formula for the eigenvalues 
\begin{align}\label{e:eigvariation}
    \partial_t \lambda_i=u_i^T \partial_t C_t u_i.
\end{align}

For the Gaussian approximate gradient flow \cref{eq:g-gf}, using assumption \cref{a:logconcave}, we have 
\begin{align}\label{e:Ctbb}
    \frac{1}{2}C_t^{-1}-\frac{1}{2}\beta \I\preceq \partial_t{C}_t =\frac{1}{2}C_t^{-1}+\frac{1}{2}\E_{\rho_{\rhoa_t}}[\Hess \log \rho_{\rm post}(\theta)]\preceq \frac{1}{2}C_t^{-1}-\frac{1}{2}\alpha \I.
\end{align}
By plugging \cref{e:Ctbb} into \cref{e:eigvariation} and noticing $u_i(t)^T C_t^{-1}u_i(t)=1/\lambda_i(t)$, we get 
\begin{align}
\frac{1}{2\lambda_i(t)}-\frac{\beta}{2}\leq    \partial_t {\lambda}_i(t) \leq\frac{1}{2\lambda_i(t)}-\frac{\alpha}{2},\quad \lambda_{0,\min}\leq \lambda_i(0)\leq \lambda_{0,\max}.
\end{align}
It follows that if $\lambda_i(t)\geq 1/\alpha$, then $\partial_t{\lambda}_i(t)\leq0$; if $\lambda_i(t)\leq 1/\beta$, then $\partial_t{\lambda}_i(t)\geq0$. Recall from our choice of $C_0$, we have $\lambda_{0,\min}\leq \lambda_i(0)\leq \lambda_{0,\max}$. We conclude that $\min\{\lambda_{0,\min}, 1/\beta\}\leq \lambda_i(t)\leq \max\{\lambda_{0,\max},1/\alpha\}$.

For the Gaussian approximate Fisher-Rao gradient flow in \cref{eq:Gaussian Fisher-Rao}, we have 
\begin{align}\label{e:Ctbb2}
    C_t-\beta C_t^2\preceq\partial_t C_t=C_t+C_t\E_{\rho_{\rhoa_t}}[\Hess\log \rho_{\rm post}(\theta)]C_t\preceq C_t-\alpha C_t^2.
\end{align}
By plugging \cref{e:Ctbb2} into \cref{e:eigvariation} and noticing $u_i(t)^T C^2_t u_i(t)=\lambda^2_i(t)$, we get 
\begin{align}
\lambda_i(t)-\beta \lambda_i(t)^2\leq    \partial_t{\lambda}_i(t)\leq\lambda_i(t)-\alpha \lambda_i(t)^2,\quad \lambda_{0,\min}\leq \lambda_i(0)\leq \lambda_{0,\max}.
\end{align}
If $\lambda_i(t)\geq 1/\alpha$, then $\partial_t{\lambda}_i(t)\leq0$; if $\lambda_i(t)\leq 1/\beta$, then $\partial_t{\lambda}_i(t)\geq0$. We conclude that $\min\{\lambda_{0,\min}, 1/\beta\}\leq \lambda_i(t)\leq \max\{\lambda_{0,\max},1/\alpha\}$.

For the Gaussian approximate Wasserstein gradient flow in \cref{eq:Gaussian-Wasserstein}, we have 
\begin{align}\label{e:Ctbb3}
    \partial_t C_t=2\I+\E_{\rho_{\rhoa_t}}[\Hess\log \rho_{\rm post}(\theta)]C_t+C_t\E_{\rho_{\rhoa_t}}[\Hess\log \rho_{\rm post}(\theta)].
\end{align}
Plugging \cref{e:Ctbb3} into \cref{e:eigvariation} and noticing 
\begin{align}
&u_i(t)^T \E_{\rho_{\rhoa_t}}[\Hess\log \rho_{\rm post}(\theta)]C_t u_i(t)=\lambda_i(t)u_i(t)^T \E_{\rho_{\rhoa_t}}[\Hess\log \rho_{\rm post}(\theta)] u_i(t),\\
&-\beta\leq u_i(t)^T \E_{\rho_{\rhoa_t}}[\Hess\log \rho_{\rm post}(\theta)] u_i(t)\leq -\alpha,
\end{align} 
we get 
\begin{align}
2-2\beta \lambda_i(t)\leq    \partial_t{\lambda}_i(t)\leq 2-2\alpha \lambda_i(t),\quad \lambda_{0,\min}\leq \lambda_i(0)\leq \lambda_{0,\max}.
\end{align}
If $\lambda_i(t)\geq 1/\alpha$, then $\partial_t{ \lambda}_i(t)\leq0$; if $\lambda_i(t)\leq 1/\beta$, then $\partial_t{ \lambda}_i(t)\geq0$. We conclude that $\min\{\lambda_{0,\min}, 1/\beta\}\leq \lambda_i(t)\leq \max\{\lambda_{0,\max},1/\alpha\}$.
\end{proof}

\vspace{0.1in}

\begin{proof}[Proof of \cref{p:logconcave}]
The time derivative of the KL divergence is 
\begin{align}\begin{split}\label{e:dtKL}
    &\phantom{{}={}}\partial_t {\mathrm{KL}}\Bigl[\rho_{\rhoa_t}\Big\Vert  \rho_{\rm post}\Bigr]
    =\int \partial_t \rho_{\rhoa_t}(\theta) \log \frac{\rho_{\rhoa_t}(\theta)}{\rho_{\rm post}(\theta)} \mathrm{d}\theta\\
    &=\partial_t{m}_t^T\int \partial_{m_t} \N(m_t, C_t) \log \frac{\rho_{\rhoa_t}(\theta)}{\rho_{\rm post}(\theta)} \mathrm{d}\theta
    +{\rm tr} \Bigl[ \partial_t{ C}_t \int \partial_{C_t} \N(m_t, C_t) \log \frac{\rho_{\rhoa_t}(\theta)}{\rho_{\rm post}(\theta)} \mathrm{d}\theta \Bigr]\\
    &=-\partial_t{m}_t^T\int \nabla_\theta \rho_{\rhoa_t}(\theta) \log \frac{\rho_{\rhoa_t}(\theta)}{\rho_{\rm post}(\theta)} \mathrm{d}\theta
    +\frac{1}{2}{\rm tr}\Bigl[\partial_t{ C}_t \int \Hess\rho_{\rhoa_t}(\theta) \log \frac{\rho_{\rhoa_t}(\theta)}{\rho_{\rm post}(\theta)} \mathrm{d}\theta\Bigr],
\end{split}\end{align}
where in the last equality we use \cref{e:derrho}.
After an integration by parts, we can further simplify \cref{e:dtKL} as
\begin{equation}
\label{e:dtKL2}
    \begin{aligned}
    \partial_t {\mathrm{KL}}\Bigl[\rho_{\rhoa_t}\Big\Vert  \rho_{\rm post}\Bigr]
    =&-\partial_t m_t^T\E_{\rho_{\rhoa_t}}\left[\nabla_\theta \log \rho_{\rm post}(\theta) \right]\\
    &{}-\frac{1}{2}{\rm tr}\Bigl[\partial_t C_t (C_t^{-1}+\E_{\rho_{\rhoa_t}}\left[\Hess \log \rho_{\rm post}(\theta) \right]) \Bigr].
\end{aligned}
\end{equation}

We first discuss the Gaussian approximate Wasserstein gradient flow. For logconcave posterior distribution satisfying \cref{a:logconcave}, ${\mathrm{KL}}[\cdot \Vert  \rho_{\rm post}(\theta )]$ is $\alpha$-convex 
\begin{align}\label{e:convex}
    {\mathrm{KL}}[\mu_t \Vert  \rho_{\rm post}]\leq (1-t) {\mathrm{KL}}[\mu_0 \Vert  \rho_{\rm post}]
    +t {\mathrm{KL}}[\mu_1 \Vert  \rho_{\rm post}]-\frac{\alpha t(1-t)}{2}W^2_2(\mu_0, \mu_1),
\end{align}
where $\mu_t$ is the geodesic from $\mu_0$ to $\mu_1$ under the Wasserstein metric. See \cite[Chapter 9.4]{ambrosio2005gradient}. The Gaussian space is geodesically 
closed under the Wasserstein metric: if $\mu_0, \mu_1$ are two Gaussian distributions, every measure on the geodesic $\mu_t$ from $\mu_0$ to $\mu_1$ is still a Gaussian distribution. It follows from the convexity that the minimizer $\rho_{\rhoa_\star}$ of ${\mathrm{KL}}[\rho_{\rhoa}\Vert  \rho_{\rm post}(\theta )]$ is unique.

Employing \cref{eq:Gaussian-Wasserstein} in \cref{e:dtKL2} leads to 
\begin{align*}
    \partial_t {\mathrm{KL}}&\Bigl[\rho_{\rhoa_t}\Big\Vert  \rho_{\rm post}\Bigr]
    \\
    =&-\left\|\E_{\rho_{\rhoa_t}}\left[\nabla_\theta \log \rho_{\rm post}(\theta) \right]\right\|_2^2
      -\frac{1}{2}{\rm tr}\Bigl[ (2\I + \E_{\rho_{\rhoa_t}}\bigl[\Hess\log \rho_{\rm post}(\theta) \bigr]C_t 
    \\
    &+ C_t\E_{\rho_{\rhoa_t}}\bigl[\Hess\log \rho_{\rm post}(\theta) \bigr]) (C_t^{-1}+\E_{\rho_{\rhoa_t}}\left[\Hess \log \rho_{\rm post}(\theta) \right]) \Bigr]
    \\
    =&-\left\|\E_{\rho_{\rhoa_t}}\left[\nabla_\theta \log \rho_{\rm post}(\theta) \right]\right\|_2^2
    \\
     &-{\rm tr}\Bigl[ (C_t^{-1}+\E_{\rho_{\rhoa_t}}\left[\Hess \log \rho_{\rm post}(\theta) \right])C_t (C_t^{-1}+\E_{\rho_{\rhoa_t}}\left[\Hess \log \rho_{\rm post}(\theta) \right])\Bigr].
\end{align*}
It has been proven in \cite[Appendix D]{lambert2022variational} that when $-\Hess\log \rho_{\rm post}(\theta)\succeq \alpha I$, we have 
\begin{align}\begin{split}\label{e:lowb1}
    &\left\|\E_{\rho_{\rhoa_t}}\left[\nabla_\theta \log \rho_{\rm post}(\theta) \right]\right\|_2^2
    \\
    &+{\rm tr}\Big[(C_t^{-1}+\E_{\rho_{\rhoa_t}}\left[\Hess \log \rho_{\rm post}(\theta) \right])C_t (C_t^{-1}+\E_{\rho_{\rhoa_t}}\left[\Hess \log \rho_{\rm post}(\theta) \right])\Bigr]\\
    &\geq 2\alpha \left({\mathrm{KL}}\Bigl[\rho_{\rhoa_t}\Big\Vert  \rho_{\rm post}\Bigr]-
     {\mathrm{KL}}\Bigl[\rho_{\rhoa_\star}\Big\Vert  \rho_{\rm post}\Bigr]
    \right).
\end{split}\end{align}
It follows that the gradient flow under the
Wasserstein metric converges exponentially fast, 
\begin{align}
    {\mathrm{KL}}\Bigl[\rho_{\rhoa_t}\Big\Vert  \rho_{\rm post}\Bigr]
    \leq e^{-2\alpha t}{\mathrm{KL}}\Bigl[\rho_{\rhoa_0}\Big\Vert  \rho_{\rm post}\Bigr]+(1-e^{-2\alpha t})
     {\mathrm{KL}}\Bigl[\rho_{\rhoa_\star}\Big\Vert  \rho_{\rm post}\Bigr],
\end{align}
where $\rho_{\rhoa_0}\sim \N(m_{0}, C_{0})$ is the initial condition and $\rho_{\rhoa_\star}\sim \N(m_{\star}, C_{\star})$ is the unique global minimizer of 
\cref{eq:Gaussian-KL0}. Next, we discuss the Gaussian approximate gradient flow~\cref{eq:g-gf}. 
Employing \cref{eq:g-gf} in \cref{e:dtKL2} leads to
\begin{align}\begin{split}\label{e:gdflow}
    \partial_t {\mathrm{KL}}&\Bigl[\rho_{\rhoa_t}\Big\Vert  \rho_{\rm post}\Bigr]
    =-\left\|\E_{\rho_{\rhoa_t}}\left[\nabla_\theta \log \rho_{\rm post}(\theta) \right]\right\|_2^2\\
    &
    -\frac{1}{4}{\rm tr}\Bigl[(C_t^{-1}+\E_{\rho_{\rhoa_t}}\left[\Hess \log \rho_{\rm post}(\theta) \right]) (C_t^{-1}+\E_{\rho_{\rhoa_t}}\left[\Hess \log \rho_{\rm post}(\theta) \right])\Bigr].
\end{split}\end{align}
From \cref{p:stability}, we have the upper bound of $C_t$: $C_t\preceq \max\{1/\alpha, \lambda_{0,\max}\}\I$. In particular for any symmetric matrix $A$, we have $AC_tA\preceq A\max\{1/\alpha, \lambda_{0,\max}\}A$, and ${\rm tr}[AC_tA]\preceq \max\{1/\alpha, \lambda_{0,\max}\}{\rm tr}[A^2]$. By comparing \cref{e:gdflow} and \cref{e:lowb1}, we have 
\begin{align}
\begin{split}
    &\left\|\E_{\rho_{\rhoa_t}}\left[\nabla_\theta \log \rho_{\rm post}(\theta) \right]\right\|_2^2
    +\frac{1}{4}{\rm tr}\Bigl[(C_t^{-1}+\E_{\rho_{\rhoa_t}}\left[\Hess \log \rho_{\rm post}(\theta) \right])^2\Bigr]
    \\
&\geq\frac{\alpha}{\max\{1, 4/\alpha, 4\lambda_{0,\max}\}} \left({\mathrm{KL}}\Bigl[\rho_{\rhoa_t}\Big\Vert  \rho_{\rm post}\Bigr]-
     {\mathrm{KL}}\Bigl[\rho_{\rhoa_\star}\Big\Vert  \rho_{\rm post}\Bigr]
    \right).
\end{split}
\end{align}
It then follows that the gradient flow converges exponentially fast: 
\begin{align}
    {\mathrm{KL}}\Bigl[\rho_{\rhoa_t}\Big\Vert  \rho_{\rm post}\Bigr]
    \leq e^{-K t}{\mathrm{KL}}\Bigl[\rho_{\rhoa_0}\Big\Vert  \rho_{\rm post}\Bigr]+(1-e^{-K t})
    {\mathrm{KL}}\Bigl[\rho_{\rhoa_\star}\Big\Vert  \rho_{\rm post}\Bigr],
\end{align}
where $K=2\alpha/\max\{1, 4/\alpha, 4\lambda_{0,\max}\}$.

Next, we discuss the Gaussian approximate Fisher-Rao gradient flow \cref{eq:Gaussian Fisher-Rao}. Using \cref{eq:Gaussian Fisher-Rao} in \cref{e:dtKL2} leads to
\begin{align}\begin{split}\label{e:ngdflow}
    &\phantom{{}={}}\partial_t {\mathrm{KL}}\Bigl[\rho_{\rhoa_t}\Big\Vert  \rho_{\rm post}\Bigr]
    =-\E_{\rho_{\rhoa_t}}\left[\nabla_\theta \log \rho_{\rm post}(\theta) \right]^T C_t \E_{\rho_{\rhoa_t}}\left[\nabla_\theta \log \rho_{\rm post}(\theta) \right]\\
    &-\frac{1}{2}{\rm Tr} \Bigl[ C_t(C_t^{-1}+\E_{\rho_{\rhoa_t}}[\Hess\log \rho_{\rm post}(\theta)])C_t (C_t^{-1}+\E_{\rho_{\rhoa_t}}\left[\Hess \log \rho_{\rm post}(\theta) \right])\Bigr].
\end{split}\end{align}
From \cref{p:stability}, we have the lower bound of $C_t$: $C_t\succeq \min\{1/\beta, \lambda_{0,\min}\}\I$. In particular for any vector $u$, $u^T C_t u\geq \min\{1/\beta, \lambda_{0,\min}\}\|u\|_2^2$; for any positive definite symmetric matrix $A$, we have  ${\rm tr}[C_tA]\geq \min\{1/\beta, \lambda_{0,\min}\}{\rm tr}A$. By comparing \cref{e:ngdflow} and \cref{e:lowb1}, we have 
\begin{align}\begin{split}
&\E_{\rho_{\rhoa_t}}\left[\nabla_\theta \log \rho_{\rm post}(\theta) \right]^T C_t \E_{\rho_{\rhoa_t}}\left[\nabla_\theta \log \rho_{\rm post}(\theta) \right]
\\
+&\frac{1}{2}{\rm tr}\Bigl[C_t(C_t^{-1}+\E_{\rho_{\rhoa_t}}[\Hess\log \rho_{\rm post}(\theta)])C_t (C_t^{-1}+\E_{\rho_{\rhoa_t}}\left[\Hess \log \rho_{\rm post}(\theta) \right])\Bigr]
\\
\geq& \alpha \min\{1/\beta, \lambda_{0,\min}\} \left({\mathrm{KL}}\Bigl[\rho_{\rhoa_t}\Big\Vert  \rho_{\rm post}\Bigr]-
     {\mathrm{KL}}\Bigl[\rho_{\rhoa_\star}\Big\Vert  \rho_{\rm post}\Bigr]
    \right).
\end{split}\end{align}
Then, it follows that the Gaussian approximate Fisher-Rao gradient flow converges exponentially fast: 
\begin{align}
    \label{eq:G-Fisher-Rao-K}
    {\mathrm{KL}}\Bigl[\rho_{\rhoa_t}\Big\Vert  \rho_{\rm post}\Bigr]
    \leq e^{-K t}{\mathrm{KL}}\Bigl[\rho_{\rhoa_0}\Big\Vert  \rho_{\rm post}\Bigr]+(1-e^{- K t})
     {\mathrm{KL}}\Bigl[\rho_{\rhoa_\star}\Big\Vert  \rho_{\rm post}\Bigr],
\end{align}
where $K=\alpha\min\{1/\beta, \lambda_{0,\min}\}$.
~
\\
~

To prove~\cref{e:W2distance}, we first prove the following lemma.
\begin{lemma}
\label{lem:astar}
KL divergence can be used to upper bound Wasserstein distance: for any Gaussian measure $\rho_\rhoa$, it holds that
\begin{align}
    \frac{\alpha}{2} W_2^2(\rho_{\rhoa}, \rho_{\rhoa_\star})\leq {\mathrm{KL}}\Bigl[\rho_\rhoa\Big\Vert  \rho_{\rm post}\Bigr]-
     {\mathrm{KL}}\Bigl[\rho_{\rhoa_\star}\Big\Vert  \rho_{\rm post}\Bigr],
\end{align}
where $\rhoa_\star=(m_{\star}, C_{\star})$ and $\rho_{\rhoa_\star}\sim \N(m_{\star}, C_{\star})$ is the unique global minimizer of 
\cref{eq:Gaussian-KL0}. 
\end{lemma}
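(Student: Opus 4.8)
The plan is to obtain the claimed quadratic lower bound directly from the $\alpha$-convexity of the relative entropy along Wasserstein geodesics, which is already recorded in \cref{e:convex}, together with the two structural facts established just above: the Gaussian family $\PPG$ is geodesically closed under the $W_2$ metric, and $\rho_{\rhoa_\star}$ is the unique minimizer of ${\mathrm{KL}}[\,\cdot\,\Vert\rho_{\rm post}]$ over $\PPG$. The algebra is short; the content is in correctly combining these inputs.

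First I would fix the constant-speed Wasserstein geodesic $\mu_t$, $t\in[0,1]$, joining $\mu_0=\rho_{\rhoa_\star}$ to $\mu_1=\rho_{\rhoa}$. By geodesic closedness each $\mu_t$ is again a Gaussian density in $\PPG$, so the global minimality of $\rho_{\rhoa_\star}$ over $\PPG$ gives ${\mathrm{KL}}[\mu_t \Vert \rho_{\rm post}]\ge {\mathrm{KL}}[\rho_{\rhoa_\star}\Vert\rho_{\rm post}]$ for every $t\in[0,1]$. Next I would substitute $\mu_0$ and $\mu_1$ into \cref{e:convex}, obtaining
\[
{\mathrm{KL}}[\mu_t \Vert \rho_{\rm post}]\le (1-t)\,{\mathrm{KL}}[\rho_{\rhoa_\star}\Vert\rho_{\rm post}]+t\,{\mathrm{KL}}[\rho_{\rhoa}\Vert\rho_{\rm post}]-\frac{\alpha t(1-t)}{2}W_2^2(\rho_{\rhoa_\star},\rho_{\rhoa}).
\]
Combining this with the minimizer lower bound ${\mathrm{KL}}[\mu_t \Vert \rho_{\rm post}]\ge {\mathrm{KL}}[\rho_{\rhoa_\star}\Vert\rho_{\rm post}]$, subtracting the term $(1-t){\mathrm{KL}}[\rho_{\rhoa_\star}\Vert\rho_{\rm post}]$ from both sides, dividing through by $t>0$, and then letting $t\to 0^+$ yields
\[
{\mathrm{KL}}[\rho_{\rhoa_\star}\Vert\rho_{\rm post}]\le {\mathrm{KL}}[\rho_{\rhoa}\Vert\rho_{\rm post}]-\frac{\alpha}{2}W_2^2(\rho_{\rhoa},\rho_{\rhoa_\star}),
\]
which is precisely the asserted inequality after rearrangement.

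The main obstacle is not the manipulation but justifying the two structural inputs it rests on: I must invoke that Wasserstein geodesics between Gaussians remain Gaussian (so that the comparison ${\mathrm{KL}}[\mu_t\Vert\rho_{\rm post}]\ge {\mathrm{KL}}[\rho_{\rhoa_\star}\Vert\rho_{\rm post}]$ is legitimate), and that ${\mathrm{KL}}[\,\cdot\,\Vert\rho_{\rm post}]$ is genuinely $\alpha$-convex along these geodesics, which requires the $\alpha$-strong logconcavity hypothesis \cref{assumption:logconcave} on $\rho_{\rm post}$; both are supplied in the surrounding discussion leading to \cref{e:convex}. An alternative, more hands-on route would differentiate $t\mapsto {\mathrm{KL}}[\mu_t\Vert\rho_{\rm post}]$ at $t=0$ and use the first-order optimality of $\rho_{\rhoa_\star}$ from \cref{eq:Gaussian-KL}, but the limiting argument above is cleaner because it avoids having to justify interchanging differentiation with the geodesic interpolation, so I would prefer it.
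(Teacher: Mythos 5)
Your proposal is correct and follows essentially the same route as the paper: both apply the $\alpha$-geodesic convexity inequality \cref{e:convex} along the Wasserstein geodesic from $\rho_{\rhoa_\star}$ to $\rho_{\rhoa}$, using geodesic closedness of the Gaussian family and the minimality of $\rho_{\rhoa_\star}$ over $\PPG$. The only difference is the final extraction step — the paper bounds $f''(c)\ge \alpha W_2^2$ for $f(t)={\mathrm{KL}}[\mu_t\Vert\rho_{\rm post}]-{\mathrm{KL}}[\rho_{\rhoa_\star}\Vert\rho_{\rm post}]$ and integrates using $f(0)=f'(0)=0$, whereas you divide the secant inequality by $t$ and let $t\to 0^+$, which is an equally valid (and slightly more elementary) way to conclude since it avoids invoking differentiability of $f$.
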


We take $\mu_0=\rho_{\rhoa_\star}$ and $\mu_1=\rho_\rhoa$ in \cref{e:convex}, and $\mu_t$ the geodesic flow from $\rho_{\rhoa_\star}$ to $\rho_\rhoa$. Let $f(t)={\mathrm{KL}}[\mu_t \Vert  \rho_{\rm post}]-{\mathrm{KL}}[\rho_{\rhoa_\star} \Vert  \rho_{\rm post}]$. Then \cref{e:convex} implies that $f(t)$ is convex
\begin{align}
    f\bigl((1-t)c_1+tc_2\bigr)
    &\leq (1-t)f(c_1)+tf(c_2)-\frac{\alpha W^2_2(\mu_{c_1}, \mu_{c_2})}{2}t(1-t)\\
    &=(1-t)f(c_1)+tf(c_2)-\frac{\alpha (c_2-c_1)^2 W^2_2(\rho_{\rhoa}, \rho_{\rhoa_\star})}{2}t(1-t),
\end{align}
for any $0\leq c_1\leq c_2\leq 1$. It follows that $f(t)$ is convex and by taking $t=1/2$ and $c_1,c_2\rightarrow c$
\begin{align}\label{e:ftt}
    f''(c)=
\frac{f(c_1)+f(c_2)-2f((c_1+c_2)/2)}{(c_1-c_2)^2/4}
    \geq \alpha W_2^2(\rho_{\rhoa},\rho_{\rhoa_\star}). 
\end{align}
Since $\rho_{\rhoa_\star}$ is the minimizer of ${\mathrm{KL}}[\rho_{\rhoa} \Vert  \rho_{\rm post}]$, we have that $f(0)=0$ and $f'(0)=0$. By integrating \cref{e:ftt}, we get 
\begin{align}\label{e:W2bound}
    {\mathrm{KL}}[\rho_{\rhoa} \Vert  \rho_{\rm post}]-{\mathrm{KL}}[\rho_{\rhoa_\star} \Vert  \rho_{\rm post}]=f(1)\geq \frac{\alpha}{2} W_2^2(\rho_{\rhoa},\rho_{\rhoa_\star}).
\end{align}

\Cref{e:W2distance} follows from combining \cref{e:W2bound} and \cref{e:KLconv},
\begin{align}
    W_2^2(\rho_{\rhoa_t},\rho_{\rhoa_\star})\leq \frac{2}{\alpha}\left({\mathrm{KL}}[\rho_{\rhoa_t} \Vert  \rho_{\rm post}]-{\mathrm{KL}}[\rho_{\rhoa_\star} \Vert  \rho_{\rm post}]\right)\\
    \leq \frac{2e^{-Kt}}{\alpha}\left({\mathrm{KL}}[\rho_{\rhoa_0} \Vert  \rho_{\rm post}]-{\mathrm{KL}}[\rho_{\rhoa_\star} \Vert  \rho_{\rm post}]\right).
\end{align}
\end{proof}

\subsection{Proof of \Cref{p:logconcave-local}}
\label{proof:proof-counter-example-Gaussian-Fisher-Rao}

Let  $\rho_{\rhoa_\star}$ be $\N(m_{\star}, C_{\star})$, the
unique minimizer of \cref{eq:Gaussian-KL0}, noting that this is also the unique 
stationary point of Gaussian approximate Fisher-Rao gradient flow~\cref{eq:Gaussian Fisher-Rao}; see ~\Cref{l:gd} for definition of $\rhoa_\star$. It satisfies
\begin{equation}
\label{eq:stationary}
   \E_{\rho_{\rhoa_\star}}\bigl[\nabla_\theta  \log \rho_{\rm post}(\theta)   \bigr] = 0
\quad 
\textrm{and}
\quad
\E_{\rho_{\rhoa_\star}}\bigl[-\Hess\log \rho_{\rm post}(\theta) \bigr] = C_{\star}^{-1}. 
\end{equation}

For $N_{\theta} = 1$, we can calculate the linearized Jacobian matrix of the ODE system~\cref{eq:Gaussian Fisher-Rao} around {$(m_{\star}, C_{\star})$:
\begin{equation}
\begin{split}
\label{eq:Jacobian}
&\frac{\partial \textrm{RHS}}{\partial(m, C)}\Big\vert_{(m, C) = (m_{\star}, C_{\star})} 
\\
=&
 \begin{bmatrix}
-1 & - \frac{1}{2}\E_{\rho_{\rhoa_\star}}\bigl[-\Hess\log \rho_{\rm post}(\theta) (\theta - m_{\star})\bigr]  \\
-\E_{\rho_{\rhoa_\star}}\bigl[-\Hess\log \rho_{\rm post}(\theta) (\theta - m_{\star})\bigr] C_{\star} &  -\frac{1}{2} - \frac{1}{2}\E_{\rho_{\rhoa_\star}}[-\Hess\log \rho_{\rm post}(\theta) (\theta - m_{\star})^2 ]
\end{bmatrix}.
\end{split}
\end{equation}
}

{We further define}
\begin{equation}
\begin{aligned}
\label{eq:A1A2}
A_1 &:= \E_{\rho_{\rhoa_\star}}\bigl[-\Hess\log \rho_{\rm post}(\theta) (\theta - m_{\star})\bigr],
\\
A_2 &:= \E_{\rho_{\rhoa_\star}}[-\Hess\log \rho_{\rm post}(\theta) (\theta - m_{\star})^2 ] \geq 0.
\end{aligned}
\end{equation}
Using the Cauchy-Schwarz inequality and \cref{eq:stationary}, we have 
\begin{align}
\label{eqn: D.30}
A_2 C_{\star}^{-1} = A_2 \E_{\rho_{\rhoa_\star}}\bigl[-\Hess\log \rho_{\rm post}(\theta) \bigr] \geq A_1^2. 
\end{align}
By direct calculations, the two eigenvalues of \cref{eq:Jacobian} satisfy
\begin{subequations}
\label{eq:lambda}
\begin{alignat}{3}
    \lambda_1  &= \frac{-(\frac{3}{2} + \frac{1}{2}A_2) - \sqrt{(\frac{1}{2} - \frac{1}{2}A_2)^2 + 2A_1^2 C_{\star}}}{2} \leq -1
\\
\label{eq:lambda2}
\lambda_2  &= \frac{-(\frac{3}{2} + \frac{1}{2}A_2) + \sqrt{(\frac{1}{2} - \frac{1}{2}A_2)^2 + 2A_1^2 C_{\star}}}{2} 
\\
&= -\frac{1 + A_2 - A_1^2 C_{\star}}{(\frac{3}{2} + \frac{1}{2}A_2) + \sqrt{(\frac{1}{2} - \frac{1}{2}A_2)^2 + 2A_1^2 C_{\star}}} \leq -\frac{1}{3+ A_2}, \nonumber
\end{alignat}
\end{subequations}
where in the last inequality, we have used \eqref{eqn: D.30}. In the following, we prove bounds on $\lambda_2$.
\vspace{1em}

\paragraph{\bf Step 1 (Upper bound)}
Since the upper bounds \eqref{eq:lambda} of the two eigenvalues depend on $A_2$, we will  first prove that
\begin{align}
\label{eq:A_2}
    A_2 \leq \bigl(4 + \frac{4}{\sqrt{\pi}}\bigr)\bigl(\log(\frac{\beta}{\alpha}) + 1\bigr).
\end{align}
Without loss of generality, we assume $m_{\star} = 0$; otherwise we can always achieve this through a change of variable. Considering now only the right half of the integration (i.e. integration from $0$ to $+\infty$)
defining $A_2$, we have
\begin{equation}
\label{eq:A_2_half}
\begin{aligned}
& \int_0^{+\infty} -\Hess\log \rho_{\rm post}(\theta)  \theta^2 \frac{1}{\sqrt{2\pi C_{\star}}} e^{-\frac{1}{2} \frac{\theta^2}{C_{\star}}}\dd \theta 
\\
= & \int_{0}^{A} -\Hess\log \rho_{\rm post}(\theta)  \theta^2 \frac{1}{\sqrt{2\pi C_{\star}}} e^{-\frac{1}{2} \frac{\theta^2}{C_{\star}}}\dd \theta + \int_{A}^{+\infty} -\Hess\log \rho_{\rm post}(\theta)\theta^2 \frac{1}{\sqrt{2\pi C_{\star}}} e^{-\frac{1}{2} \frac{\theta^2}{C_{\star}}}\dd \theta
\\
\leq & \frac{A^2}{C_{\star}} + \int_{A}^{+\infty} -\Hess\log \rho_{\rm post}(\theta)\theta^2 \frac{1}{\sqrt{2\pi C_{\star}}} e^{-\frac{1}{2} \frac{\theta^2}{C_{\star}}}\dd \theta
\quad (\textrm{Using } \cref{eq:stationary} \textrm{ and } \theta \leq A)
\\
\leq & \frac{A^2}{C_{\star}} + \beta \int_{A}^{+\infty} \frac{\theta ^3}{A} \frac{1}{\sqrt{2\pi C_{\star}}} e^{-\frac{1}{2} \frac{\theta^2}{C_{\star}}}\dd \theta
\quad (\textrm{Using } \theta \geq A \textrm{ and } \beta\textrm{-smoothness of } \log \rho_{\rm post})
\\
= & \frac{A^2}{C_\star} + \frac{2\beta C_\star^2}{A\sqrt{2\pi C_\star}} (\frac{A^2}{2C_\star} + 1) e^{-\frac{A^2}{2C_\star}} \quad (\textrm{Direct calculation})\\
\leq & \frac{A^2}{C_{\star}} + \frac{\beta }{\sqrt{\pi} \alpha} (\frac{A}{\sqrt{2C_{\star}}} + \frac{\sqrt{2C_{\star}}}{A} )  e^{-\frac{A^2}{2C_{\star}}}  \qquad (\textrm{Using } C_{\star} \leq \frac{1}{\alpha})\\
 \leq & (2 + \frac{2}{\sqrt{\pi}})(\log(\frac{\beta}{\alpha}) + 1),
\end{aligned}
\end{equation}
where in the last inequality, we have chosen $A$ such that $\frac{A^2}{2C_{\star}} = \max\{\log(\frac{\beta}{\alpha}), 1\}$ since the previous derivations work for any positive $A$.
{We can get a similar bound for the left half of the integration defining $A_2$~(i.e. integration from $-\infty$ to $0$). Combining these two bounds leads to \cref{eq:A_2}.}
{Bringing \cref{eq:A_2} into \cref{eq:lambda} leads to}
\begin{align}
\label{eq:counter:lambda2}
\lambda_2  \leq -\frac{1}{(7+\frac{4}{\sqrt{\pi}})\bigl(1 + \log(\frac{\beta}{\alpha})\bigr)}.
\end{align}
{Therefore, we finish the proof of \eqref{eq:lambda_max} in \Cref{p:logconcave-local}.}

\vspace{1em}
\paragraph{\bf Step 2 (Lower bound)}
Next, we will construct an example to show the bound is sharp. The basic idea is to construct a sequence of triplets $\rho_{{\rm post},n}$,  $\beta_n$, and $\alpha_n$, where $\lim_{n \to \infty} \frac{\beta_n}{\alpha_n} = \infty$, and the corresponding $-\lambda_{2,n} = \mathcal{O}\bigl(1/ \log(\frac{\beta_n}{\alpha_n})\bigr)$. In the following proof, we ignore the subscript $n$ for simplicity.

We consider the following sequence of posterior density functions $\rho_{\rm post}$, such that
\begin{equation}
\label{eq:logconcave-slow}
\begin{aligned}
    -\Hess \log \rho_{\rm post}(\theta) &= \int H(x) \frac{e^{-\frac{(\theta - x)^2}{2\sigma^2}}}{\sigma\sqrt{2\pi}} {\rm d}x
\qquad H(x) = 
\begin{cases}
  \beta   & \gamma - \sigma < x < \gamma + \sigma \\
  \alpha  & \textrm{ Otherwise}
\end{cases},
  \\
\nabla_{\theta}\log \rho_{\rm post}(\theta) &= \int_{-\infty}^{\theta}-\Hess \log \rho_{\rm post}(\theta') d\theta' + c,
\end{aligned}
\end{equation}
where $-\Hess \log \rho_{\rm post}(\theta)$ is a smoothed bump function containing four parameters $\gamma, \sigma > 0$ and $0 < \alpha < \beta$. Clearly, we have \[\alpha\I \preceq -\Hess \log \rho_{\rm post}(\theta) \preceq \beta \I.\]
Moreover, we will have another parameter $c$ to determine $\nabla_{\theta}\log \rho_{\rm post}(\theta)$.
It is worth mentioning that for such $\alpha$-strongly logconcave posterior, the Gaussian variational inference \eqref{eq:Gaussian-KL0} has a unique minimizer $(m^\star, C^\star)$, which is determined by the stationary point condition in~\cref{eq:stationary}; see also \cite{lambert2022variational}.

The \textbf{intuition} behind the construction of the bump function is as follows. 
Our objective is to ensure that the dominant eigenvalue, denoted as $\lambda_2$ in equation \eqref{eq:Jacobian}, as large as possible (thus leading to a lower bound).
Since $\lambda_2$ satisfies
\begin{align*}
\lambda_2 = -\frac{1 + A_2 - A_1^2 C_{\star}}{\frac{3}{2}+\frac{1}{2}A_2 + \sqrt{(\frac{1}{2}A_2 - \frac{1}{2})^2 + 2A_1^2 C_{\star}}}  &\geq -\frac{1 + A_2 - A_1^2 C_{\star}}{1 + A_2 } =
-\frac{\frac{1}{A_2} + (1 - \frac{A_1^2 C_{\star}}{A_2})}{1 + \frac{1}{A_2} },
\end{align*}
we require $A_2$ to be as large as possible, while ensuring that the expression $(1 - \frac{A_1^2 C_{\star}}{A_2})$ as small as possible. Recall the definitions of $A_1$ and $A_2$ in \cref{eq:A1A2}; for the latter term, we have 
\begin{align}
A_2 C_{\star}^{-1} &= A_2 \E_{\rho_{\rhoa_\star}}\bigl[-\Hess\log \rho_{\rm post}(\theta) \bigr] \\
& = \E_{\rho_{\rhoa_\star}}[-\Hess\log \rho_{\rm post}(\theta) (\theta - m_{\star})^2 ] \E_{\rho_{\rhoa_\star}} \bigl[-\Hess\log \rho_{\rm post}(\theta) \bigr] \\
&\geq (\E_{\rho_{\rhoa_\star}}[-\Hess\log \rho_{\rm post}(\theta) (\theta - m_{\star})])^2 = A_1^2
\end{align}
due to the Cauchy-Schwarz inequality. Thus we get $(1 - \frac{A_1^2 C_{\star}}{A_2}) \geq 0$. To make this term as close to zero as possible, we consider when the Cauchy-Schwarz inequality can become equality. In fact, we need $-\Hess \log \rho_{\rm post}(\theta)$ to take the form of a delta function. As in the assumption we have $\log \rho_{\rm post}(\theta) \in C^2$ so this is not achievable. To approximate this condition, we can construct $-\Hess \log \rho_{\rm post}(\theta)$ as a bump function $H(x)$ and gradually narrow the width of the bump to approach zero. The Gaussian kernel is employed to smooth the bump function and simplify the subsequent calculations; this is why the form of $H$ in \eqref{eq:logconcave-slow} is constructed.

Now, we will provide a detailed construction. Instead of specifying $\beta$ and $c$, we can specify $m_{\star}$ and $C_{\star}$ since there is a one-to-one correspondence between them. We specify\footnote{These choices of parameters are motivated by the subsequent calculations.} that
\begin{align}
\label{eq:construct-A}
    \sigma = \gamma^{1.5} \quad m_{\star} = 0 \quad C_{\star} = -\frac{\gamma^2}{2\log\gamma}  - \gamma^3 \quad \textrm{and} \quad\alpha = \frac{1}{(-\log \gamma) C_\star}. 
\end{align}
Then, $\beta$ and $c$ are determined 
by the stationary point condition~\eqref{eq:stationary}, namely
\begin{equation}
\label{eq:stationary-cov}
\begin{aligned}
 {C_{\star}^{-1}}
 &= \E_{\rho_{\rhoa_\star}}\bigl[-\Hess\log \rho_{\rm post}(\theta) \bigr] =   \int H(x) \frac{1}{\sqrt{2\pi(\sigma^2 + C_{\star})}}e^{-\frac{(x - m_{\star})^2}{2(\sigma^2 + C_{\star})}} {\rm d}x \\
 &= \alpha + (\beta - \alpha)\int_{\gamma-\sigma}^{\gamma+\sigma}\frac{1}{\sqrt{2\pi(\sigma^2 + C_{\star})}}e^{-\frac{(x - m_{\star})^2}{2(\sigma^2 + C_{\star})}}{\rm d}x \, .
 \\
&  \hspace{-1cm}0 = \E_{\rho_{\rhoa_\star}}\bigl[\nabla_{\theta}\log \rho_{\rm post}(\theta) \bigr]  = \int_{-\infty}^{\infty} \rho_{\rhoa_\star}(\theta) \int_{-\infty}^{\theta}-\Hess \log \rho_{\rm post}(\theta') {\rm d}\theta' {\rm d}\theta + c
 \end{aligned}
\end{equation}
We will let $\gamma \to 0$ later. Note that in the above system, when $\gamma$ is close to zero, for any $C_{\star} = \frac{1}{-\alpha \log \gamma}\leq \frac{1}{\alpha}$, we will have $\beta > \alpha$, so the above lead to valid specification of parameters. We now only have one free parameter $\gamma$. We can now also view the triplet $\rho_{{\rm post}}$,  $\beta$, and $\alpha$ as functions parameterized by $\gamma$.

In the following, we will let $\gamma \rightarrow 0$ and estimate the leading order of $A_1$, $A_2$, $\alpha$ and $\beta$ in terms of $\gamma$. 
Let denote $m_{\star  \sigma} = \frac{xC_{\star} + m_{\star}\sigma^2}{\sigma^2 + C_{\star}}$ and $C_{\star  \sigma} = \frac{\sigma^2C_\star}{\sigma^2 + C_{\star}}$, we have
\begin{align*}
&    -\Hess \log \rho_{\rm post}(\theta) \N(m_{\star}, C_{\star}) =  \int H(x) \frac{e^{-\frac{(\theta - m_{\star  \sigma})^2}{2C_{\star  \sigma}}}}{\sqrt{2\pi C_{\star  \sigma}}}\frac{1}{\sqrt{2\pi(\sigma^2 + C_{\star})}}e^{-\frac{(x - m_{\star})^2}{2(\sigma^2 + C_{\star})}}{\rm d}x.
\end{align*}
Bringing this to \cref{eq:A1A2} leads to
\begin{equation}
\label{eq:A1A2-intg}
    \begin{split}
&A_1 = \E_{\rho_{a_\star}}\bigl[-\Hess\log \rho_{\rm post}(\theta) (\theta - m_{\star})\bigr]
=
\int H(x) \frac{m_{\star  \sigma} - m_{\star}}{\sqrt{2\pi(\sigma^2 + C_{\star})}}e^{-\frac{(x - m_{\star})^2}{2(\sigma^2 + C_{\star})}}{\rm d}x,
\\
&A_2 = \E_{\rho_{a_\star}}[-\Hess\log \rho_{\rm post}(\theta) (\theta - m_{\star})^2 ]
=
\int H(x) \frac{C_{\star\sigma} + (m_{\star\sigma}-m_{\star})^2}{\sqrt{2\pi(\sigma^2 + C_{\star})}}e^{-\frac{(x - m_{\star})^2}{2(\sigma^2 + C_{\star})}}{\rm d}x .
    \end{split}
\end{equation}
With the definitions in~\cref{eq:construct-A}, we have the following estimation about Gaussian integration
\begin{align}
\label{eq:construct-A-G}
\int_{\gamma-\sigma}^{\gamma + \sigma} x^ne^{-\frac{x^2}{2(\sigma^2 + C_{\star})}}{\rm d}x  
= 2\sqrt{\gamma}\gamma^{n+2} (1 +\bigO(\log\gamma\sqrt\gamma)).
\end{align}
Bringing the definition of $\sigma$ and $C_{\star}$ and using change-of-variable with $y = \frac{x}{\gamma}$, \cref{eq:construct-A-G} becomes
\begin{align}
\label{eq:x-y}
\int_{\gamma-\sigma}^{\gamma + \sigma} x^ne^{-\frac{x^2}{2(\sigma^2 + C_{\star})}}{\rm d}x  
= \gamma^{n+1}\int_{1 -\sqrt\gamma}^{1+\sqrt\gamma} y^ne^{y^2\log\gamma}{\rm d}y. 
\end{align}
Bringing the following inequalities into \cref{eq:x-y} leads to \cref{eq:construct-A-G}
\begin{align*}
\gamma^{n+1}\int_{1 -\sqrt\gamma}^{1+\sqrt\gamma} y^ne^{y^2\log\gamma}{\rm d}y \geq& 2\sqrt{\gamma}\gamma^{n+1}  (1-\sqrt\gamma)^ne^{(1+\sqrt\gamma)^2\log\gamma}  \\
                                                            =& 2\sqrt{\gamma}\gamma^{n+2} (1 + \bigO(2\log\gamma\sqrt\gamma)),
\\
\gamma^{n+1}\int_{1 -\sqrt\gamma}^{1+\sqrt\gamma} y^ne^{y^2\log\gamma}{\rm d}y \leq& 2\sqrt{\gamma}\gamma^{n+1}  (1+\sqrt\gamma)^ne^{(1 -\sqrt\gamma)^2\log\gamma}  \\
                                                           =& 2\sqrt{\gamma}\gamma^{n+2} (1 - \bigO(2\log\gamma\sqrt\gamma)).
\end{align*}
Here we used the Taylor expansions of $(1 - \sqrt{\gamma})^ne^{(\gamma + 2\sqrt{\gamma})\log\gamma} = \bigO\bigl((1 - n\sqrt{\gamma}) (1 + (\gamma + 2\sqrt{\gamma})\log\gamma)\bigr)$ and $(1 + \sqrt{\gamma})^ne^{(\gamma - 2\sqrt{\gamma})\log\gamma} = \bigO\bigl((1 + n\sqrt{\gamma}) (1 + (\gamma - 2\sqrt{\gamma})\log\gamma)\bigr)$.
Then the estimation for $A_1$, $A_2$ from \cref{eq:A1A2-intg} and the covariance condition in ~\cref{eq:stationary} become
\begin{equation}
\label{eq:Taylor-A}
    \begin{aligned}
A_1 &=
\frac{(\beta  -\alpha)C_{\star}}{\sigma^2 + C_{\star}}\int_{\gamma-\sigma}^{\gamma + \sigma} x\frac{1}{\sqrt{2\pi(\sigma^2 + C_{\star})}}e^{-\frac{x^2}{2(\sigma^2 + C_{\star})}}{\rm d}x
\\
&=
\frac{(\beta  -\alpha)C_{\star}}{\sigma^2 + C_{\star}}
\frac{2\gamma^{2.5}\sqrt{-\log \gamma}}{\sqrt{\pi}}(1 +\bigO(\log\gamma\sqrt\gamma))  \quad (\textrm{Using } \eqref{eq:construct-A-G})\\
&=
\frac{2(\beta  -\alpha)}{\sqrt{\pi}}\gamma^{2.5}(-\log\gamma)^{0.5  }  + \bigO\Bigl(\frac{2(\beta  -\alpha)}{\sqrt{\pi}}\gamma^{3}(-\log\gamma)^{1.5  } \Bigr),
\\
A_2 &=\alpha C_{\star} + 
(\beta - \alpha)\int_{\gamma-\sigma}^{\gamma + \sigma} \frac{C_{\star\sigma} + m_{\star\sigma}^2}{\sqrt{2\pi(\sigma^2 + C_{\star})}}e^{-\frac{x^2}{2(\sigma^2 + C_{\star})}}{\rm d}x
\\
&=\alpha C_{\star} + 
(\beta - \alpha)\frac{2\sqrt{-\log\gamma}\gamma^{3.5}(1+2\gamma\log\gamma)(1+\gamma+2\gamma\log\gamma)}{\sqrt{\pi}}(1 +\bigO(\log\gamma\sqrt\gamma)) \quad (\textrm{Using }~\eqref{eq:construct-A-G})
\\
&=\alpha C_{\star} 
+ 
\frac{2(\beta - \alpha)}{\sqrt{\pi}}\gamma^{3.5}(-\log\gamma)^{0.5}  
+
\bigO\Bigl(  
\frac{2(\beta - \alpha)}{\sqrt{\pi}}\gamma^{4}(-\log\gamma)^{1.5}
\Bigr),
\\
{C_{\star}}^{-1}  &= \alpha + (\beta - \alpha)\int_{\gamma-\sigma}^{\gamma+\sigma}\frac{1}{\sqrt{2\pi(\sigma^2 + C_{\star})}}e^{-\frac{x^2}{2(\sigma^2 + C_{\star})}}{\rm d}x 
\\
&= \alpha + (\beta - \alpha)2\gamma^{1.5}\frac{\sqrt{-\log\gamma}}{\sqrt\pi} 
(1 + \bigO(\log\gamma\sqrt\gamma))  \quad (\textrm{Using } \eqref{eq:construct-A-G})
\\
&= \alpha + \frac{2(\beta - \alpha)}{\sqrt{\pi}}\gamma^{1.5}(-\log\gamma)^{0.5} 
+\bigO\Bigl( \frac{2(\beta - \alpha)}{\sqrt{\pi}}\gamma^{2}(-\log\gamma)^{1.5} \Bigr).
\end{aligned}
\end{equation}
Combining \cref{eq:Taylor-A} and \cref{eq:construct-A} leads to
\begin{align}
\label{eq:beta_alpha}
&\alpha =\bigO(\frac{1}{\gamma^2})\qquad 
    \frac{\beta}{\alpha} = \bigO(\frac{(-\log\gamma)^{0.5}}{\gamma^{1.5}}) \qquad \beta - \alpha = \bigO(\frac{(-\log\gamma)^{0.5}}{\gamma^{3.5}})
    \\
    &A_1 = \bigO(\frac{-\log \gamma}{\gamma}) \qquad A_2 = \bigO(-\log \gamma)
    \\
    &1 - \frac{A_1^2 C_{\star}}{A_2} = \frac{A_2C_{\star}^{-1} - A_1^2 }{A_2C_{\star}^{-1}} = \frac{\bigO\Bigl((-\log\gamma)/\gamma^2\Bigr)}{\bigO\Bigl((-\log\gamma)^2/\gamma^2\Bigr)}
    = \bigO(\frac{1}{-\log\gamma}).
\end{align}

Finally, we can bound the large eigenvalue of \cref{eq:Jacobian} by
\begin{align*}
\lambda_2 = -\frac{1 + A_2 - A_1^2 C_{\star}}{\frac{3}{2}+\frac{1}{2}A_2 + \sqrt{(\frac{1}{2}A_2 - \frac{1}{2})^2 + 2A_1^2 C_{\star}}}  \geq 
-\frac{\frac{1}{A_2} + (1 - \frac{A_1^2 C_{\star}}{A_2})}{1 + \frac{1}{A_2} }  
= -\bigO(\frac{1}{-\log \gamma}). 
\end{align*}
Here we use $A_2 \geq 0$ and $1 - \frac{A_1^2 C_{\star}}{A_2} \geq 0 $.
\Cref{eq:beta_alpha} $\log(\frac{\beta}{\alpha}) = \bigO(-\log\gamma)$ indicates that for the constructed logconcave density, the local convergence rate is not faster than $-\bigO(1/\log (\frac{\beta}{\alpha})).$

\subsection{Proof of \Cref{proposition:counter-example-slow}}
\label{proof:counter-example-slow}

Consider the following example, where $\theta \in \R$ and $\V(\theta) = \sum_{k=0}^{2K+1} a_{2k} \theta^{2k}$ with $a_{4K+2} > 0$. We will choose $a_{2k}$ later so that the convergence of these dynamics is $O(t^{-\frac{1}{2K}})$. Recall the Gaussian approximate Fisher-Rao gradient flow is
\begin{equation}
\begin{aligned}
\frac{\dd  m_t}{\dd  t}  & =  C_t\E_{\rho_{\rhoa_t}}[\nabla_\theta \log \rho_{\rm post} ],\\
\frac{\dd  C_t}{\dd  t}
&= C_t + C_t \E_{\rho_{\rhoa_t}}[\Hess \log \rho_{\rm post}]C_t.
\end{aligned}
\end{equation}

% Let $\rho_{\rhoa}(\theta)$ be $\N(m, C)$. We first find some $m^\star, C^\star$ that satisfy the stationary condition.

We first calculate the explicit formula of the dynamics. For the mean part, we have
% https://math.stackexchange.com/questions/1945448/methods-for-finding-raw-moments-of-the-normal-distribution
\begin{equation}
\label{eqn: D45}
\begin{aligned}
\E_{\rho_{\rhoa}}\bigl[\nabla_\theta  \log \rho_{\rm post}(\theta)   \bigr] =& -\sum_{k=1}^{2K+1} 2k a_{2k} \E_{\rho_{\rhoa}}[\theta^{2k-1}]\\
=& -\sum_{k=1}^{2K+1} 2k a_{2k} \sum_{i=0}^{k-1} \binom{2k-1}{2i+1}m^{2i+1}C^{k-i-1}\frac{(2k-2i-2)!}{2^{k-i-1}(k-i-1)!}\, .
% \\
% =& -m\sum_{k=1}^{2K+1} 2k a_{2k} \sum_{i=0}^{k-1} \binom{2k-1}{2i+1}m^{2i}C^{k-i-1}\frac{(2k-2i-2)!}{2^{k-i-1}(k-i-1)!}.
\end{aligned}
\end{equation}
In the above we have used the explicit formula for the moments of Gaussian distributions. By \eqref{eqn: D45}, we know that when $m = 0$, $\E_{\rho_{\rhoa}}\bigl[\nabla_\theta  \log \rho_{\rm post}(\theta)   \bigr] = 0$. Later, we will initialize the dynamics at $m_0 = 0$; as a consequence, $m_t = 0$ so we only need to consider the convariance dynamics given $m=0$.

% so $m^\star = 0$ satisfies the first half of the stationary condition \eqref{eq:stationary}. 

For the covariance part (assuming $m=0$), using Stein's identity, we get
\begin{equation}
\begin{aligned}
\E_{\rho_{\rhoa}}\bigl[\Hess \log \rho_{\rm post}(\theta) \bigr]_{m = 0}=
\frac{\partial \E_{\rho_{\rhoa}}\bigl[\nabla_\theta  \log \rho_{\rm post}(\theta)   \bigr]}{\partial m}|_{m = 0}=
 - f(C),
\end{aligned}
\end{equation}
where
\begin{equation}
\begin{aligned}
f(C) = \sum_{k=1}^{2K+1} 2k(2k-1) a_{2k}  C^{k-1}\frac{(2k-2)!}{2^{k-1}(k-1)!}.
\end{aligned}
\end{equation}
% Therefore, any $C^\star$ with $f(C^\star) = {C^\star}^{-1}$ will satisfy the second half of the stationary condition \eqref{eq:stationary}.

%
We choose $\{a_{2k}\}_{k=1}^{2K+1}$ such that 
$$ 2k(2k-1)  \frac{(2k-2)!}{2^{k-1}(k-1)!} a_{2k} = \binom{2K+1}{k} (-1)^{2K+1-k},$$ 
which leads to the identity
$$1 - f(C)C = -(C - 1)^{2K+1}.$$

Now, we calculate the explicit form of the dynamics, with the above choice of $\V$. 
We initialize the dynamics with $m_0 = 0$. 
For the Gaussian approximate Fisher-Rao gradient flow~\cref{eq:Gaussian Fisher-Rao}, we have
\begin{equation}
\small
\begin{aligned}    
&\frac{\dd m_t}{\dd t} =  0, \\
&\frac{\dd C_t}{\dd t}  =  C_t(1 - f(C_t)C_t) = -C_t(C_t-1)^{2K+1}.
\end{aligned}
\end{equation}
It is clear that the convergence rate to $C = 1$ is $\bigO(t^{-\frac{1}{2K}})$, if we initialize $C_0$ close to $1$.

In fact, we can also obtain convergence rates for other gradient flows under different metrics. For the vanilla Gaussian approximate gradient flow~\cref{eq:g-gf}, we have
\begin{equation}
\small
\begin{aligned}    
&\frac{\dd m_t}{\dd t} =  0, \\
&\frac{\dd C_t}{\dd t}  =  \frac{1}{2C_t}(1 - f(C_t)C_t) = - \frac{(C_t-1)^{2K+1}}{2C_t}.
\end{aligned}
\end{equation}
For the Gaussian approximate Wasserstein gradient flow~\cref{eq:Gaussian-Wasserstein}, we have
\begin{equation}
\small
\begin{aligned}    
&\frac{\dd m_t}{\dd t} =  0, \\
&\frac{\dd C_t}{\dd t}  =  2(1 - f(C_t)C_t)=-2(C_t-1)^{2K+1}.
\end{aligned}
\end{equation}
In all cases the convergence rate to $C = 1$ is $\bigO(t^{-\frac{1}{2K}})$.

\newpage
\section{Numerical Integration}
\label{sec:integration}
In this section, we discuss how we compute the reference values of $\E\theta$, $\Cov\theta$, and $\E(\cos(\omega^T \theta + b)$ for logconcave posterior and Rosenbrock posterior in~\cref{sec:Numerics}.
For any Gaussian distribution, we have
\begin{align*}
 &\int \cos(\omega^T \theta + b) \N(m, C) \mathrm{d}\theta = \exp\{-\frac{1}{2}\omega^T C\omega \} \cos(\omega^T m+b).
\end{align*}

We can rewrite the logconcave function,
\begin{align*}
 &\V(\theta) =  \frac{1}{2} \frac{({\theta^{(1)}} - {\theta^{(2)}}/\sqrt{\lambda})^2}{10/\lambda} +\frac{{\theta^{(2)}}^4}{20}.
\end{align*}
Therefore, 
\begin{align*}
 &\int e^{-\V(\theta)} \mathrm{d}{\theta^{(1)}} = \sqrt{20\pi/\lambda} e^{-{\theta^{(2)}}^4/20},\\
 &\int e^{-\V(\theta)} {\theta^{(1)}}  \mathrm{d}{\theta^{(1)}} = \sqrt{20\pi/\lambda} \frac{{\theta^{(2)}}}{\sqrt{\lambda}}  e^{-{\theta^{(2)}}^4/20},\\
 &\int e^{-\V(\theta)} {\theta^{(2)}}  \mathrm{d}{\theta^{(1)}} = \sqrt{20\pi/\lambda}   \theta^{(2)}e^{-{\theta^{(2)}}^4/20},\\
 &\int e^{-\V(\theta)} {\theta^{(1)}}^2 \mathrm{d}{\theta^{(1)}} = \sqrt{20\pi/\lambda}(\frac{{\theta^{(2)}}^2}{\lambda} + \frac{10}{\lambda}) e^{-{\theta^{(2)}}^4/20},\\
 &\int e^{-\V(\theta)} {\theta^{(1)}} {\theta^{(2)}}  \mathrm{d}{\theta^{(1)}} = \sqrt{20\pi/\lambda} \frac{{\theta^{(2)^2}}}{\sqrt{\lambda}}  e^{-{\theta^{(2)}}^4/20},\\
 &\int e^{-\V(\theta)} {\theta^{(2)^2}}  \mathrm{d}{\theta^{(1)}} = \sqrt{20\pi/\lambda}   \theta^{(2)^2}e^{-{\theta^{(2)}}^4/20},\\
 &\int e^{-\V(\theta)} \cos({\omega^{(1)}} {\theta^{(1)}} + {\omega^{(2)}} {\theta^{(2)}} + b) \mathrm{d}{\theta^{(1)}}\\
&\quad\quad\quad =  \sqrt{20\pi/\lambda}e^{-\frac{5}{\lambda}{\omega^{(1)}}^2}\cos({\omega^{(1)}}{\theta^{(2)}}/\sqrt{\lambda} + {\omega^{(2)}} {\theta^{(2)}} + b)e^{-{\theta^{(2)}}^4/20}.
\end{align*}
We have
$$
\E[\theta] = \begin{bmatrix}
  0\\
  0
\end{bmatrix}.
$$
Other 2D integrations can be addressed by first performing 1D integration
with respect to $\theta^{(1)}$ analytically, and then the second 1D integration with respect to $\theta^{(2)}$
is computed numerically with $10^7$ uniform points.

We can rewrite the Rosenbrock function as
$$
\V(\theta) = \frac{1}{2}\frac{( {\theta^{(2)}} - {\theta^{(1)}}^2)^2}{10/\lambda} + \frac{(1 - {\theta^{(1)}})^2}{20}. 
$$
Therefore, 
\begin{align*}
 &\int e^{-\V(\theta)} \mathrm{d}{\theta^{(2)}} = \sqrt{20\pi/\lambda} e^{-(1 - {\theta^{(1)}})^2/20},\\
 &\int e^{-\V(\theta)} {\theta^{(1)}}  \mathrm{d}{\theta^{(2)}} = \sqrt{20\pi/\lambda} {\theta^{(1)}} e^{-(1 - {\theta^{(1)}})^2/20},\\
 &\int e^{-\V(\theta)} {\theta^{(2)}}  \mathrm{d}{\theta^{(2)}} = \sqrt{20\pi/\lambda} {\theta^{(1)}}^2 e^{-(1 - {\theta^{(1)}})^2/20},\\
 &\int e^{-\V(\theta)} {\theta^{(1)^2}}  \mathrm{d}{\theta^{(2)}} = \sqrt{20\pi/\lambda} {\theta^{(1)^2}} e^{-(1 - {\theta^{(1)}})^2/20},\\
 &\int e^{-\V(\theta)} {\theta^{(2)}}^2 \mathrm{d}{\theta^{(2)}} = \sqrt{20\pi/\lambda}({\theta^{(1)}}^4 + \frac{10}{\lambda}) e^{-(1 - {\theta^{(1)}})^2/20},\\
 &\int e^{-\V(\theta)} {\theta^{(1)}\theta^{(2)}}  \mathrm{d}{\theta^{(2)}} = \sqrt{20\pi/\lambda} {\theta^{(1)}}^3 e^{-(1 - {\theta^{(1)}})^2/20},\\
 &\int e^{-\V(\theta)} \cos({\omega^{(2)}} {\theta^{(2)}} + {\omega^{(1)}} {\theta^{(1)}} + b) \mathrm{d}{\theta^{(2)}}\\
&\quad\quad\quad =  \sqrt{20\pi/\lambda}e^{-\frac{5}{\lambda}{\omega^{(2)}}^2}\cos({\omega^{(2)}} {\theta^{(1)}}^2 + {\omega^{(1)}} {\theta^{(1)}} + b)e^{-(1 - {\theta^{(1)}})^2/20}.
\end{align*}
We have
$$
\int\int e^{-\V(\theta)} \mathrm{d}{\theta^{(1)}}\mathrm{d}{\theta^{(2)}} = \frac{20\pi}{\sqrt{\lambda}}\qquad
\E[\theta] = \begin{bmatrix}
  1\\
  11
\end{bmatrix}
\qquad 
\Cov[\theta] = \begin{bmatrix}
  10& 20\\
  20& \frac{10}{\lambda} + 240
\end{bmatrix}.
$$
Other 2D integrations can be addressed by first performing 1D integration
with respect to $\theta^{(2)}$ analytically, and then the second 1D integration with respect to $\theta^{(1)}$
is computed numerically with $10^7$ uniform points.
\end{document}